\documentclass[Afour,sageh,times]{sagej}

\usepackage{moreverb,url}

\newcommand\BibTeX{{\rmfamily B\kern-.05em \textsc{i\kern-.025em b}\kern-.08em
		T\kern-.1667em\lower.7ex\hbox{E}\kern-.125emX}}

\def\volumeyear{2016}

\usepackage[T1]{fontenc}
\usepackage[utf8]{inputenc}

\usepackage{amsfonts}
\usepackage{algpseudocode}
\usepackage{algorithm}
\usepackage{array}

\usepackage{tabularx,booktabs}
\newcolumntype{Y}{>{\raggedright\arraybackslash}X}

\usepackage[caption=false,font=footnotesize,labelfont=sf,textfont=sf]{subfig}
\usepackage{textcomp}
\usepackage{stfloats}
\usepackage{verbatim}
\usepackage{graphicx}
\usepackage{xcolor}

\usepackage{amsmath,amssymb}
\usepackage{multirow}

\usepackage{amsthm}

\usepackage[colorlinks,bookmarksopen,bookmarksnumbered,citecolor=red,urlcolor=red]{hyperref}
\usepackage[capitalize]{cleveref}
\usepackage{bbm}

\usepackage{makecell}

\newtheorem{prob}{Problem}
\newtheorem{lemma}{Lemma}
\newtheorem{example}{Example}
\newtheorem{theorem}{Theorem}
\newtheorem{corollary}{Corollary}
\newtheorem{definition}{Definition}

\newcommand{\Nature}{{\mathbb{N}}} 
\newcommand{\Rp}{{\mathbb{R}_{\geq 0}}}  
\newcommand{\Rpp}{{\mathbb{R}_{> 0}}}  
\newcommand{\overRp}{\overline{\mathbb{R}}_{\geq 0}}

\newcommand{\weight}{{\phi}}  
\newcommand{\weightmax}{{\phi_{\max}}}  
\newcommand{\weightmin}{{\phi_{\min}}}  
\newcommand{\numnode}{{|\nodes|}}
\newcommand{\graph}{{\mathcal G}}     
\newcommand{\nodes}{{\mathcal V}}                
\newcommand{\edges}{{\mathcal E}}                
\newcommand{\edgelength}{{A}}   
\newcommand{\Ggeo}{{|\graph|}}
\newcommand{\dist}{{d_{\Ggeo}}}  
\newcommand{\Ggeok}{{|\graph|^\numrobot}}
\newcommand{\distGk}{{d_{|\graph|^{\numrobot}}}} 
\newcommand{\distG}{{d_{|\graph|}}} 
\newcommand{\graphdiameter}{{D_{ \graph}}}

\newcommand{\numrobot}{{k}}        
\newcommand{\robots}{{\mathcal R}}      
\newcommand{\schedule}[1]{\pi_{#1}}    
\newcommand{\strategy}{\boldsymbol{\pi}}   
\newcommand{\strategyset}{\Pi}     

\newcommand{\latency}[2]{{L_{#1}^{#2}}}    
\newcommand{\maxlatency}[2]{{L}^{#1}(#2)}
\newcommand{\rvisitvatt}[2]{h_{#1}^{#2}}
\newcommand{\allvisitvatt}[1]{{h}^{#1}}
\newcommand{\objectivefm}[2]{J_{#1}(#2)}
\newcommand{\longlatency}[1]{{J_{\infty}({#1})}}
\newcommand{\optobjall}{J^*}
\newcommand{\optobjT}[1]{J^*_{#1}}
\newcommand{\optstrategyT}[1]{\strategyset_{#1}^*}
\newcommand{\inipos}{\boldsymbol{p}_0}
\newcommand{\intstrategyset}[1]{\strategyset_{#1}}

\newcommand{\torlance}{\delta_{\mathrm{tol}}}

\newcommand{\jointstate}{{\Xi}}  
\newcommand{\jointspace}{{\mathcal{X}}} 
\newcommand{\jointdist}{{d_{\jointspace}}}

\newcommand{\cmpctjointspace}[1]{{\mathcal{X}}_{#1}}

\newcommand{\rationalstrategyset}{\bar\strategyset}  

\newcommand{\perstrategy}{\strategy^{\mathrm{per}}}   
\newcommand{\perschedule}[1]{\schedule{#1}^{{\mathrm{per}}}}

\newcommand{\aveobj}{J^{\mathrm{ave}}}

\newcommand{\jointpsotion}{{\boldsymbol{p}}}  
\newcommand{\jointlatency}{{\boldsymbol{L}}}

\newcommand{\trajectory}{\xi}

\DeclareMathAlphabet{\mathbbb}{U}{bbold}{m}{n}

\begin{document}
	
	\runninghead{Wang \MakeLowercase{\textit{et al.}}: Multi-robot Persistent Monitoring: Theory and RL-based Solutions}
	
	\title{Minimizing Worst-Case Weighted Latency for Multi-Robot Persistent Monitoring: Theory and RL-Based Solutions}
	
	\author{Weizhen Wang\affilnum{1,2}, Ziheng Wang\affilnum{1}, Jianping He\affilnum{1,2}, Xinping Guan\affilnum{1,2}, and Xiaoming Duan\affilnum{1,2}}
	
	\affiliation{\affilnum{1} School of Automation and Intelligent Sensing, Shanghai Jiao Tong University, Shanghai, China\\
		\affilnum{2} Key Laboratory of System Control and Information Processing, Ministry of Education of China, Shanghai, China}
	
	\corrauth{Xiaoming Duan, 800 Dongchuan Rd, Shanghai, 200240, China.}
	
	\email{xduan@sjtu.edu.cn}
	
	\begin{abstract}
		We study multi-robot persistent monitoring on weighted graphs, where node weights encode monitoring priorities and edge weights encode travel distances. The goal is to design joint robot trajectories that minimize the worst-case weighted latency across all nodes over an infinite time horizon. The widely adopted worst-case latency objective evaluates team performance over the entire time horizon and therefore may fail to distinguish strategies with poor transient behavior but strong asymptotic performance. To address this limitation, we propose a family of tail-performance objectives that generalize the standard objective and study the resulting functional optimization problems. We establish several key theoretical properties, including the existence of optimal strategies, relationships among the proposed objectives and their corresponding optimization problems, approximation by periodic solutions to arbitrary accuracy, and reductions to event-driven decision models with discretized waiting time. 
		Building on these results, we construct an equivalent event-driven Markov decision process (MDP), called the Tail Worst-case Latency-Optimizing Markov Decision Process (TWLO-MDP), which reformulates the tail-performance objective as a standard average-cost criterion. 
		We then develop reinforcement-learning-based solution methods for the TWLO-MDP and introduce the multi-robot monitoring benchmark (M2Bench), a unified platform  that supports the evaluation and comparison of heuristic and learning-based monitoring algorithms. 
		Experiments on synthetic and realistic monitoring scenarios show that our methods effectively reduce the worst-case weighted latency and outperform representative baselines.
	\end{abstract}
	
	\keywords{Multi-robot monitoring, Markov decision process, reinforcement learning, benchmarking platform}
	
	\maketitle

	\section{Introduction}
	Persistent monitoring is a key application area for many real-world robotic systems, with representative scenarios including urban crime surveillance~\citep{JP-EG-MB:2025}, post-disaster area monitoring~\citep{VK-MS-DK:2022}, anomaly detection~\citep{SW-JC-JM-JC-FM-PL-MV:2017}, ocean monitoring~\citep{RS-MS-SS-BJ-DR-GS:2011}, and forest fire detection~\citep{MM-HS-BA:2022}.  In these applications, a team of robots is required to repeatedly visit a set of important locations so that abnormal events can be detected  in a timely manner. This naturally leads to a graph-based multi-robot monitoring problem, where nodes represent locations of interest, edges represent feasible travel paths, and the robots must coordinate their long-term movements on the graph over an infinite time horizon. A  central performance measure for this problem is  latency, which quantifies how long each location remains unvisited. Since large latency at any important location may delay event detection, many studies adopt a worst-case perspective and seek coordinated multi-robot trajectories that minimize the maximum latency across the graph over time.  This problem has attracted significant attention over the past two decades~\citep{AM-GR-JZ-AD:2003,YC:2004,DP-RR:2011,LH-MZ-EH:2019,NB:2022}. Earlier studies formalized the worst-latency objective and established the computational hardness of the resulting optimization problems, including NP-hardness of exact optimization and APX-hardness of approximation in weighted variants~\citep{YC:2004,SA-EF-SS:2012}. Subsequent work has developed a range of solution methods, including hand-crafted monitoring heuristics, problem-specific structured strategies, and learning-based policies. Classical methods typically either rely on intuitive decision rules, such as visiting high-latency nodes, or exploit restricted graph topologies and strategy classes, such as graph partitions and recurrent monitoring patterns. The latter approaches, while often analytically tractable, may not readily extend to general monitoring scenarios. Learning-based approaches instead learn decision policies from interaction data, allowing  adaptation to graph structure, node priorities, and multi-robot coordination patterns. However, most existing learning formulations optimize surrogate rewards rather than an objective equivalent to the original worst-latency metric. 

	In this work, we study the persistent monitoring problem and introduce a family of tail-performance objectives that generalize the standard worst-latency objective and characterize long-run monitoring behavior after an initial transient phase. 
	We establish several key theoretical properties of the resulting optimization problems, including the existence and structural properties of optimal solutions, approximation by periodic solutions, and bounded performance loss under discretized waiting time. 
	Building on these results, we reformulate the original infinite-horizon functional optimization problem as an equivalent event-driven Markov decision process, termed the Tail Worst-case Latency-Optimizing Markov Decision Process (TWLO-MDP). This reformulation converts the non-additive worst-latency objective into a standard average-cost criterion, enabling reinforcement-learning-based solution methods. We further develop the multi-robot monitoring benchmark (M2Bench), a modular platform that integrates heuristic and RL-based monitoring methods and supports  unified training, evaluation, and comparison across prototypical monitoring environments.
	
	\textbf{Prior work:}
	The persistent monitoring problem has been widely studied in the literature and often appears under related names such as persistent surveillance, coverage, and patrolling.   In the simplest \textit{single-robot} setting with \textit{uniform node priorities}, \cite{YC:2004} proves that an optimal policy under the worst-latency criterion is a cyclic tour, and computing such a tour is equivalent to solving the classical NP-hard Traveling Salesman Problem (TSP)~\citep{RK:2010}. Thus, the problem is already computationally challenging even in its most basic form. For metric TSP, the classical Christofides algorithm achieves a $3/2$-approximation guarantee~\citep{NC:2022}, which was later slightly improved by~\cite{AK-Nk-OS:2021}. Moreover, \cite{MK-ML-RS:2015} show that there is no polynomial-time $(123/122-\varepsilon)$-approximation algorithm unless $\mathrm{P}=\mathrm{NP}$. A related extension additionally requires the robot to return to a depot for recharging after a fixed number of visits~\citep{SH-SR-DC:2020}, and the authors analyze the structure of optimal solutions under different recharging requirements and develop  efficient algorithms. 
    
	Beyond uniform-priority formulations, realistic monitoring tasks often involve locations with different levels of importance, urgency, or risk, naturally leading to weighted graph models. In this setting, nodes are assigned \textit{different priorities}, and the objective is to balance revisit frequencies according to their relative importance. \cite{SA-EF-SS:2012} show that the min-max weighted latency monitoring problem is APX-hard,  establish the existence of an optimal strategy for the single-robot heterogeneous-priority case, and prove that a periodic optimal solution always exists, although its period may be exponentially long. They also propose two approximation algorithms with  ratios $O(\log \frac{\weightmax}{\weightmin})$ and $O(\log \numnode)$, where $\frac{\weightmax}{\weightmin}$ is the ratio between the maximum and minimum node weights and $\numnode$ is the number of nodes in the monitoring  graph.

    Several recent works have further studied related weighted single-robot formulations. 	\cite{LG-TJ-TR:2024} consider the continuous Bamboo Garden Trimming Problem, which is mathematically equivalent to the single-robot monitoring problem, and propose two approximation algorithms with approximation ratios  $O\left(\log   \frac{\weightmax}{\weightmin} \right)$ and $O(\log \numnode)$, respectively.
	\cite{LC-JL-RK:2024} also propose an $O\left(\log \frac{\weightmax}{\weightmin}\right)$-approximation algorithm with an  improved constant factor compared with earlier logarithmic bounds. In addition, 
	\cite{DM-AS-LV:2022}  consider a related structured  weighted variant with binary node priorities, where locations are divided into priority and non-priority nodes. 
	While the single-robot setting already exhibits rich structure and significant computational difficulty, many practical monitoring applications require multiple robots to operate cooperatively in the same environment. 
	
	For \textit{multi-robot monitoring}, early work  focuses primarily on \textit{heuristic} strategies~\citep{NA-DU-PS:2011,DP-RR:2013,DP-RR:2013:iros,DP-RR:2016,AF-LI-DN:2017,AG-XL-QZ:2024}. These methods coordinate robots through mechanisms such as partition-based monitoring, distributed local decision-making, adaptive task allocation, and Bayesian adaptation, with the goal of achieving scalable and fault-tolerant monitoring behavior in practice. To support empirical  evaluation, several benchmarking and simulation frameworks have also been developed to integrate and compare representative monitoring heuristic algorithms in both simulation and more realistic experimental settings~\citep{DM-GR-PT:2009,DP-LI-AF:2018,AG-XL-QZ:2024}. Although these methods are often computationally efficient and effective in practice, their performance guarantees with respect to the worst-latency objective are usually limited or problem-specific. 
    
    Given the difficulty of the general multi-robot monitoring problem, another line of work studies  structured graph topologies or restricted strategy classes. Examples include open polylines~\citep{YE-AS-GK:2008}, chains, trees and cyclic graphs with uniform node priorities~\citep{FP-AF-FB:2012}, fences with distinct agent speeds~\citep{JC-LG-EK:2011,AD-AG-CT:2014,AK-MS:2020}, and non-intersecting tours with weighted locations~\citep{FP-JD-FB:2012}. These studies provide useful structural insights and, in some cases, exact or approximate algorithms, but their assumptions on the graph topology or strategy  structure limit their direct applicability to general weighted monitoring environments.
    
    For more general graph topologies, several approximation algorithms have been developed. \cite{PA-BD-HY:2020} study  the multi-robot monitoring problem in general metric spaces and propose an $O\!\left(\numrobot^2 \log \frac{\weightmax}{\weightmin}\right)$-approximation algorithm for the weighted setting, where $\numrobot$ is the number of robots. They also obtain stronger  results for patrol scheduling on the line $\mathbb{R}^1$: for uniform weights, an optimal solution always exists in the form of $k$ disjoint zigzag schedules and can be computed in polynomial time, whereas for heterogeneous weights they provide a $12$-approximation algorithm. In the unweighted case, 
	\cite{PA-MB-KB:2022} study  a class of \textit{cyclic solutions} in which the sites are partitioned into $\ell \le k$ groups and each group is patrolled by one or more robots evenly spaced along a TSP tour. They show that an optimal cyclic solution gives a $2(1-1/k)$-approximation to the global optimum, and that for $k=2$ an optimal cyclic solution is globally optimal. 
    Furthermore, \cite{LC-JL-RK:2024}  propose an algorithm based on multiclass minimum spanning forests for the weighted-node setting, improving the dependence on the number of robots and achieving an approximation ratio of $O\!\left(\numrobot \log \frac{\weightmax}{\weightmin}\right)$. More recently, \cite{DM-AS-LV:2025}  construct a class of strategies in which robots are only allowed to depart from nodes at times lying on a discrete time grid, and they prove that in this restricted class, there always exists a recurrent strategy whose objective value is at most \(\epsilon\) away from the optimum.
    
	Several related variants have also been studied.     
    \cite{AA-SS-SS:2019} investigate a dual problem in which the objective is to minimize the number of robots required to satisfy prescribed latency constraints at all vertices. This  line of work is further extended in~\citep{AA-SS-SS:2025}, which incorporates recharging constraints by requiring robots with limited battery capacity to periodically return to a depot while still minimizing the number of robots needed to satisfy the latency requirements. 
	Despite these advances, many  theoretical questions remain open for the general multi-robot weighted monitoring problem. In particular, strong existence, optimality, and structural results known for the single-robot case~\citep{SA-EF-SS:2012} do not directly extend to the multi-robot setting.

	Persistent monitoring can also be naturally viewed as a sequential decision-making problem and is therefore naturally amenable to reinforcement learning. Motivated by advances in deep reinforcement learning~\citep{VM-KK-DS:2015,VM-AB-KK:2016,JS-FW-PD:2017,MH-JM-DS:2018,TH-AZ-SL:2018} and multi-agent  reinforcement learning~\citep{TR-MS-SW:2020,PS-GL-AG:2017,CD-TG-SW:2020,CY-AV-YW:2022,OR-IB-WC:2019}, learning-based approaches have been increasingly explored for single- and multi-robot monitoring problems. Early learning-based studies mainly rely on tabular reinforcement learning. For example, \cite{HS-GR-VC-BR:2004} propose two Q-learning-based schemes, BBLA and GBLA, with state representations built from  local latency information. This work is later extended in \citep{FL-AK:2014} by incorporating additional coordination-aware features.
	More recent approaches use deep reinforcement learning to handle larger and more complex monitoring environments. Representative examples include value-based approaches such as DQN~\citep{MJ-LV-AS:2022}, as well as policy-gradient and multi-agent policy-gradient methods~\citep{LG-HP-XD-JH:2023,AG-YS-QZ:2024,JP-EG-MB:2025}. In particular, graph neural networks  have also been adopted to encode  graph structure and inter-agent interactions.
	A comparison of representative learning-based monitoring methods, including their reward designs, state representations, and learning algorithms, is provided in~\cref{table:RL}. Despite their empirical success, most existing learning formulations optimize surrogate rewards that are only loosely connected to the original monitoring objective. For  worst-case latency minimization, commonly used rewards such as instantaneous latency or its normalized variants are not equivalent to the trajectory-level worst-latency metric, and may therefore induce policies that are suboptimal under the true evaluation criterion. 
		This mismatch arises because  standard discounted or average-reward objectives are additive over time, whereas worst-case latency is a non-additive functional of the entire trajectory. 
		To address this  gap, we construct an event-driven MDP formulation whose objective is equivalent to the original worst-latency monitoring problem, so that solving the resulting MDP directly optimizes the intended criterion rather than a surrogate reward.

	Beyond latency-minimization formulations, persistent monitoring  also includes several related modeling paradigms. One line of work studies dynamic environments where uncertainty accumulates over time and is reduced through robot motion and sensing~\citep{SS-MS-DR:2012,CC-XL-XD:2013,NZ-XY-SA-CC:2018,JP-DT-CS:2019}. These problems are often formulated as continuous-time control or coverage-planning problems, where trajectories, speeds, dwell times, or coverage paths are optimized to regulate the long-term growth of the uncertainty field. Another line of work designs stochastic monitoring strategies, often based on Markov chains~\citep{XD-FB:2021}, with objectives related to surveillance  efficiency~\citep{RP-AC-FB:2016,GD_FB_JM:2023,WW-XD-JP:TCNS2025}, unpredictability~\citep{MG-SJ-FB:2019,XD-MG-FB:2020,XD-WW-RY:TAC2026,WW-XD-JH:TAC2025}, or trade-offs between the two~\citep{JG-JB:2005,WW-JH-XD:LCSS2024}. Game-theoretic variants further model strategic intruders or attackers and design monitoring strategies against such adversarial behaviors~\citep{SA-AM-KP:2011,AA-SS:2018,XD-DP-FB:2021,DK-AK-VV:2021,MD:KG:RS:2023,AN-MG-TP:2024,YJ-GD-XD-JM-FB:2025}.


	

	\textbf{Contributions:}
	Our core contributions are as follows:
	\begin{enumerate}
		\item We introduce a family of tail-performance objectives for multi-robot persistent monitoring on weighted graphs. 
		This formulation generalizes the conventional worst-case weighted latency criterion by excluding an initial transient phase, thereby capturing the long-run monitoring performance of robot teams.
		\item We establish theoretical properties of the proposed objective family and the associated  strategy classes. 
		In particular, we prove the existence of optimal strategies, invariance of the optimal value with respect to the transient horizon, monotonicity of optimal strategy sets, reachability of optimal long-run performance from arbitrary initial configurations, approximation by periodic solutions, and bounded performance loss under discretized waiting time.
		\item We reformulate the  continuous-time trajectory optimization problem as an equivalent event-driven MDP, termed the TWLO-MDP. 
		By augmenting the state with a historical worst-latency tracker and an elapsed-time counter, the TWLO-MDP transforms the non-additive tail objective into a standard average-cost criterion while preserving optimality.
		\item We develop reinforcement-learning-based solution methods for the TWLO-MDP. The proposed learning pipeline incorporates several problem-specific training techniques, including a time-normalized training objective, transparent generalized advantage estimation for handling asynchronous event transitions,  parameter sharing with role encoding, and imitation-based initialization.

    \item We build M2Bench, a modular benchmark platform for multi-robot persistent monitoring. 
    M2Bench integrates heuristic and reinforcement-learning-based monitoring algorithms within a unified workflow for configuration-driven training, evaluation, comparison, visualization, logging, and hyperparameter search.

    \item We conduct extensive experiments on special, synthetic, and real-map  benchmark environments. 
    The experiments compare representative heuristic and learning-based baselines, validate scalability and robustness under varying deployment conditions, and quantify the contribution of the main training components through ablation studies.
		
	\end{enumerate}
	
	\textbf{Organization:}
	The rest of the paper is organized as follows. \cref{sec:definition}  formulates the multi-robot persistent monitoring problem on weighted graphs and introduces the family of tail-performance objectives. \cref{sec:theoreticalresults}  presents the main theoretical results, including optimality properties, periodic approximation, and discretized waiting-time analysis. \cref{sec:MDPandRL}  develops the TWLO-MDP reformulation and the corresponding reinforcement-learning-based solution methods. \cref{section:platform} describes M2Bench, the proposed unified platform for implementing, training, and evaluating monitoring algorithms. \cref{section:experiments} reports experimental results and comparisons. Finally, \cref{section:conclusion} concludes the paper.
	
	\section{Problem formulation}\label{sec:definition}
	
	\subsection{Monitoring environment and performance metrics}\label{subsection:environment_metrics}
	The monitoring environment is modeled by an undirected connected   graph $\graph = (\nodes,\edges,\edgelength,\weight)$,
	where $\nodes$ is the set of nodes representing critical locations, \(\edges\) is the set of edges connecting these locations, \(\edgelength :\edges\to\Rpp\) assigns each edge a positive length corresponding to the physical distance between its endpoints, and \(\weight:\nodes\to\Rpp\) assigns each node a positive weight reflecting its monitoring priority. Let $\weightmax:= \max_{v\in \nodes} \weight(v)$ and $\weightmin:= \min_{v\in \nodes} \weight(v)$ denote the maximum and minimum node priorities, respectively. 
To describe continuous robot motion along edges, we  construct from $\graph$ a metric graph $\Ggeo = \left(\bigsqcup_{e\in\edges}[0,\edgelength(e)]\right)\big/\sim$, where $\bigsqcup$ denotes the disjoint union, each edge $e\in \edges$ is represented as a closed interval of length \(\edgelength({e})\), and $\sim$ identifies interval endpoints corresponding to the same incident node~\citep{DM:2021}. 
    Equipped with the shortest-path distance \(\dist(\cdot,\cdot)\), the pair \((\Ggeo,\dist)\) forms a  metric space. Thus, robots may be located not only at nodes but also at arbitrary points along edges. We let $\graphdiameter$ denote the diameter of  $\graph$.

	A team of \(\numrobot\) robots, indexed by \(\robots=\{r_1,\dots,r_\numrobot\}\), is deployed to persistently monitor the environment \(\graph\). Each robot moves on the metric graph \(\Ggeo\) with maximum speed   one and may wait  at any point for an arbitrary duration. The motion of robot \(r \in \robots\) is described by a schedule \(\schedule{r}:\Rp \to \Ggeo\), which specifies its position over time. The joint behavior of the robot team is  represented by the trajectory
	\[
	\strategy(t) = (\schedule{r_1}(t),\dots,\schedule{r_\numrobot}(t)) \in \Ggeo^\numrobot, \quad t\ge0.
	\]
	Since $\nodes$ and $\edges$ are finite  and all edges have finite length, the metric graph $\Ggeo$ equipped with the metric $\dist$ is compact~\citep[Theorem 3.8]{JK:2024}. Moreover, since finite products of compact spaces are compact, $\Ggeok$ is compact under the product topology~\citep[Theorem 3.2.4]{RE:1989}. 
	For $\boldsymbol{p}=(p_1,\dots,p_\numrobot) \in \Ggeok$ and $ \boldsymbol{p}'=(p'_1,\dots,p'_\numrobot)\in \Ggeo^\numrobot$, define the sup  metric: 
	\begin{gather*}
		\distGk(\boldsymbol{p},\boldsymbol{p}') := \max_{i\in \{1,\dots,\numrobot\}} \dist(p_i,p'_i).
	\end{gather*}
    The topology induced by $\distGk$ coincides with the product topology~\citep[Theorem 4.5.1]{MS:2007}.
	Hence, $(\Ggeo^\numrobot,\distGk)$ is a compact metric space.

    We define the set of feasible strategies  by
	\begin{multline}\label{eq:jointset}
		\strategyset  :=  \{\strategy \in \mathfrak{C}(\Rp, \Ggeo^\numrobot): \\\distGk(\strategy(t),\strategy(t'))\leq |t-t'|,\forall t,t' \},
	\end{multline}
    where $\mathfrak{C}(\Rp, \Ggeo^\numrobot)$ denotes the set of continuous functions from $\Rp$ to $\Ggeo^\numrobot$. The Lipschitz condition encodes the unit-speed constraint of the robot team and  is equivalent to $\dist(\schedule{r}(t),\schedule{r}(t'))\leq |t-t'|$ for all $r\in \robots$.  A joint strategy \(\strategy\) is called periodic if there exist $t^*\geq 0$ and $0<W<\infty$ such that \(\schedule{r}(t^*+t+W)=\schedule{r}(t^*+t)\) for all \(t\in\Rp\) and all \(r\in\robots\).

	A node \(v\in\nodes\) is said to be visited at time $t$ if \(\schedule{r}(t)=v\) for some robot \(r\in\robots\).
	To quantify the monitoring quality at each node, we use \emph{latency} as the key performance metric. For each node \(v\in\nodes\), robot $r\in\robots$, and time $t\geq0$, define the most recent visit time of robot \(r\) to \(v\) up to time \(t\) as
	\begin{equation} \label{eq:tauvr}
		\rvisitvatt{r}{\strategy}(v,t):=
		\begin{cases}
			\sup\{ t'\leq t : \schedule r(t')=v \}, & \text{if such $t'$ exists}, \\
			0, & \text{otherwise}.
		\end{cases}
	\end{equation}
	The most recent visit time of the entire robot team to node \(v\) is then
	\begin{equation}\label{eq:deftauv1}
		\allvisitvatt{\strategy}(v,t):= \max_{r\in\robots} \rvisitvatt{r}{\strategy}(v,t).
	\end{equation}
	The latency of node \(v\) under strategy $\strategy$ at time $t$ is defined as
	\begin{equation}\label{definition:latency}
		\latency{v}{\strategy}(t) := t - \allvisitvatt{\strategy}(v,t),
	\end{equation}
	which measures the time elapsed since the most recent visit to \(v\) by any robot. By convention, we assume the initial latency \(\latency{v}{\strategy}(0)=0\) for all nodes $v$. To evaluate the instantaneous monitoring performance of the whole team, we account for node priorities and adopt a worst-case perspective over the graph. Specifically, the instantaneous worst-case  weighted latency is defined as $\maxlatency{\strategy}{t} := \max_{v\in\nodes}\, \weight(v)\,  \latency{v}{\strategy}(t)$. 
	
	\subsection{Tail objective formulation}
	We now extend the instantaneous latency metric to evaluate the overall quality of a  strategy $\strategy$ over time. A natural approach, adopted in existing studies such as~\citep{SA-EF-SS:2012,PA-MB-KB:2022,PA-BD-HY:2020}, is to take the supremum of $\maxlatency{\strategy}{t}$ over the entire time horizon, i.e., $\sup_{t\geq 0} \maxlatency{\strategy}{t}$. This quantity corresponds to the worst-case weighted latency attained  over all time. However, it is sensitive to transient effects at the beginning of a monitoring mission. For instance, if robots start from unfavorable initial positions, some nodes may experience large latency values during the early stage, even if the robot team later settles into a regular monitoring pattern with good long-run performance. The following example illustrates this issue.
	\begin{example}[(Good long-run strategies with bad transients)]\label{example} Consider the graph in~\cref{fig:example}: nodes $\{1,2,3\}$ form a triangle with unit edge lengths, node~$4$ is connected to node~$1$ by an edge of length $5$, and all node weights are equal to one.
		\begin{figure}[h]
			\centering
			\includegraphics[width=0.8\linewidth]{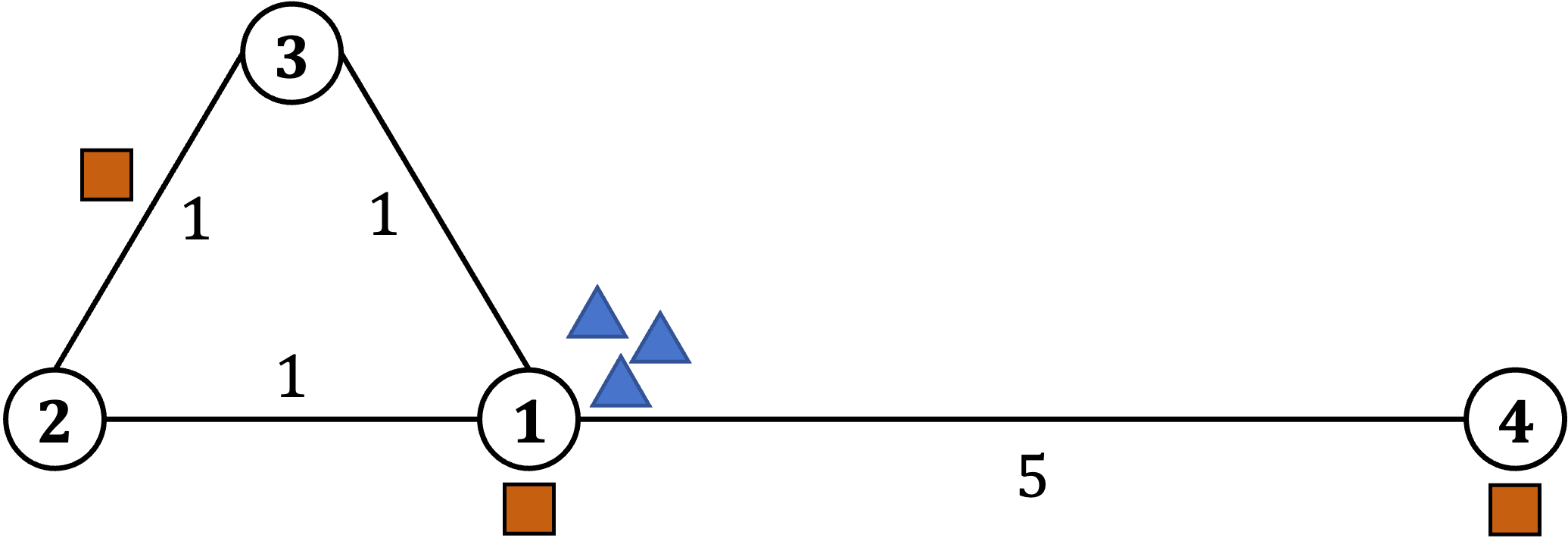}
			\caption{Long-edge graph used in \cref{example}  to illustrate the effect of transient behavior on the worst-latency objective.}
			\label{fig:example}
		\end{figure}
        
		We compare three feasible  strategies for monitoring this graph.
		
		\textbf{Strategy $\strategy_1$.} The robots start from the  positions shown by the  orange squares: one robot is located at node~$1$, one at the midpoint of edge $(2,3)$, and one at node~$4$. The robot at node~$4$ remains stationary, while the other two robots move at full speed along the cycle $1\!\to\!3\!\to\!2\!\to\!1$. Under this strategy, 
		\[
		\sup_{t\ge0}\maxlatency{\strategy_1}{t}=\tfrac{3}{2}.
		\]

		\textbf{Strategy $\strategy_2$.} The robots start from the  positions shown by the blue triangles: all robots are initially located at node~$1$, corresponding to a typical single-base deployment. One robot moves at full speed to node~$4$ and then remains there. A second robot immediately starts moving at full speed along the cycle  $1\!\to\!3\!\to\!2\!\to\!1$. The third robot waits at node~$1$ for $3/2$ units of time and then follows the same cycle at full speed. In this case, the instantaneous worst-case weighted latency $\maxlatency{\strategy_2}{t}$ approaches $5$ during the initial transient period  $t\in[0,5)$, due to the delayed first visit to node $4$, and becomes  $3/2$ thereafter. Hence, 
		\[
		\sup_{t\ge0}\maxlatency{\strategy_2}{t}=5,
		\quad\text{and}\quad
		\sup_{t\ge5}\maxlatency{\strategy_2}{t}=\tfrac{3}{2}.
		\]
		
		\textbf{Strategy $\strategy_3$.} All robots again start at node~$1$. One robot moves to node~$4$ and remains there, one robot moves to node~$3$ and then shuttles between nodes~$2$ and~$3$ at full speed, and the third robot stays at node~$1$. Then,  
		\[
		\sup_{t\ge0}\maxlatency{\strategy_3}{t}=5,
		\quad\text{while}\quad
		\sup_{t\ge5}\maxlatency{\strategy_3}{t}=2.
		\]

        This example shows that the global supremum $\sup_{t\ge0}\maxlatency{\strategy}{t}$ may fail to distinguish strategies with different long-run behaviors.  Strategies $\strategy_2$ and $\strategy_3$ have the same value under this metric, even though, after the initial transient phase, $\strategy_2$ reaches the same long-run monitoring pattern as $\strategy_1$ and achieves the same long-run latency $3/2$, whereas $\strategy_3$ has long-run latency $2$. Thus, excluding an initial transient period, for example by considering $\sup_{t\ge5}\maxlatency{\strategy}{t}$, provides a more faithful assessment here.
        
	\end{example}

	Motivated by \cref{example}, we now introduce a family of tail-performance metrics that evaluate monitoring quality after an initial transient period. These metrics retain the worst-case perspective of the classical objective while focusing on the long-run behavior of the robot team. Let $\overRp:= \mathbb{R}_{\geq 0} \cup \{+ \infty\}$. 
	\begin{definition}[(Tail worst-case weighted latency)]
		For any \(T \in \overRp\),   the tail worst-case weighted latency of a strategy \(\strategy\) is defined by
		\begin{equation}\label{eq:defofobj}
			\objectivefm{T}{\strategy}
            :=
            \begin{cases}
                \sup_{t\geq T} \maxlatency{\strategy}{t},& \textup{for }T<\infty,\\
                \limsup_{t\to+\infty}\maxlatency{\strategy}{t},&\textup{for }T=+\infty.
            \end{cases}
		\end{equation}
	\end{definition} 
	Note that the family \(\{\objectivefm{T}{\strategy}\}_{T \in \overRp}\) generalizes the  performance metric commonly adopted in the persistent monitoring literature. In particular, the classical worst-case weighted latency is recovered by taking $T=0$. Finite values of $T>0$ exclude an initial transient period and evaluate the worst-case weighted latency from time $T$ onward, while the case $T=+\infty$ captures the asymptotic tail performance of the strategy. Since the limit superior  is  well-defined in the extended nonnegative real line $\overRp$~\citep[Theorem 3.1.7]{WR:1976}, $\objectivefm{\infty}{\strategy}$ is  well-defined. 
    
    
	With these definitions in place, the multi-robot monitoring problem can  be formally stated as follows.
	\begin{prob}[(Multi-robot monitoring problem)]\label{prob:monitoring-problem}
		Given an undirected connected  graph \(\graph=(\nodes,\edges,\edgelength,\weight)\), a robot team \(\robots=\{r_1,\dots,r_{\numrobot}\}\), and a tail parameter  $T\in\overRp$, the multi-robot monitoring problem is to find a feasible  joint strategy \(\strategy^* \in \strategyset\) that minimizes the tail worst-case weighted latency defined in~\eqref{eq:defofobj}, namely, 
		$\strategy^* \in \underset{{\strategy \in \strategyset}}{\arg \min}\objectivefm{T}{\strategy}$.
	\end{prob}

	\subsection{Rationale for the tail-performance family}\label{subsec:rationale}
	The family of objective functions $\{ \objectivefm{T}{\strategy} \}_{T\in\overRp}$ introduced in~\eqref{eq:defofobj} 
	serves as the foundation for our subsequent  analysis. 
	Before presenting the formal theoretical results, we summarize several key properties that motivate this formulation and clarify its role in the development of the paper. 
	The formal statements are postponed to~\cref{sec:theoreticalresults}.

	\textbf{1) Existence of optimal solutions.}  
	For every $T\in\overRp$, there exists at least one optimal strategy $\strategy_T^*\in\strategyset$ that attains the minimum of $\objectivefm{T}{\cdot}$. 
	This property ensures that each optimization problem induced by the tail-performance family is well posed.
	
	\textbf{2) Consistency of optimal values.}  
	Let $\optobjT{T} := \inf_{\strategy\in\strategyset}\objectivefm{T}{\strategy}$.  
	We will show that $\optobjT{T}=\optobjall$ for all $T\in\overRp$. Thus, although different values of $T$ evaluate different portions of a trajectory, all members of the family yield the same optimal value. In this sense, excluding an initial transient period does not change the best achievable long-run monitoring performance.
	
	\textbf{3) Monotonicity of optimal strategy sets.}  
	Denote by $\optstrategyT{T}$ the set of optimal strategies for $\objectivefm{T}{\cdot}$.  
	For any $0\leq T_1<T_2\leq+\infty$, we will prove that
	$\optstrategyT{0}\subseteq \optstrategyT{T_1}\subseteq \optstrategyT{T_2}$.
	Hence, any strategy that is optimal under the classical criterion ($T=0$) remains optimal when an initial transient period is excluded. Increasing $T$ can only enlarge the optimal strategy set, thereby allowing strategies with different transient behaviors but the same tail performance.
	
	\textbf{4) Reachability from fixed initial configurations.}  
	Given an initial configuration $\inipos\in\Ggeok$, let $\intstrategyset{\inipos}$ denote the set of feasible strategies starting from $\strategy(0)=\inipos$.  
	We will show that, for every $T> \graphdiameter+\frac{\optobjall}{\weightmin}$, there exists a tail-optimal strategy starting from $\inipos$, namely,  $\optstrategyT{T}\cap\intstrategyset{\inipos}\neq\emptyset$.
	This property is particularly important in practical monitoring missions, where the initial robot deployment is typically fixed. It shows that, after a sufficiently long transient period, the optimal tail performance can still be achieved from any initial configuration. 
	
	Together, these properties show that the tail-performance family is mathematically well posed, consistent with the classical worst-case objective, and better suited to fixed-start monitoring scenarios. They also provide the theoretical basis for the MDP reformulation and reinforcement-learning-based solution methods developed later in the paper.
	
	\section{Theoretical results}\label{sec:theoreticalresults}
	
	\subsection{Existence and properties of optimal solutions}\label{subsection:existence}
	\cref{prob:monitoring-problem} is an infinite-dimensional optimization problem over the space of feasible  strategies. Before proving the existence of an optimal strategy, we first note that the optimal value is finite. Indeed, since the graph is finite and connected, one can construct a traveling-salesman tour of length $\ell_{\rm{TSP}}$ that visits all nodes. Placing the \(\numrobot\) robots uniformly along this tour and letting them move at unit speed yields a feasible periodic strategy whose worst-case weighted latency is at most  $\weightmax \ell_{\rm{TSP}} / \numrobot <\infty$. Therefore, $\inf_{\strategy\in\strategyset} \objectivefm{T}{\strategy}<\infty$.
	However, the existence of a strategy that actually attains this infimum is nontrivial. We establish this property by applying a generalized extreme value theorem to the tail-performance functional.
	
	\begin{theorem}[(Existence of an optimal strategy for \cref{prob:monitoring-problem})]\label{thm:existence}
		Consider \cref{prob:monitoring-problem} with $T\in \overRp$. 
		Then there exists at least one feasible  strategy $\strategy^*\in\strategyset$ such that
		\[
		J_T(\strategy^*) \;=\; \inf_{\strategy\in\strategyset} J_T(\strategy).
		\]     
	\end{theorem}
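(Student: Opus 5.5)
Our approach is a direct-method argument: topologize $\strategyset$ so that it is compact, and show that $J_T$ is lower semicontinuous. A lower semicontinuous function attains its infimum on a compact set. This is the generalized extreme value theorem the paper invokes.

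\textbf{Step 1 (compactness).} We equip $\strategyset$ with the topology of uniform convergence on compact subsets of $\Rp$. The functions in $\strategyset$ are equi-Lipschitz with constant one and take values in the compact metric space $(\Ggeok,\distGk)$. The Arzel\`a--Ascoli theorem on each $[0,n]$, combined with a diagonal argument, then shows that every sequence in $\strategyset$ has a subsequence converging locally uniformly. The Lipschitz bound passes to the limit, so $\strategyset$ is sequentially compact. This topology is metrizable, for example by $\sum_n 2^{-n}\min\{1,\sup_{[0,n]}\distGk\}$, so sequential compactness is enough.

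\textbf{Step 2 (lower semicontinuity of latency).} This is the key step and the main obstacle, because latency is defined through a supremum of visit times, which need not behave continuously. The plan is to prove the following: if $\strategy_n\to\strategy$ locally uniformly, then for each $v$ and $t$,
\[
\latency{v}{\strategy}(t)\le\liminf_n \latency{v}{\strategy_n}(t).
\]
Equivalently, $\allvisitvatt{\strategy}(v,t)\ge\limsup_n \allvisitvatt{\strategy_n}(v,t)$.

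\begin{itemize}
\item \emph{Visits persist in the limit.} Suppose robot $r$ under $\strategy_n$ is at $v$ at times $s_n\le t$ with $s_n\to s$. Uniform convergence on $[0,t]$ gives $\schedule{r}(s)=v$, because the visit sets are closed and the limit of visits is a visit.
\item \emph{Choice of visit times.} We pass to a subsequence realizing the $\limsup$ and, since there are finitely many robots, fix a single robot along it. If the visits of that robot exist, the sup in \eqref{eq:tauvr} is attained by continuity, so we may take $s_n=\rvisitvatt{r}{\strategy_n}(v,t)$.
\item \emph{The default case.} The convention ``$0$ otherwise'' needs care, since $0$ is not a visit. It is harmless: $\allvisitvatt{\strategy}\ge0$ always, so the default value $0$ can never exceed the limit.
\end{itemize}

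It follows that $\maxlatency{\strategy}{t}\le\liminf_n\maxlatency{\strategy_n}{t}$ for each fixed $t$, since a finite max of lower semicontinuous functions is lower semicontinuous.

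\textbf{Step 3 (lower semicontinuity of $J_T$).} For $T<\infty$, $J_T$ is a supremum over $t\ge T$ of lower semicontinuous functionals, so it is lower semicontinuous.

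For $T=\infty$, a limsup is not automatically lower semicontinuous, so we argue via the observation that latency grows at unit rate between visits. Specifically, for $s\ge t$ we have $\latency{v}{\strategy}(s)\le \latency{v}{\strategy}(t)+(s-t)$. The plan:

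\begin{enumerate}
\item Take a minimizing sequence $\strategy_n$ with $J_\infty(\strategy_n)\to\optobjT{\infty}$. For each $n$ choose $T_n$ with $J_{T_n}(\strategy_n)\le J_\infty(\strategy_n)+1/n$.
\item Time-shift: set $\tilde\strategy_n(t):=\strategy_n(t+T_n)$. This preserves feasibility.
\item Note that the latencies of $\tilde\strategy_n$ at $t$ equal those of $\strategy_n$ at $t+T_n$ whenever a visit has occurred in $[T_n,T_n+t]$. Otherwise they are smaller, because the shifted strategy resets the latency convention at $0$.
\item Hence $J_0(\tilde\strategy_n)\le J_{T_n}(\strategy_n)$.
\item Apply Steps 1--2 to extract a limit $\strategy^*$ with $J_0(\strategy^*)\le\liminf J_0(\tilde\strategy_n)\le\optobjT{\infty}$.
\item Conclude using $J_\infty\le J_T\le J_0$, which holds for every $T$ directly from the definition.
\end{enumerate}

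The same chain gives $J_T(\strategy^*)\le\optobjT{\infty}\le\optobjT{T}$, so $\strategy^*$ is optimal for every $T$. A remaining point to verify is that extracting a limit and using the $T<\infty$ lower semicontinuity argument handles this cleanly. We expect the treatment of the ``no visit yet'' convention in Step 2 and this limsup case to be the delicate parts.
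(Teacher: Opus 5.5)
Your proposal is correct and follows essentially the same route as the paper: compactness of $\strategyset$ via Arzel\`a--Ascoli in a metrizable compact-open topology, lower semicontinuity of $J_T$ for finite $T$ via upper semicontinuity of the most-recent-visit time (with the same pigeonhole-over-robots and closed-visit-set argument), and, for $T=\infty$, the time-shift inequality $J_0(\strategy(\cdot+T'))\le J_{T'}(\strategy)$. The only difference is organizational: the paper applies that shift to a single hypothetical strategy with $J_\infty<\optobjT{0}$ to get a contradiction and then reuses the $T=0$ minimizer, so your shifted minimizing sequence and second compactness extraction are unnecessary, though harmless.
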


	\begin{proof}
        
		The detailed proof is postponed   to Appendix~\ref{appendix:existence}.
	\end{proof}
	The existence result above shows that \cref{prob:monitoring-problem}  admits an optimal solution for every $T\in \overRp$. We next establish several structural properties of the optimal values and optimal strategy sets induced by the tail-performance family. 
	
	\begin{theorem}[(Properties of~\cref{prob:monitoring-problem})]\label{thm:properties}
		Consider the multi-robot monitoring problem in~\cref{prob:monitoring-problem} with the tail objective family 
		$\{\objectivefm{T}{\strategy}\}_{T\in\overRp}$ defined in~\eqref{eq:defofobj}. Then 
		the following properties hold:
		\begin{enumerate}
			\item The optimal value is independent of the tail parameter $T$, namely,
			$\optobjT{T}=\optobjall$ for all $T\in \overRp$. \label{item:independence}
			\item  The optimal strategy sets are monotone with respect to $T$. Specifically, for any $T_1,T_2\in\overRp$ with $T_1<T_2$,
			$ \optstrategyT{T_1}\subseteq\optstrategyT{T_2}$.  \label{item:including}
			
			\item 
			For any initial configuration $\inipos\in\Ggeok$, there exists a tail-optimal strategy starting from $\inipos$  whenever the transient horizon is sufficiently large. Specifically, we have  $\optstrategyT{T}\cap\intstrategyset{\inipos}\neq\emptyset$ for all $T > \graphdiameter+\frac{\optobjall}{\weightmin}$. \label{item:accessible}
			
		\end{enumerate}
	\end{theorem}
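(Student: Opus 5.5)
The plan is to prove the three items in order. The basic tools are the pointwise monotonicity of $T\mapsto \objectivefm{T}{\strategy}$, a time-shift argument, and a concatenation argument built on the existence result of \cref{thm:existence}.

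\textbf{Step 1 (monotonicity of the objective).} For each fixed $\strategy$, the map $T\mapsto\objectivefm{T}{\strategy}$ is nonincreasing on $\overRp$. For finite $T_1<T_2$ this holds because the supremum in $\objectivefm{T_2}{\strategy}$ is taken over a smaller set. For $T_2=+\infty$ it holds because $\limsup_{t\to\infty}\maxlatency{\strategy}{t}=\lim_{T\to\infty}\sup_{t\ge T}\maxlatency{\strategy}{t}$. Taking infima over $\strategyset$ gives $\optobjT{T_2}\le\optobjT{T_1}$, and in particular $\optobjT{\infty}\le\optobjT{T}\le\optobjT{0}$.

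\textbf{Step 2 (item~\ref{item:independence}).} It remains to show $\optobjT{0}\le\optobjT{\infty}$. By \cref{thm:existence}, pick $\strategy^\infty$ with $\objectivefm{\infty}{\strategy^\infty}=\optobjT{\infty}$, and fix $\varepsilon>0$. Choose $S$ large enough that $\maxlatency{\strategy^\infty}{t}\le\optobjT{\infty}+\varepsilon$ for all $t\ge S$, and also $S>\graphdiameter+(\optobjT{\infty}+\varepsilon)/\weightmin$.

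The shifted strategy $\strategy(t):=\strategy^\infty(t+S)$ is feasible. Its latencies satisfy $\latency{v}{\strategy}(t)\le\latency{v}{\strategy^\infty}(t+S)$, since the shifted strategy "forgets" history and its latency is reset to $0$ at time $0$. Hence $\objectivefm{0}{\strategy}\le\optobjT{\infty}+\varepsilon$.

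The delicate point is the convention that the latency at the start is $0$ even for nodes never visited. The definition in~\eqref{eq:tauvr} sets the visit time to $0$ if no visit has occurred, so the latency is $t$. For the shifted strategy, a node not yet visited after the shift has latency $t$. This is still at most $\latency{v}{\strategy^\infty}(t+S)$, because the true last visit of $\strategy^\infty$ occurred before time $S$ (or never). So the inequality holds.

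Letting $\varepsilon\to0$ gives $\optobjT{0}\le\optobjT{\infty}$. Therefore all values $\optobjT{T}$ coincide; call the common value $\optobjall$.

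\textbf{Step 3 (item~\ref{item:including}).} Let $\strategy\in\optstrategyT{T_1}$. By Steps 1 and 2,
\[\optobjall\le\objectivefm{T_2}{\strategy}\le\objectivefm{T_1}{\strategy}=\optobjall.\]
Hence $\strategy\in\optstrategyT{T_2}$.

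\textbf{Step 4 (item~\ref{item:accessible}).} Take $\strategy^*\in\optstrategyT{0}$, which exists by \cref{thm:existence}; it satisfies $\maxlatency{\strategy^*}{t}\le\optobjall$ for all $t$. Fix $\inipos$ and set $\tau:=\graphdiameter$. Build $\strategy$ as follows.
\begin{itemize}
\item On $[0,\tau]$, each robot $r_i$ travels at unit speed along a shortest path from $p_{0,i}$ to $\schedule{r_i}^*(0)$, then waits. The distance is at most $\graphdiameter$, so this is feasible.
\item On $[\tau,\infty)$, set $\strategy(t)=\strategy^*(t-\tau)$.
\end{itemize}
Continuity at $\tau$ and the Lipschitz property follow by gluing.

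For $t\ge\tau$ with $t-\tau\ge\optobjall/\weightmin$, consider any node $v$. Since $\weight(v)\latency{v}{\strategy^*}(s)\le\optobjall$, every interval of length greater than $\optobjall/\weightmin$ contains a visit by $\strategy^*$ to $v$; in particular, $\strategy^*$ visits every node by time $\optobjall/\weightmin$. Consequently $\latency{v}{\strategy}(t)=\latency{v}{\strategy^*}(t-\tau)$ once $t-\tau$ exceeds the first visit time of $v$. This yields $\maxlatency{\strategy}{t}\le\optobjall$ for all $t>\graphdiameter+\optobjall/\weightmin$.

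The potential obstacle is the equality case, together with the convention that unvisited nodes carry the reset latency $t-\tau$ under $\strategy$ versus latency $t-\tau$ under $\strategy^*$. These agree because both count from the start of the respective trajectory, so in fact no difficulty arises.

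Hence $\objectivefm{T}{\strategy}\le\optobjall$ for every $T>\graphdiameter+\optobjall/\weightmin$. Combined with item~\ref{item:independence}, this gives $\strategy\in\optstrategyT{T}\cap\intstrategyset{\inipos}$.
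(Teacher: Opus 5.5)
Your proposal is correct and follows essentially the paper's route: a time-shift argument for item~1, monotonicity of $T\mapsto J_T$ for item~2, and steering to $\pi^*(0)$ for some $\pi^*\in\Pi^*_0$ and then following it, with the coverage argument, for item~3. For item~1 you sandwich finite $T$ between $J^*_\infty$ and $J^*_0$ instead of separately shifting an optimal $\pi^*_T$, which is a harmless simplification.

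Two small remarks. First, you do not need \cref{thm:existence} at $T=\infty$, since an $\varepsilon$-minimizer of $J_\infty$ works verbatim; the paper in fact obtains that case from $J^*_\infty=J^*_0$. Second, your closing remark is inaccurate: under the glued strategy, a node not visited up to time $t$ has latency $t$ (or $t$ minus a steering-phase visit time), not $t-\tau$. This is moot, because every node is visited in $[\tau,t]$ once $t-\tau>J^*/\phi_{\min}$.
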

	\begin{proof}
		The proof is postponed to Appendix~\ref{appendix:properties}.
	\end{proof}
	The results above formalize the rationale in \cref{subsec:rationale}: the proposed tail-performance family is well posed, consistent with the classical worst-case objective, and suitable for characterizing long-run monitoring quality. 
	
	\subsection{Approximation by Periodic Strategies} \label{subsection:periodic}
The results in~\cref{sec:theoreticalresults} establish the existence and structural properties of optimal strategies in the full feasible strategy space. However, such strategies are defined over an infinite time horizon, making them difficult to represent, construct, or analyze directly. Periodic strategies provide a more tractable class: after a finite transient phase, the joint behavior of the robot team repeats with a fixed period. The next theorem shows that restricting attention to periodic strategies does not lose optimality at the level of approximation. Specifically, for any feasible strategy $\strategy$ with finite tail performance, i.e., $\objectivefm{T}{\strategy}<\infty$, and for any prescribed accuracy $\torlance>0$, one can construct a periodic strategy $\perstrategy$ whose tail worst-case weighted latency is within $\torlance>0$ of that of $\strategy$. Consequently, although an exactly optimal periodic strategy need not be guaranteed, periodic strategies are sufficient to approximate the optimal value arbitrarily closely.
    

	\begin{theorem}[(Approximation by periodic strategies)]\label{thm:periodic}
		For any $T\in\overRp$, any feasible strategy $\strategy \in \strategyset$ with $\objectivefm{T}{\strategy} <\infty$, and any tolerance $\torlance>0$, there exists a periodic strategy $\perstrategy \in \strategyset$ such that
		\begin{equation}\label{eq:thmbound}
			\objectivefm{T}{\perstrategy} 
			\;\le\;
			\objectivefm{T}{\strategy} + \torlance.
		\end{equation}
		
	\end{theorem}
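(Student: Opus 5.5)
Our plan is a recurrence (pigeonhole) argument on an augmented state space that records both robot positions and node latencies. We then splice a trajectory segment between two nearly equal states and repeat it forever.

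Set $J:=\objectivefm{T}{\strategy}<\infty$ and let $T_0:=T$ if $T<\infty$. If $T=\infty$, choose $T_0$ large enough that $\maxlatency{\strategy}{t}\le J+\torlance/4$ for all $t\ge T_0$; this is possible by the definition of $\limsup$. For all $t\ge T_0$ we then have $\weight(v)\latency{v}{\strategy}(t)\le J':=J+\torlance/4$, so every latency satisfies $\latency{v}{\strategy}(t)\le J'/\weightmin$. Consider the augmented state
\[
X(t):=\bigl(\strategy(t),(\latency{v}{\strategy}(t))_{v\in\nodes}\bigr)\in \Ggeok\times[0,J'/\weightmin]^{\numnode},
\]
which lies in a compact metric space. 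Sample it at times $t_n:=T_0+nJ'/\weightmin$. By compactness (or by pigeonhole over a finite $\epsilon$-net), there exist indices $m<n$ with $t_n-t_m\ge J'/\weightmin$, $\distGk(\strategy(t_m),\strategy(t_n))\le\epsilon$, and latency vectors at $t_m$ and $t_n$ differing by at most $\epsilon$ componentwise. Here $\epsilon$ will be chosen small relative to $\torlance\weightmin/\weightmax$. Because the window $[t_m,t_n]$ has length at least the maximal latency, every node is visited by some robot inside it.

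Next we build the periodic strategy. Follow $\strategy$ on $[0,t_n]$. Then move each robot from $\schedule{r}(t_n)$ back to $\schedule{r}(t_m)$ along a shortest path, taking time at most $\epsilon$; all robots wait until exactly $\epsilon$ has elapsed. After that, replay the segment $\strategy|_{[t_m,t_n]}$, then the connecting move, and so on, giving period $W:=t_n-t_m+\epsilon$. The unit-speed Lipschitz condition is preserved at every splice because each piece is $1$-Lipschitz and the pieces agree at their junctions, so $\perstrategy\in\strategyset$ and it is periodic with $t^*=t_n$.

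It remains to bound latencies, which is the main obstacle. Latency is history-dependent, so we must verify that during each replayed copy the latency of $v$ at time $t_n+\epsilon+s$ (with $s\in[0,t_n-t_m]$) is at most $\latency{v}{\strategy}(t_m+s)+O(\epsilon)$. We split into two cases. If $v$ is visited within the replayed portion $[t_m,t_m+s]$, the latency is identical to the original. Otherwise, the last visit occurred in the previous copy. By periodicity, in the first copy that previous copy is the original $\strategy$ on $[t_m,t_n]$, so the elapsed time equals $\latency{v}{\strategy}(t_n)+\epsilon+s\le\latency{v}{\strategy}(t_m)+2\epsilon+s$. Since $v$ is unvisited on $[t_m,t_m+s]$ in $\strategy$, this quantity equals $\latency{v}{\strategy}(t_m+s)+2\epsilon$. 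Later copies reduce to the same computation, because every node is visited within each copy, so the relevant previous visit always lies in the immediately preceding copy, whose visit pattern is identical. For times in the connecting gaps, latency grows by at most $\epsilon$ beyond the value at the end of a copy. Hence $\maxlatency{\perstrategy}{t}\le J'+3\epsilon\weightmax$ for all $t\ge T_0$.

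Finally, we combine the cases. If $T<\infty$, then on $[T,t_n]$ the strategy $\perstrategy$ coincides with $\strategy$, which gives \eqref{eq:thmbound} once $3\epsilon\weightmax+\torlance/4\le\torlance$. For $T=\infty$ the $\limsup$ is bounded by the same quantity. The delicate points to check carefully are that visits occurring exactly at splice endpoints are counted correctly in $\rvisitvatt{r}{\cdot}$ as defined in \eqref{eq:tauvr}, and that the choice of $t_m$ guarantees that every node is visited in each copy.
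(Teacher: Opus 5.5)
Your proposal is correct and follows essentially the same route as the paper. Both arguments apply a pigeonhole/compactness argument to the joint position--latency state after the transient, close the robot positions with a short shortest-path move, and repeat the resulting segment periodically. Coverage (every node is visited within each copy) then resets the latencies every period, so the loss is only $O(\weightmax\epsilon)$. The differences are only streamlining: you sample at spacing $J'/\weightmin$, so any two samples already bracket a covering window, which replaces the paper's covering-number lemma with its explicit duration bounds. For finite $T$ you also sample directly from $T$, instead of reducing to $\objectivefm{\infty}{\strategy}$ as the paper does.
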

	\begin{proof}
		We postpone the proof to Appendix~\ref{appendix:periodic}.
	\end{proof}
	
	\cref{thm:periodic} shows that periodic strategies are sufficient for approximating the optimal value of the tail worst-case weighted latency arbitrarily closely. This structural property is useful because periodic strategies can be represented by finite repeating patterns, making both theoretical analysis and algorithmic design more tractable. In particular, although the original problem is posed over an infinite time horizon, the theorem shows that near-optimal performance can always be achieved within the class of periodic strategies. A related approximation result was recently established in~\citep{DM-AS-LV:2025} under a discretized recurrent-strategy framework. In their formulation, edge traversal times are fixed by the graph weights, and the construction first aligns departure times to a temporal grid determined by a discretization constant. In contrast, our result is stated directly in the continuous-time feasible strategy space and does not require a prior temporal discretization of robot trajectories.
    

	\subsection{Event-driven decisions, discretized waiting time, and fixed initial configuration} \label{subsection:discretized}
	We next introduce a more structured class of strategies that will serve as the basis for our subsequent event-driven reformulation. 
    A feasible  strategy $\strategy$ is called rational if each robot always traverses edges at maximum speed and may  wait only at nodes. Let $\rationalstrategyset \subset \strategyset$ denote the set of such rational strategies. Restricting attention to $\rationalstrategyset$ does not change the optimal value of the monitoring problem. Indeed, whenever a robot moves along an edge slower than unit speed or waits at an interior point of an edge, we can instead let the robot traverse the edge at unit speed and transfer the remaining time to waiting at the destination node. This modification never delays any node visit and therefore does not increase the worst-case weighted latency. 
    
	
	For rational strategies, robot motion evolves deterministically between consecutive \emph{events}. An event occurs whenever at least one robot arrives at a node or completes a waiting period at a node. Let $0=t_0<t_1<t_2<\cdots$ denote the increasing sequence of distinct event times, with simultaneous events grouped at the same time. Under this canonical event representation, Zeno behavior cannot occur. Indeed, consecutive waiting periods of the same robot at the same node can be merged into a single waiting period, so only the ending time of the merged waiting period needs to be recorded as an event. Hence, between two recorded waiting events of the same robot, the robot must traverse at least one edge. Since the graph is finite and all edge lengths are positive, every such traversal takes at least the minimum edge length. Therefore, each robot can generate only finitely many recorded events in any finite physical-time interval. It is then sufficient to record robot positions and node latencies at event times. Between two consecutive event times, each robot either moves along an edge at unit speed or remains stationary at a node, and the latency dynamics are fully determined: latencies of unvisited nodes increase linearly with elapsed time, while nodes occupied or visited by robots remain at, or are reset to, zero. Thus, the continuous-time evolution between events can be reconstructed from the event-time state information. Moreover, over each interval $[t_j,t_{j+1})$, the supremum of the instantaneous worst-case weighted latency  is  attained in the left-limit sense at $t_{j+1}^-$. Hence, for rational strategies,
	\[ 
    	\objectivefm{T}{\strategy}
            =
            \begin{cases}
                \sup_{\{j:\,t_j>T\}}\maxlatency{\strategy}{t_j^-},& \textup{for }T<\infty,\\
                \limsup_{j\to+\infty}\maxlatency{\strategy}{t_j^-},&\textup{for }T=+\infty.
            \end{cases}
	\]
    Therefore, the continuous-time monitoring problem can be represented through event-driven decision times without losing the information needed to evaluate the tail worst-case weighted latency. 
	

    This event-driven representation of rational strategies isolates the only continuous decision variable: the waiting duration at a node. To obtain a finite action space, we discretize this waiting duration decision. Given a resolution $\Delta>0$, we allow a single waiting action of duration $\Delta$. Longer waiting periods can be represented by repeatedly selecting this $\Delta$-waiting action over consecutive decision steps. After this discretization, each available robot chooses from a finite set of movement actions, determined by the neighboring nodes, together with one discrete waiting action. As shown later, this discretization yields a finite-action strategy class while introducing only a controllable approximation error.
    
	To further align the formulation with practical deployment scenarios, we fix an initial configuration $\inipos\in\nodes^\numrobot$ for the robot team, where all robots are initially located at nodes. This setting reflects common applications in which robots begin operation from predetermined deployment sites such as charging docks or task-specific starting posts. Let $\rationalstrategyset^\Delta_{\inipos}$ denote the set of rational strategies that start from $\inipos$ and use discretized waiting resolution $\Delta$. 
	\begin{prob}[(Multi-robot monitoring with discretized waiting time and fixed initial configuration)]\label{prob:monitoring-discrete}
		Given an undirected connected  graph \(\graph=(\nodes,\edges,\edgelength,\weight)\), a robot team \(\robots=\{r_1,\dots,r_{\numrobot}\}\), a tail parameter  $T\in\overRp$, an initial configuration $\inipos\in\nodes^\numrobot$, and a waiting-time resolution $\Delta$, the discretized multi-robot monitoring problem is to find a rational  strategy in $\rationalstrategyset^\Delta_{\inipos}$ that minimizes the tail worst-case weighted latency, namely,  		\begin{equation}\label{eq:monitoring-discrete}
			\strategy^*_{\inipos,\Delta} 
			\in 
			\underset{\strategy \in \rationalstrategyset^\Delta_{\inipos}}{\arg\min}
			\;
			\objectivefm{T}{\strategy}.  
		\end{equation}
	\end{prob}
	We first show that the optimization problem over $\rationalstrategyset^\Delta_{\inipos}$ still admits an optimal solution: 
	\begin{lemma}[(Existence of an optimal solution in the discretized strategy class)]\label{lemma:existenceofdiscretized}
		For any finite tail parameter $T<\infty$, any waiting-time resolution $\Delta>0$, and any initial configuration $\inipos\in\nodes^\numrobot$, 
		the discretized monitoring problem, \cref{prob:monitoring-discrete}, admits an optimal solution. That is, there exists $\strategy_{\inipos,\Delta}^{*}\in \rationalstrategyset^\Delta_{\inipos}$ such that
		\[
		\objectivefm{T}{\strategy_{\inipos,\Delta}^{*}}
		=
		\inf_{\strategy\in \rationalstrategyset^\Delta_{\inipos}}\objectivefm{T}{\strategy}.
		\]
	\end{lemma}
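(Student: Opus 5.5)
The plan is to view strategies in $\rationalstrategyset^\Delta_{\inipos}$ as infinite sequences of discrete joint decisions and to apply a compactness (K\"onig-type / diagonal) argument on the resulting finitely branching decision tree. At each event time, every robot that has become available selects one of finitely many actions: move to a neighboring node, or wait for $\Delta$. The configuration at any event time consists of robot positions, with robots in transit described by their current edge and remaining travel time. Hence a strategy is encoded by a sequence $a_1,a_2,\dots$ of joint actions, each drawn from a finite set. Moreover, each action consumes at least $\min\{\Delta,\min_e \edgelength(e)\}>0$ time for the robot taking it. Hence the number of decisions made before any fixed physical time $t$ is bounded uniformly over the class, by roughly $\numrobot\,\lceil t/\min\{\Delta,\min_e\edgelength(e)\}\rceil+\numrobot$.

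The first step is to show that the infimum is finite. To do this, I would build a strategy in $\rationalstrategyset^\Delta_{\inipos}$ with finite tail value: robots move from $\inipos$ to evenly spaced positions on a TSP tour (replacing any waiting by $\Delta$-multiples is harmless, or simply have robots traverse the closed tour forever starting from their initial nodes), giving finite $J_T$. Let $J^\Delta:=\inf J_T$ over the class and take a minimizing sequence $\strategy^{(n)}$ with $J_T(\strategy^{(n)})\to J^\Delta$.

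The second step is a diagonal extraction. Since at each decision depth there are finitely many possible action prefixes, I can pass to a subsequence whose first action agrees, then a further subsequence whose first two actions agree, and so on. The diagonal subsequence converges, in the sense that all prefixes eventually stabilize, to a limit sequence of actions defining $\strategy^*\in\rationalstrategyset^\Delta_{\inipos}$. The uniform bound on decisions per unit time then gives the key locality property: for each horizon $t$, there is a depth $N(t)$ such that any two strategies sharing the first $N(t)$ joint actions coincide on $[0,t]$, including event times. Consequently, latencies and $\maxlatency{\cdot}{s}$ also coincide for $s\le t$, because latency at time $s$ depends only on the trajectory on $[0,s]$, with the convention $h=0$ if there is no earlier visit.

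The third step is lower semicontinuity, which concludes the argument. Fix any $s\ge T$. For all sufficiently large $n$ in the diagonal subsequence, $\strategy^{(n)}$ agrees with $\strategy^*$ on $[0,s]$, so $\maxlatency{\strategy^*}{s}=\maxlatency{\strategy^{(n)}}{s}\le J_T(\strategy^{(n)})$. Letting $n\to\infty$ gives $\maxlatency{\strategy^*}{s}\le J^\Delta$, and taking the supremum over $s\ge T$ gives $J_T(\strategy^*)\le J^\Delta$, hence equality.

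I expect the main obstacle to be making the locality claim rigorous: the state encoding must be complete (simultaneous events, robots mid-edge, and the merging of consecutive waits), and the number of decisions before time $t$ must be bounded uniformly so that $N(t)$ is finite. The bound holds because every action of a robot advances its own clock by at least $\min\{\Delta,\min_e\edgelength(e)\}$. Finiteness of $T$ is not essential to this argument, but it matches the statement.
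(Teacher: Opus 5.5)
Your proof is correct. It shares its combinatorial core with the paper but packages the argument differently. The common core is this: there are only finitely many primitive action prefixes, and every action lasts at least $\min\{\Delta,\min_e \edgelength(e)\}$, so a bounded-length prefix fixes the trajectory on $[0,H]$.

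The paper uses this core only to show that $\rationalstrategyset^\Delta_{\inipos}$ is closed, hence compact, inside $\strategyset$ under the compact-open topology. It then invokes the lower semicontinuity of $\objectivefm{T}{\cdot}$ from \cref{lemma:semicontinuous} and the generalized extreme value theorem.

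You instead run the minimizing-sequence argument directly on the finitely branching tree of admissible action sequences. Under prefix convergence each $\maxlatency{\cdot}{s}$ is eventually constant, so lower semicontinuity is immediate. You therefore never need Arzel\`{a}--Ascoli, metrizability of the compact-open topology, or upper semicontinuity of last-visit times. Your diagonal construction also produces the limit directly as an admissible action sequence, so membership in $\rationalstrategyset^\Delta_{\inipos}$ is automatic. The paper, by contrast, must argue that the finite-horizon representations, extracted along different subsequences for different $H$, are compatible.

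What the paper's route buys is reuse: compactness of $\strategyset$ and \cref{lemma:semicontinuous} are already needed for \cref{thm:existence}, so only closedness is new work. Your Step 1 (finiteness of the infimum) is harmless but unnecessary, since the final inequality holds trivially when $J^\Delta=+\infty$.

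Your closing remark that finiteness of $T$ is not essential is wrong. Step 3 uses $\maxlatency{\strategy^{(n)}}{s}\le \objectivefm{T}{\strategy^{(n)}}$ for every $s\ge T$. For $T=+\infty$, the limsup controls $\maxlatency{\strategy^{(n)}}{s}$ only beyond a threshold that depends on $n$, and this cannot be reconciled with $N(s)\to\infty$.

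In fact $J_\infty$ is not lower semicontinuous under prefix convergence. A sequence that behaves badly on $[0,n]$ and well afterwards has small $J_\infty$, yet it can converge to a strategy that behaves badly forever. This is why the paper restricts both the lemma and \cref{lemma:semicontinuous} to $T<\infty$.
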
 
	\begin{proof}
		The proof is postponed to Appendix~\ref{subsection:existenceofdiscretized}.
	\end{proof}
	Our next result shows that discretizing the waiting time introduces only a bounded approximation error.
	
	\begin{theorem}
		[(Suboptimality bound for strategies with discretized waiting time)] 
		\label{thm:suboptimality}
		For any initial configuration $\inipos\in\nodes^\numrobot$ and any waiting-time resolution $\Delta>0$, let $T\in\overRp$ satisfy $T > \graphdiameter+\frac{\optobjall}{\weightmin}+\Delta$. Then there exists a strategy $\strategy_{\inipos,\Delta} \in \rationalstrategyset^{\Delta}_{\inipos}$ such that 
		\[
		\objectivefm{T}{\strategy_{\inipos,\Delta}}
		\le 
		\optobjall
		+ 2\weightmax\Delta.
		\]
	\end{theorem}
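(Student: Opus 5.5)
We start from an optimal strategy and round its waiting times up to the grid. By \cref{thm:properties}, item~\ref{item:accessible}, for $T' := T-\Delta > \graphdiameter+\frac{\optobjall}{\weightmin}$ there exists $\strategy^\star\in\optstrategyT{T'}\cap\intstrategyset{\inipos}$ with $\objectivefm{T'}{\strategy^\star}=\optobjall$. By the rationalization argument of \cref{subsection:discretized}, we may assume $\strategy^\star\in\rationalstrategyset$: traverse edges at unit speed and push slack into waiting at the destination node. This never delays a node visit, so it preserves $\objectivefm{T'}{\cdot}\le\optobjall$. Then $\strategy^\star$ is described per robot by a sequence of node visits with real waiting durations $w_1,w_2,\dots\ge0$ between full-speed edge traversals.

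\textbf{Construction.} Each robot $r$ keeps its node sequence and edge traversals but replaces waiting $w_i$ by a multiple $m_i\Delta$ of $\Delta$. The rounding is cumulative rather than independent, so errors do not accumulate. Let $s_i$ be the departure time of robot $r$ from its $i$-th waiting node in $\strategy^\star$. Choose $m_i$ so that the new departure time $\tilde s_i$ satisfies $s_i\le\tilde s_i< s_i+\Delta$. This is possible inductively: if the new arrival at the $i$-th node lies in $[a_i,a_i+\Delta)$, where $a_i$ is the original arrival, then $\tilde s_i$ can be chosen in $\{\tilde a_i+m\Delta\}_{m\ge0}$ within $[s_i,s_i+\Delta)$, because $\tilde a_i\le a_i+\Delta$ and $s_i\ge a_i$. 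Traversals have equal length, so arrivals keep the offset. Hence each robot's new schedule is a time shift of the old one by an amount $\delta_r(t)\in[0,\Delta)$ at each visit. That is, every visit of $r$ to a node occurring over an interval $[a,s]$ in $\strategy^\star$ is matched by a visit over $[\tilde a,\tilde s]\supseteq$ a shifted copy with $\tilde a\in[a,a+\Delta)$. Since all robots start at nodes of $\inipos$, the initial step is a waiting step, and the construction lies in $\rationalstrategyset^\Delta_{\inipos}$.

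\textbf{Latency comparison.} Fix a node $v$ and a time $t\ge T$. If $v$ was visited in $\strategy^\star$ at the latest time $h=\allvisitvatt{\strategy^\star}(v,t-\Delta)\le t-\Delta$, the same visit occurs in the new strategy at some time in $[h,h+\Delta)$, which is $\le t$. Visits are delayed by less than $\Delta$, and waiting intervals are only extended on the right. Thus the new latency satisfies
\[
\latency{v}{\tilde\strategy}(t)\le t-h \le \latency{v}{\strategy^\star}(t-\Delta)+\Delta + \Delta ,
\]
where one $\Delta$ accounts for the evaluation shift and one for the delay of the visit. Multiplying by $\weight(v)\le\weightmax$ and using $t-\Delta\ge T'$ gives $\maxlatency{\tilde\strategy}{t}\le \objectivefm{T'}{\strategy^\star}+2\weightmax\Delta=\optobjall+2\weightmax\Delta$. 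Taking the supremum over $t\ge T$, or the limsup when $T=\infty$, yields the claim. The case where $v$ was never visited before $t-\Delta$ is excluded because $t-\Delta>T'\ge\optobjall/\weightmin$ forces a visit.

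\textbf{Main obstacle.} The main difficulty is making the cumulative rounding rigorous, so that the per-visit delay stays uniformly below $\Delta$ forever rather than drifting, and keeping the resulting latency bookkeeping tight at $2\weightmax\Delta$. A careful accounting may show that the shift analysis gives only one $\Delta$ in the latency plus the $\Delta$ of the horizon shift. If the counting produces an extra term, we would instead compare directly with $t$ rather than $t-\Delta$, and absorb the horizon slack through the condition $T>\graphdiameter+\frac{\optobjall}{\weightmin}+\Delta$.
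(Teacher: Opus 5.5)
Your proposal is correct and follows the paper's route. Like the paper, you invoke item~\ref{item:accessible} of \cref{thm:properties} at $T-\Delta$, pass to a rational optimal strategy starting from $\inipos$, round waiting times cumulatively so the timing drift stays bounded, and compare visit times with a $\Delta$ shift in the evaluation time. The one difference is the rounding: you round one-sidedly, so every visit is delayed by less than $\Delta$, whereas the paper's greedy floor/ceiling rule keeps the drift in $[-\Delta,\Delta]$ and therefore loses a second $\Delta$. Under your scheme $\allvisitvatt{\tilde\strategy}(v,t)\ge\allvisitvatt{\strategy^\star}(v,t-\Delta)$, hence $t-h=\latency{v}{\strategy^\star}(t-\Delta)+\Delta$ exactly, which gives the sharper slack $\weightmax\Delta$; the extra $\Delta$ in your chain is unnecessary but harmless.
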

	\begin{proof}
		The proof is postponed to Appendix~\ref{appendix:discretization}.
	\end{proof}
\cref{thm:suboptimality} shows that waiting-time discretization yields a controlled approximation error that scales linearly with the resolution $\Delta$. Thus, by choosing $\Delta$ sufficiently small, the discrete-action formulation remains faithful to the original continuous-time monitoring problem while providing a finite action space for the subsequent MDP formulation.

    \section{MDP reformulation and reinforcement learning}\label{sec:MDPandRL}
	
	\subsection{MDP reformulation} \label{subsection:MDP}

    Building on the event-driven structure developed in the previous section, we reformulate the original multi-robot monitoring problem, \cref{prob:monitoring-problem}, with a fixed finite tail parameter \(T<+\infty\) as an event-driven Markov decision process $\mathcal{M}_T=(\mathcal{S},\mathcal{A},{P},C)$, termed the Tail Worst-case Latency-Optimizing MDP (TWLO-MDP). The limiting case $T=+\infty$ is used only as a theoretical benchmark for asymptotic tail performance. In the continuous TWLO-MDP, decisions are made only at event times,  and waiting durations can be arbitrary positive real numbers. The key idea is to augment the state with sufficient historical information so that the non-additive tail worst-case weighted latency can be represented through a standard average-cost objective.
    

	
	\textbf{State space $\mathcal{S}$}: The MDP is \emph{event-driven}: each decision step corresponds to an event time at which at least one robot becomes available after completing a travel or waiting segment.  The state at the event time $t_n$ is $s_n = (\boldsymbol{p}_n,\boldsymbol{L}_n,z_n,\eta_n) \in \mathcal{S}$, where $\boldsymbol{p}_{n}=(p_{n,1},\dots,p_{n,\numrobot})$ encodes the joint positions of all robots. For each robot $r$, $p_{n,r}=(v^{\mathrm{from}}_{n,r},v^{\mathrm{to}}_{n,r},\delta_{n,r})$ contains the starting node $v^{\mathrm{from}}_{n,r}$, destination node $v^{\mathrm{to}}_{n,r}$, and the remaining duration of the current action $\delta_{n,r}$. The vector $\boldsymbol{L}_n=(L_{n,1},\dots,L_{n,\numnode})$ records the weighted node latencies at time $t_n$. The scalar $z_n = \sup_{T\leq t\leq t_n} L(t)$ tracks the largest weighted latency observed since the tail parameter $T$, and $\eta_n$ is a capped elapsed-time counter used to determine whether the evaluation phase  has begun.
	
	\textbf{Action space $\mathcal{A}$}: The joint action at event step $n$ is denoted by $\boldsymbol{a}_n = (a_{n,r})_{r\in \robots} \in \mathcal{A}_1\times\cdots\times \mathcal{A}_\numrobot$. For each robot $r$, if $\delta_{n,r}=0$, then it has completed its previous action, is available for decision-making, and is located at $v^{\mathrm{cur}}_{n,r}=v^{\mathrm{to}}_{n,r}$. Its action is represented as $a_{n,r}= (v^{\mathrm{tgt}}_{n,r},\tau_{n,r})$, where \(v^{\mathrm{tgt}}_{n,r}\in\{v^{\mathrm{cur}}_{n,r}\}\cup\mathcal{N}(v^{\mathrm{cur}}_{n,r})\) specifies the next target node and $\tau_{n,r}$ specifies the corresponding action duration. If $v^{\mathrm{tgt}}_{n,r}\in\mathcal N(v^{\mathrm{cur}}_{n,r})$, then robot $r$ moves to the neighboring node $v^{\mathrm{tgt}}_{n,r}$, and the action duration is fixed as $A(v^{\mathrm{cur}}_{n,r},v^{\mathrm{tgt}}_{n,r})$. If $v^{\mathrm{tgt}}_{n,r}=v^{\mathrm{cur}}_{n,r}$, then robot $r$ waits at its current node, and the waiting duration $\tau_{n,r}$ is positive. For robots with $\delta_{n,r}>0$, they continue their current travel or waiting segment and take a \textit{no-op}  action.            
	
	\textbf{Transition kernel \(P\)}: The transition is event-driven and deterministic. Given the current state $s_n$ and a joint action $\boldsymbol{a}_n$, we first apply the actions assigned to all available robots. Specifically, for each robot $r\in \robots$ at $v^{\mathrm{cur}}_{n,r}$ with $\delta_{n,r}=0$ and action $a_{n,r} = (v^{\mathrm{tgt}}_{n,r},\tau_{n,r})$, we update its segment state as $p_{n,r}= (v^{\mathrm{cur}}_{n,r},\,v^{\mathrm{tgt}}_{n,r},\,\tau_{n,r})$. Robots with $\delta_{n,r}>0$ are unavailable for decision-making and keep their current travel or waiting segment, corresponding to a no-op action. The system then evolves until the next event. Let $\Delta t_{\mathrm{evt}} = \min_{r\in \robots} \delta_{n,r}$ be the smallest remaining duration among all robots. To ensure that the performance tracker starts exactly at the  tail parameter $T$, we also introduce an artificial event  when the capped elapsed-time counter $\eta_n$ reaches $T$. Thus, the transition duration is
	$$
	\Delta t= \begin{cases}\min \left(\Delta t _\mathrm{evt}, T-\eta_n\right), &\text{if } \eta_n<T, \\ \Delta t_ \mathrm{evt}, & \text{if }\eta_n = T.\end{cases}
	$$
	The remaining duration of each robot $r\in \robots$ is updated as 
    $\delta_{n+1,r} = \delta_{n,r}-\Delta t$. The components $v^{\textrm{from}}_{n,r}$ and $v^{\textrm{to}}_{n,r}$ are inherited from the previous state: $v^{\textrm{from}}_{n+1,r}=v^{\textrm{from}}_{n,r}$ and $v^{\textrm{to}}_{n+1,r}=v^{\textrm{to}}_{n,r}$. Node latencies are then updated according to the deterministic motion during
the transition interval. For each node $v$,
	$$
	L_{n+1,v}=
	\begin{cases}
		0, \quad \text{if } v \text{ is visited or occupied in } (t_n,t_{n}+\Delta t].&\\
		L_{n,v}+ \weight(v)\Delta t, \qquad \qquad \qquad \quad \quad \  \text{otherwise}.&
	\end{cases}
	$$
    The capped elapsed-time counter is updated by
    \begin{equation*}
        \eta_{n+1}  = \min \{\eta_n+\Delta t,T\}.
    \end{equation*}
	In addition, we update the performance tracker as follows. Before the evaluation phase begins (\(\eta_{n+1}<T\)), the tracker $z$ remains inactive, while once the horizon \(T\) is reached, it records the worst-case weighted latency observed so far:
	\begin{multline*}
        z_{n+1} \\ = \begin{cases}\max \{ z_n, \underset{{v \in \mathcal V_n^{\mathrm{unocc}}} }{\max}  [L_{n,v}+\weight(v) \Delta t ]\}, & \text{if }   \eta_{n+1}=T,\\
			0, &\text{if }    \eta_{n+1}<T, 
		\end{cases}
	\end{multline*}
    where $\mathcal V_n^{\mathrm{unocc}}$ denotes the set of nodes that are not occupied by any waiting robot during the open transition interval $(t_n,t_n+\Delta t)$.
	The next state is $s_{n+1}=(\boldsymbol{p}_{n+1},\boldsymbol{L}_{n+1},z_{n+1},\eta_{n+1})$ and $P(s_{n+1}\mid s_n,\boldsymbol{a}_n)=1$.
	
	\textbf{Cost function $C$}: At each event $t_n$, we define the one-step cost as $$C_n=C(s_n,\boldsymbol{a}_n)=z_n.$$ 
	
	\textbf{Objective function $\aveobj_{s_0}$}: For a stationary Markov policy $\mu$, we adopt the standard \emph{average–cost} criterion 
	\begin{equation} \label{eq:avestep}
		\aveobj_{s_0}(\mu)
		= \lim_{N\to\infty} \frac{1}{N} \sum_{n=0}^{N-1} \mathbb{E}_\mu[C_n \, | \, s_0 ],
	\end{equation}
	where $s_0 = (\inipos,\boldsymbol{0}_\numnode,0,0)$ is the initial MDP state.

	The above construction defines a standard event-driven MDP that satisfies the Markov property and admits a conventional average-cost objective. Since this formulation is designed to optimize the tail worst-case weighted latency, we refer to it as the \emph{Tail Worst-case Latency-Optimizing Markov Decision Process} (TWLO-MDP). We next show that, when the tail horizon $T$ is sufficiently large, solving the TWLO-MDP is equivalent to solving the original monitoring problem in terms of optimal tail performance.
    

	\begin{theorem}[(Optimality of the TWLO-MDP  reformulation)] \label{thm:MDPopt}
		Consider the TWLO-MDP  $\mathcal{M}_T$ associated with the 
		monitoring problem in~\cref{prob:monitoring-problem}, where waiting durations can take arbitrary positive real values. Let ${\Omega}_{\mathrm{NZ}}$ denote the class of admissible MDP policies whose induced event-time sequences diverge to infinity, i.e., policies that induce non-Zeno executions. Fix an initial configuration $\inipos\in\nodes^\numrobot$, and let \(s_0\) be the corresponding initial MDP state. For any tail parameter $T$ satisfying $T > \graphdiameter+\frac{\optobjall}{\weightmin}$, the TWLO-MDP admits an optimal non-Zeno 
		stationary Markov policy $\mu^*\in{\Omega}_{\mathrm{NZ}}$ , and its optimal value coincides with the 
		optimal tail  performance of \cref{prob:monitoring-problem}:
		\[
		\aveobj_{s_0}(\mu^*)=\inf_{\mu\in{\Omega}_{\mathrm{NZ}}}\aveobj_{s_0}(\mu)=\optobjall.
		\]
		Moreover, any such optimal policy induces a feasible 
		monitoring strategy starting from $\inipos$ that achieves the same optimal value $\optobjall$.
	\end{theorem}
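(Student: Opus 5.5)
Because the transitions are deterministic, the plan is to relate the per-step costs to the tail latency directly and then sandwich the MDP optimum between two bounds. For any non-Zeno policy $\mu$, the execution from $s_0$ is a single event-time sequence $t_n\to\infty$. Interpolating robot motion between events (unit-speed travel along edges, waiting at nodes) gives a rational strategy $\strategy_\mu\in\rationalstrategyset\cap\intstrategyset{\inipos}$. The artificial event at time $T$ guarantees that $\eta_n=T$ exactly when $t_n\ge T$. By the tracker update and the left-limit observation in \cref{subsection:discretized},
\[
z_n=\sup_{T\le t\le t_n}\maxlatency{\strategy_\mu}{t}
\]
for $t_n\ge T$, and $z_n=0$ before that. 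So $(z_n)$ is nondecreasing and converges to $\objectivefm{T}{\strategy_\mu}$ as $n\to\infty$, since $t_n\to\infty$. One point needs care: between events the maximum over unoccupied nodes is attained at $t_{n+1}^-$, and the update uses $\mathcal V_n^{\mathrm{unocc}}$ precisely for this reason.

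The Cesàro mean of a monotone sequence converging in $\overRp$ has the same limit. Hence the limit in \eqref{eq:avestep} exists and
\[
\aveobj_{s_0}(\mu)=\objectivefm{T}{\strategy_\mu}\ge \optobjT{T}=\optobjall,
\]
using \cref{thm:properties}\ref{item:independence}. This gives the lower bound $\inf_{\mu\in\Omega_{\mathrm{NZ}}}\aveobj_{s_0}(\mu)\ge\optobjall$, and also shows that every policy's value is the tail value of a feasible strategy starting at $\inipos$. That is exactly the ``moreover'' part of the theorem.

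For achievability I would invoke \cref{thm:properties}\ref{item:accessible}. Since $T>\graphdiameter+\optobjall/\weightmin$, it supplies $\strategy^*\in\optstrategyT{T}\cap\intstrategyset{\inipos}$. Applying the rationalization argument from \cref{subsection:discretized} (full-speed traversal, waiting moved to the destination node) keeps it in $\intstrategyset{\inipos}$ and does not increase $\objectivefm{T}{\cdot}$, so it stays optimal. Because waiting durations are allowed to be arbitrary positive reals in this MDP, the event sequence of this rational strategy can be realized exactly. The non-Zeno argument of \cref{subsection:discretized} gives $t_n\to\infty$, using merged waiting periods and the minimal edge length.

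The step I expect to be the main obstacle is turning this open-loop action sequence into a \emph{stationary Markov} policy: the same state could in principle recur at two different steps that call for different actions. I would first argue that recurrence is impossible before the tracker saturates, because $(\eta_n,z_n)$ together with the latencies pins down progress. If states do recur, I would fix one action per visited state and show that the resulting deterministic trajectory still achieves $z_n\to\optobjall$. The idea is this: the trajectory visits states reachable along $\strategy^*$, and $z$ at each of them is at most $\optobjall$ because $z_n\le\objectivefm{T}{\strategy^*}$. After any revisit the run cycles through previously seen states, and the tracker cannot rise above the maxima already recorded, all of which are at most $\optobjall$. Combining with the lower bound yields $\aveobj_{s_0}(\mu^*)=\optobjall=\inf_{\Omega_{\mathrm{NZ}}}\aveobj_{s_0}$.

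A cycle must also be non-Zeno. Each cycle returns the latency vector to a previous value, so it spans positive physical time (or every robot is waiting, which takes positive duration). It remains to check that $z_n$ stays at or above $\optobjall$ along the cycle, which follows from the lower bound applied to this very policy.
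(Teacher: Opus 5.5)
Your route matches the paper's. The lower bound comes from the pathwise identity $\lim_N \frac1N\sum_{n<N} z_n = \objectivefm{T}{\strategy_\trajectory}$, using monotone $z_n$ and Ces\`aro means. Achievability comes from item~\ref{item:accessible} of \cref{thm:properties}, rationalization, and turning the open-loop action sequence into a stationary policy, with the cycle controlled by the tracker bound.

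\textbf{Gap: randomized policies.} The gap is in the lower bound and in the ``moreover'' part. You assert that, because transitions are deterministic, every non-Zeno policy produces a single execution. That holds only for deterministic policies. $\Omega_{\mathrm{NZ}}$ consists of admissible policies, which may be randomized; this is why \eqref{eq:avestep} carries an expectation. For such a policy, $\aveobj_{s_0}(\mu)$ is a limit of averaged expectations over a distribution of trajectories, not the tail value of one strategy.

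The missing step is an interchange of limit and expectation. Apply your pathwise identity to every sample trajectory $\trajectory$. Note that $\frac1N\sum_{n<N}z_n(\trajectory)$ is nonnegative and nondecreasing in $N$, since Ces\`aro means of a nondecreasing sequence are nondecreasing. Monotone convergence then gives $\aveobj_{s_0}(\mu)=\mathbb E_\mu[\objectivefm{T}{\strategy_\trajectory}\mid s_0]\ge\optobjall$.

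For the ``moreover'' part with a randomized optimal $\mu^*$, you must also select a sample path. Pathwise $\objectivefm{T}{\strategy_\trajectory}\ge\optobjall$, while the expectation equals $\optobjall$. Hence almost every sample trajectory induces an optimal strategy from $\inipos$. The paper does exactly this in \eqref{eq:aveexchange} and the sample-path argument that follows it.

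\textbf{Smaller point: which action to fix.} In the achievability step, you should specify the action fixed at a repeated state: take the one from its \emph{first} occurrence. Your bound $z\le\optobjall$ along the closed-loop run holds for any choice, because the successor of $(s_n^*,\boldsymbol a_n^*)$ is $s_{n+1}^*$, so the run stays inside $\{s_n^*\}$. That is a clean way to see it.

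However, your non-Zeno argument covers only two cases: exact replay of $\strategy^*$, and eventual cycling. With an arbitrary choice, the run can jump past a loop of the original trajectory and never revisit a state. That is a third case, and you have not treated it. With the first-occurrence rule, the run replays $\strategy^*$ up to the first repeat $s^*_{\ell}=s^*_j$ and then cycles through $s^*_j,\dots,s^*_{\ell-1}$. So the two cases are exhaustive; this is the paper's Case C2.

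Finally, the cycle has positive duration simply because every MDP transition has $\Delta t>0$. The latency vector returning to a previous value does not by itself imply this.
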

	\begin{proof}
		The proof is postponed to Appendix~\ref{appendix:MDPopt}.    
	\end{proof}
    \cref{thm:MDPopt} guarantees the existence of an optimal stationary Markov policy for the TWLO-MDP within the non-Zeno policy class and shows that its value  coincides with the optimal tail performance of the original monitoring problem. Moreover, such a policy induces a feasible monitoring strategy starting from the prescribed initial configuration with the same optimal value. The non-Zeno restriction is natural from an algorithmic perspective, since a Zeno policy would require executing infinitely many decisions within a finite time interval and is therefore not implementable.  A corresponding exact equivalence also holds between the discretized-waiting-time MDP and the discretized monitoring problem. 
	\begin{corollary}[(Optimality of the discretized-waiting-time MDP)] \label{cor:optimality}
		Consider the discretized monitoring problem in~\cref{prob:monitoring-discrete} with waiting-time  resolution $\Delta$, fixed initial configuration $\inipos$, and tail parameter $T > \graphdiameter+\frac{\optobjall}{\weightmin}$. Let $\mathcal{M}_T^{\Delta}$ denote the event-driven MDP obtained from TWLO-MDP by restricting waiting actions to the \(\Delta\)-waiting action. Then the optimal value of $\mathcal{M}_T^{\Delta}$ coincides exactly with the optimal value of \cref{prob:monitoring-discrete}. Moreover, any  optimal stationary policy of $\mathcal M_\Delta$ induces  an optimal strategy for \cref{prob:monitoring-discrete}.
	\end{corollary}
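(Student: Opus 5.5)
The plan is to transport the argument behind \cref{thm:MDPopt} to the restricted action set, checking that each step only uses features preserved by the restriction to $\Delta$-waiting actions. The first step is a bijection between executions of $\mathcal{M}_T^{\Delta}$ from $s_0=(\inipos,\boldsymbol{0}_\numnode,0,0)$ and strategies in $\rationalstrategyset^\Delta_{\inipos}$. Given a (deterministic) policy, the deterministic transition kernel produces a unique sequence of joint actions. Splicing the corresponding unit-speed traversals and $\Delta$-waits gives a rational strategy starting at $\inipos$. Conversely, any $\strategy\in\rationalstrategyset^\Delta_{\inipos}$ decomposes, by the canonical event representation, into a sequence of joint actions at event times (including the artificial event at time $T$), which yields a history-dependent MDP policy reproducing it. Non-Zeno behavior is automatic here: every action lasts at least $\min\{\min_e\edgelength(e),\Delta\}>0$, so the event times diverge and $\Omega_{\mathrm{NZ}}$ is all policies of $\mathcal{M}_T^\Delta$.

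The second step relates costs. By construction, $\eta_n=T$ holds once $t_n\ge T$, and from then on $z_n=\sup_{T\le t\le t_n}\maxlatency{\strategy}{t}$, using the left-limit attainment of suprema between events noted in \cref{subsection:discretized}. Thus $z_n$ is nondecreasing in $n$, equals $0$ for the finitely many steps before $T$, and converges to $\objectivefm{T}{\strategy}$ (possibly $+\infty$). A Cesàro argument then gives $\aveobj_{s_0}(\mu)=\lim_n z_n=\objectivefm{T}{\strategy}$: the finite prefix is washed out, and a monotone sequence has Cesàro mean equal to its limit. Hence $\inf_\mu \aveobj_{s_0}(\mu)$ over history-dependent policies equals $\inf_{\rationalstrategyset^\Delta_{\inipos}}\objectivefm{T}{\cdot}$, and any optimal policy induces an optimal strategy.

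The remaining step is to show that the optimum is attained by a \emph{stationary} Markov policy. This is the main obstacle, because the state space is uncountable: $\boldsymbol{L}_n$ and $z_n$ are real-valued. First, I would invoke \cref{lemma:existenceofdiscretized}, which gives an optimal strategy $\strategy^*_{\inipos,\Delta}$. \cref{thm:suboptimality} guarantees that its value is finite, since $\optobjall+2\weightmax\Delta$ is achievable once $T$ exceeds the stated threshold. The question is then whether the open-loop action sequence of $\strategy^*_{\inipos,\Delta}$ can be realized by a state-feedback map. Since the transitions are deterministic, $s_n$ is a deterministic function of the history along this single trajectory. I would try to show that the trajectory never revisits the same augmented state. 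The pair $(\eta_n, z_n)$ together with the latencies changes monotonically enough for this: before $T$, $\eta_n$ is strictly increasing. After $T$, a repeated state would mean a repeated $(\boldsymbol{p},\boldsymbol{L},z)$, and at such a repeat we can simply continue periodically from the first occurrence without increasing $z$. So either the states along the trajectory are distinct, in which case defining $\mu^*(s_n):=\boldsymbol{a}_n$ is consistent, or we replace the strategy by its eventually periodic version, which is still optimal, and define $\mu^*$ along the first period. Off the trajectory, $\mu^*$ can be chosen arbitrarily, for example as a fixed $\Delta$-wait. This gives a stationary optimal policy, completing the equivalence of optimal values and the correspondence of optimal policies with optimal strategies.
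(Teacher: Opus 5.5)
Your proposal follows the paper's route. The paper proves the corollary by observing that Zeno behavior is impossible and then rerunning the proof of \cref{thm:MDPopt} on $\rationalstrategyset^\Delta_{\inipos}$. Your pathwise correspondence with the Ces\`aro argument is \cref{lemma:mdp2strategy}. Your ``distinct states versus first repeated state'' construction is Cases C1/C2 of \cref{lemma:strategy2mdp}, and you correctly use \cref{lemma:existenceofdiscretized} instead of \cref{thm:properties} to obtain an optimal strategy. Two points need fixing.

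First, \cref{thm:suboptimality} assumes $T>\graphdiameter+\optobjall/\weightmin+\Delta$, which the corollary does not. So it cannot be cited for finiteness of the optimal value. Finiteness is trivial anyway: one robot repeatedly traversing a closed walk through all nodes, never waiting, gives a strategy in $\rationalstrategyset^\Delta_{\inipos}$ with bounded latency. Your stationary-policy construction does not need finiteness in any case. Once this citation is dropped, your argument never uses the lower bound on $T$, since \cref{lemma:existenceofdiscretized} holds for every finite $T$.

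Second, you treat only deterministic policies. The criterion \eqref{eq:avestep} is an expectation, and ``any optimal stationary policy'' includes randomized ones. You need the interchange in \eqref{eq:aveexchange}. Along every sample path, the averages $\frac{1}{N}\sum_{n=0}^{N-1}z_n$ are nonnegative and nondecreasing in $N$, because $z_n$ is. Monotone convergence then gives $\aveobj_{s_0}(\mu)=\mathbb{E}_\mu[\objectivefm{T}{\strategy_\trajectory}\mid s_0]$. Every sample-path strategy lies in $\rationalstrategyset^\Delta_{\inipos}$, so this yields the lower bound for randomized policies. It also shows that an optimal randomized policy induces an optimal strategy along almost every sample path.
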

	\begin{proof}
The discretized formulation excludes Zeno behavior by construction. Indeed, every waiting action has duration $\Delta>0$, and, because the graph is finite with strictly positive edge lengths, every movement action has duration at least the minimum edge length. Hence, only finitely many events can occur in any finite physical-time interval. The rest of the proof follows the same structure as that of \cref{thm:MDPopt}, with the induced strategy class restricted to $\rationalstrategyset^\Delta_{\inipos}$ and the waiting-time action set replaced by its discretized counterpart. We therefore omit the details.
	\end{proof}
In the sequel, we mainly focus on the discretized-waiting-time TWLO-MDP $\mathcal M_\Delta$. Compared with its continuous-waiting-time  counterpart, $\mathcal M_\Delta$ has a finite  action space and is therefore more tractable for reinforcement learning. Moreover, as shown in \cref{thm:suboptimality}, the approximation loss induced by waiting-time discretization is controllable. In addition, Zeno behavior is naturally excluded in \(\mathcal M_\Delta\), since all waiting and movement actions have positive durations bounded away from zero.

	We also note that reinforcement learning with non-standard or non-cumulative objectives has recently attracted increasing  attention~\citep{SG-YP-RN:2020,RW-PZ-LY:2020,WC-WY:2023,GV-WB-MD:2024}. Many existing studies in this direction consider objectives such as maximum reward along a trajectory, general nonlinear trajectory objectives, or other non-cumulative criteria. These formulations are typically developed for discounted, finite-horizon, or episodic settings. In contrast, the monitoring problem studied here is an infinite-horizon problem with a worst-case trajectory-level objective and no discount factor in its theoretical formulation. The TWLO-MDP reformulation shows that, under the proposed state augmentation and the non-Zeno execution condition, this non-additive functional optimization problem can be represented as an average-cost MDP. Beyond the persistent monitoring setting considered here, this perspective may also provide a useful approach for handling other infinite-horizon decision-making problems with non-cumulative objectives. 
	
	\subsection{Reinforcement-learning-based solution methods}\label{sub:algorithm}
	Having reformulated the  monitoring problem  as the TWLO-MDP, whose optimal policy can be  mapped to an optimal monitoring strategy, we next develop reinforcement-learning-based solution methods  for the resulting event-driven MDP. The theoretical objective of the TWLO-MDP is an infinite-horizon average-cost criterion. In practice, however, we optimize a discounted surrogate objective with a discount factor \(\gamma\) close to one, following the standard discounted-return implementation of deep reinforcement learning algorithms. Learning is performed using finite rollout segments, and the large discount factor is intended to emphasize the long-run behavior encoded by the TWLO-MDP. The proposed framework can accommodate multiple reinforcement learning algorithms, as detailed in~\cref{section:platform}. In this subsection, we instantiate the proposed learning approach with MAPPO~\citep{CY-AV-YW:2022}, which is used as the main policy-optimization algorithm in our experiments because it achieves the strongest and most stable performance among the tested RL methods. Based on this instantiation, we describe the main design choices and training techniques used to solve the TWLO-MDP in practice. Additional implementation details, including auxiliary training signals, action masks, and hyperparameter selection, are provided in Appendix~\ref{appendix:training}.

	\textbf{Time-normalized training objective.} 
	A practical issue arises when training the event-driven MDP using rollouts truncated by the number of decision steps. Since different event steps may correspond to very different amounts of physical time, a fixed-step rollout does not in general correspond to a fixed elapsed monitoring time.  In particular, a policy may repeatedly choose short-duration actions, such as minimal waiting, thereby slowing down  the progression of physical time within each rollout. Over a truncated fixed-step horizon, this can artificially suppress the growth of latency at unvisited nodes, even though such behavior is undesirable for the original monitoring problem. This reveals a mismatch between fixed-step truncation and the physical-time semantics of persistent monitoring.
    
	A natural remedy is to truncate each rollout according to elapsed physical time rather than the number of event steps. However, modifying only the stopping rule is not sufficient. If the original step-wise cost $z_n$ is used without accounting for the duration of each event interval, then a policy can still affect the training objective value by changing the number of events that occur within the same physical-time horizon. Therefore, to align training with the physical-time semantics of the monitoring task, we use both physical-time truncation and duration-weighted event costs. The following lemma states the corresponding asymptotic equivalence. It shows that, at the infinite-horizon level, replacing the step-wise average in~\eqref{eq:avestep} by a time-normalized average does not change the limiting objective value, while the latter provides a more faithful finite-rollout surrogate for training. 
	\begin{lemma}[(Equivalent time-normalized  objective)]\label{lemma:timenormalized}
		For any non-Zeno stationary policy $\mu\in\Omega_{\mathrm{NZ}}$ and initial state $s_0$,  let $\Delta t_n$ denote the physical duration of the $n$-th event interval, and let $z_n$ be the tail worst-case weighted latency tracker defined in \cref{subsection:MDP}. Then the step-average objective in~\eqref{eq:avestep} is equivalent to the following time-normalized objective:
        \begin{equation}\label{eq:normalizedobj}
			\aveobj_{s_0}(\mu)
			=\lim _{N \rightarrow \infty} \mathbb{E}_\mu\left[\left.\frac{\sum_{n=0}^{N-1} z_n \Delta t_n}{\sum_{n=0}^{N-1} \Delta t_n} \right\rvert\, s_0\right].
		\end{equation}
	\end{lemma}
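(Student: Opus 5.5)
The plan is to use the monotone structure of the tracker $z_n$. Since the transition is deterministic, the expectations in both \eqref{eq:avestep} and \eqref{eq:normalizedobj} reduce to evaluation along the single trajectory generated by $\mu$ from $s_0$. The proof therefore concerns deterministic sequences only.

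By construction, $z_n=0$ while $\eta_n<T$. Once $\eta_n=T$, which happens after finitely many events because the artificial event at $T$ is inserted and the policy is non-Zeno, $z_n$ is nondecreasing in $n$. Moreover, $z_n$ is bounded above whenever $J_T$ of the induced strategy is finite. Hence $z_n\to z_\infty:=\sup_n z_n\in\overRp$. For $t_n>T$, this limit equals $J_T$ of the induced strategy, since $z_n$ records $\sup_{T\le t\le t_n}\maxlatency{\strategy}{t}$ in the left-limit sense discussed for rational strategies.

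The first step is to show that the step average converges to $z_\infty$. This is the Cesàro mean theorem applied to a convergent sequence, with the case $z_\infty=+\infty$ handled directly. Since $z_n\to+\infty$ and $z_n$ is monotone, the averages also diverge, so both sides equal $+\infty$.

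The second step handles the time-weighted average. I would write
\[
\frac{\sum_{n<N} z_n\Delta t_n}{\sum_{n<N}\Delta t_n}
\]
as a weighted average of $z_n$ with nonnegative weights $\Delta t_n$. The denominator $\sum_{n<N}\Delta t_n=t_N$ diverges to $+\infty$ precisely because $\mu\in\Omega_{\mathrm{NZ}}$. A weighted Cesàro (Toeplitz) argument then applies. Fix $\varepsilon>0$ and choose $N_0$ with $|z_n-z_\infty|<\varepsilon$ for $n\ge N_0$. Split the sum into $n<N_0$ and $n\ge N_0$. The first part is a fixed finite quantity divided by $t_N\to\infty$, so it vanishes. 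The second part lies within $\varepsilon$ of $z_\infty$ times $(t_N-t_{N_0})/t_N\to1$. When $z_\infty=+\infty$, the analogous lower bound with $z_n\ge M$ for $n\ge N_0$ gives divergence. Both limits therefore equal $z_\infty$, which proves \eqref{eq:normalizedobj}.

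The main obstacle is justifying that $t_N\to\infty$ and that zero-duration events cause no trouble. Events with $\Delta t_n=0$ do not occur in the grouped event representation, but even if they did, they would only add zero weight. The non-Zeno assumption is exactly what makes the denominator diverge. Without it, the time-weighted average could converge to a weighted mean of early $z_n$ values rather than to $z_\infty$. I would state this explicitly and note that monotonicity of $z_n$ after the evaluation phase begins is what makes both averages converge at all, even though $\Delta t_n$ need not converge.
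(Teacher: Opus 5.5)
Your pathwise argument is correct and is essentially the paper's. Along a fixed trajectory $z_n$ is nonnegative and nondecreasing, and $\sum_{n<N}\Delta t_n=t_N\to\infty$ by non-Zenoness. Your Toeplitz splitting then shows that $B_N:=\sum_{n<N}z_n\Delta t_n/t_N\to z_\infty$. The paper gets the upper bound more cheaply from $B_N\le z_{N-1}$ and the lower bound by fixing $m$ and letting $N\to\infty$, but this is the same idea.

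The gap is your opening reduction. A deterministic transition kernel does not make the trajectory deterministic. The statement covers arbitrary stationary policies in $\Omega_{\mathrm{NZ}}$, and in the paper these may be randomized: the objective is written with $\mathbb{E}_\mu$, the proofs speak of sample trajectories $\xi\sim P_\mu$, and the learned policies are stochastic. For such $\mu$, pathwise convergence is not enough. Knowing that $A_N:=\frac1N\sum_{n<N}z_n\to z_\infty$ and $B_N\to z_\infty$ on every path does not by itself give $\lim_N\mathbb{E}_\mu[A_N\mid s_0]=\lim_N\mathbb{E}_\mu[B_N\mid s_0]$, especially if $z_\infty=+\infty$ on some paths. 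You must justify exchanging $\lim_N$ and $\mathbb{E}_\mu$ on both sides; note that $\aveobj_{s_0}(\mu)=\lim_N\mathbb{E}_\mu[A_N\mid s_0]$ by linearity.

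The missing step is that both averages are pathwise monotone. Each is a weighted average of $z_0,\dots,z_{N-1}$, so $A_N,B_N\le z_{N-1}\le z_N$. Hence $B_{N+1}-B_N=\Delta t_N(z_N-B_N)/t_{N+1}\ge 0$ and $A_{N+1}-A_N=(z_N-A_N)/(N+1)\ge 0$. The monotone convergence theorem then gives $\aveobj_{s_0}(\mu)=\mathbb{E}_\mu[z_\infty\mid s_0]=\lim_N\mathbb{E}_\mu[B_N\mid s_0]$, which is exactly how the paper closes the argument. Alternatively, $0\le B_N\le z_\infty$ together with Fatou's lemma also works. With this addition your proof is complete; as written it only covers deterministic policies.
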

	\begin{proof}
		The proof is postponed to Appendix~\ref{appendix:timenormalized}.
	\end{proof}
	Based on \cref{lemma:timenormalized}, we modify the cost used in practical RL training. Instead of assigning the event-wise cost $z_n$ to the $n$-th event step, we assign the  duration-weighted cost $z_n \Delta t_n$. We also terminate each rollout according to a fixed physical-time horizon rather than a fixed number of event steps. In this way, the contribution of each event is weighted by the amount of elapsed monitoring time, aligning the learning signal with the time-normalized objective in~\eqref{eq:normalizedobj}. 
	
	To illustrate the practical difference between~\eqref{eq:avestep} and~\eqref{eq:normalizedobj} for finite rollouts, consider a two-node graph connected by an edge of length $A$, with $T=0$ and minimum waiting duration $\Delta$. We compare two strategies:  repeatedly waiting for duration $\Delta$ at one node, and moving back and forth between the two nodes. The moving strategy achieves worst-case latency $2A$. Under a fixed-step rollout of length $N$, the repeated-waiting strategy yields the truncated step-average objective $\frac{1}{N}\sum_{n=1}^{N} n\Delta
	=
	\frac{N+1}{2}\Delta$.  Therefore, the waiting strategy ceases to appear preferable only when $\Delta \frac{N+1}{2} > 2A$, i.e., when $N>\frac{4A}{\Delta}-1$. For example, when $A=20$ and $\Delta=0.1$, roughly $400$ steps are needed before fixed-step truncation can eliminate the spurious preference for repeated waiting. By contrast, under physical-time truncation with horizon $H$, assuming that $H$ is an integer multiple of $\Delta$, the same repeated-waiting strategy yields the time-normalized objective $\frac{\Delta(\Delta+2\Delta+...+H)}{H} = \frac{H+\Delta}{2}$. Hence, under physical-time truncation, the repeated-waiting strategy ceases to appear preferable once $H>4A-\Delta$. This threshold is expressed directly in physical time and therefore has a clear interpretation in terms of the underlying monitoring task.
	In principle, fixed-step truncation can also suppress the pathological waiting strategy, but only if the rollout length is chosen sufficiently large to cover the relevant physical-time scale. In our setting, however, the required number of decision steps can become very large when short-duration actions are available, leading to reduced  training stability and efficiency. Physical-time truncation instead specifies the rollout horizon directly on the appropriate elapsed-time scale. In practice, this modification results in more stable training and helps avoid degenerate local optima, such as repeatedly choosing short waiting actions.

    \textbf{Transparent generalized advantage estimation.}
    Graph-based monitoring environments may involve heterogeneous edge lengths and waiting actions, so the time interval between two consecutive decision points of an individual robot can vary. This makes the monitoring problem closely related to an event-driven Semi-Markov Decision Process (SMDP) formulation~\citep{RS-DP-SS:1999}. Existing asynchronous multi-agent reinforcement learning methods~\citep{YX-WT-JH:2025} often handle variable execution times through hierarchical macro-actions. While such mechanisms are useful in general asynchronous decision-making settings, they introduce additional computational overhead and are unnecessary for the atomic movement and waiting actions considered in our graph-based monitoring problem.
    
    Instead, we retain a globally synchronized event-step rollout while distinguishing, for each robot, between active decision steps and inactive intermediate steps. At a global event step, a robot is active only if it has arrived at a node or completed a waiting action and is therefore available for decision-making. Otherwise, the step is an inactive intermediate step for that robot. The implementation details for constructing such synchronized rollouts are provided in Appendix~\ref{subsubsection:action_mask}. Given this rollout structure, naively applying standard generalized advantage estimation (GAE)~\citep{JS-PM-PA:2016} can distort advantage estimation, because standard GAE discounts and accumulates temporal-difference (TD) errors across every recorded global transition, including inactive intermediate steps that do not correspond to genuine decision points of a particular robot. Treating these inactive steps as ordinary policy-update steps can introduce misaligned or noisy advantage estimates.
    
    To address this issue, we propose \emph{Transparent Generalized Advantage Estimation (Trans-GAE)}, which folds inactive intermediate steps and computes advantages only over the active decision sequence of each robot. Formally, let $\chi_{n,r} \in \{0, 1\}$ be an indicator that equals $1$ if robot $r$ is active at global event step $n$, meaning that it has arrived at a node or completed a waiting action and is  available for decision-making. Define the set of active decision steps for robot $r$ as $\mathcal{K}_r = \{n\in\Nature \mid \chi_{n,r} = 1\}$. For any $n \in \mathcal{K}_r$, let $n^+ = \min\{n' \in \mathcal{K}_r \mid n' > n\}$ denote the next active decision step of robot $r$, and define $\ell_{n,r} = n^+ - n - 1$ as the number of inactive global steps between these two consecutive active decision points. Instead of using the standard one-step TD error, Trans-GAE constructs a folded multi-step TD error $\tilde{\delta}_{n,r}$ over the interval from $n$ to $n^+$:    
    $$
    \tilde{\delta}_{n,r}
    =
    \sum_{j=0}^{\ell_{n,r}} \gamma^j C_{n+j}
    +
    \gamma^{\ell_{n,r}+1} V_{\omega}(s_{n^+})
    -
    V_{\omega}(s_n),
    $$    
    where $C_n$ and $s_n$ are the shared cost and global state at event step $n$, respectively, $\gamma$ is the discount factor, and $V_\omega$ is the centralized critic parameterized by $\omega$. The summation accumulates the costs collected before the next active decision point of robot \(r\), while the value at \(s_{n^+}\) provides the bootstrap term. The generalized advantage $\tilde{A}_{n,r}$ is then computed recursively over active decision steps of each robot:    
    $$
    \tilde{A}_{n,r}
    =
    \begin{cases}
    \tilde{\delta}_{n,r}
    +
    (\gamma\lambda)^{\ell_{n,r}+1}
    \tilde{A}_{n^+,r},
    & \text{if } n \in \mathcal{K}_r, \\
    0,
    & \text{if } n \notin \mathcal{K}_r,
    \end{cases}
    $$    
    where $\lambda$ is the trace-decay parameter. The factor $(\gamma\lambda)^{\ell_{n,r}+1}$ accounts for the number of global event steps between two consecutive active decision points of robot $r$. In this way, inactive intermediate steps are transparent to the agent-level advantage recursion, while their accumulated costs are still assigned to the preceding active decision. For critic optimization, the corresponding GAE target is constructed as    
    $$
    G^{\text{GAE}}_{n,r}
    =
    \tilde{A}_{n,r}
    +
    V_{\omega}(s_n).
    $$    
    Both the policy-gradient loss and the critic loss are evaluated only at active decision steps. Inactive intermediate steps are excluded from the optimization objective, but the costs incurred during these intervals are incorporated through the folded TD error. As a result, costs collected during a movement or waiting segment are attributed to the decision that initiated the segment, rather than being treated as independent updates at inactive steps. This improves the alignment of advantage estimation with the asynchronous event-driven structure of the monitoring problem and stabilizes policy optimization.
    
    \textbf{Parameter sharing and role encoding.} To improve scalability and sample efficiency, we adopt parameter sharing across robots, a common practice in MARL~\citep{JF-IA-SW:2016,NG-SS-AA:2020,JY-JV-MS:2022}. This design is also natural for our monitoring problem, since the robots are homogeneous and are expected to follow a common decision rule.  However, parameter sharing creates a symmetry issue.  In our formulation, each robot can access the global state to obtain sufficient information for coordinated decision-making. If all robots receive the same global input and use the same policy network, then the policy cannot distinguish which robot is being controlled. A straightforward remedy is to include each robot's own position in its observation. Nevertheless, this information is not always sufficient in our setting. If multiple robots become available at the same node and observe the same global state, then a shared deterministic policy may assign them identical actions, causing redundant motion and inefficient use of monitoring resources. A common solution in parameter-sharing MARL is to augment each agent's observation with a fixed agent identity or index~\citep{JF-GF-SW:2018,TR-MS-SW:2020,CY-AV-YW:2022}. While this approach can break symmetry and improve coordination, it also introduces dependence on arbitrary agent labels, which is undesirable for a homogeneous monitoring team. To mitigate this issue, we use a lightweight role encoding based on local decision order rather than fixed agent identity. Specifically, for each robot $r$ that is available for decision-making at event step $n$, we define a local decision-order label $\ell^{\text{dec}}_{n,r}$, which records the order of robot $r$ among the robots simultaneously ready to act at the same node. The observation of robot $r$ is then augmented as $o_{n,r}=\{s_n, \ell^{\text{dec}}_{n,r}\}$. This role encoding allows the shared policy to distinguish robots that are otherwise indistinguishable under the same global state and local position, while avoiding the use of fixed agent identities. 
    
	\textbf{Customized observation preprocessing.} Standard observation preprocessing methods in deep reinforcement learning often normalize all observation dimensions using running means and standard deviations. However, the observation space of TWLO-MDP contains discrete vertex-index dimensions that explicitly represent nodes in the monitoring graph. Applying  continuous normalization to these categorical variables can distort their semantics and degrade learning performance. To address this issue, we use a heterogeneous observation preprocessing module that applies different transformations to different observation dimensions before feeding them into the neural networks.
	
	For vertex-index dimensions, including each robot's departure node, destination node, and current node when available, we replace standard normalization with a graph-structural embedding. Specifically, we compute a Graph Positional Encoding (GPE)~\citep{MB-PN:2003} for each vertex by eigendecomposing the symmetric normalized Laplacian $\tilde{L} = I - D^{-1/2}WD^{-1/2}$, where $W \in \mathbb{R}^{|\mathcal{V}| \times |\mathcal{V}|}$ denotes the binary adjacency matrix of the monitoring graph,
	and $D$ is the corresponding diagonal degree matrix with $D_{ii} = \sum_{j} W_{ij}$. We retain the $d_{\text{gpe}}$ eigenvectors associated with the smallest nonzero eigenvalues. This produces a fixed $(|\nodes| \times d_{\text{gpe}})$ embedding table, which encodes the topological position of each vertex in the graph. The embedding table is kept frozen throughout training. At runtime, each integer vertex index is mapped to its corresponding $d_{\text{gpe}}$-dimensional positional representation, which is then passed through a trainable linear projection to obtain a $d_{\text{proj}}$-dimensional feature vector. This representation preserves graph-structural information  and avoids dependence on arbitrary integer labels. In all experiments, we set $d_{\text{gpe}} = 8$ and $d_{\text{proj}} = 16$ uniformly across different monitoring graphs. Graphs with $|\nodes| < d_{\text{gpe}}+1$ cannot fully populate all GPE dimensions; in this case, the embedding table retains the available eigenvector coordinates and pads any remaining columns with zero vectors so that every vertex is still represented by a fixed $d_{\text{gpe}}$-dimensional vector before linear projection.

    For scalar continuous features, we apply transformations matched to their dynamics. The worst weighted idleness since $T$ is a running maximum: once a high-priority vertex is left unattended, it can jump sharply and then remain elevated for the rest of the episode. During training, such episodic spikes are infrequent but can dominate running statistics if left untransformed. We therefore apply a $\log(1+\cdot)$ map to attenuate large values while retaining resolution near zero, and then normalize the result using running statistics estimated online from rollouts. Instantaneous  latency values  and per-robot remaining action times are  normalized directly using online running statistics. In contrast, the elapsed-time counter $\eta_n$  has a known upper bound: it increases from zero to the  tail parameter $T$ and remains constant thereafter. We therefore normalize it by dividing by $T$, mapping it to the interval $[0, 1]$ without requiring online statistics. The one-hot encodings of the local decision-order labels are already in $\{0, 1\}$ and do not represent numerical magnitudes, so they are passed through without modification.

    \textbf{Imitation learning warm start.} \label{subsection:imi} Training solely with reinforcement learning can be inefficient for the monitoring problem, especially in large-scale environments with multiple robots. Effective monitoring strategies typically require coordinated motion patterns and strong structural regularity, which can be difficult to discover through exploration alone. This challenge is further amplified by the shared global training signal, which is delayed, weakly informative at the individual-robot level, and coupled across all robots. Motivated by the use of imitation learning as a warm-start mechanism for challenging RL problems~\citep{AR-VK-SL:2018,TS-RC:2018,TH-MV-OP:2018,OR-IB-WC:2019}, we combine heuristic demonstrations with subsequent RL fine-tuning. 
    
    As outlined in \cref{alg:alg1}, the imitation-learning stage provides an initial policy and value estimate while still allowing RL to improve beyond the demonstrated heuristic behavior. We first execute  heuristic monitoring strategies to collect demonstration trajectories and compute normalized Monte Carlo returns, as shown in Lines~\ref{line:mc_return}--\ref{line:norm_return}. The resulting transition tuples are stored in an offline dataset $\mathcal{D}$. We then pretrain both the actor and the critic using this dataset before starting online RL optimization. For actor pretraining, we apply behavior cloning to the demonstrated actions, as shown in Line~\ref{line:actor_loss}. The cross-entropy loss is evaluated under the corresponding action mask $m_{n,r}$, so that invalid actions are excluded from the supervised learning objective; see Appendix~\ref{subsubsection:action_mask} for details on action masking. For critic pretraining, we fit the centralized value function to the normalized Monte Carlo returns $\widehat{G}_j$ using mean squared error, as shown in Line~\ref{line:critic_loss}. The mean $\psi$ and standard deviation $\sigma$ computed from the demonstration returns are used to initialize the running return statistics for the subsequent RL stage, as shown in Line~\ref{line:rl_init_stat}.
    This imitation-learning warm start provides a more informative initial policy, improves the initial value estimates, and reduces the exploration burden during early RL training. The same pretraining mechanism can be applied to other MARL algorithms in our framework. In our experiments, MAPPO combined with this imitation-learning warm start achieves the best empirical performance among the tested RL methods.

\begin{algorithm*}[htb]
	\caption{Sampling, Imitation Learning, and RL training with MAPPO. (Part 1)}\label{alg:alg1}
	\begin{algorithmic}[1]

    \Statex \hspace{-\algorithmicindent}\makebox[\linewidth][c]{\textbf{\textsc{SAMPLING}}}
        \State \textbf{Input:} Discount factor $\gamma$, total number of sampled episodes $M$
        \State \textbf{Output:} Dataset $\mathcal{D}$ and Monte-Carlo returns  $(\psi , \sigma )$
        \Comment{$\psi$ denotes the mean and  $\sigma$ denotes the standard deviation.}
        \State Initialize rollout buffers $\mathcal{B} \gets \emptyset$ and  $\mathcal{D} \gets \emptyset$
        \For{$i = 1$ to $M$}
            \State Reset the environment and a heuristic policy, and set  $\mathcal{B}_{\mathrm{ep}}^{i} \gets \emptyset$
            \For{each step $n$ in episode $i$}
                \For{each robot $r$}
                    \State $b_{n,r}^{i} \gets (o_{n,r}^{i}, s_{n,r}^{i}, a_{n,r}^{i}, m_{n,r}^{i}, \chi_{n,r}^{i}, C_n^{i})$
                    \Comment{Record raw tuple.}
                \EndFor
                \State $\mathcal{B}_{\mathrm{ep}}^{i} \gets \mathcal{B}_{\mathrm{ep}}^{i} \cup \{ b_{n,1}^{i}, \dots, b_{n,\numrobot}^{i} \}$
            \EndFor
            \State $\mathcal{B} \gets \mathcal{B} \cup \mathcal{B}_{\mathrm{ep}}^{i}$
        \EndFor
        \For{each episode-step pair $(i,n)$ in $\mathcal{B}$}
            \State $G_n^{i} \gets \sum_{j=n}^{|\mathcal{B}_{\mathrm{ep}}^{i}|} \gamma^{j-n} C_j^{i}$ \label{line:mc_return}
            \Comment{Compute the Monte-Carlo returns.}
        \EndFor
        \State $\psi  \gets \big( \sum_{b_{n,r}^{i} \in \mathcal{B}} \chi_{n,r}^{i} G_n^{i} \big) \big/ \big( \sum_{b_{n,r}^{i} \in \mathcal{B}} \chi_{n,r}^{i} + \epsilon \big)$ 
        \State $\sigma  \gets \sqrt{\big( \sum_{b_{n,r}^{i} \in \mathcal{B}} \chi_{n,r}^{i} \big(G_n^{i} - \psi \big)^2 \big) \big/ \big( \sum_{b_{n,r}^{i} \in \mathcal{B}} \chi_{n,r}^{i} + \epsilon \big)}$ \label{line:norm_return}
        \For{each $b_{n,r}^{i} \in \mathcal{B}$}
            \State $\widehat{G}_n^{i} \gets (G_n^{i}-\psi ) / (\sigma +\epsilon)$
            \Comment{Normalize the Monte-Carlo returns.}
            \State $\mathcal{D} \gets \mathcal{D} \cup \{ (o_{n,r}^{i}, s_{n,r}^{i}, a_{n,r}^{i}, m_{n,r}^{i}, \chi_{n,r}^{i}, C_n^{i}, \widehat{G}_n^{i}) \}$
        \EndFor

		\Statex 
        \Statex \hspace{-\algorithmicindent}\makebox[\linewidth][c]{\textbf{\textsc{IMITATION LEARNING}}}
        \State \textbf{Input:} Dataset $\mathcal{D}$, batch size $K$
        \State \textbf{Output:} Pre-trained network parameters $\theta$ and $\omega$ for the policy $\mu_\theta$ and the centralized critic $V_\omega$
        \State Initialize $\theta$ and $\omega$  
        \For{each training iteration}
            \State Sample a batch $\mathcal{B}_\mathrm{imi} = \{ b_j=(o_{j}, s_{j}, a_{j}, m_{j}, \chi_{j}, C_j, \widehat{G}_j) \}_{j=1}^K \subseteq \mathcal{D}$
            \For{each $b_j$ in $\mathcal{B}_\mathrm{imi}$}\Comment{Renormalize using action mask $m_j$}
                \State $\tilde{\mu}_\theta(a \mid o_j, m_j) \gets \big(m_j(a)\cdot\mu_\theta(a \mid o_j)\big) \big/ \big( \sum_{a' \in \mathcal{A}} m_j(a')\cdot\mu_\theta(a' \mid o_j) + \epsilon \big)$
            \EndFor
            \State $\mathcal{L}_{\mathrm{actor}}^{\mathrm{imi}} \gets - \big( \sum_{j=1}^K \chi_j \log \tilde{\mu}_\theta( a_j \mid o_j, m_j ) \big) \big/ \big( \sum_{j=1}^K \chi_j + \epsilon \big)$ \label{line:actor_loss}
            \State $\mathcal{L}_{\mathrm{critic}}^{\mathrm{imi}} \gets \big( \sum_{j=1}^K \chi_j \left(V_\omega(s_j) - \widehat{G}_j\right)^2 \big) \big/ \big( \sum_{j=1}^K \chi_j + \epsilon \big)$ \label{line:critic_loss}
            \State Update $\theta, \omega$ with the losses $\mathcal{L}_{\mathrm{actor}}^{\mathrm{imi}}, \mathcal{L}_{\mathrm{critic}}^{\mathrm{imi}}$
        \EndFor
        \algstore{bk_point}
        
	\end{algorithmic}
    \end{algorithm*}

    \begin{algorithm*}[htb]
	\caption{Sampling, Imitation Learning, and RL training with MAPPO. (Part 2)}\label{alg:alg2}
	\begin{algorithmic}[1]
    \algrestore{bk_point}
        \Statex \hspace{-\algorithmicindent}\makebox[\linewidth][c]{\textbf{\textsc{RL TRAINING (MAPPO)}}}
    \State \textbf{Input:} Pre-trained parameters $\theta, \omega$, Monte Carlo returns  $(\psi , \sigma )$, PPO clip range $\epsilon_c$, RMS update rate $\tau_{rms}$, batch size $K'$
    \State \textbf{Output:} Fine-tuned network parameters $\theta_{\mathrm{RL}}, \omega_{\mathrm{RL}}$
    \State Initialize policy $\mu_{\theta_{\mathrm{RL}}}$ with $\theta$, centralized critic $V_{\omega_{\mathrm{RL}}}$ with $\omega$, $\psi_{\mathrm{ret}} \gets \psi $, $\sigma_{\mathrm{ret}} \gets \sigma$ \label{line:rl_init_stat}
    \For{each RL iteration}
        \State Collect data $\mathcal{B}_\mathrm{RL} = \{ b_j=(o_j, s_j, a_j, m_j, \chi_j, C_j)\}_{j=1}^{K'}$ using policy $\mu_{\theta_{\mathrm{RL}}}$
        \State $\theta_{\mathrm{old}} \gets \theta_{\mathrm{RL}}$ \Comment{Save policy parameters for the importance ratio}
        \For{each $b_j$ in $\mathcal{B}_\mathrm{RL}$}
            \State $V_j\gets V_{\omega_{\mathrm{RL}}}(s_j) \cdot \sigma_{\mathrm{ret}} + \psi_{\mathrm{ret}}$ 
            \Comment{Critic outputs normalized values; denormalize for Trans-GAE}
            \State $\tilde{A}_j \gets \textsc{TransGAE}\left(\{C_j\}, \{V_j\}\right)$
            \State $G_j \gets \tilde{A}_j + V_j$
        \EndFor
        \State $\psi_{\mathrm{batch}} \gets \big( \sum_{j=1}^{K'} \chi_j G_j \big) \big/ \big( \sum_{j=1}^{K'} \chi_j + \epsilon \big)$ \Comment{Calculate batch statistics}
        \State $\sigma_{\mathrm{batch}} \gets \sqrt{\big( \sum_{j=1}^{K'} \chi_j \big(G_j- \psi_{\mathrm{batch}}\big)^2 \big) \big/ \big( \sum_{j=1}^{K'}\chi_j + \epsilon \big)}$
        \State $\psi_{\mathrm{ret}} \gets (1-\tau_{rms}) \cdot \psi_{\mathrm{ret}} + \tau_{rms} \cdot \psi_{\mathrm{batch}}$
        \State $\sigma_{\mathrm{ret}} \gets \sqrt{(1-\tau_{rms}) \cdot \sigma_{\mathrm{ret}}^2 + \tau_{rms} \cdot \sigma_{\mathrm{batch}}^2}$ \Comment{Update RMS statistics}
        \For{each $b_j$ in $\mathcal{B}_{RL}$}
            \State $\widehat{G}_j \gets (G_j - \psi_{\mathrm{ret}}) \big/ (\sigma_{\mathrm{ret}} + \epsilon)$
        \EndFor
        \For{each PPO epoch}
            \For{each $b_j$ in $\mathcal{B}_{RL}$}
            \State $\rho_j \gets \mu_{\theta_{\mathrm{RL}}}(a_j \mid o_j, m_j) \big/ \mu_{\theta_{\mathrm{old}}}(a_j \mid o_j, m_j)$
            \EndFor            
            \State $\mathcal{L}_{\mathrm{actor}} \gets - \big( \sum_{j=1}^{K'} \chi_j \min\left( \rho_j\tilde{A}_j,\ \mathrm{clip}(\rho_j, 1-\epsilon_c, 1+\epsilon_c) \tilde{A}_j \right) \big) \big/ \big( \sum_{j=1}^{K'} \chi_j + \epsilon \big)$
            \State $\mathcal{L}_{\mathrm{critic}} \gets \big( \sum_{j=1}^{K'} \chi_j\big(V_{\omega_{\mathrm{RL}}}(s_j) - \widehat{G}_j\big)^2 \big) \big/ \big( \sum_{j=1}^{K'} \chi_j + \epsilon \big)$
            \State Update $\theta_{\mathrm{RL}}, \omega_{\mathrm{RL}}$ with the losses $\mathcal{L}_{\mathrm{actor}}, \mathcal{L}_{\mathrm{critic}}$
        \EndFor
    \EndFor
    
    \end{algorithmic}
    \end{algorithm*}
	

	\section{Platform}\label{section:platform}
	Having established the theoretical foundations and developed RL-based solution methods for the multi-robot persistent monitoring problem,  we now present a modular simulation and evaluation platform for benchmarking  heuristic and RL-based monitoring algorithms within a unified framework. The platform is designed to support controlled comparisons across different methods under shared environments, interfaces, and evaluation criteria, while also allowing newly developed algorithms to be incorporated with minimal engineering effort.  Such a platform is useful  because, although a wide range of methods has been proposed for multi-robot monitoring, systematic comparison across algorithmic families remains limited. As discussed in the related  work section, existing platforms and simulators, such as \texttt{patrolling\_sim}~\citep{DP-LI-AF:2018}, provide useful support for evaluating representative heuristic patrolling algorithms. However, there remains a lack of platforms specifically designed to support both heuristic and reinforcement-learning-based monitoring methods under a unified experimental workflow. This makes it difficult to directly compare classical monitoring heuristics with modern learning-based approaches under consistent problem instances, interfaces, and performance metrics. 
	
    To address this gap, we develop \emph{Multi-Robot Monitoring Benchmark (M2Bench)}, a modular and extensible simulation platform for persistent monitoring. 
    As illustrated in~\cref{fig:platform}, M2Bench organizes the experimental workflow into several modular layers. These include a YAML-driven configuration layer, an environment layer for shared monitoring dynamics and task-specific MDP wrappers, a data layer for rollout collection and replay storage, an algorithm layer that supports heuristic and learning-based methods, and a utility layer for evaluation, visualization, logging, checkpointing, and hyperparameter sweeps. The remainder of this section describes the main modules and features of the platform.
	

	\begin{figure*}[ht]
		\centering
        \includegraphics[width=0.95\linewidth]{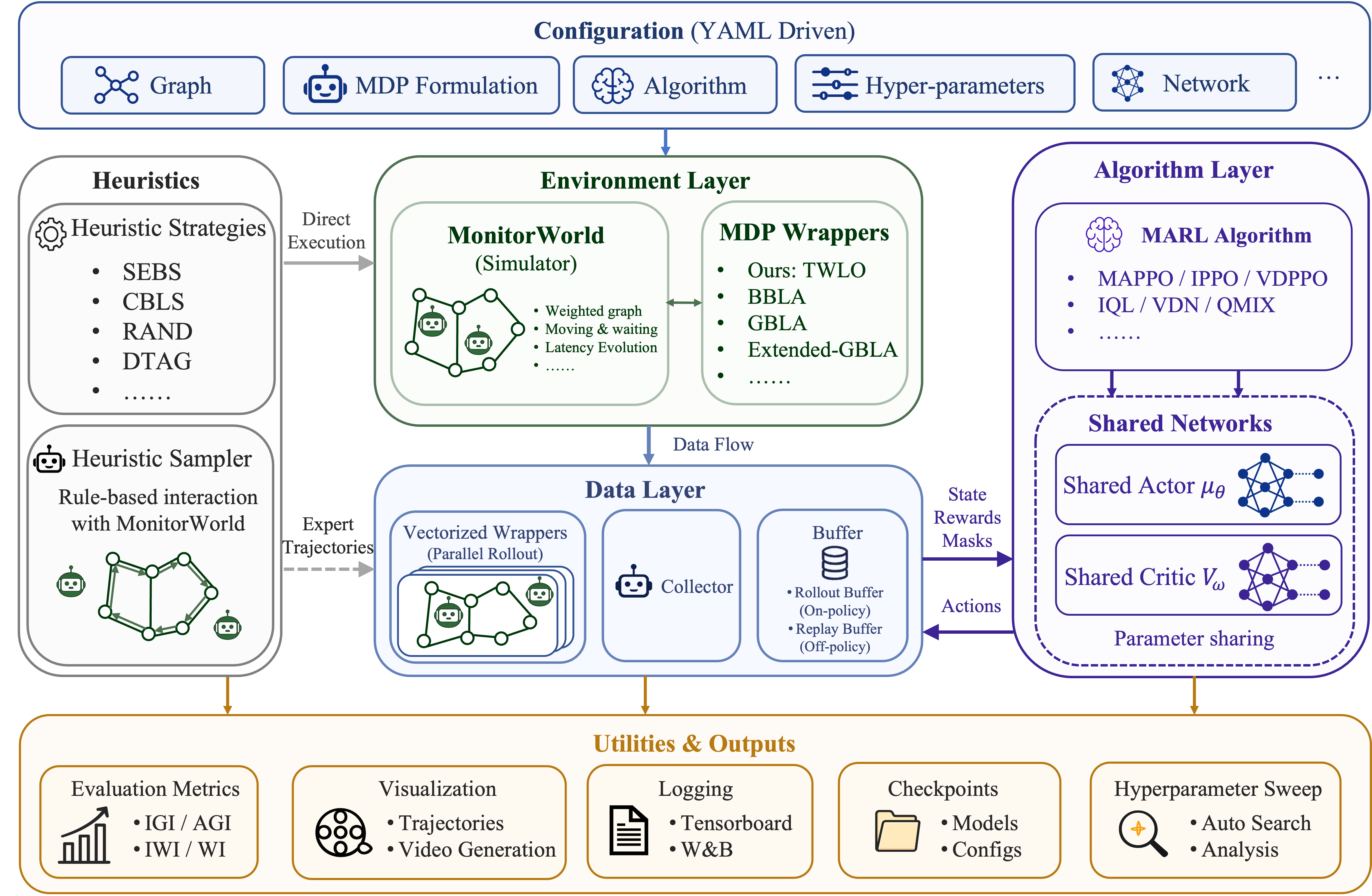}
		\caption{Architecture of M2Bench, a modular benchmark platform for multi-robot persistent monitoring. The platform integrates heuristic and reinforcement-learning-based methods within a unified workflow, including configuration-driven experiment management, monitoring-oriented simulation, task-specific MDP wrappers, rollout collection, shared MARL training, evaluation, visualization, and logging utilities.}
		\label{fig:platform}
	\end{figure*}

\textbf{Modular design and configuration workflow.}
Building on the layered architecture, M2Bench represents each experimental component as an interchangeable module with a clearly defined interface. Users can modify or replace components such as the graph instance, monitoring formulation, policy class, algorithm, and neural network architecture without rewriting the remaining experimental pipeline. The platform builds on widely used libraries, including PyTorch~\citep{Pytorch:2019}, Gymnasium~\citep{Gymnasium:2024}, and PettingZoo~\citep{Pettingzoo:2021}, while avoiding dependence on a monolithic training framework.

From the users' perspective, these modules are assembled through a unified configuration-driven workflow. A single YAML file specifies the graph, monitoring or MDP formulation, robot settings, algorithm type, network options, and training hyperparameters. Given this configuration, M2Bench automatically instantiates the corresponding environment, policy, collector, buffer, trainer, and evaluation routines, and executes the associated experimental pipeline, including training, checkpointing, testing, quantitative evaluation, logging with tools such as W\&B and TensorBoard, visualization, animation rendering, and hyperparameter sweeps.

Extensibility is supported through a centralized registry-and-configuration mechanism. Core components such as MDP wrappers, algorithms, network modules, and trainers are instantiated from the registry rather than hard-coded into the training loop. Thus, adding a new module typically requires only implementing a standardized interface and registering it with the framework, without modifying the overall execution pipeline. This declarative design facilitates switching between heuristic and RL-based methods, comparing different monitoring formulations under shared settings, and reproducing experiments without repeated modifications to low-level code.

	
	\textbf{Monitoring-oriented simulation.} The core environment component of M2Bench is \emph{MonitorWorld}, which simulates persistent monitoring dynamics on finite weighted graphs. Following \cref{subsection:environment_metrics}, it models robot motion, waiting, traveling, visiting events, physical-time evolution, and latency updates. MonitorWorld is independent of specific MDP formulations. Observations, action encodings, costs, and termination conditions are instead defined by external wrappers, allowing multiple learning formulations and non-RL heuristic policies to share the same simulator. It supports both fixed-step synchronous simulation and event-driven asynchronous simulation, enabling discrete-time and event-driven MDPs to be built on a common environment. This shared design also facilitates imitation pretraining in \cref{subsection:imi}, since heuristic trajectories can be directly reused by learning-based methods.    
	
	\textbf{Dataflow across the pipelines.} To support the integrated algorithms, M2Bench organizes the experimental workflow into four main dataflow pipelines: heuristic execution, on-policy MARL, off-policy MARL, and imitation-based pretraining. The first three pipelines largely follow standard control and reinforcement learning paradigms. In the \emph{heuristic} pipeline, the standalone evaluator directly interacts with \emph{MonitorWorld} by reading the global state and executing joint actions. The \emph{RL-based} pipelines instead use tensorized data from vectorized parallel environments, including per-robot observations and auxiliary tensors such as \texttt{active\_mask}, which indicates whether each robot is at a valid decision step. The \emph{imitation-learning} pipeline is detailed in \cref{subsection:imi}, and additional implementation details can be found in our GitHub repository: \url{https://github.com/SpikeW726/M2Bench}.


    \textbf{Policy mapping.}
    Following the modular design of Tianshou~\citep{Tianshou:2021}, M2Bench separates policy modules from learning algorithms. A policy defines how an agent maps observations to actions, whereas a learning algorithm specifies how the policy parameters are updated from collected experience. This separation makes the single-robot policy implementations reusable in multi-robot settings, and avoids  duplicating action-selection logic across different algorithms. In multi-robot environments, M2Bench uses a lightweight policy-mapping mechanism to assign policy instances to robots. With parameter sharing, all robots are mapped to a common policy instance with shared parameters. With independent policies, each robot is mapped to a separate policy instance with the same architecture but independent parameters. Thus, parameter sharing and policy independence are exposed as explicit configuration choices  rather than being hard-coded into the algorithm implementation. The design improves modularity, supports controlled comparisons, and allows learning algorithms to be developed independently of robot-level decision interfaces.



\textbf{Integrated algorithms and MDPs.}
To support systematic benchmarking for multi-robot monitoring, M2Bench integrates $10$ MDP formulations, $14$ classical heuristic policies, and $10$ modern reinforcement learning algorithms. Representative MDP-based monitoring formulations are summarized in \cref{table:RL}. These formulations were originally proposed under different monitoring assumptions or task objectives. 
To enable consistent comparison under the weighted monitoring setting considered in this benchmark, we adapt the reward implementations of these baselines in two ways. First, where applicable, unweighted latency terms are replaced by their weighted counterparts. Second, for NEP and OUCS,  task-specific reward components that are not directly related to the monitoring objective are removed. For comparative evaluation, M2Bench supports a range of MARL algorithms, including policy-gradient methods such as MAPPO~\citep{CY-AV-YW:2022}, IPPO~\citep{CD-TG-SW:2020}, and VDPPO~\citep{YM-JL:2022}, and value-based methods such as IQL~\citep{MT:1993}, VDN~\citep{PS-GL-AG:2017}, and QMIX~\citep{TR-MS-SW:2020}. 
In addition, M2Bench incorporates a broad collection of classical heuristic policies, including CC, CR, and RANDOM~\citep{AM-GR-JZ-AD:2003}, MSP~\citep{DP-RR:2010}, HCR and HPCC~\citep{portugal2013multi},  GBS~\citep{DP-RR:2013}, SEBS~\citep{DP-RR:2013:iros}, ER~\citep{CY-TZ:2016}, CBLS~\citep{DP-RR:2016}, DTAG and DTAP~\citep{AF-LI-DN:2017}, BAPS~\citep{HC-TC-SW:2017}, and AHPA~\citep{AG-XL-QZ:2024}.


    \begin{table*}
		\small\centering
		\caption{Representative MDP-based monitoring formulations integrated in M2Bench, including their reward designs, state representations, and learning algorithms. For methods whose original papers do not provide an explicit algorithm name, we refer to them using names derived from the corresponding paper titles for convenience.}\label{table:RL}
		\begin{tabularx}{\textwidth}{p{3.5cm}  p{4.5cm} Y p{2cm}}
			\toprule
			Source  & Reward & State/observation& Algorithm\\
			\midrule
			BBLA \citep{HS-GR-VC-BR:2004}& Instantaneous latency of the visited node  & Current node, incoming edge, and neighboring nodes with highest and lowest latency &  Q-learning\\
			\hline
			GBLA \citep{HS-GR-VC-BR:2004}& Same as BBLA  &  BBLA state +   adjacent nodes intended to be visited by other robots &  Q-learning\\
			\hline
			Extended-GBLA \citep{FL-AK:2014}& $0$ if the reached node is intended to be visited by others; otherwise its instantaneous latency    & Current node, incoming edge, adjacent nodes ordered by latency, and neighbors' intended target nodes & Q-learning\\
			\hline
			NEP~\citep{ZH-DZ:2010}&Cover-rate-based reward  & Relative positions of robots on nodes and edges  & Q-learning\\
			\hline
			S4R1 \citep{MJ-LV-AS:2022}& Local latency normalized by global average latency  &  Multiple state representations with different communication levels, combining source-target nodes, neighboring latency, and other robots' local information & DQN\\
			\hline
			BEAU \citep{LG-HP-XD-JH:2023}& Instantaneous global latency  &Graph-based observations with node features such as coordinates and latency, together with robot position  & GAT + HAPPO\\
			\hline
			MAGEC \citep{AG-YS-QZ:2024}& Combination of local latency normalized by global average latency and global terminal reward for minimizing episode-average latency & Graph-based observations within a radius, with node features such as type, normalized latency, and degree, and edge features such as normalized distance and edge identifier & GraphSAGE + MAPPO\\
			\hline
			SUNS \citep{JW-RM-EH:2025}& Instantaneous node latency  & Graph-based observation with node features such as latency and distance to the nearest robot, together with edge weights   & GCN + A2C\\
			\hline
			OUCS \citep{JP-EG-MB:2025}& Cooperative reward combining team reward, individual reward, and an exploration bonus  & Robot locations, visible-node visit counts,  and weights of neighboring nodes  & MAPPO, VDPPO, IPPO, MATRPO, VDA2C\\ 
			\hline
			Ours: TWLO & Tail  worst-case weighted latency &  Robot-team state, instantaneous  node latencies, historical worst-case latency, and elapsed-time counter & MAPPO, IPPO, VDPPO, IQL, VDN, QMIX\\ 
			\bottomrule
		\end{tabularx}
	\end{table*}
	
	\textbf{Additional training techniques.} Beyond its core algorithm implementations, M2Bench incorporates several training techniques that improve empirical performance. These include standard stabilization components in deep reinforcement learning, such as GAE and observation/value normalization~\citep{HH-AG-MH-DS:2016}. For event-driven environments, M2Bench supports Trans-GAE and loss masking in the on-policy pipeline, as well as an SMDP-style decision-epoch replay mechanism in the off-policy pipeline, similar to SMDP Q-learning~\citep{RS-DP-SS:1999}. For value-based methods with recurrent networks, the off-policy pipeline samples consecutive trajectory segments from the sequence replay buffer, following training patterns commonly used in DRQN~\citep{MH-PS:2015} and R2D2~\citep{SK-GO-WD:2019}. Moreover, for VDN and QMIX, Peng's \(Q(\lambda)\) return~\citep{JP-RW:1994} is supported for improved multi-step target estimation. Together, these components make M2Bench a unified experimental system for learning-based persistent monitoring.

	\textbf{Unified evaluation metrics.} To enable fair comparison across methods, M2Bench evaluates all approaches under a unified set of monitoring-quality metrics. Following the classical patrolling literature~\citep{HS-GR-VC-BR:2004}, we consider four standard criteria: instantaneous graph idleness (IGI), average graph idleness (AGI), instantaneous worst idleness (IWI), and worst idleness (WI). The metrics are defined consistently with their classical counterparts, but are extended to incorporate the node-priority weights introduced in \cref{subsection:environment_metrics} and adapted to the event-driven evaluation setting. Specifically, at each event step, latency metrics are measured after advancing the physical time of MonitorWorld but before resetting the latency of nodes reached at that time. For a policy $\strategy$ and time $t \ge 0$, the instantaneous graph idleness is defined as
	\begin{equation*}
		\mathrm{IGI}(\strategy,t)
		:= \frac{1}{|\nodes|}\sum_{v\in\nodes}\weight(v)\,\latency{v}{\strategy}(t),
	\end{equation*}
	which measures the mean weighted latency over all nodes.
    Let \(0=t_0<t_1<\cdots<t_K=H\) be the event times recorded during an evaluation episode, where \(H\) is the evaluation horizon. Then the average graph idleness  over horizon $H$ is defined as
    \begin{equation}\label{eq:AGI}
        \mathrm{AGI}(\strategy,H)
        := \frac{1}{H}\sum_{i=1}^{K}
        \mathrm{IGI}(\strategy,t_i)(t_i-t_{i-1}).
    \end{equation}
    The instantaneous worst idleness is defined as
	\begin{equation*}
		\mathrm{IWI}(\strategy,t)
		:= \max_{v\in\nodes}\weight(v)\,\latency{v}{\strategy}(t),
	\end{equation*}
	which is exactly the worst-case weighted latency $\maxlatency{\strategy}{t}$ introduced in \cref{subsection:environment_metrics}. Finally, the worst idleness after the transient cutoff \(T\) and over the evaluation horizon $H$ is defined as
    \begin{equation} \label{eq:WI}
		\mathrm{WI}_T(\strategy,H)
		:= \sup_{T\le t \le H}\mathrm{IWI}(\strategy,t).
	\end{equation}
	When all node weights are identical, these definitions recover the standard unweighted idleness metrics up to a constant scaling factor. This unified metric suite allows heuristic, imitation-based, and reinforcement-learning-based methods to be evaluated under a common protocol.

	\section{Experiments}\label{section:experiments}
	In this section, we empirically evaluate the proposed framework from four complementary perspectives. First, we examine our main methodological contribution, namely the equivalent monitoring MDP formulation and its associated learning pipeline, and validate its effectiveness across several monitoring scenarios. Second, we situate the proposed method within a broader benchmarking context by comparing heuristic and learning-based approaches under the unified M2Bench platform, demonstrating both the practical advantages of the proposed method and the ability of M2Bench to support integrated evaluation. Third, we evaluate the scalability and deployment robustness of the proposed learning method. For scalability, we train and test the method on instances with different numbers of robots to examine whether it can be applied effectively across team sizes. For robustness, we deploy trained policies on stochastic variants of the training graphs, where the actual edge traversal times are perturbed by uniform noise and therefore differ from the nominal traversal times used during training, and evaluate whether the resulting performance remains stable.  Finally, we conduct ablation studies to assess the contribution of the main algorithmic components.
    
To ensure a comprehensive evaluation, we conduct experiments on a broad collection of classical patrolling benchmark maps with diverse topological and application characteristics, covering both uniform and weighted-node settings. Detailed descriptions of the monitoring environments are provided in~\cref{appendix:graphs}. For learning-based methods, the training curves are recorded from the training modules of M2Bench, while the final monitoring performance is obtained by executing the trained policies deterministically in MonitorWorld, using the greedy action at each decision step, and evaluating the resulting trajectories using  the unified monitoring metrics defined in Section~\ref{section:platform}.

	%
    
\subsection{Comparison of RL solvers for the TWLO-MDP}
	We first evaluate the effectiveness of the learning pipeline introduced in \cref{sub:algorithm}. Using M2Bench, we compare several representative RL and MARL algorithms under identical monitoring environments and evaluation protocols. This experiment has two goals. First, it examines whether the proposed TWLO-MDP formulation can be effectively optimized by learning-based methods to produce high-quality monitoring policies. Second, it characterizes the empirical behavior of different policy-gradient-based and value-based solvers under the proposed monitoring formulation. The training curves recorded during policy optimization are shown in~\cref{fig:learning_performance}. After training, each learned policy is executed in MonitorWorld, and the resulting trajectories are evaluated using the unified monitoring metrics defined in~\cref{section:platform}. The final monitoring performance is summarized in~\cref{tab:twlo_marl_comparison}.

	
\begin{figure*}[htbp]
		\centering
		\subfloat[Long-edge graph in \cref{fig:example} with $3$ robots.]{
			\includegraphics[width=0.32\textwidth]{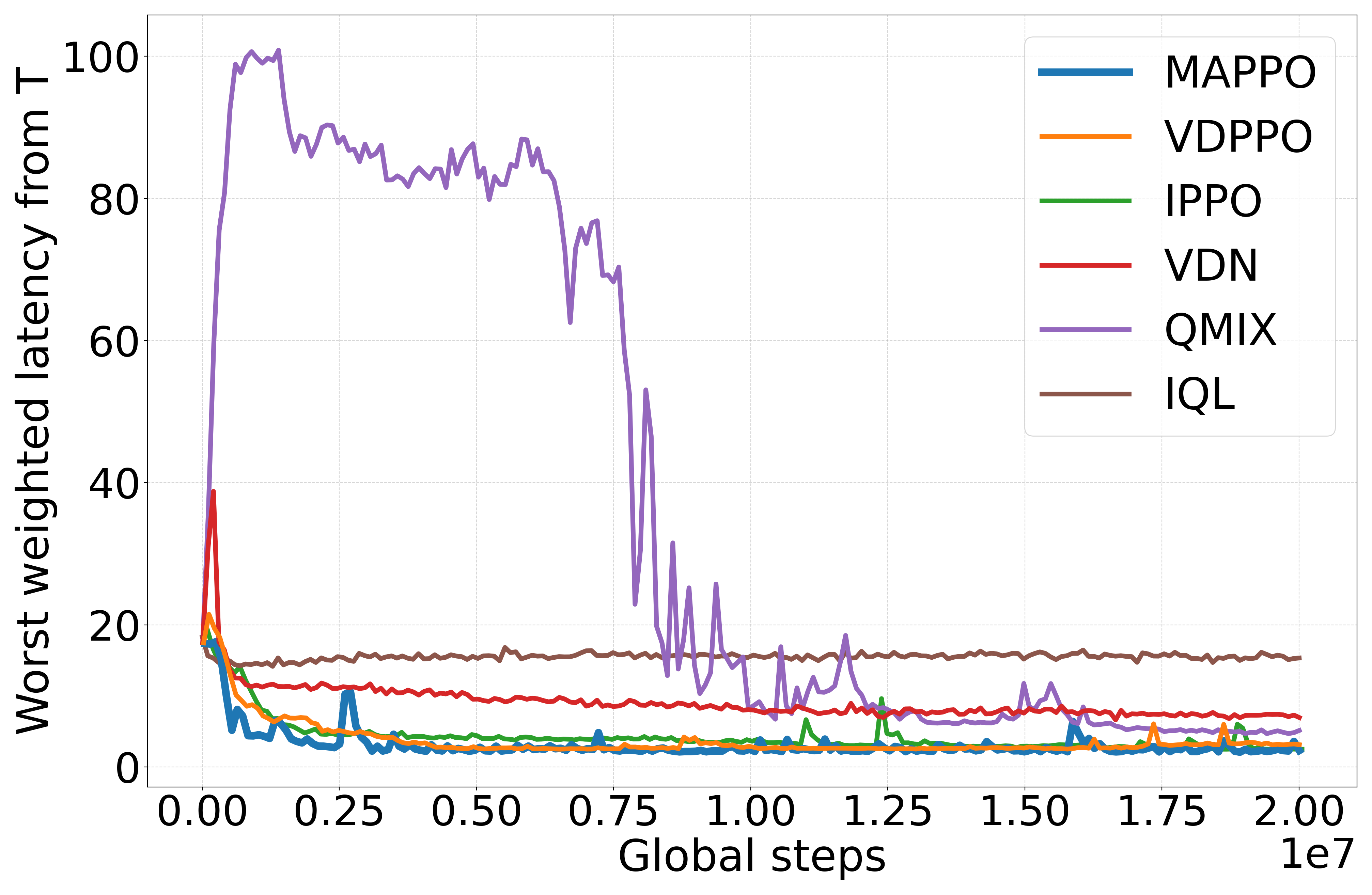}
			\label{fig:alg_longedge}
		}
		\subfloat[San Francisco  map in \cref{fig:SF} with $3$ robots.]{
			\includegraphics[width=0.32\textwidth]{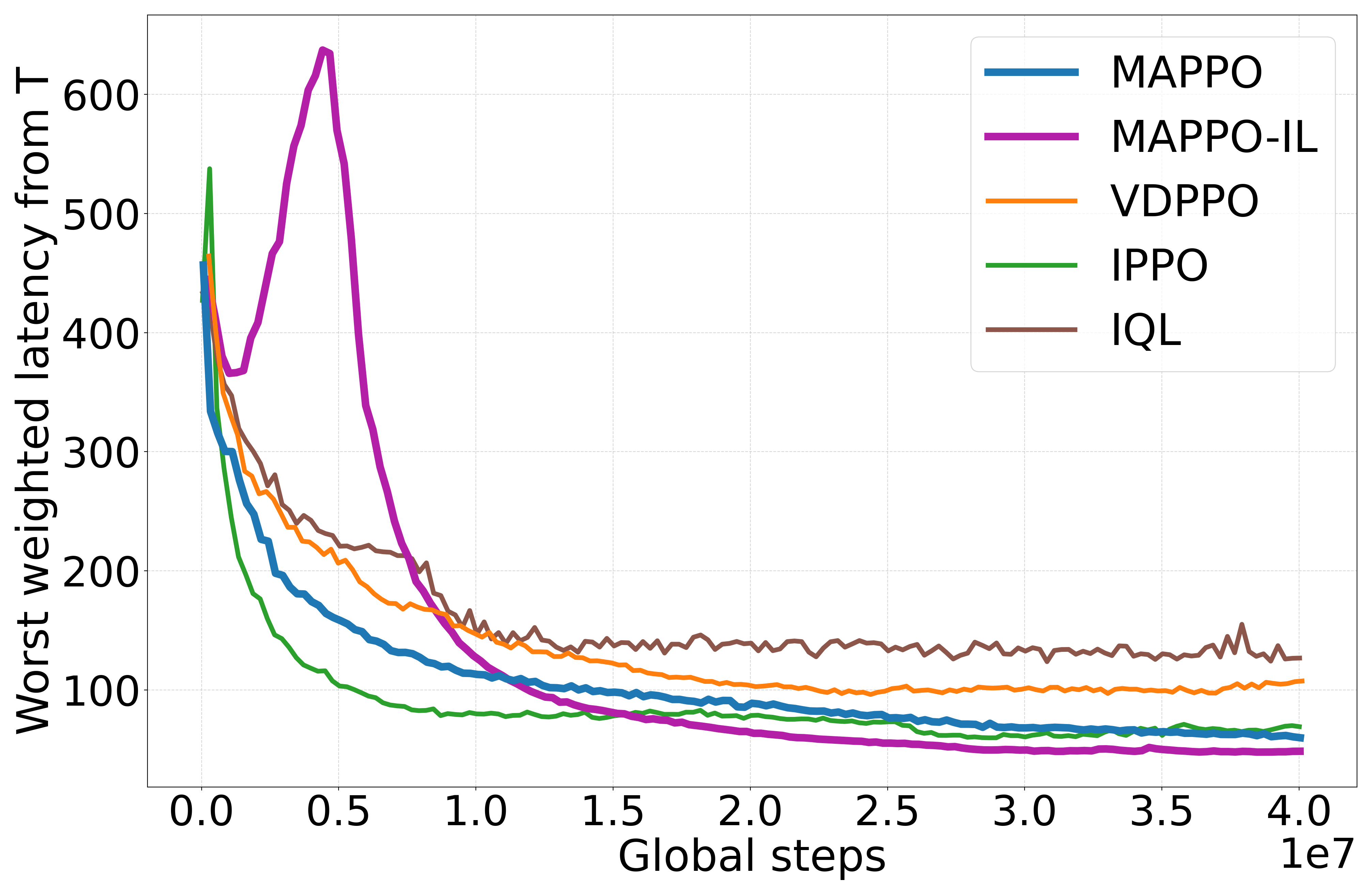}
			\label{fig:alg_SF}
		}
        \subfloat[Grid in \cref{fig:grid} with $6$ robots.]{
			\includegraphics[width=0.32\textwidth]{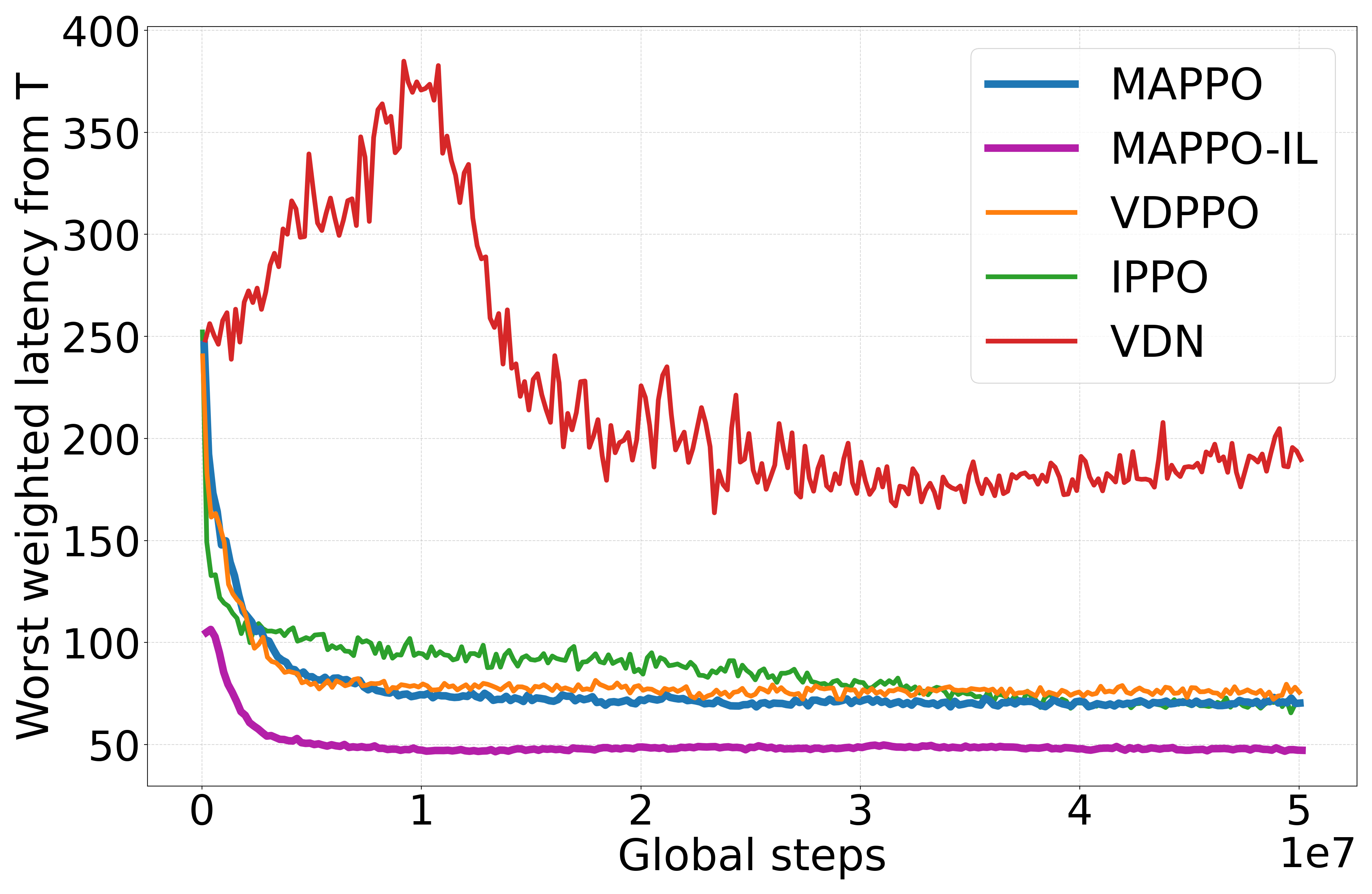}
			\label{fig:alg_grid}
		}
		\caption{Learning curves of representative RL and MARL algorithms under the proposed TWLO-MDP  formulation. The curves are recorded during policy training in M2Bench.  The y-axis reports the worst-case weighted latency measured after the transient period $T$, as defined in~\eqref{eq:defofobj}; lower values indicate better monitoring performance. }
		\label{fig:learning_performance}
	\end{figure*}

   \begin{table}[htbp]
    \centering
    \small
    \caption{Final monitoring performance of TWLO-MDP-based strategies obtained by different RL and MARL algorithms. Each entry reports WI / AGI, as introduced in~\eqref{eq:WI} and~\eqref{eq:AGI}. The best value for each environment is highlighted in bold. N/A indicates that the corresponding training process does not converge to a usable policy.}
    \label{tab:twlo_marl_comparison}
    \begin{tabular}{lccc}
        \toprule
        Algorithm & Long-edge & SF map& Grid \\
        \midrule
        MAPPO      & \textbf{1.5 / 0.42} & 50.8 / 20.59 & 51.5 / 12.12 \\
        MAPPO-IL & --                  & \textbf{47.1 / 18.26} & \textbf{40 / 10.89} \\
        VDPPO      & 3 / 0.77             & 83.5 / 22.89 & 62 / 11.89 \\
        IPPO       & 2 / 0.72             & 68.1 / 20.33 & 82 / 15.88 \\
        VDN        & 4.5 / 2.11           & 973.9 / 128.84 & 179.5 / 23.70 \\
        QMIX       & 3 / 1.34             & N/A & N/A \\
        IQL        & 15 / 2.47            & 88.7 / 21.63 & N/A \\
        \bottomrule
    \end{tabular}
\end{table}

The results in~\cref{fig:learning_performance} and~\cref{tab:twlo_marl_comparison} show that the proposed TWLO-MDP formulation can be effectively optimized by policy-gradient-based methods. MAPPO-based methods provide the most effective and stable performance among the tested algorithms. In particular, MAPPO, IPPO, and VDPPO generally reduce the worst-case weighted latency during training and produce competitive final monitoring policies. On the long-edge graph, MAPPO attains the same worst-case weighted latency as the analytical optimum derived in~\cref{example}, providing empirical evidence consistent with the equivalence result established in~\cref{thm:MDPopt,cor:optimality}. By contrast, value-based methods are less stable under the proposed formulation. IQL performs poorly even on the relatively simple long-edge graph, while both IQL and QMIX fail to converge on the more complex San Francisco and Grid environments. VDN can produce improving learning curves in some cases, but its final policies remain far from competitive in the more complex scenarios.

The results also indicate that imitation-based initialization is beneficial in challenging monitoring scenarios. On the San Francisco crime map and the Grid environment, MAPPO-IL achieves the best final performance and maintains a clear advantage over MAPPO trained from scratch. This suggests that heuristic demonstrations provide useful behavioral guidance for policy optimization, especially when the graph topology is complex or multi-robot coordination is difficult. Based on these observations, we use MAPPO-based learning methods in the following benchmark comparisons.

	\subsection{Benchmark comparison on M2Bench}
	We next conduct a unified benchmark comparison using the algorithms integrated in M2Bench. In contrast to the previous experiment, which compares different RL solvers under the same TWLO-MDP formulation, this experiment compares different monitoring paradigms, including the proposed TWLO-MDP-based learning methods, existing learning-based monitoring algorithms summarized in~\cref{table:RL}, and classical heuristic patrolling policies discussed in~\cref{section:platform}. All methods are evaluated under the same monitoring environments, execution protocols, and monitoring-quality metrics. 
    
    For learning-based methods, the curves in~\cref{fig:comparison_special,fig:comparison_synthetic,fig:comparison_realistic} report the worst-case weighted latency measured after the transient period $T$. Although different learning-based algorithms may use different reward designs during training, all curves are plotted  using the same  worst-case weighted latency metric to provide a consistent view of their progress with respect to the target monitoring objective. Thus, these curves should be interpreted as evaluation curves recorded during training rather than algorithm-specific training losses or rewards. For readability, learning-based methods that fail to converge or exhibit severe instability within the training budget are omitted from the plotted curves, while their final outcomes are reported as N/A in the tables.
    
    The final monitoring performance of all compared methods is summarized in~\cref{tab:worst_latency_comparison,tab:average_latency_comparison}, using the worst-case latency and average-latency metrics, respectively. 
    For heuristic methods, we report the average performance over $10$ evaluation episodes when  random decisions are involved; N/A indicates that the executed policy does not cover all nodes within the fixed evaluation horizon, so the worst-case latency continues to increase.

\begin{table*}[ht]
\small
\centering
\caption{Final monitoring performance of different algorithms on benchmark patrolling environments under the WI metric defined in~\eqref{eq:WI}. The best result in each environment is highlighted in bold. N/A indicates either failure to obtain a usable learned policy for learning-based methods or incomplete node coverage within the evaluation horizon for heuristic methods.}
\label{tab:worst_latency_comparison}
\begin{tabular}{lcccccccccc}
\toprule
Algorithm & Long-edge & Chain & MapA & MapB & Grid & Island & Cumberland & Milwaukee & Marostica & SF map \\
\midrule
TWLO & \textbf{1.5} & \textbf{4} & 88.5 & 125.3 & 51.5 & 111.7 & 106.0 & 148 & {51} & 50.8\\
TWLO-IL & -- & -- & \textbf{64.5} & \textbf{85.4} & \textbf{40} & \textbf{71.1} & \textbf{95.9} & \textbf{124.1} & \textbf{29} & \textbf{47.1}  \\
BBLA & 9 & 10 & 153.0 & 178.0 & 139 & 239.1 & 268 & 270 & 69 & 166.5 \\
GBLA & 16.6 & 8.8 & 110.4 & 154.9 & 100 & 140.4 & 165 & 250 & 52 & 272.9 \\

EGBLA & 11 & 8 & 177 & 100.5 & 84 & 268 & 108 & 200 &54 & N/A \\

NEP & 14.4 & 49.3 & N/A & N/A & N/A & N/A & N/A & N/A & N/A & 339.4 \\

S4R1 & 9 & 10 & 118.0 & 130.8 & 44 & 108 & 241 & 180 & 57 & 87.1 \\

BEAU & 12 & 6 & 153.0 & N/A & N/A & N/A &N/A  & 359.0  & N/A & 240 \\

MAGEC & 6 & 7 & 352.6 & 303.7 & N/A & N/A & N/A & N/A & 111& N/A \\

SUNS & 16 & 9 & 184.7 & 239.3 & N/A & 241.9 & 220.8 & 371.1 & 95.5 & 104.3 \\

OUCS & 9 & 14 & 132.4 & 150.0 & 75 & 232 & 256 & 355 & 66 & 126 \\
\midrule
CC & 12.5 & 20 & 259 & 218.5 & 169.7 & 249.1 & 209.3 & 459.3 & 124.3 & 295.8 \\
CR & 13 & 20 & 236.4 & 164.1 & 203 & 326 & 231 & 383.1 & 214 & 316.3 \\
RANDOM & 15.9 & 37.5 & 426.4 & 437 & 235.3 & 458.9 & 475 & 775.1 & 286.1 & 405.7 \\
MSP & 16 & N/A & N/A & N/A & N/A & N/A & 502.1 & N/A & N/A & 140.4 \\
HCR & 17 & 16 & 238.96 & 175.18 & 228 & 339.02 & 298.8 & 323.82 & 149 & 187 \\
HPCC & 17 & 19.5 & 257.92 & 295.62 & 112 & 374.28 & 371.63 & 551.4 & 109 & 187.4 \\
GBS & 12.5 & 18.8 & 234.2 & 190.5 & 160.5 & 263.7 & 207 & 481.4 & 104.7 & 289.4 \\
SEBS & 12.9 & 15.8 & 246.9 & 219.1 & 167 & 295.2 & 199.4 & 402.3 & 122.2 & 297.3 \\
ER & 11 & 20 & 209 & 125.6 & 209 & 243.4 & 335.6 & 459 & 104 & 269.3 \\
CBLS & 11.9 & 11.5 & 221.7 & 214 & 105.8 & 260.7 & 233.5 & 460.7 & 101.9 & 307.6 \\
DTAG & 13 & 16 & 212.1 & 199.8 & 184 & 323.9 & 266.6 & 469 & 174 & 300 \\
DTAP & 13 & 18.4 & 203.3 & 221.8 & 98.5 & 228.6 & 192.8 & 400.6 & 96 & 276.6 \\
BAPS & 13 & 20 & 107.1 & 96.4 & 167 & 122.7 & 135.7 & 201.4 & 149 & 122.4 \\
AHPA & 11 & 13 & 132 & 291.6 & 173 & 115.7 & 131.4 & 360.7 & 175 & 138.5 \\
\bottomrule
\end{tabular}
\end{table*}
    

\begin{table*}[ht]
\small
\centering
\caption{Final monitoring performance of different algorithms on benchmark patrolling environments under the AGI metric defined in~\eqref{eq:AGI}. The best result in each environment is highlighted in bold. N/A indicates either failure to obtain a usable learned policy for learning-based methods.}
\label{tab:average_latency_comparison}
\begin{tabular}{lcccccccccc}
\toprule
Algorithm & Long-edge & Chain & MapA & MapB & Grid & Island & Cumberland & Milwaukee & Marostica & SF map \\
\midrule
TWLO & \textbf{0.42} & \textbf{1.58} & 18.72 & 26.91 &  12.12 & 31.78 & 26.38 & 47.42 & 12.03 & 20.59 \\
TWLO-IL & --  & -- & \textbf{17.86} & \textbf{20.80} & {10.89}   & \textbf{26.46} & \textbf{25.96} & 44.70 &  \textbf{10.20} & \textbf{18.26 }\\
BBLA & 1.35 & 2.83 & 31.76 & 27.80 & 25.76 & 56.44 & 40.65 & 53.02 & 15.01 & 42.92 \\
GBLA & 1.74 & 2.99 & 25.08 & 23.32 & 18.69 & 32.37 & 34.36 & 52.06 & 12.67 & 46.92 \\
EGBLA & 2.43 & 2.54 & 27.84 & 22.49 & 13.00 & 36.06 & 28.05 & 47.71 & 11.78 & N/A \\
NEP & 2.12 & 6.43 & N/A & N/A & N/A & N/A & N/A & N/A & N/A & 52.32 \\
S4R1 & 1.56 & 2.90 & 20.27 & 21.62 & \textbf{9.91} & 28.73 & 27.96 & \textbf{44.59} & 10.59 & 20.52 \\
BEAU & 2.47 & 1.88 &  31.00 &N/A  &N/A  & N/A & 84.65 & 65.62 & N/A & 52.08 \\
MAGEC & 1.20 & 2.01 & 44.89 & 50.90 & 37.98 & N/A & 49.26 & 102.00 & 18.17 & N/A \\
SUNS & 4.13 & 2.90 & 31.89 & 49.26 & N/A & 51.41 & 43.83 & 68.61 & 14.26 & 32.80 \\
OUCS & 1.91 & 3.19 & 20.41 & 26.02 & 14.58 & 60.77 & 43.72 & 57.48 & 14.07 & 24.92 \\
\midrule
CC & 6.54 & 7.21 & 48.14 & 33.16 & 42.23 & 59.51 & 43.42 & 105.52 & 28.00 & 80.35 \\
CR & 7.78 & 7.27 & 55.32 & 28.12 & 51.73 & 77.00 & 53.87 & 101.63 & 34.22 & 88.07 \\
RANDOM & 3.47 & 4.61 & 57.86 & 60.96 & 27.03 & 77.10 & 74.68 & 125.22 & 33.99 & 55.57 \\
MSP & 4.76 & 33.68 & 144.09 & 141.70 & 225.03 & 197.57  & 203.78 & 322.37 & 226.43 & 41.99 \\
HCR & 7.09 & 4.31 & 37.75 & 31.38 & 47.25 & 59.91 & 49.65 & 78.66 & 28.5 & 50.83 \\
HPCC & 7.09 & 5.82 & 54.62 & 48.37 & 37.07 & 100.44 & 99.46 & 135.48 & 38.2 & 50.89 \\
GBS & 6.26 & 6.56 & 42.82 & 34.21 & 39.54 & 58.26 & 46.25 & 96.91 & 23.32 & 76.59 \\
SEBS & 7.06 & 5.23 & 46.77 & 37.34 & 40.21 & 73.21 & 43.72 & 91.56 & 27.42 & 78.76 \\
ER & 5.06 & 7.14 & 41.33 & 26.88 & 50.85 & 48.83 & 68.13 & 103.08 & 27.77 & 69.09 \\
CBLS & 4.71 & 2.74 & 42.95 & 35.30 & 25.69 & 54.19 & 49.77 & 98.59 & 20.65 & 82.14 \\
DTAG & 7.45 & 4.82 & 43.24 & 28.43 & 51.83 & 76.88 & 64.55 & 110.41 & 31.76 & 80.23 \\
DTAP & 7.26 & 5.56 & 37.68 & 34.92 & 21.67 & 49.95 & 40.68 & 86.10 & 19.63 & 65.28 \\
BAPS & 7.78 & 7.14 & 21.75 & 22.09 & 38.49 & 27.21 & 27.11 & 52.73 & 33.69 & 28.81 \\
AHPA & 5.30 & 3.78 & 24.38 & 35.80 & 38.16 & 28.77 & 27.33 & 67.20 & 39.01 & 30.98 \\
\bottomrule
\end{tabular}
\end{table*}


  The results in~\cref{fig:comparison_special,fig:comparison_synthetic,fig:comparison_realistic} and~\cref{tab:worst_latency_comparison,tab:average_latency_comparison} show that the TWLO-MDP-based learning methods achieve the strongest  performance across the benchmark suite. In particular, TWLO and TWLO-IL consistently obtain the lowest worst-case latency values in most environments. This is expected because the proposed MDP formulation directly targets the tail worst-case latency objective after the transient period. More importantly, the advantage is not limited to the optimized metric: under the average-latency metric, the TWLO-MDP-based methods also achieve the best performance in most environments or remain very close to the best-performing baseline. By contrast, the competing algorithms exhibit more environment-dependent behavior. Among existing learning-based baselines, GBLA performs reasonably well on MapA and Island but becomes much less competitive on the Long-edge and San Francisco maps. Similarly, MAGEC achieves good performance on the Long-edge and chain graphs, but performs poorly on MapA and MapB and fails to produce usable policies on several larger environments. Classical heuristic methods can perform reasonably well on certain topologies, especially when their hand-crafted decision rules match the graph structure, but their performance is substantially less consistent than that of the TWLO-MDP-based methods. These observations suggest that the TWLO-MDP-based learning framework provides a more adaptive and reliable way to optimize monitoring policies across heterogeneous environments.  
    \begin{figure*}[htbp]
    \centering
    \subfloat[Long-edge graph in \cref{fig:example} with $3$ robots.]{
        \includegraphics[width=0.36\textwidth]{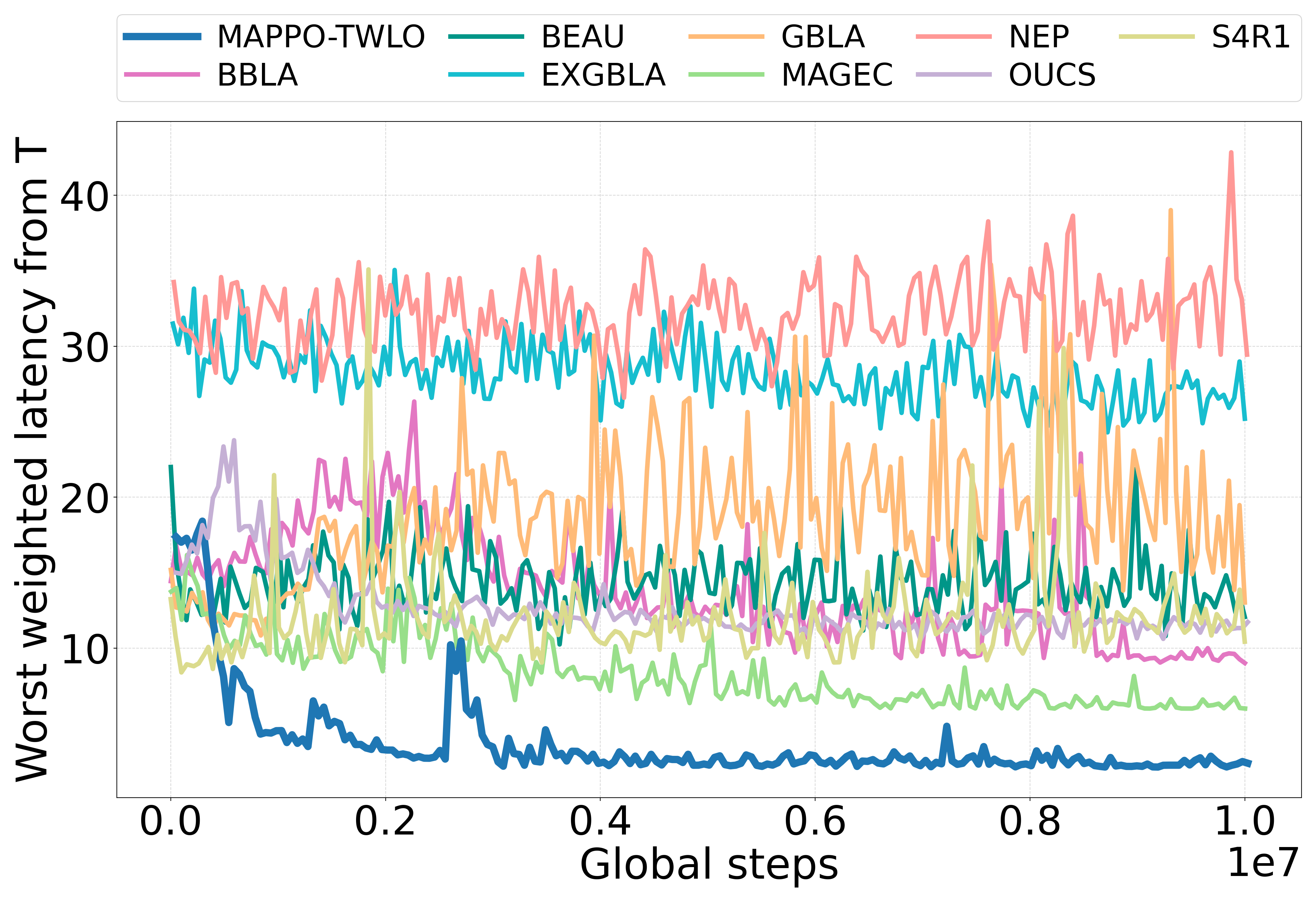}
        \label{fig:all_longedge}
    }
    \subfloat[Chain in \cref{fig:chain} with $4$ robots.]{
        \includegraphics[width=0.36\textwidth]{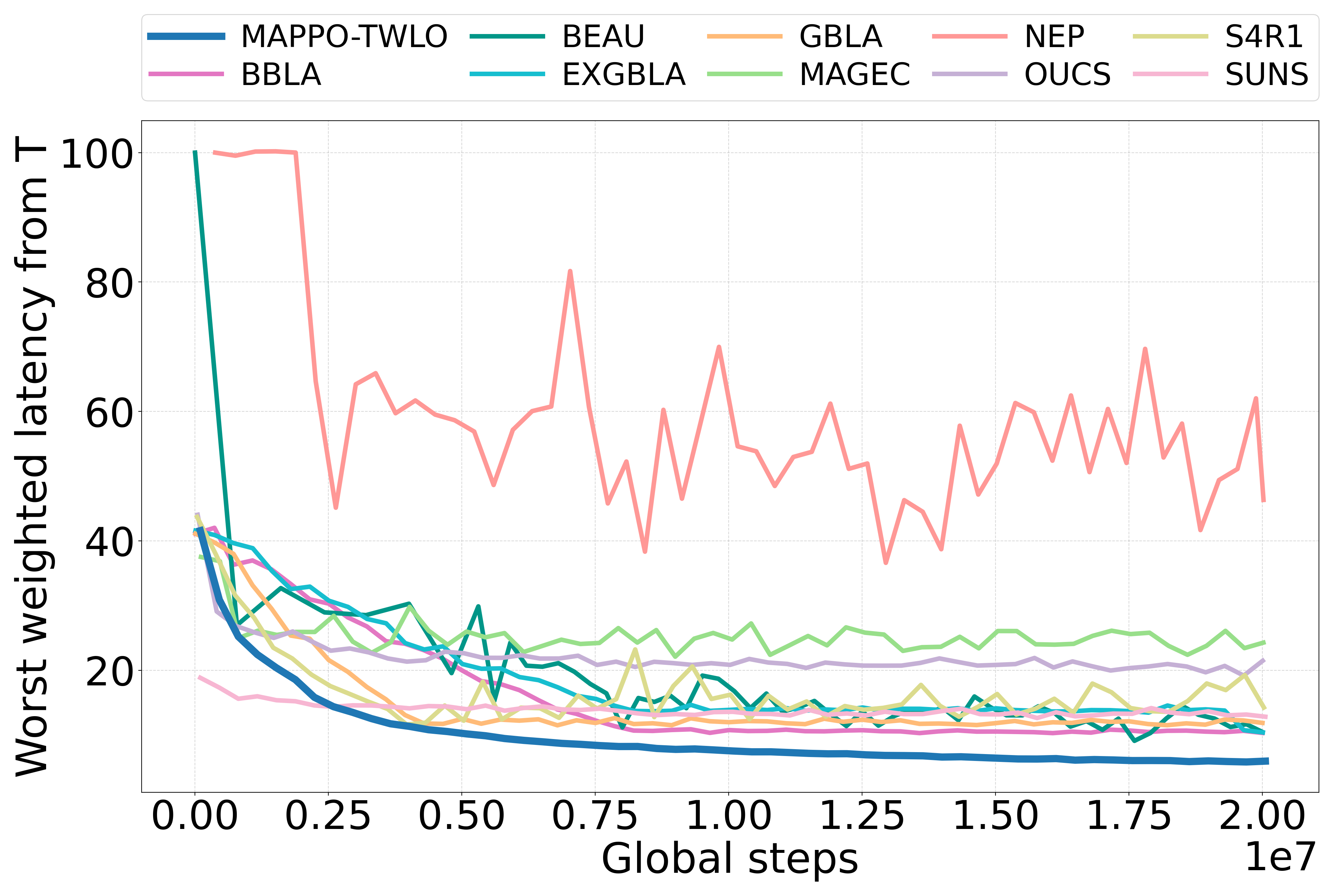}
        \label{fig:all_chain}
    }
    \caption{Learning curves of learning-based benchmark algorithms on special graph topologies.  The curves report the worst-case latency metric measured after the transient period \(T\). 
    The compared algorithms are summarized in~\cref{table:RL}; for readability, algorithms that fail to converge or exhibit severe instability are omitted from the curves.}
    \label{fig:comparison_special}
\end{figure*}

    For special graph topologies with known analytical optima, the benchmark results provide direct empirical evidence for the proposed formulation. On the long-edge graph in \cref{fig:example}, the evaluation results in \cref{tab:worst_latency_comparison} show that TWLO attains the same worst-case latency value as the analytical optimum derived in \cref{example}. On the chain graph, TWLO also finds the optimal strategy characterized in \cref{fig:chain}, which follows from the analytical result in~\citep{FP-AF-FB:2012}. These observations are consistent with the theoretical equivalence established in \cref{thm:MDPopt,cor:optimality}: optimizing the TWLO-MDP can recover an optimal solution to the original monitoring problem. 

\begin{figure*}[htbp]
    \centering
    \subfloat[MapA in \cref{fig:mapA} with $6$ robots.]{
        \includegraphics[width=0.31\textwidth]{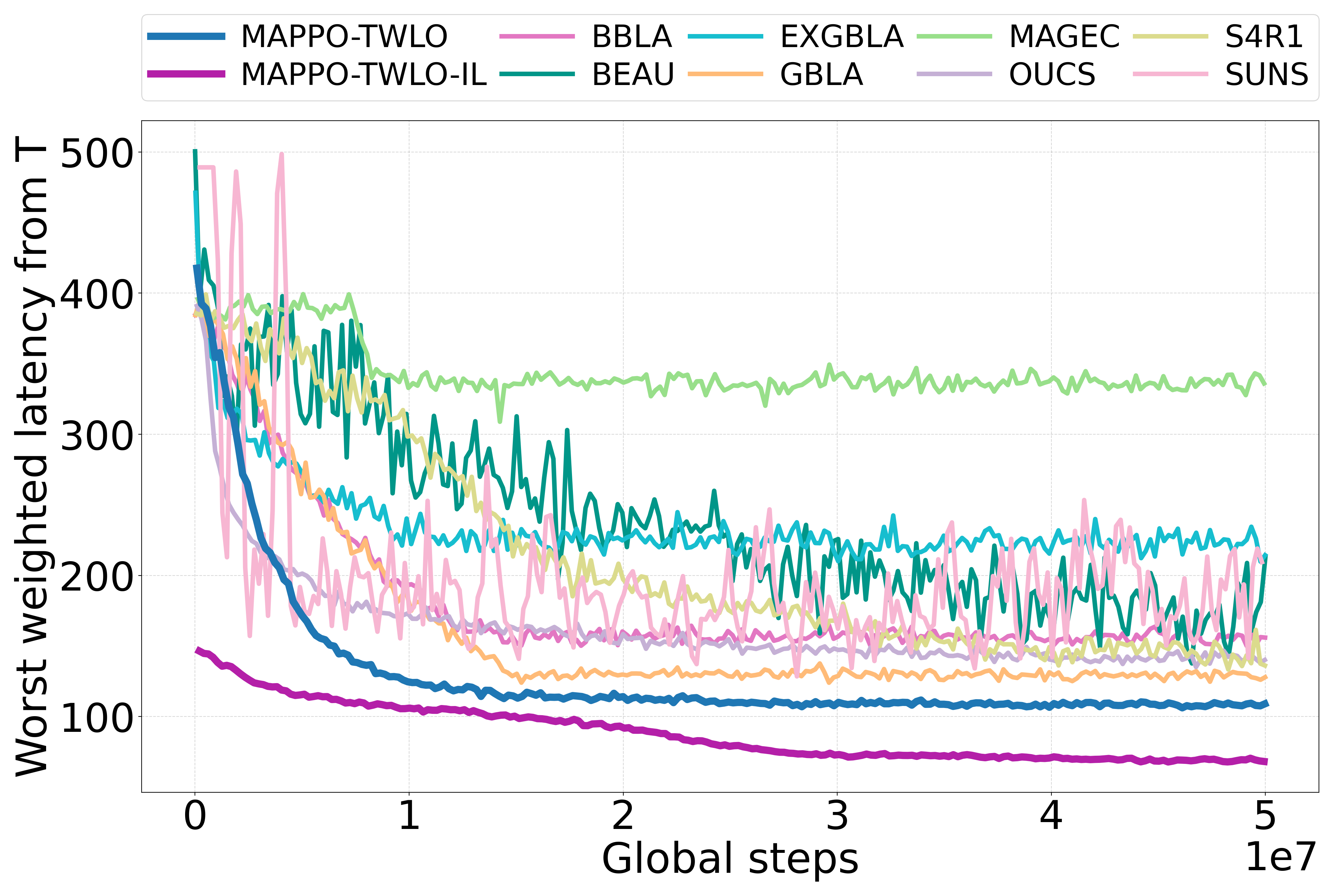}
        \label{fig:all_mapA}
    }
    \subfloat[MapB in \cref{fig:mapB}  with $6$ robots.]{
        \includegraphics[width=0.31\textwidth]{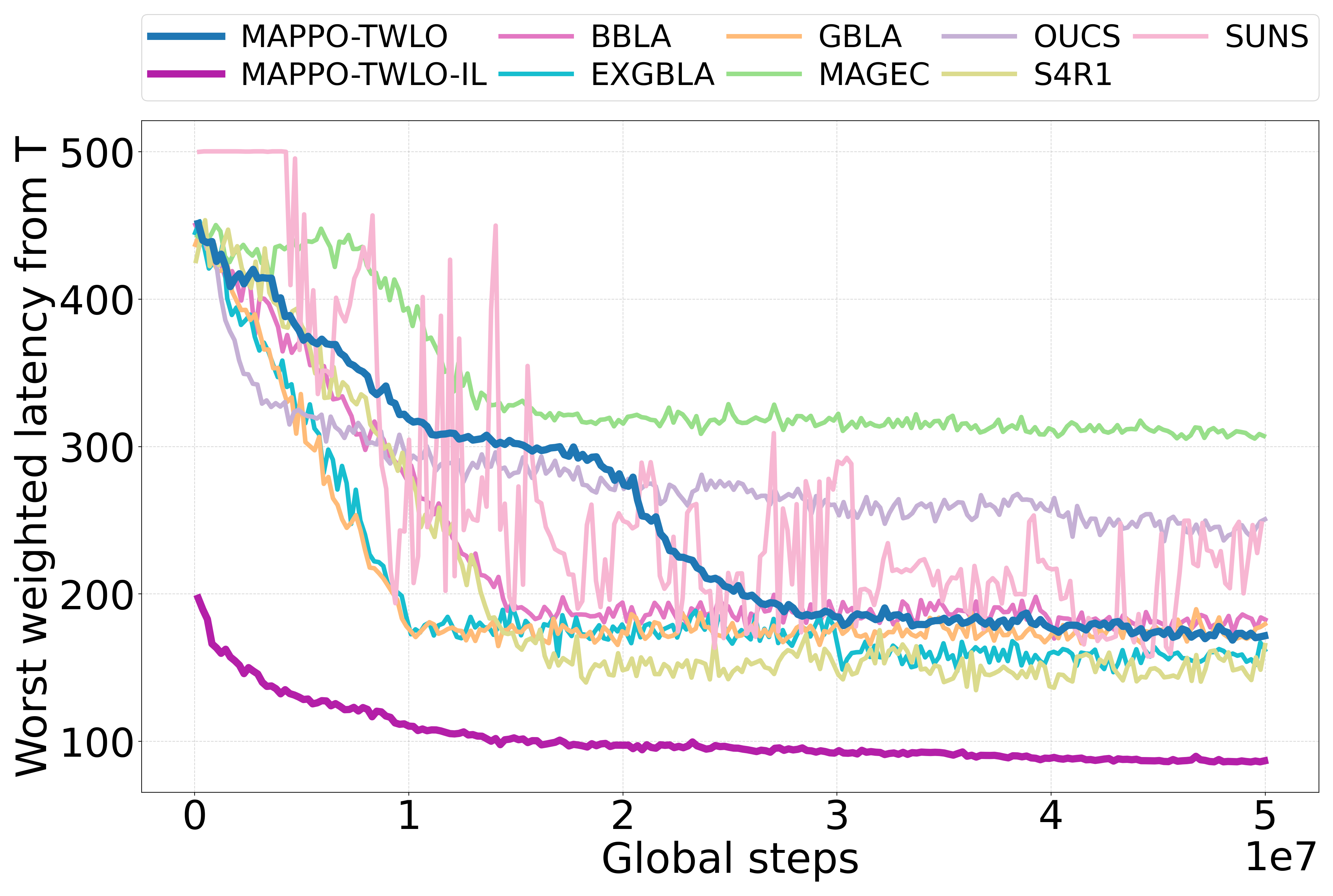}
        \label{fig:all_mapB}
    }
    \subfloat[Grid in \cref{fig:grid}  with $6$ robots.]{
        \includegraphics[width=0.31\textwidth]{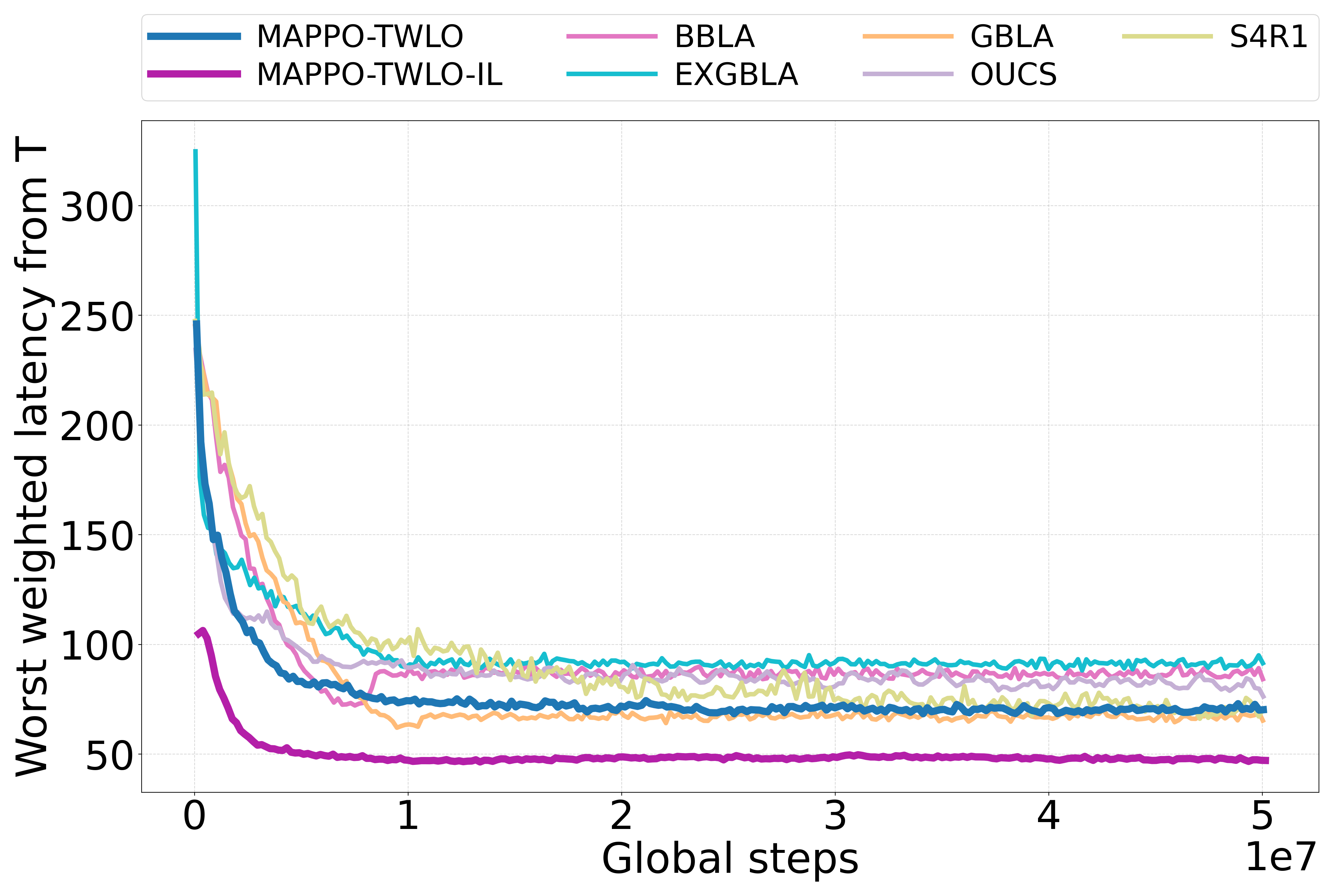}
        \label{fig:all_grid}
    }
    
    \subfloat[Island in \cref{fig:island} with $6$ robots.]{
        \includegraphics[width=0.31\textwidth]{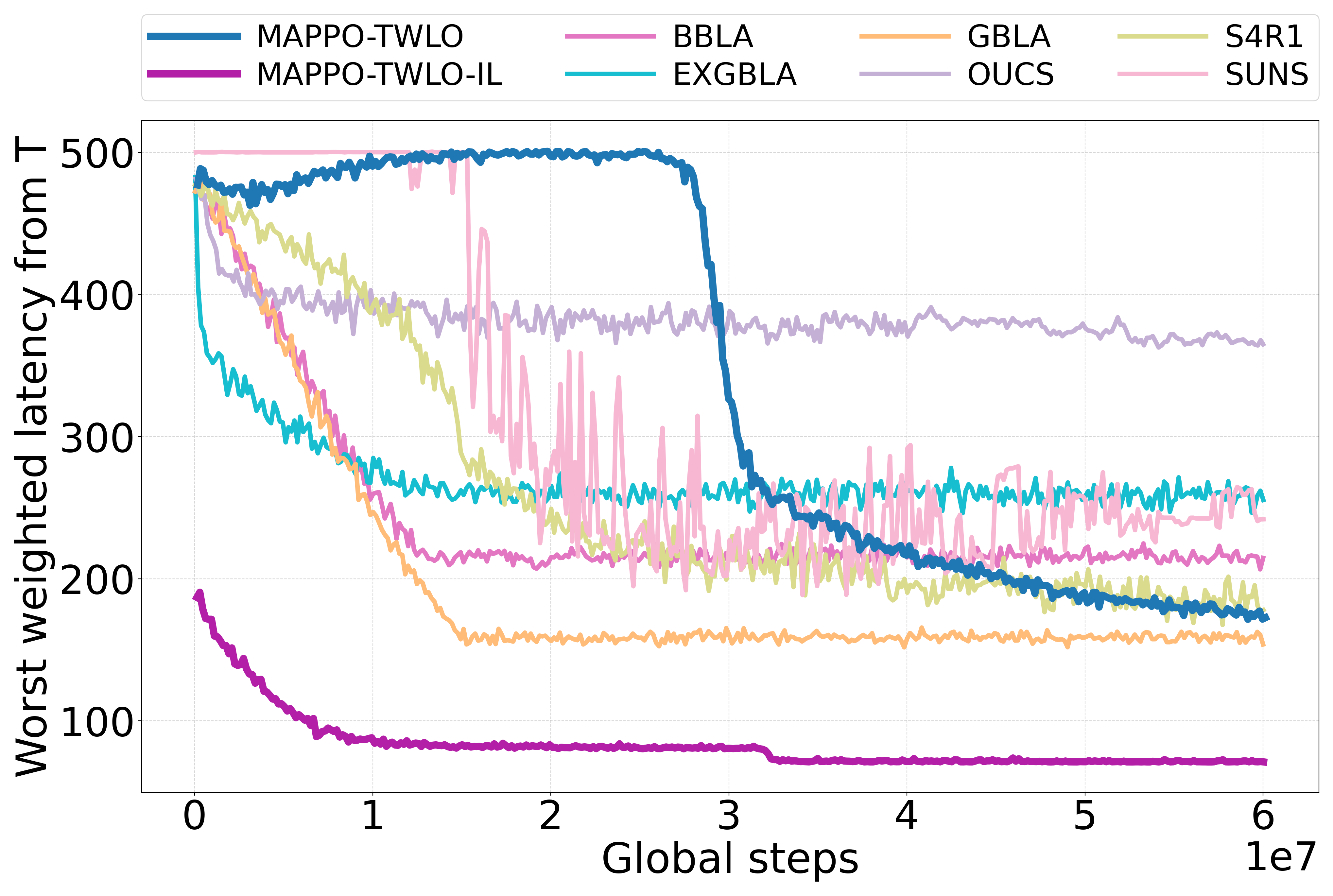}
        \label{fig:all_island}
    }
    \subfloat[Cumberland in \cref{fig:cumberland} with $6$ robots.]{
        \includegraphics[width=0.31\textwidth]{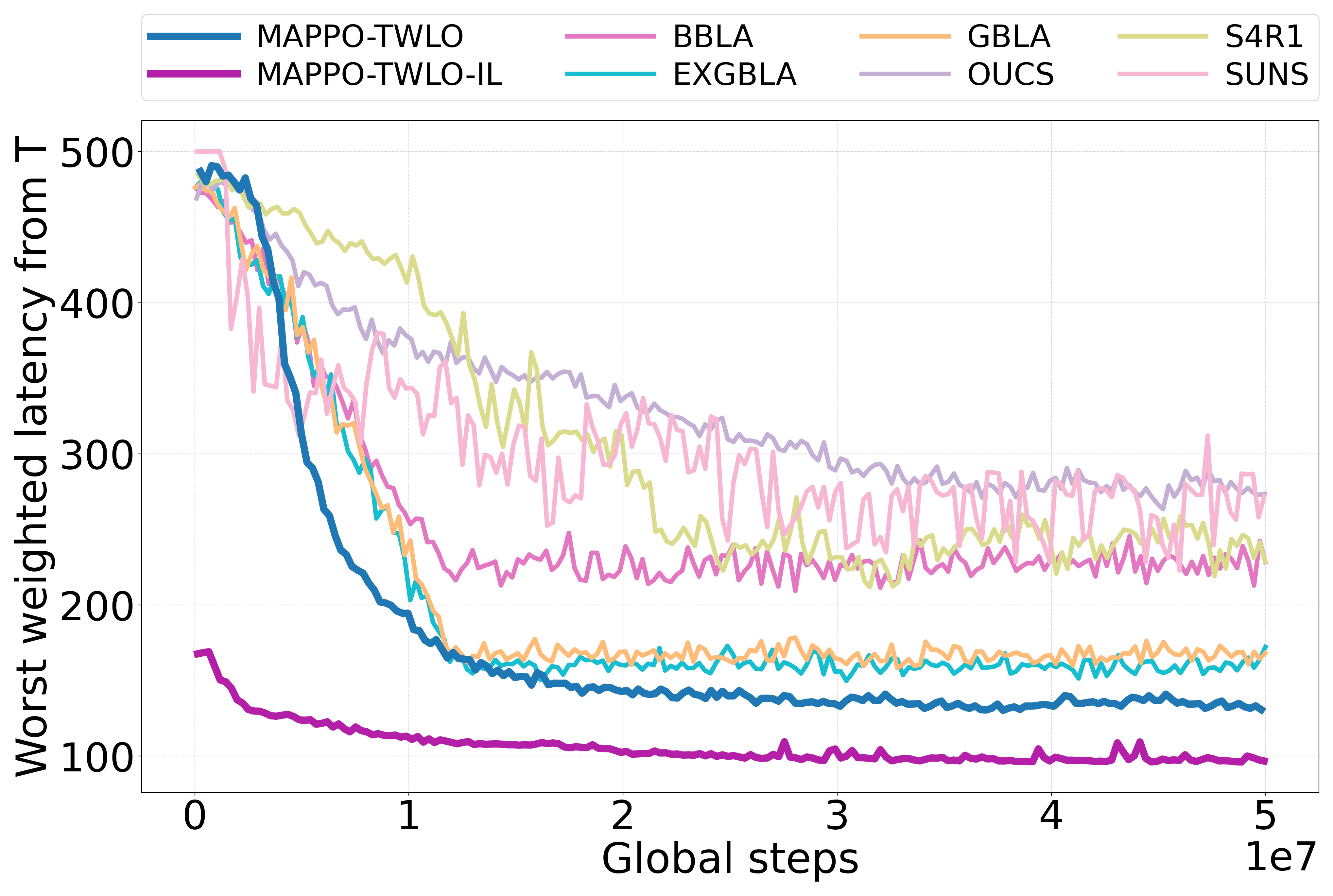}
        \label{fig:all_cumberland}
    } 
    \subfloat[Milwaukee in \cref{fig:milwaukee} with $6$ robots.]{
        \includegraphics[width=0.31\textwidth]{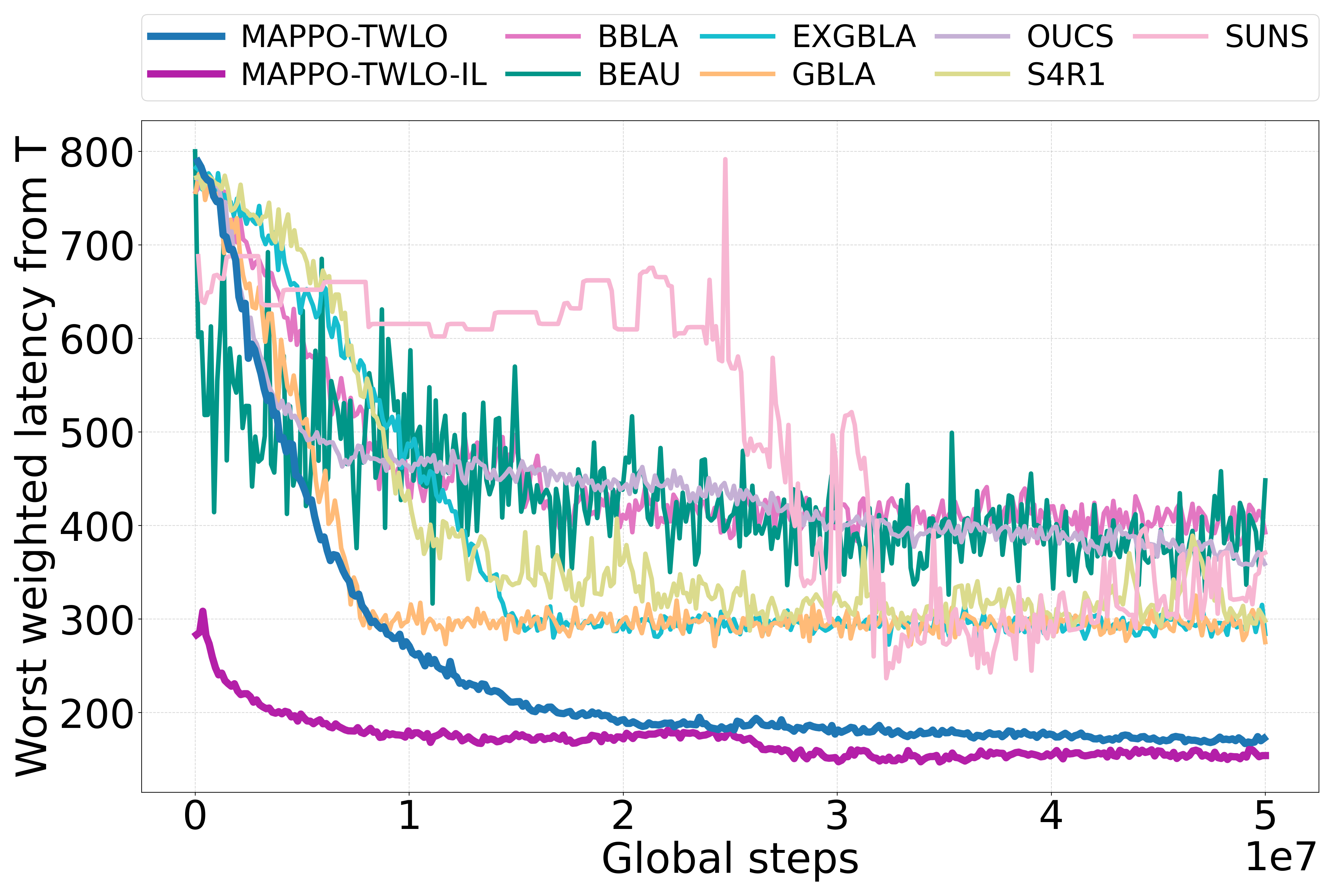}
        \label{fig:all_milwaukee}
    } 
    \caption{Learning curves of learning-based benchmark algorithms on large-scale synthetic surveillance environments. The curves report the worst-case latency metric measured after the transient period \(T\). 
    The compared algorithms are summarized in~\cref{table:RL}; for readability, methods that fail to converge or exhibit severe instability are omitted from the curves.}
    \label{fig:comparison_synthetic}
\end{figure*}

For large-scale synthetic environments in \cref{fig:graph}, the results  in~\cref{fig:comparison_synthetic,tab:worst_latency_comparison,tab:average_latency_comparison} further highlight the benefit of imitation-based initialization. As the graph size and structural complexity increase, training from scratch becomes more difficult, and TWLO-IL provides a more reliable starting point for policy optimization. This effect is particularly clear on MapA, MapB, Island, Cumberland, and Milwaukee, where TWLO-IL improves over TWLO trained from scratch and achieves the best WI values. Under the AGI metric, TWLO-IL also achieves the best performance in most of these environments, while S4R1 obtains slightly lower AGI on the Grid and Milwaukee maps. These results indicate that imitation-based initialization improves final monitoring quality in challenging graph environments.

\begin{figure*}[htbp]
    \centering
    \subfloat[San Francisco  map in \cref{fig:SF} with $3$ robots.]{
        \includegraphics[width=0.353\textwidth]{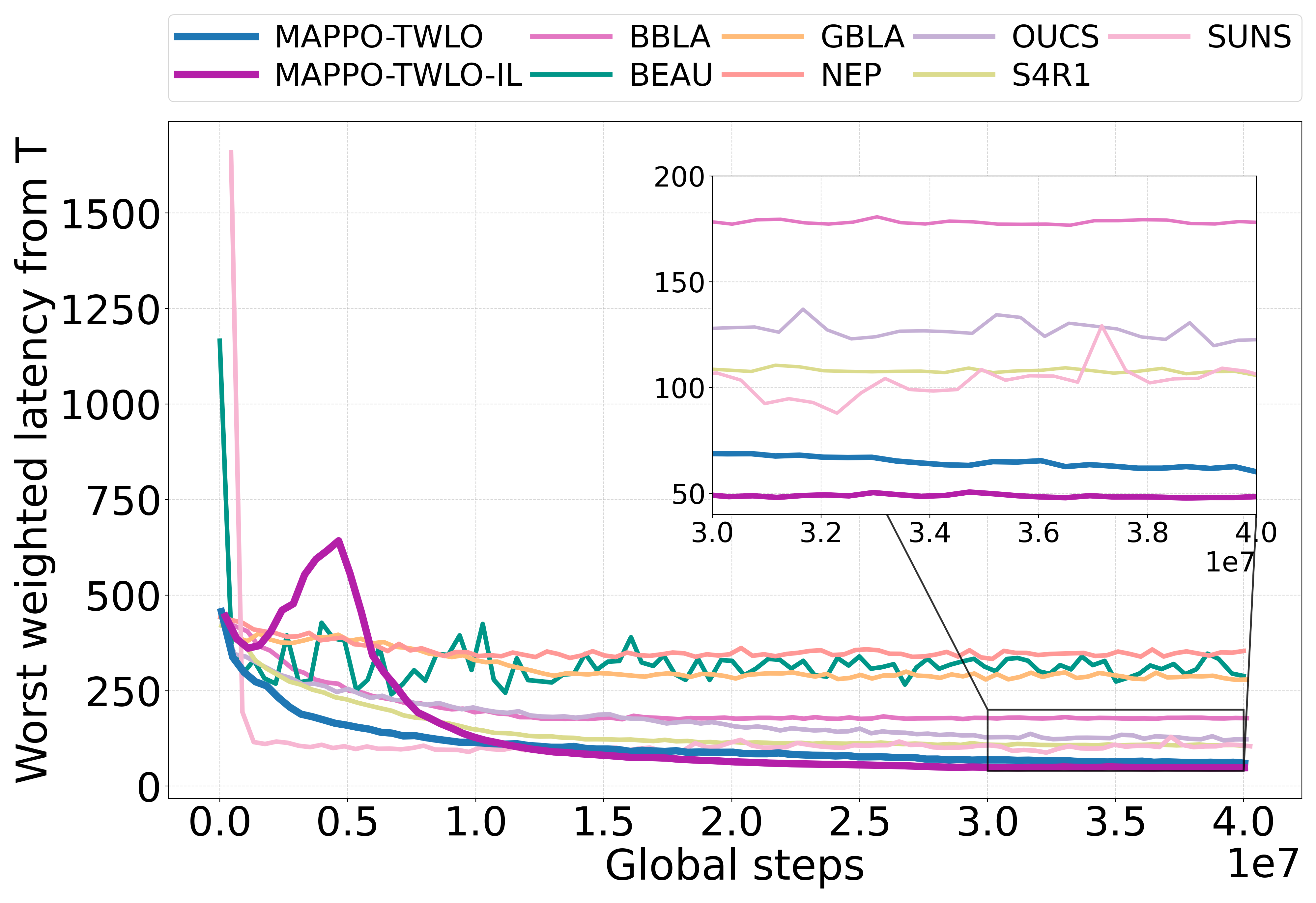}
        \label{fig:all_SF}
    }
    \subfloat[Marostica roadmap in \cref{fig:ity} with $6$ robots.]{
        \includegraphics[width=0.36\textwidth]{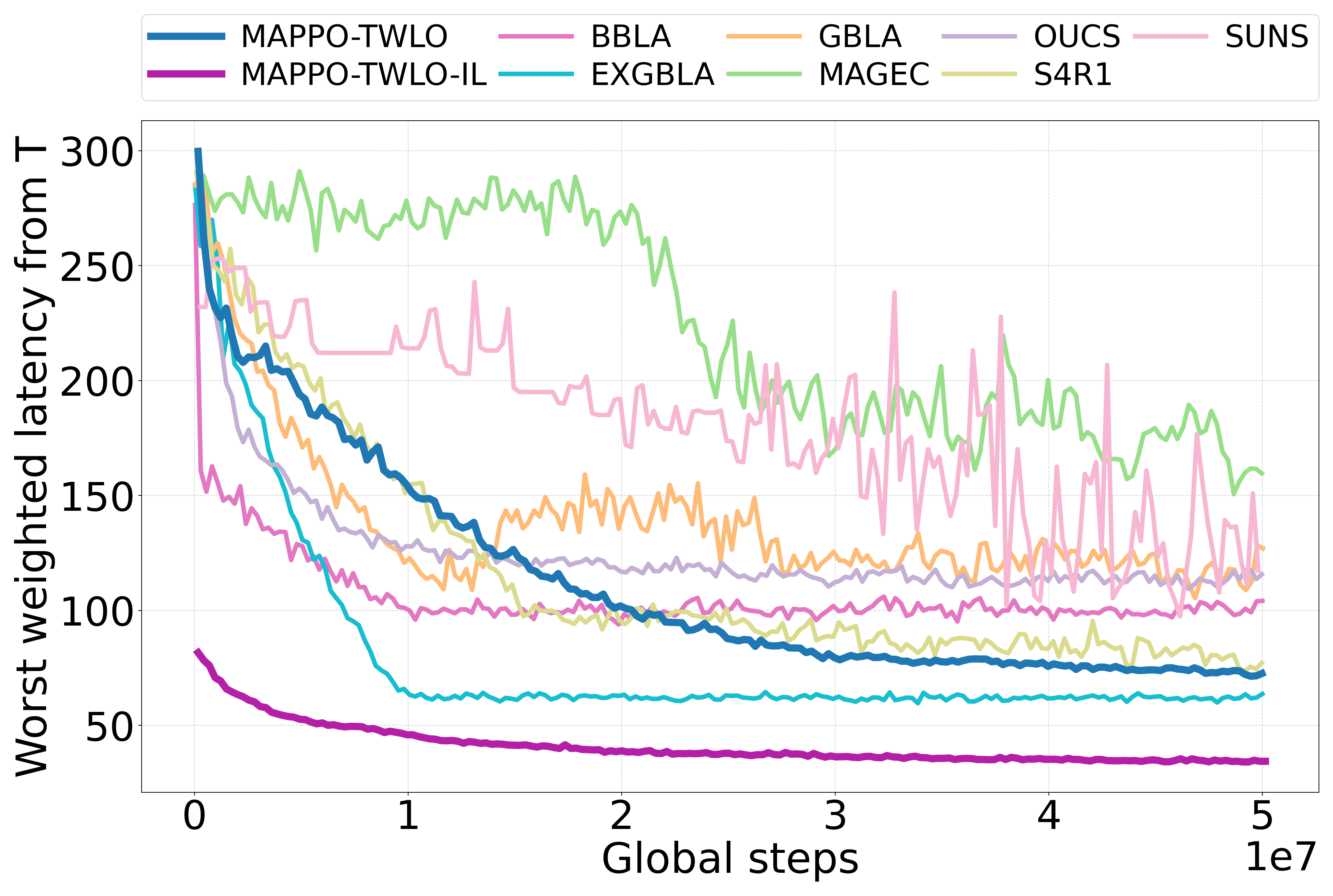}
        \label{fig:all_marostica}
    }
    \caption{Learning curves of learning-based benchmark algorithms on realistic map-based surveillance scenarios. The curves report the worst-case latency metric measured after the transient period \(T\).  
    The compared algorithms are summarized in~\cref{table:RL}; for readability, algorithms that fail to converge or exhibit severe instability are omitted from the curves.}
    \label{fig:comparison_realistic}
\end{figure*}

In the realistic map-based scenarios as shown in \cref{fig:real}, the results in~\cref{fig:comparison_realistic,tab:worst_latency_comparison,tab:average_latency_comparison} show that TWLO-MDP-based methods maintain strong performance on  graphs constructed from real-world maps. On the San Francisco map, TWLO-IL achieves the best performance under both worst-case latency and average-latency metrics, while TWLO trained from scratch also outperforms most competing methods. On the Marostica roadmap, TWLO-IL again achieves the best worst-case latency and average-latency values, and TWLO remains competitive among the remaining methods. These results suggest that the proposed learning pipeline can adapt to realistic weighted monitoring environments where fixed heuristic rules or alternative learning objectives may be less reliable.

	Overall, the benchmark comparison demonstrates two points. First, M2Bench provides a unified experimental protocol for comparing various monitoring algorithms, including both learning-based approaches and classical heuristic methods. Second, the proposed TWLO-MDP-based learning framework achieves consistently strong performance across special,  synthetic, and realistic benchmark environments, supporting the practical effectiveness of the proposed MDP-based formulation and learning pipeline.

\subsection{Scalability  and robustness analysis}
  	The previous experiments evaluate the proposed method under fixed training and testing settings. In practical applications, a monitoring framework should scale to different team sizes and remain robust when deployment conditions deviate from the training model. We examine these two aspects in this subsection. First, we study whether the proposed TWLO-MDP-based learning framework can be effectively trained and applied with different numbers of robots.  Second, we evaluate zero-shot robustness when a trained policy is deployed on stochastic variants of the training graph, where the actual edge traversal times differ from the nominal edge lengths used during training.
    

We first evaluate the proposed method with different numbers of robots on representative environments. For each team size, the policy is trained and evaluated with the corresponding number of robots. As shown in~\cref{fig:number} and \cref{tab:num_agents}, the proposed method can be successfully trained across different team sizes and produces effective monitoring policies in all tested settings.  Increasing the number of robots generally reduces both the worst-case weighted latency and the average latency, indicating that the learned policies can make use of additional monitoring resources rather than producing redundant motions. These results suggest that the TWLO-MDP-based learning framework is not restricted to a single prescribed team size and can be instantiated for different multi-robot monitoring configurations. As an additional comparison, in the one-robot San Francisco setting, which matches the monitoring scenario considered in~\citep{SA-EF-SS:2012}, our method achieves a WI value of $134$, substantially improving upon the weighted maximum latency value $202.3$ reported therein under the same travel-time scaling.
\begin{figure*}[htbp]
		\centering
		\subfloat[MapA in \cref{fig:mapA}]{
			\includegraphics[width=0.31\textwidth]{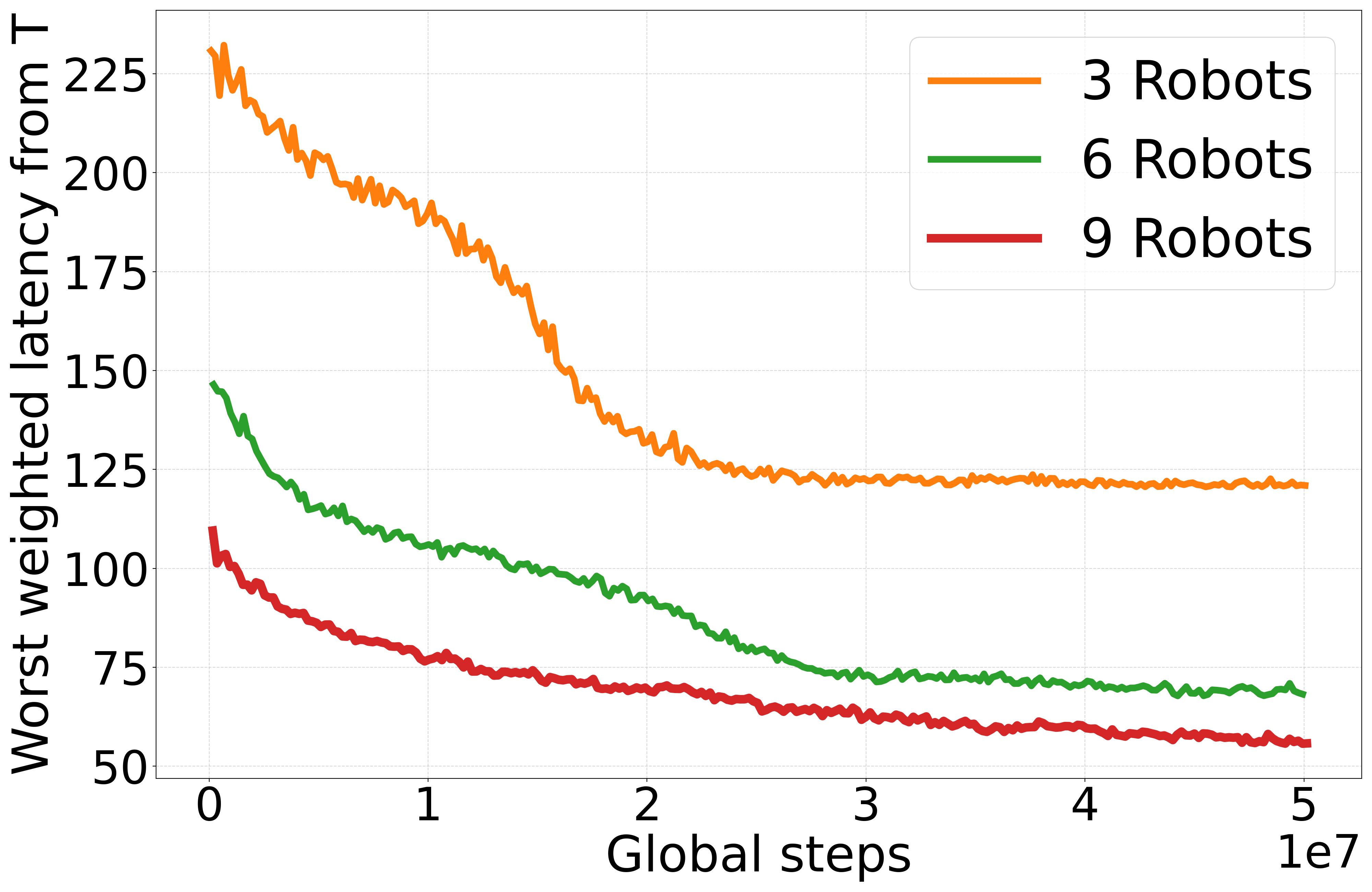}
			\label{fig:agents_comparison_mapA}
		}
        \subfloat[Grid in \cref{fig:grid}]{
			\includegraphics[width=0.31\textwidth]{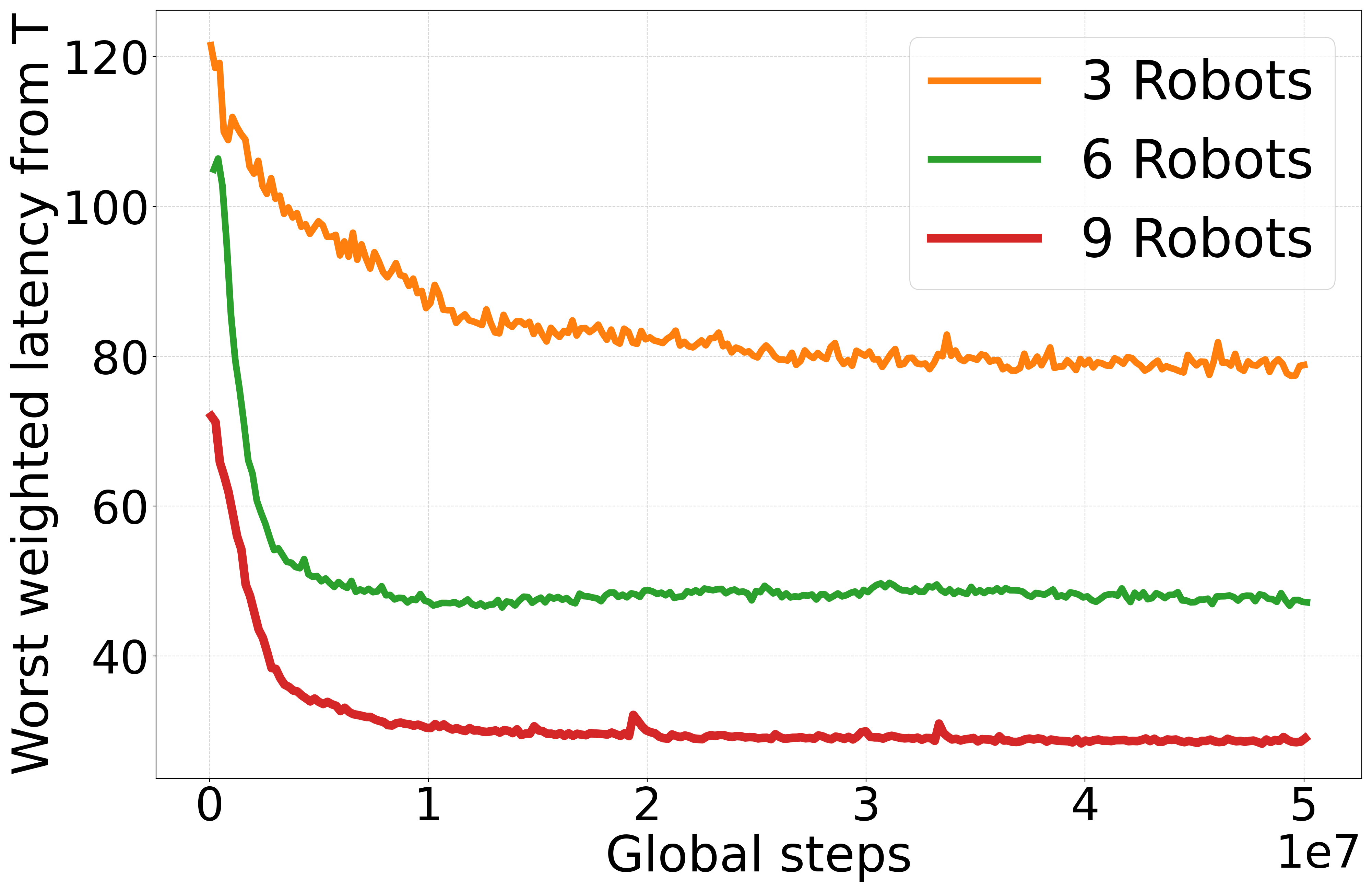}
			\label{fig:agents_comparison_grid}
		}
        \subfloat[San Francisco map in \cref{fig:SF}]{
			\includegraphics[width=0.31\textwidth]{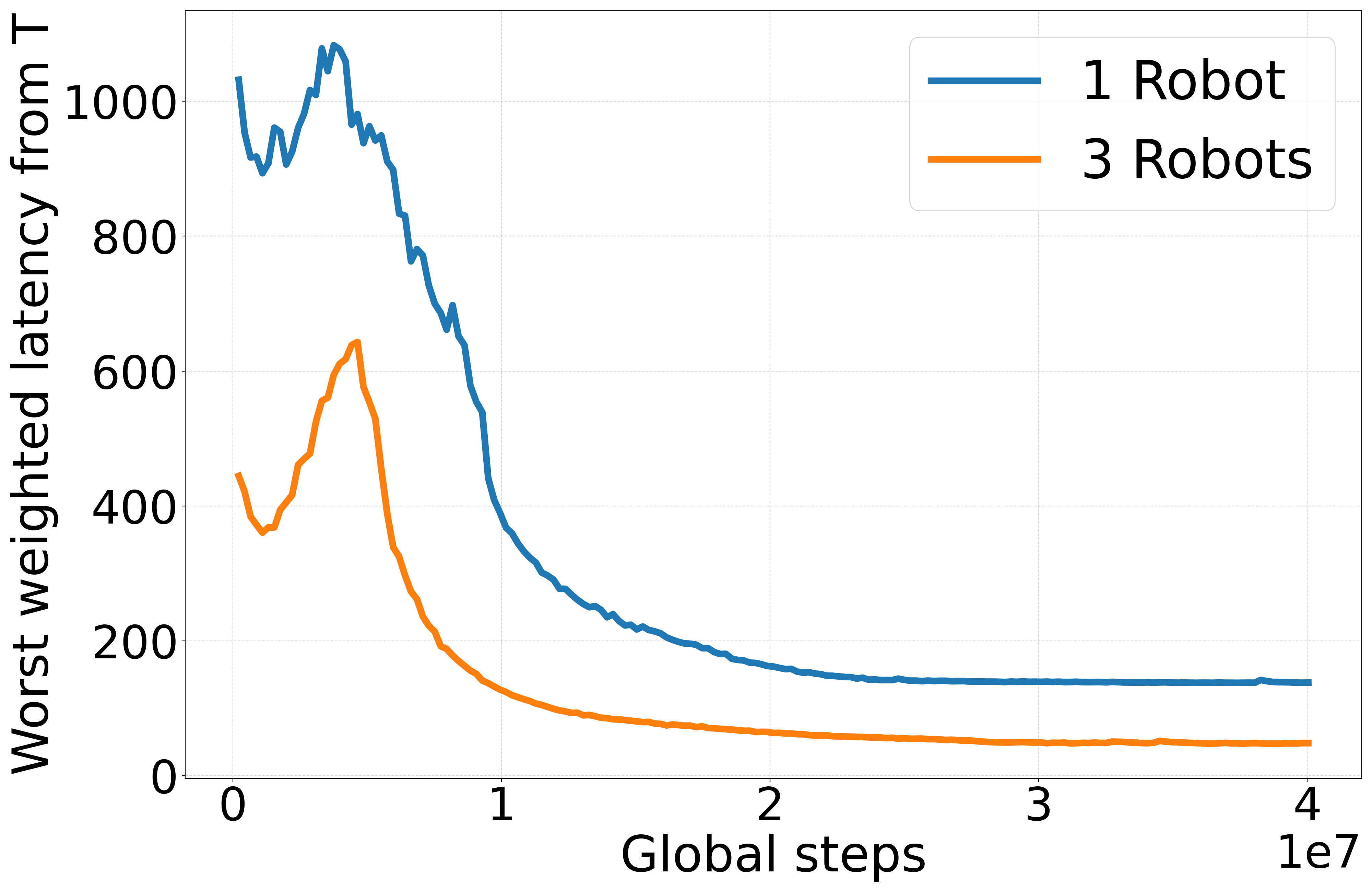}
			\label{fig:agents_comparison_SF}
		}
		\caption{Learning curves of the proposed TWLO-MDP-based method with different numbers of robots on representative environments. For each team size, the policy is trained and evaluated using the corresponding number of robots.}
		\label{fig:number}
	\end{figure*}
  \begin{table}[htbp]
    \centering
    \small
    \caption{Final monitoring performance of the proposed TWLO-MDP-based method with different numbers of robots.  The WI and AGI metrics are introduced in~\eqref{eq:WI} and~\eqref{eq:AGI}, respectively.}
    \label{tab:num_agents}
    \begin{tabular}{lccc}
        \toprule
        Graph & Number of robots  & WI & AGI \\
        \midrule
        SF        & 1 & 134 & 54.63 \\
        SF        & 3 & 47.1  & 18.26 \\
        Grid      & 3 & 64  & 19.51 \\
        Grid      & 6 & 40 & 10.89 \\
        Grid      & 9 & 24  & 7.57 \\
        MapA      & 3 & 120.6 & 33.30 \\
        MapA      & 6 & 64.5  & 17.86 \\
        MapA      & 9 & 50.8  & 12.23 \\
        \bottomrule
    \end{tabular}
\end{table}

We next evaluate zero-shot robustness to stochastic edge traversal times. In this experiment, the policy is trained on the nominal deterministic graph, where traversal times are determined by the original edge lengths. During evaluation, the graph topology and node priorities are kept unchanged, but the actual traversal time of each edge is randomly perturbed around its nominal value. No additional training or fine-tuning is performed on the perturbed environments. Specifically, for each edge $e \in \edges$, the perturbed traversal time is generated as
$$
\widetilde{A}(e)=A(e)\left(1+\epsilon_e\right), \quad \epsilon_e \sim \text { Uniform }[-\rho, \rho],
$$
where $\rho$ controls the perturbation level. For each value of \(\rho\), we evaluate the trained policy over 
$100$ episodes with independently sampled perturbations. The results in \cref{fig:robustness} show that the monitoring performance degrades gradually as the perturbation level increases, but the proposed method maintains competitive monitoring performance under moderate perturbations. This suggests that the learned policy does not simply overfit to the exact deterministic traversal times in the training graph. Instead, it captures monitoring patterns that remain effective under mild model mismatch and stochastic traversal-time variations. Such robustness is desirable in practical deployments, where robot travel times may vary due to speed fluctuations, localization errors, or uncertain traversal conditions.

\begin{figure*}[htbp]
		\centering
        \subfloat[Chain in \cref{fig:chain} with $4$ robots]{
			\includegraphics[width=0.31\textwidth]{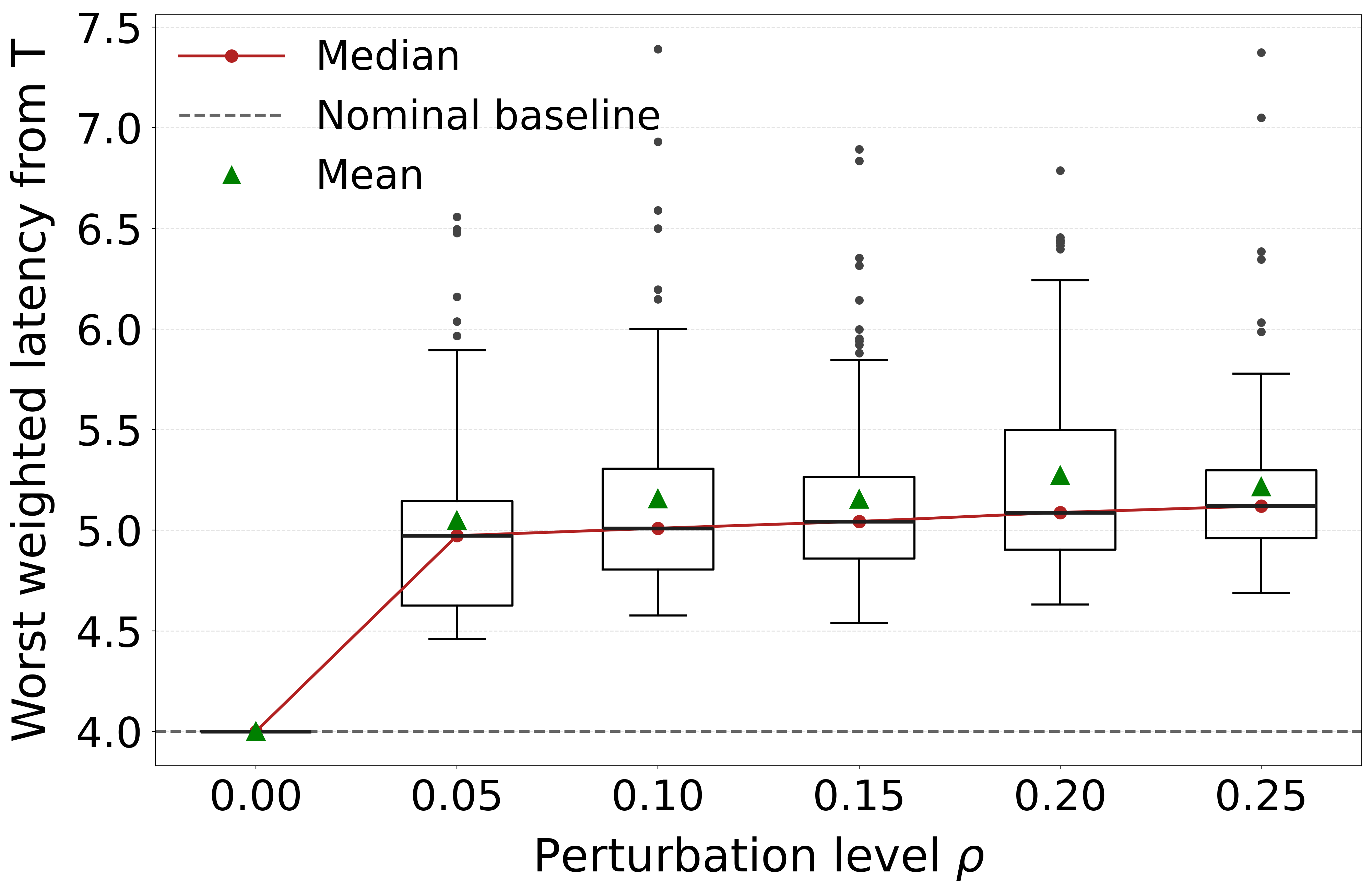}
			\label{fig:chain10_robustness}
		}
		\subfloat[MapA in \cref{fig:mapA}  with $6$ robots]{
			\includegraphics[width=0.31\textwidth]{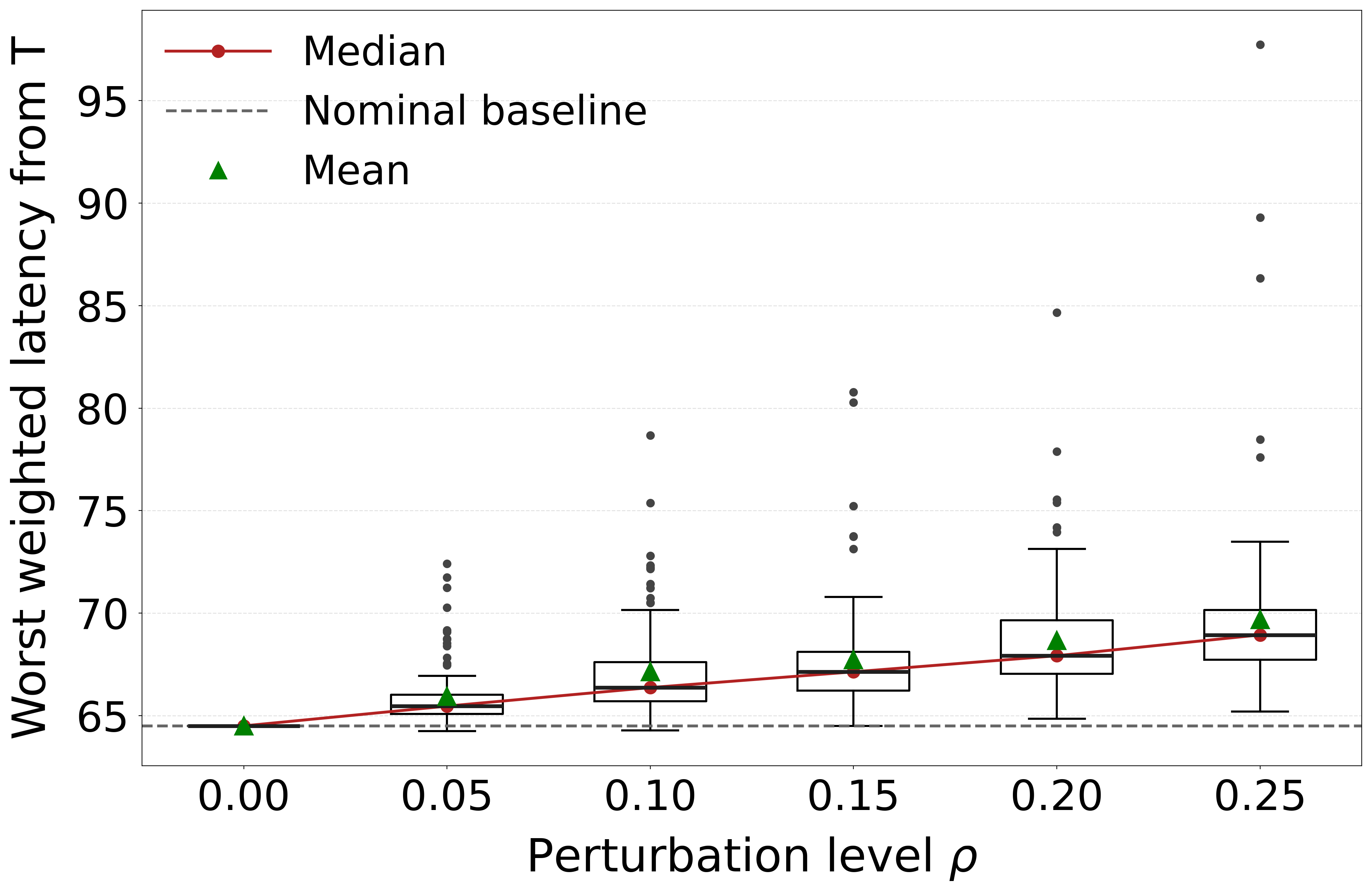}
			\label{fig:mapA_robustness}
		}
        \subfloat[San Francisco map in \cref{fig:SF}  with $3$ robots]{
			\includegraphics[width=0.31\textwidth]{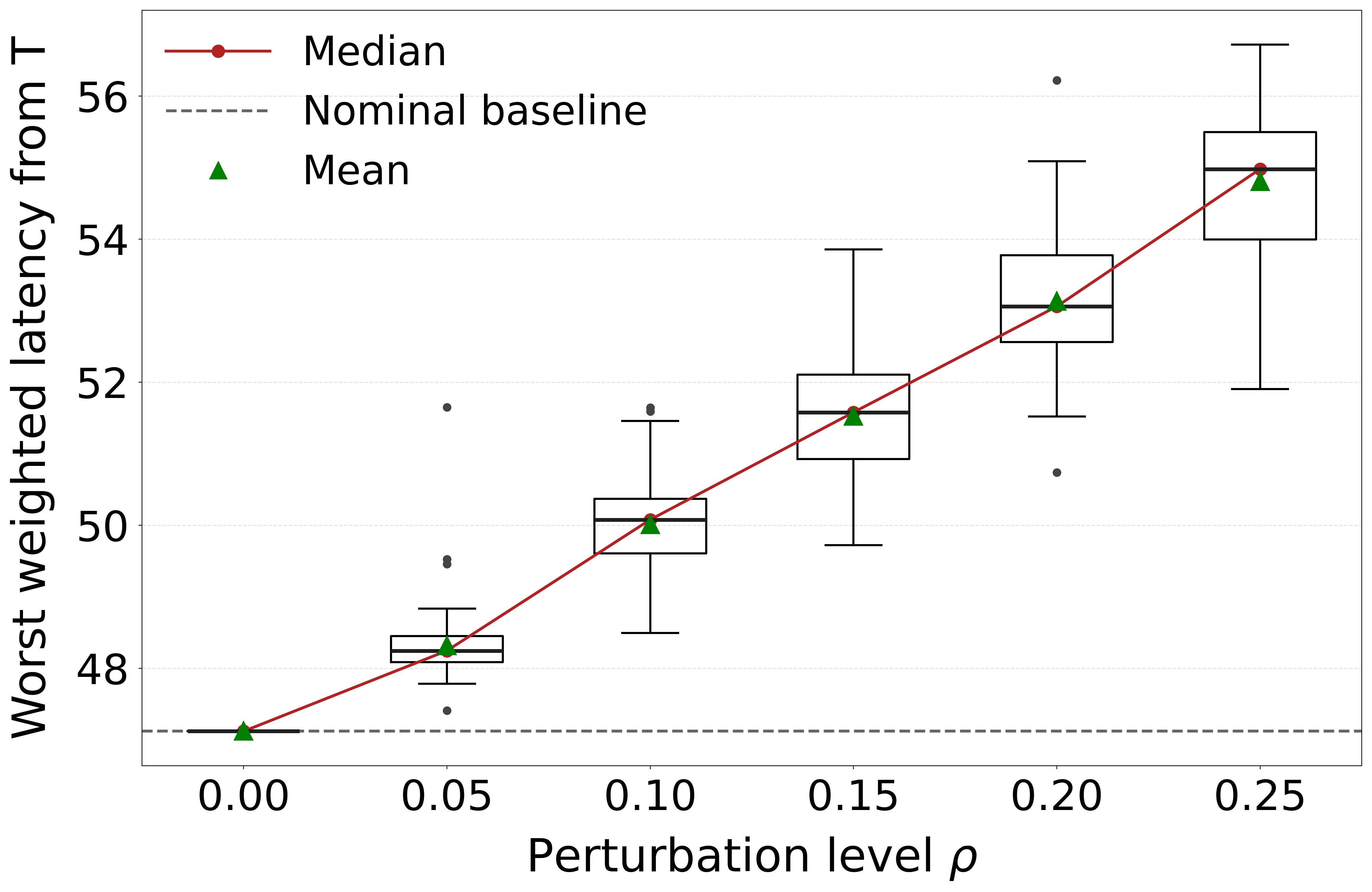}
			\label{fig:SF_robustness}
		}
		\caption{Zero-shot robustness to stochastic edge traversal times. For each perturbation level $\rho$, we run $100$ evaluation episodes with independently sampled edge-length perturbations. Box plots show the distribution of WI over these episodes: each box spans the interquartile range from the first to the third quartile, the center line denotes the median, whiskers indicate the non-outlier range within $1.5$ times the interquartile range, and individual dots denote outliers. }
		\label{fig:robustness}
	\end{figure*}

	\subsection{Ablation studies}
  In this subsection, we conduct ablation studies to evaluate the contribution of the main algorithmic components introduced in \cref{sub:algorithm}. Starting from the full TWLO-MDP-based learning method, we selectively remove one component at a time while keeping all other training and evaluation settings unchanged. The considered variants remove imitation-based initialization, customized observation preprocessing, and Trans-GAE, respectively. The learning curves are shown in \cref{fig:ablation}, and the final monitoring performance is summarized in \cref{tab:ablation}.
\begin{figure}[htbp]
		\centering
		\subfloat[Marostica roadmap in \cref{fig:ity} with $6$ robots.]{
			\includegraphics[width=0.33\textwidth]{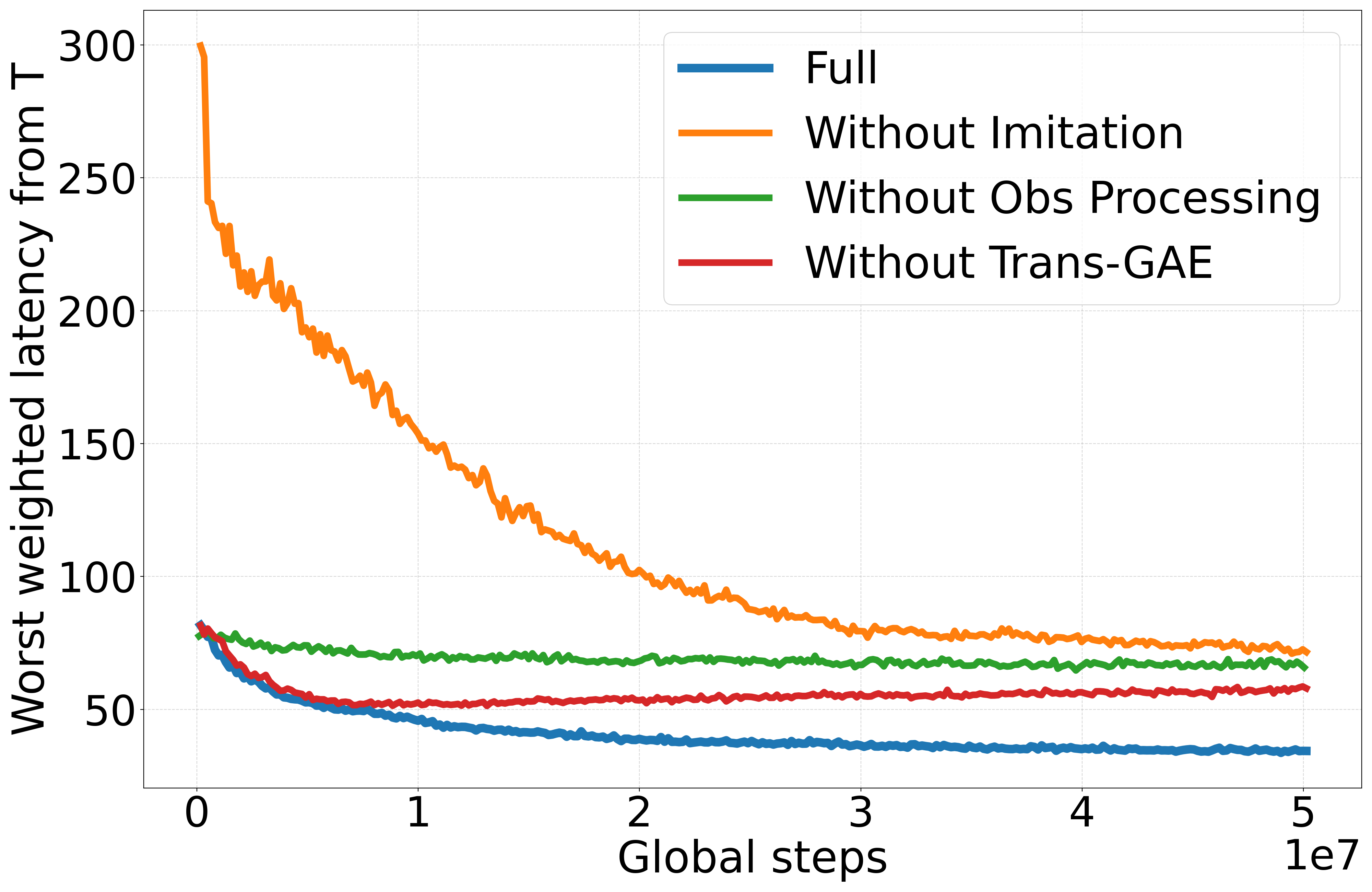}
			\label{fig:ablation_marostica}
		}
        
        \subfloat[Grid in \cref{fig:grid} with $6$ robots]{
			\includegraphics[width=0.33\textwidth]{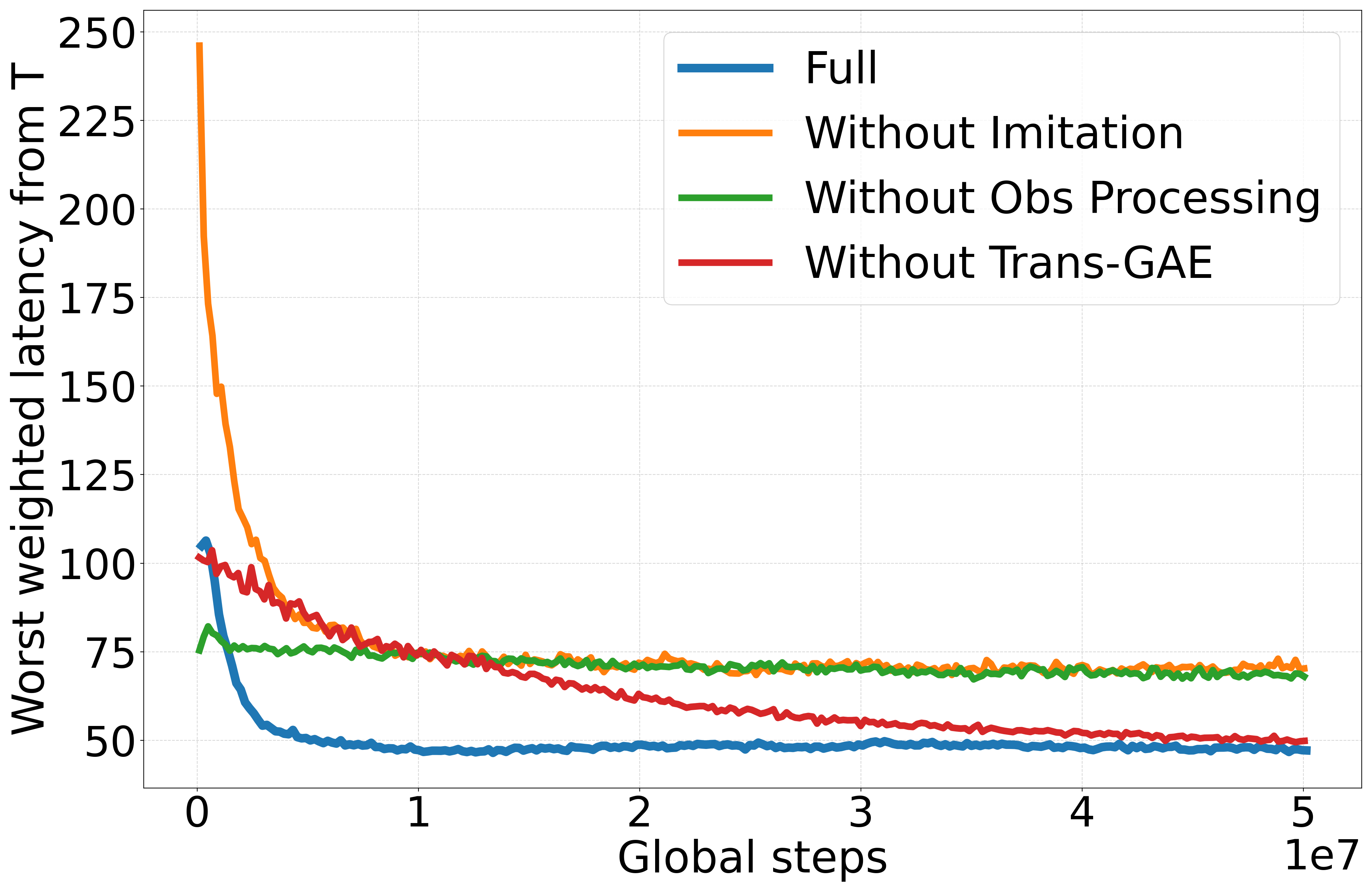}
			\label{fig:ablation_grid}
		}
		\caption{Ablation learning curves for the proposed TWLO-MDP-based method. Each variant removes one algorithmic component while keeping the remaining settings unchanged.}
		\label{fig:ablation}
	\end{figure}
\begin{table}[htbp]
    \centering
    \small
    \caption{Final monitoring performance of ablation variants. Each entry reports WI / AGI.}
    \label{tab:ablation}
    \begin{tabular}{lcc}
        \toprule
          & Marostica & Grid \\
        \midrule
        Full & \textbf{29 / 10.20} & \textbf{40 / 10.89} \\
        Without imitation learning &51 / 12.03 &51.5 / 12.12\\
        Without observation preprocessing & 66 / 16.37 & {63 / 14.06} \\
        Without Trans-GAE & 43 / 11.00  & 51 / 12.39  \\
        \bottomrule
    \end{tabular}
\end{table}
        The results show that the full method achieves the best overall performance in both environments, confirming that the proposed components jointly contribute to more effective and stable learning. Removing any of the three components leads to a degradation in final monitoring performance, although the degree of degradation varies across environments.

        Observation preprocessing has a particularly strong effect on final performance. Without the customized preprocessing module, the learned policies obtain substantially higher WI and AGI values in both Marostica and Grid. This indicates that properly encoding and normalizing heterogeneous state features is important for stable policy learning in graph-based monitoring tasks. Imitation-based initialization also provides a clear benefit. Removing imitation learning leads to slower convergence and worse final performance, especially in the Marostica roadmap. This suggests that heuristic demonstrations provide useful initial behavioral guidance and help policy optimization avoid poor regions of the search space in large monitoring graphs.


\section{Conclusion}\label{section:conclusion}

This paper studied multi-robot persistent monitoring on weighted graphs under a tail worst-case weighted-latency objective. We introduced a family of tail-performance objectives to capture long-run monitoring quality beyond the initial transient phase, and established several theoretical results, including optimal-solution existence, invariance of the optimal value, periodic approximation, and bounded  error  under discretized waiting
times. We then reformulated the original continuous-time problem as an equivalent event-driven Markov decision process, termed the TWLO-MDP. This reformulation converts the non-additive tail objective into a standard average-cost criterion while preserving optimality, thereby enabling reinforcement-learning-based solution methods. We further developed M2Bench, a unified platform for training, evaluating, and comparing heuristic and learning-based monitoring algorithms. Experiments on synthetic and realistic scenarios demonstrated the effectiveness of the proposed methods.

Future work will consider more complicated monitoring settings, including dynamic graphs, changing node priorities, and additional robot constraints, as well as improved scalability to larger teams and environments.
	
	\bibliographystyle{SageH}
	\bibliography{wwzbib.bib}

@article{WW-XD-JP:TCNS2025,
  author={Wang, Weizhen and Duan, Xiaoming and He, Jianping},
  journal={IEEE Transactions on Control of Network Systems}, 
  title={Multirobot Stochastic Patrolling via Graph Partitioning}, 
  year={2025},
  volume={12},
  number={1},
  pages={300--312}}

@article{WW-XD-JH:TAC2025,
  author={Wang, Weizhen and Duan, Xiaoming and He, Jianping},
  journal={IEEE Transactions on Automatic Control}, 
  title={Growing Well-Connected Networks for Entropy Rate Maximization}, 
  year={2025},
  volume={70},
  number={11},
  pages={7636--7643}}

@article{XD-WW-RY:TAC2026,
  title={Maximizing {Markov} Trajectory Entropy under {Kemeny} Constraints for Robotic Surveillance},
  author={Duan, Xiaoming and Wang, Weizhen and Yan, Rui},
  journal={IEEE Transactions on Automatic Control},
  year={2026},
  publisher={IEEE}
}

@article{WW-JH-XD:LCSS2024,
  author={Wang, Weizhen and He, Jianping and Duan, Xiaoming},
  journal={IEEE Control Systems Letters}, 
  title={On the Trade-Off Between Efficiency and Unpredictability in Stochastic Robotic Surveillance}, 
  year={2024},
  volume={8},
  pages={2829--2834}}

@ARTICLE{GD_FB_JM:2023,
	author={Díaz-García, Gilberto and Bullo, Francesco and Marden, Jason R.},
	title={Distributed {M}arkov Chain-Based Strategies for Multi-Agent Robotic Surveillance}, 
	journal={IEEE Control Systems Letters}, 
	year={2023},
	volume={7},
	number={},
	pages={2527--2532}}

@incollection{RK:2010,
	author={Karp, Richard M.},
	title={Reducibility Among Combinatorial Problems},
	editor="J{\"u}nger, Michael
	and Liebling, Thomas M.
	and Naddef, Denis
	and Nemhauser, George L.
	and Pulleyblank, William R.
	and Reinelt, Gerhard
	and Rinaldi, Giovanni
	and Wolsey, Laurence A.",
	year={2010},
	pages={219-241},
	chapter ={8},
	publisher={Springer Berlin Heidelberg},
	booktitle={50 Years of Integer Programming 1958-2008: From the Early Years to the State-of-the-Art}
}

@ARTICLE{SW-JC-JM-JC-FM-PL-MV:2017,
	author={Witwicki, Stefan and Castillo, Jose Carlos and Messias, Joao and Capitan, Jesus and Melo, Francisco S. and Lima, Pedro U. and Veloso, Manuela},
	journal={IEEE Robotics \& Automation Magazine}, 
	title={Autonomous Surveillance Robots: A Decision-Making Framework for Networked Multiagent Systems}, 
	year={2017},
	volume={24},
	number={3},
	pages={52-64}}

@InProceedings{AM-GR-JZ-AD:2003,
	author="Machado, Aydano
	and Ramalho, Geber
	and Zucker, Jean--Daniel
	and Drogoul, Alexis",
	title="Multi-agent Patrolling: {A}n Empirical Analysis of Alternative Architectures",
	booktitle="Multi-Agent-Based Simulation II",
	year="2003",
	publisher="Springer Berlin Heidelberg",
	pages="155--170"
	
}

@article{RP-AC-FB:2016,
	author = {Patel, Rushabh and Carron, Andrea and Bullo, Francesco},
	title = {The Hitting Time of Multiple Random Walks},
	journal = {SIAM Journal on Matrix Analysis and Applications},
	volume = {37},
	number = {3},
	pages = {933-954},
	year = {2016},
}

@INPROCEEDINGS{YC:2004,
	author={Chevaleyre, Y.},
	booktitle={ IEEE/WIC/ACM International Conference on Intelligent Agent Technology}, 
	title={Theoretical analysis of the multi-agent patrolling problem}, 
	year={2004},
	pages={302-308}}

@article{RS-MS-SS-BJ-DR-GS:2011,
	author = {Smith, Ryan N. and Schwager, Mac and Smith, Stephen L. and Jones, Burton H. and Rus, Daniela and Sukhatme, Gaurav S.},
	title = {Persistent ocean monitoring with underwater gliders: Adapting sampling resolution},
	journal = {Journal of Field Robotics},
	volume = {28},
	number = {5},
	pages = {714-741},
	year = {2011}
}

@article{SA-EF-SS:2012,
	title={Persistent monitoring in discrete environments: Minimizing the maximum weighted latency between observations},
	author={Alamdari, Soroush and Fata, Elaheh and Smith, Stephen L},
	journal={The International Journal of Robotics Research},
	year={2014},
	volume={33},
	pages={138-154}
	
}

@inproceedings{PA-MB-KB:2022,
	author = {Afshani, Peyman and de Berg, Mark and Buchin, Kevin and Gao, Jie and Maarten L\"{o}ffler and Nayyeri, Amir and Raichel, Benjamin and Sarkar, Rik and Wang, Haotian and Yang, Hao-Tsung}, 
	title = {On Cyclic Solutions to the Min-Max Latency Multi-Robot Patrolling Problem}, 
	volume = {224},
	pages =	{2:1--2:14},
	booktitle = {International Symposium on Computational Geometry},
	year = {2022} }

@article{FP-JD-FB:2012,
	author = {Pasqualetti, Fabio and Durham, J.W. and Bullo, Francesco},
	year = {2012},
	pages = {1181-1188},
	title = {Cooperative Patrolling via Weighted Tours: Performance Analysis and Distributed Algorithms},
	volume = {28},
	journal = {IEEE Transactions on Robotics}
}

@inproceedings{DP-RR:2010,
	author = {Portugal, David and Rocha, Rui},
	title = {{MSP} Algorithm: Multi-Robot Patrolling Based on Territory Allocation Using Balanced Graph Partitioning},
	year = {2010},
	booktitle = {ACM Symposium on Applied Computing},
	pages = {1271–1276},
	
}

@InProceedings{HS-GR-VC-BR:2004,
	author =	 {Hugo Santana and Geber Ramalho and Vincent Corruble and Bohdana Ratitch},
	title =	 {Multi-Agent Patrolling with Reinforcement Learning},
	booktitle =	 {International Joint
	Conference on Autonomous Agents and MultiAgent
	Systems},
	pages =	 {1122--1129},
	year =	 {2004}
}

@article{MJ-LV-AS:2022,
	author = {Jana, Meghdeep and Vachhani, Leena and Sinha, Arpita},
	year = {2022},
	pages = {724--745},
	title = {A deep reinforcement learning approach for multi-agent mobile robot patrolling},
	volume = {6},
	journal = {International Journal of Intelligent Robotics and Applications}
}

@article{XD-FB:2021,
	author = {Duan, Xiaoming and Bullo, Francesco},
	title = {{M}arkov Chain–Based Stochastic Strategies for Robotic Surveillance},
	journal = {Annual Review of Control, Robotics, and Autonomous Systems},
	volume = {4},
	number = {1},
	pages = {243-264},
	year = {2021},
}

@INPROCEEDINGS{AA-SS:2018,
	author={Asghar, Ahmad Bilal and Smith, Stephen L.},
	booktitle={IEEE Conference on Decision and Control}, 
	title={A Patrolling Game for Adversaries with Limited Observation Time}, 
	year={2018},
	volume={},
	number={},
	pages={3305-3310}}

@ARTICLE{XD-DP-FB:2021,
	author={Duan, Xiaoming and Paccagnan, Dario and Bullo, Francesco},
	journal={IEEE Transactions on Control of Network Systems}, 
	title={Stochastic Strategies for Robotic Surveillance as {S}tackelberg Games}, 
	year={2021},
	volume={8},
	number={2},
	pages={769-780}}

@ARTICLE{MG-SJ-FB:2019,
	author={George, Mishel and Jafarpour, Saber and Bullo, Francesco},
	journal={IEEE Transactions on Automatic Control}, 
	title={{M}arkov Chains With Maximum Entropy for Robotic Surveillance}, 
	year={2019},
	volume={64},
	number={4},
	pages={1566-1580}}

@ARTICLE{XD-MG-FB:2020,
	author={Duan, Xiaoming and George, Mishel and Bullo, Francesco},
	journal={IEEE Transactions on Automatic Control}, 
	title={Markov Chains With Maximum Return Time Entropy for Robotic Surveillance}, 
	year={2020},
	volume={65},
	number={1},
	pages={72-86}}

@INPROCEEDINGS{LG-HP-XD-JH:2023,
	author={Guo, Lingxiao and Pan, Haoxuan and Duan, Xiaoming and He, Jianping},
	booktitle={IEEE International Conference on Robotics and Automation}, 
	title={Balancing Efficiency and Unpredictability in Multi-robot Patrolling: A {MARL}-Based Approach}, 
	year={2023},
	volume={},
	number={},
	pages={3504-3509}}

@ARTICLE{YJ-GD-XD-JM-FB:2025,
  author={John, Yohan and Díaz-García, Gilberto and Duan, Xiaoming and Marden, Jason R. and Bullo, Francesco},
  journal={IEEE Transactions on Automatic Control}, 
  title={A Stochastic Surveillance {Stackelberg} Game: Co-Optimizing Defense Placement and Patrol Strategy}, 
  year={2025},
  volume={70},
  number={8},
  pages={5468-5474}}

@inproceedings{VK-MS-DK:2022,
	title={Drone Surveillance in Flood Affected Areas using Firefly Algorithm},
	author={Kumar, V Suresh and Sakthivel, M and Karras, Dimitrios A and Gupta, Shashi Kant and Gangadharan, Syam Machinathu Parambil and Haralayya, Bhadrappa},
	booktitle={International Conference on Knowledge Engineering and Communication Systems},
	pages={1--5},
	year={2022}
}

@ARTICLE{AA-SS-SS:2025,
  author={Asghar, Ahmad Bilal and Sundaram, Shreyas and Smith, Stephen L.},
  journal={IEEE Transactions on Robotics}, 
  title={Multirobot Persistent Monitoring: Minimizing Latency and Number of Robots With Recharging Constraints}, 
  year={2025},
  volume={41},
  number={},
  pages={236-252},
 }

@ARTICLE{FP-AF-FB:2012,
	author={Pasqualetti, Fabio and Franchi, Antonio and Bullo, Francesco},
	journal={IEEE Transactions on Robotics}, 
	title={On Cooperative Patrolling: Optimal Trajectories, Complexity Analysis, and Approximation Algorithms}, 
	year={2012},
	volume={28},
	number={3},
	pages={592-606}}

@ARTICLE{SS-MS-DR:2012,
	author={Smith, Stephen L. and Schwager, Mac and Rus, Daniela},
	journal={IEEE Transactions on Robotics}, 
	title={Persistent Robotic Tasks: Monitoring and Sweeping in Changing Environments}, 
	year={2012},
	volume={28},
	number={2},
	pages={410-426}}

@ARTICLE{CC-XL-XD:2013,
	author={Cassandras, Christos. G. and Lin, Xuchao and Ding, Xuchu},
	journal={IEEE Transactions on Automatic Control}, 
	title={An Optimal Control Approach to the Multi-Agent Persistent Monitoring Problem}, 
	year={2013},
	volume={58},
	number={4},
	pages={947-961}}

@ARTICLE{NZ-XY-SA-CC:2018,
	author={Zhou, Nan and Yu, Xi and Andersson, Sean B. and Cassandras, Christos G.},
	journal={IEEE Transactions on Automatic Control}, 
	title={Optimal Event-Driven Multiagent Persistent Monitoring of a Finite Set of Data Sources}, 
	year={2018},
	volume={63},
	number={12},
	pages={4204-4217}}

@INPROCEEDINGS{AA-SS-SS:2019,
	author={Asghar, Ahmad Bilal and Smith, Stephen L. and Sundaram, Shreyas},
	booktitle={American Control Conference}, 
	title={Multi-Robot Routing for Persistent Monitoring with Latency Constraints}, 
	year={2019},
	volume={},
	number={},
	pages={2620-2625}}

@INPROCEEDINGS{JG-JB:2005,
	author={Grace, J. and Baillieul, J.},
	booktitle={IEEE Conference on Decision and Control}, 
	title={Stochastic Strategies for Autonomous Robotic Surveillance}, 
	year={2005},
	volume={},
	number={},
	pages={2200-2205}}

@misc{JS-FW-PD:2017,
	title={Proximal Policy Optimization Algorithms}, 
	author={John Schulman and Filip Wolski and Prafulla Dhariwal and Alec Radford and Oleg Klimov},
	year={2017},
	eprint={1707.06347},
	archivePrefix={arXiv},
	primaryClass={cs.LG}
}

@InProceedings{VM-AB-KK:2016,
	title = 	 {Asynchronous Methods for Deep Reinforcement Learning},
	author = 	 {Mnih, Volodymyr and Badia, Adria Puigdomenech and Mirza, Mehdi and Graves, Alex and Lillicrap, Timothy and Harley, Tim and Silver, David and Kavukcuoglu, Koray},
	booktitle = 	 {International Conference on Machine Learning},
	pages = 	 {1928--1937},
	year = 	 {2016},
	editor = 	 {Balcan, Maria Florina and Weinberger, Kilian Q.},
	address = 	 {New York, New York, USA},
	publisher =    {PMLR}
}

@InProceedings{TH-AZ-SL:2018,
	title = 	 {Soft Actor-Critic: Off-Policy Maximum Entropy Deep Reinforcement Learning with a Stochastic Actor},
	author =       {Haarnoja, Tuomas and Zhou, Aurick and Abbeel, Pieter and Levine, Sergey},
	booktitle = 	 {Proceedings of the 35th International Conference on Machine Learning},
	pages = 	 {1861--1870},
	year = 	 {2018},
	editor = 	 {Dy, Jennifer and Krause, Andreas},
	volume = 	 {80},
	publisher =    {PMLR},
}

@inproceedings{JS-PM-PA:2016,
	author       = {John Schulman and
	Philipp Moritz and
	Sergey Levine and
	Michael I. Jordan and
	Pieter Abbeel},
	editor       = {Yoshua Bengio and
	Yann LeCun},
	title        = {High-Dimensional Continuous Control Using Generalized Advantage Estimation},
	booktitle = {Proceedings of the International Conference on Learning Representations},
	year         = {2016}
}

@inproceedings{PA-BD-HY:2020,
	title={Approximation algorithms for multi-robot patrol-scheduling with min-max latency},
	author={Afshani, Peyman and De Berg, Mark and Buchin, Kevin and Gao, Jie and L{\"o}ffler, Maarten and Nayyeri, Amir and Raichel, Benjamin and Sarkar, Rik and Wang, Haotian and Yang, Hao-Tsung},
	booktitle={International Workshop on the Algorithmic Foundations of Robotics},
	pages={107--123},
	year={2020}
}

@book{WR:1976,
	title        = {Principles of Mathematical Analysis},
	author       = {Rudin, Walter},
	year         = {1976},
	publisher    = {McGraw-Hill},
	edition      = {3rd},
}

@article{JK:2024,
	author       = {Kennedy, James},
	title        = {Geometric spectral theory of quantum graphs},
	journal      = {Communications in Mathematics},
	year         = {2024},
	volume       = {32},
	pages = {393-439}
}

@book{SA:2015,
	title={Understanding analysis},
	author={Abbott, Stephen},
	year={2015},
	publisher={Springer}
}

@article{DM-AS-LV:2025,
	title={$\epsilon$-Optimal Multi-Agent Patrol using Recurrent Strategy},
	author={Mallya, Deepak and Sinha, Arpita and Vachhani, Leena},
	year={2025},
	journal= {arXiv preprint arXiv:2509.11640},
}

@book{RE:1989,
	author    = {Ryszard Engelking},
	title     = {General Topology},
	series    = {Sigma Series in Pure Mathematics},
	volume    = {6},
	publisher = {Heldermann Verlag},
	edition   = {2nd},
	year      = {1989}
}

@book{SW:1970,
	title     = {General Topology},
	author    = {Willard, Stephen},
	year      = {1970},
	publisher = {Addison-Wesley Publishing Company}
}

@book{JM:2000,
	title     = {Topology},
	author    = {Munkres, James R.},
	edition   = {2nd},
	year      = {2000},
	publisher = {Prentice Hall},
}

@book{HHS:14,
  author    = {Sohrab, Houshang H.},
  title     = {Basic Real Analysis},
  publisher = {Springer New York},
  year      = {2014},
  edition   = {2nd},
}

@book{MRB-AH:99,
  title     = {Metric Spaces of Non-Positive Curvature},
  author    = {Bridson, Martin R. and Haefliger, Andr{\'e}},
  series    = {Grundlehren der mathematischen Wissenschaften},
  volume    = {319},
  year      = {1999},
  publisher = {Springer-Verlag},
  address   = {Berlin, Heidelberg},
  isbn      = {978-3-540-64324-1},
}

@book{CA-KB:2006,
	title     = {Infinite Dimensional Analysis: A Hitchhiker's Guide},
	author    = {Aliprantis, Charalambos D. and Border, Kim C.},
	edition   = {3rd},
	year      = {2006},
	publisher = {Springer}
}

@article{JP-EG-MB:2025,
  title={Cooperative patrol routing: Optimizing urban crime surveillance through multi-agent reinforcement learning},
  author={Palma-Borda, Juan and Guzm{\'a}n, Eduardo and Belmonte, Mar{\'\i}a-Victoria},
  journal={Engineering Applications of Artificial Intelligence},
  volume={166},
  pages={113706},
  year={2026},
  publisher={Elsevier}
}

@inproceedings{Gymnasium:2024,
  title         = {Gymnasium: A Standardized Interface for Reinforcement Learning Environments},
  author        = {Towers, Mark and Kwiatkowski, Ariel and Balis, John U. and De Cola, Gianluca and Deleu, Tristan and Goul{\~a}o, Manuel and Kallinteris, Andreas and Krimmel, Markus and KG, Arjun and Perez-Vicente, Rodrigo and Terry, Jordan and Pierr{\'e}, Andrea and Schulhoff, Sander and Tai, Jun Jet and Tan, Hannah and Younis, Omar G.},
  booktitle     = {Advances in Neural Information Processing Systems},
  volume        = {38},
  year          = {2025},
}

@article{MM-HS-BA:2022,
title = {Coordinated routing system for fire detection by patrolling trucks with drones},
journal = {International Journal of Disaster Risk Reduction},
volume = {73},
pages = {102859},
year = {2022},
issn = {2212-4209},
author = {Maryam Momeni and Hamed Soleimani and Shahrooz Shahparvari and Behrouz Afshar-Nadjafi},
}

@article{NC:2022,
  title={Worst-case analysis of a new heuristic for the travelling salesman problem},
  author={Christofides, Nicos},
  journal={Operations Research Forum},
  volume={3},
  number={20},
  pages={1--4},
  year={2022},
  organization={Springer}
}

@inproceedings{AK-Nk-OS:2021,
  title={A (slightly) improved approximation algorithm for metric {TSP}},
  author={Karlin, Anna R and Klein, Nathan and Gharan, Shayan Oveis},
  booktitle={Proceedings of the 53rd Annual ACM SIGACT Symposium on Theory of Computing},
  pages={32--45},
  year={2021}
}

@article{MK-ML-RS:2015,
title = {New inapproximability bounds for {TSP}},
journal = {Journal of Computer and System Sciences},
volume = {81},
number = {8},
pages = {1665-1677},
year = {2015},
issn = {0022-0000},
author = {Marek Karpinski and Michael Lampis and Richard Schmied},
}

@article{NB:2022,
  title={Recent trends in robotic patrolling},
  author={Basilico, Nicola},
  journal={Current Robotics Reports},
  volume={3},
  number={2},
  pages={65--76},
  year={2022},
  publisher={Springer}
}

@InProceedings{DP-RR:2011,
author="Portugal, David
and Rocha, Rui",
editor="Camarinha-Matos, Luis M.",
title="A Survey on Multi-robot Patrolling Algorithms",
booktitle="Technological Innovation for Sustainability",
year="2011",
publisher="Springer Berlin Heidelberg",
address="Berlin, Heidelberg",
pages="139--146",

}

@article{LH-MZ-EH:2019,
  title={A survey of multi-robot regular and adversarial patrolling},
  author={Huang, Li and Zhou, MengChu and Hao, Kuangrong and Hou, Edwin},
  journal={IEEE/CAA Journal of Automatica Sinica},
  volume={6},
  number={4},
  pages={894--903},
  year={2019},
  publisher={IEEE}
}

@article{SH-SR-DC:2020,
  title={Optimal {UAV} route planning for persistent monitoring missions},
  author={Hari, Sai Krishna Kanth and Rathinam, Sivakumar and Darbha, Swaroop and Kalyanam, Krishna and Manyam, Satyanarayana Gupta and Casbeer, David},
  journal={IEEE Transactions on Robotics},
  volume={37},
  number={2},
  pages={550--566},
  year={2021},
  publisher={IEEE}
}

@article{DM-AS-LV:2022,
  title={Priority Patrol With a Single Agent—Bounds and Approximations},
  author={Mallya, Deepak and Sinha, Arpita and Vachhani, Leena},
  journal={IEEE Control Systems Letters},
  volume={7},
  pages={1321--1326},
  year={2023},
  publisher={IEEE}
}

@article{DP-RR:2013,
title = {Distributed multi-robot patrol: A scalable and fault-tolerant framework},
journal = {Robotics and Autonomous Systems},
volume = {61},
number = {12},
pages = {1572-1587},
year = {2013},
issn = {0921-8890},
author = {David Portugal and Rui P. Rocha},
}

@article{DP-RR:2016,
  title={Cooperative multi-robot patrol with {Bayesian} learning},
  author={Portugal, David and Rocha, Rui P},
  journal={Autonomous Robots},
  volume={40},
  number={5},
  pages={929--953},
  year={2016},
  publisher={Springer}
}

@inproceedings{NA-DU-PS:2011,
  title={Multiagent patrol generalized to complex environmental conditions},
  author={Agmon, Noa and Urieli, Daniel and Stone, Peter},
  booktitle={Proceedings of the AAAI Conference on Artificial Intelligence},
  volume={25},
  number={1},
  pages={1090--1095},
  year={2011}
}

@inproceedings{DP-RR:2013:iros,
  title={Scalable, fault-tolerant and distributed multi-robot patrol in real world environments},
  author={Portugal, David and Rocha, Rui P},
  booktitle={2013 IEEE/RSJ International Conference on Intelligent Robots and Systems},
  pages={4759--4764},
  year={2013},
  organization={IEEE}
}

@article{AF-LI-DN:2017,
  title={Distributed on-line dynamic task assignment for multi-robot patrolling},
  author={Farinelli, Alessandro and Iocchi, Luca and Nardi, Daniele},
  journal={Autonomous Robots},
  volume={41},
  number={6},
  pages={1321--1345},
  year={2017},
  publisher={Springer}
}

@article{AG-XL-QZ:2024,
  title={Attrition-aware adaptation for multi-agent patrolling},
  author={Goeckner, Anthony and Li, Xinliang and Wei, Ermin and Zhu, Qi},
  journal={IEEE Robotics and Automation Letters},
  volume={9},
  number={8},
  pages={7230--7237},
  year={2024},
  publisher={IEEE}
}

@inproceedings{DM-GR-PT:2009,
title = "SimPatrol: Towards the establishment of multi-agent patrolling as a benchmark for multi-agent systems",
author = "Daniel Moreira and Geber Ramalho and Patr{\'i}cia Tedesco",
year = "2009",
pages = "570--575",
booktitle = "ICAART 2009 - Proceedings of the 1st International Conference on Agents and Artificial Intelligence",
}

@incollection{DP-LI-AF:2018,
  title={A {ROS}-based framework for simulation and benchmarking of multi-robot patrolling algorithms},
  author={Portugal, David and Iocchi, Luca and Farinelli, Alessandro},
  booktitle={Robot Operating System (ROS) The Complete Reference},
  pages={3--28},
  year={2018},
  publisher={Springer}
}

@inproceedings{YE-AS-GK:2008,
  title={A realistic model of frequency-based multi-robot polyline patrolling},
  author={Elmaliach, Yehuda and Shiloni, Asaf and Kaminka, Gal A},
  booktitle={Proceedings of the 7th international joint conference on Autonomous agents and multiagent systems},
  pages={63--70},
  year={2008}
}

@inproceedings{JC-LG-EK:2011,
  title={Boundary patrolling by mobile agents with distinct maximal speeds},
  author={Czyzowicz, Jurek and  G{\k{a}}sieniec, Leszek and Kosowski, Adrian and Kranakis, Evangelos},
  booktitle={European Symposium on Algorithms},
  pages={701--712},
  year={2011},
  organization={Springer}
}

@article{AD-AG-CT:2014, 
title={On Fence Patrolling by Mobile Agents}, 
volume={21}, 
number={3}, 
journal={The Electronic Journal of Combinatorics}, author={Dumitrescu, Adrian and Ghosh, Anirban and Tóth, Csaba D.},
year={2014},
pages={P3.4} }

@article{AK-MS:2020,
title = {Simple strategies versus optimal schedules in multi-agent patrolling},
journal = {Theoretical Computer Science},
volume = {839},
pages = {195-206},
year = {2020},
author = {Akitoshi Kawamura and Makoto Soejima},
}

@article{VM-KK-DS:2015,
  title={Human-level control through deep reinforcement learning},
  author={Mnih, Volodymyr and Kavukcuoglu, Koray and Silver, David and Rusu, Andrei A. and Veness, Joel and Bellemare, Marc G. and Graves, Alex and Riedmiller, Martin and Fidjeland, Andreas K. and Ostrovski, Georg and Petersen, Stig and Beattie, Charles and Sadik, Amir and Antonoglou, Ioannis and King, Helen and Kumaran, Dharshan and Wierstra, Daan and Legg, Shane and Hassabis, Demis},
  journal={Nature},
  volume={518},
  number={7540},
  pages={529--533},
  year={2015},
  publisher={Nature Publishing Group}
}

@inproceedings{MH-JM-DS:2018,
  title={Rainbow: Combining improvements in deep reinforcement learning},
  author={Hessel, Matteo and Modayil, Joseph and Van Hasselt, H. V. and Schaul, Tom and Ostrovski, Georg and Dabney, Will and Horgan, Dan and Piot, Bilal and Azar, Mohammad and Silver, David},
  booktitle={Proceedings of the AAAI conference on artificial intelligence},
  volume={32},
  number={1},
  year={2018}
}

@article{TR-MS-SW:2020,
  title={Monotonic value function factorisation for deep multi-agent reinforcement learning},
  author={Rashid, Tabish and Samvelyan, Mikayel and De Witt, Christian Schroeder and Farquhar, Gregory and Foerster, Jakob and Whiteson, Shimon},
  journal={Journal of Machine Learning Research},
  volume={21},
  number={178},
  pages={1--51},
  year={2020}
}

@inproceedings{PS-GL-AG:2017,
  title={Value-decomposition networks for cooperative multi-agent learning},
  author={Sunehag, Peter and Lever, Guy and Gruslys, Audrunas and Czarnecki, Wojciech Marian and Zambaldi, Vinicius and Jaderberg, Max and Lanctot, Marc and Sonnerat, Nicolas and Leibo, Joel Z and Tuyls, Karl and Graepel, Thore},
  booktitle = {International Conference on Autonomous Agents and MultiAgent Systems},
  pages     = {2085--2087},
  year      = {2018},
}

@article{CD-TG-SW:2020,
  title={Is independent learning all you need in the {Starcraft} multi-agent challenge?},
  author={De Witt, Christian Schroeder and Gupta, Tarun and Makoviichuk, Denys and Makoviychuk, Viktor and Torr, Philip HS and Sun, Mingfei and Whiteson, Shimon},
  journal={arXiv preprint arXiv:2011.09533},
  year={2020}
}

@inproceedings{CY-AV-YW:2022,
  title={The surprising effectiveness of {PPO} in cooperative multi-agent games},
  author={Yu, Chao and Velu, Akash and Vinitsky, Eugene and Gao, Jiaxuan and Wang, Yu and Bayen, Alexandre and Wu, Yi},
  booktitle={Advances in Neural Information Processing Systems},
  volume={35},
  pages={24611--24624},
  year={2022}
}

@article{OR-IB-WC:2019,
  title={Grandmaster level in {StarCraft} {II} using multi-agent reinforcement learning},
  author={Vinyals, Oriol and Babuschkin, Igor and Czarnecki, Wojciech M. and Mathieu, Micha{\"e}l and Dudzik, Andrew and Chung, Junyoung and Choi, David H. and Powell, Richard and Ewalds, Timo and Georgiev, Petko and Oh, Junhyuk and Horgan, Dan and Kroiss, Manuel and Danihelka, Ivo and Huang, Aja and Sifre, Laurent and Cai, Trevor and Agapiou, John P. and Jaderberg, Max and Vezhnevets, Alexander S. and Leblond, R{\'e}mi and Pohlen, Tobias and Dalibard, Valentin and Budden, David and Sulsky, Yury and Molloy, James and Paine, Tom L. and Gulcehre, Caglar and Wang, Ziyu and Pfaff, Tobias and Wu, Yuhuai and Ring, Roman and Yogatama, Dani and W{\"u}nsch, Dario and McKinney, Katrina and Smith, Oliver and Schaul, Tom and Lillicrap, Timothy and Kavukcuoglu, Koray and Hassabis, Demis and Apps, Chris and Silver, David},
  journal={Nature},
  volume={575},
  number={7782},
  pages={350--354},
  year={2019},
  publisher={Nature Publishing Group UK London}
}

@InProceedings{LC-JL-RK:2024,
author="Chen, Li-Hsuan
and Hung, Ling-Ju
and Klasing, Ralf",
editor="Ghosh, Smita
and Zhang, Zhao",
title="Improved Approximation Algorithms for Patrol-Scheduling with Min-Max Latency Using Multiclass Minimum Spanning Forests",
booktitle="Algorithmic Aspects in Information and Management",
year="2024",
publisher="Springer Nature Singapore",
address="Singapore",
pages="99--110",
}

@article{LG-TJ-TR:2024,
title = {Perpetual maintenance of machines with different urgency requirements},
journal = {Journal of Computer and System Sciences},
volume = {139},
pages = {103476},
year = {2024},
issn = {0022-0000},
author = {Leszek G\k{a}sieniec and Tomasz Jurdzi\'{o}ski and Ralf Klasing and Christos Levcopoulos and Andrzej Lingas and Jie Min and Tomasz Radzik}
}

@inproceedings{FL-AK:2014,
  title={Robust multi-agent patrolling strategies using reinforcement learning},
  author={Lauri, Fabrice and Koukam, Abderrafiaa},
  booktitle={International Conference on Swarm Intelligence Based Optimization},
  pages={157--165},
  year={2014},
  organization={Springer}
}

@inproceedings{AG-YS-QZ:2024,
  title={Graph neural network-based multi-agent reinforcement learning for resilient distributed coordination of multi-robot systems},
  author={Goeckner, Anthony and Sui, Yueyuan and Martinet, Nicolas and Li, Xinliang and Zhu, Qi},
  booktitle={IEEE/RSJ International Conference on Intelligent Robots and Systems},
  pages={5732--5739},
  year={2024},
  organization={IEEE}
}

@inproceedings{JW-RM-EH:2025,
  title={Lightweight decentralized neural network-based strategies for multi-robot patrolling},
  author={Ward, James C and McConville, Ryan and Hunt, Edmund R},
  booktitle={Proceedings of the 40th ACM/SIGAPP Symposium on Applied Computing},
  pages={823--831},
  year={2025}
}

@inproceedings{ZH-DZ:2010,
  title={Reinforcement learning for multi-agent patrol policy},
  author={Hu, Zhaohui and Zhao, Dongbin},
  booktitle={IEEE International Conference on Cognitive Informatics},
  pages={530--535},
  year={2010},
  organization={IEEE}
}

@article{JP-DT-CS:2019,
  title={Equitable persistent coverage of non-convex environments with graph-based planning},
  author={Palacios-Gas{\'o}s, Jos{\'e} Manuel and Tardioli, Danilo and Montijano, Eduardo and Sag{\"u}{\'e}s, Carlos},
  journal={The International Journal of Robotics Research},
  volume={38},
  number={14},
  pages={1674--1694},
  year={2019},
  publisher={SAGE Publications Sage UK: London, England}
}

@inproceedings{DK-AK-VV:2021,
  title={Regstar: {Efficient} strategy synthesis for adversarial patrolling games},
  author={Kla{\v{s}}ka, David and Ku{\v{c}}era, Anton{\'\i}n and Musil, V{\'\i}t and {\v{R}}eh{\'a}k, Vojt{\v{e}}ch},
  booktitle={Uncertainty in Artificial Intelligence},
  pages={471--481},
  year={2021},
  organization={PMLR}
}

@inproceedings{AN-MG-TP:2024,
  title={General {Markov} model for solving patrolling games},
  author={Nag{\'o}rko, Andrzej and Godziszewski, Micha{\l} Tomasz and Waniek, Marcin and Rosiak, Barbara and R{\'o}g, Ma{\l}gorzata and Michalak, Tomasz Pawe{\l}},
  booktitle={The 40th Conference on Uncertainty in Artificial Intelligence},
  year={2024}
}

@article{SA-AM-KP:2011,
  title={Patrolling games},
  author={Alpern, Steve and Morton, Alec and Papadaki, Katerina},
  journal={Operations Research},
  volume={59},
  number={5},
  pages={1246--1257},
  year={2011},
  publisher={INFORMS}
}

@article{MD:KG:RS:2023,
  title={A stochastic game framework for patrolling a border},
  author={Darlington, Matthew and Glazebrook, Kevin D and Leslie, David S and Shone, Rob and Szechtman, Roberto},
  journal={European Journal of Operational Research},
  volume={311},
  number={3},
  pages={1146--1158},
  year={2023},
  publisher={Elsevier}
}

@inproceedings{JF-GF-SW:2018,
author = {Foerster, Jakob N. and Farquhar, Gregory and Afouras, Triantafyllos and Nardelli, Nantas and Whiteson, Shimon},
title = {Counterfactual multi-agent policy gradients},
year = {2018},
isbn = {978-1-57735-800-8},
publisher = {AAAI Press},
booktitle = {Proceedings of the Thirty-Second AAAI Conference on Artificial Intelligence and Thirtieth Innovative Applications of Artificial Intelligence Conference and Eighth AAAI Symposium on Educational Advances in Artificial Intelligence},
articleno = {363},
numpages = {9},
}

@article{TS-RC:2018,
  title={Learning {M}ontezuma's {R}evenge from a single demonstration},
  author={Salimans, Tim and Chen, Richard},
  journal={arXiv preprint arXiv:1812.03381},
  year={2018}
}

@inproceedings{TH-MV-OP:2018,
  title={Deep {Q}-learning from demonstrations},
  author={Hester, Todd and Vecerik, Matej and Pietquin, Olivier and Lanctot, Marc and Schaul, Tom and Piot, Bilal and Horgan, Dan and Quan, John and Sendonaris, Andrew and Osband, Ian and Dulac-Arnold, Gabriel and Agapiou, John and Leibo, Joel Z. and Gruslys, Audrunas},
  booktitle={Proceedings of the AAAI conference on artificial intelligence},
  volume={32},
  number={1},
  year={2018}
}

@INPROCEEDINGS{AR-VK-SL:2018,
    AUTHOR    = {Aravind Rajeswaran AND Vikash Kumar AND Abhishek Gupta AND
                 Giulia Vezzani AND John Schulman AND Emanuel Todorov AND Sergey Levine},
    TITLE     = {Learning complex dexterous manipulation with deep reinforcement learning and demonstrations},
    BOOKTITLE = {Proceedings of Robotics: Science and Systems},
    YEAR      = {2018},
}

@article{WC-WY:2023,
  title={Reinforcement learning with non-cumulative objective},
  author={Cui, Wei and Yu, Wei},
  journal={IEEE Transactions on Machine Learning in Communications and Networking},
  volume={1},
  pages={124--137},
  year={2023},
  publisher={IEEE}
}

@article{SG-YP-RN:2020,
  title={Maximum reward formulation in reinforcement learning},
  author={Gottipati, Sai Krishna and Pathak, Yashaswi and Nuttall, Rohan and Sahir and Chunduru, Raviteja and Touati, Ahmed and Subramanian, Sriram Ganapathi and Taylor, Matthew E. and Chandar, Sarath},
  journal={arXiv preprint arXiv:2010.03744},
  year={2020}
}

@inproceedings{RW-PZ-LY:2020,
  title={Planning with general objective functions: Going beyond total rewards},
  author={Wang, Ruosong and Zhong, Peilin and Du, Simon S and Salakhutdinov, Ruslan and Yang, Lin F},
  booktitle={Advances in Neural Information Processing Systems},
  volume={33},
  pages={14486--14497},
  year={2020}
}

@inproceedings{GV-WB-MD:2024,
  title={To the Max: Reinventing Reward in Reinforcement Learning},
  author={Veviurko, Grigorii and Boehmer, Wendelin and De Weerdt, Mathijs},
  booktitle={International Conference on Machine Learning},
  pages={49455--49470},
  year={2024},
  organization={PMLR}
}

@article{NG-SS-AA:2020,
  title={Revisiting parameter sharing in multi-agent deep reinforcement learning},
  author={Grammel, Nathaniel and Son, Sanghyun and Black, Benjamin and Agrawal, Aakriti},
  journal={arXiv preprint arXiv:2005.13625},
  year={2020}
}

@inproceedings{JF-IA-SW:2016,
 author={Foerster, Jakob and Assael, Ioannis Alexandros and De Freitas, Nando and Whiteson, Shimon},
  title={Learning to communicate with deep multi-agent reinforcement learning},
  booktitle={Advances in Neural Information Processing Systems},
  volume={29},
  pages= {2137--2145},
  year={2016}
}

@article{JY-JV-MS:2022,
  title={DiNNO: Distributed neural network optimization for multi-robot collaborative learning},
  author={Yu, Javier and Vincent, Joseph A and Schwager, Mac},
  journal={IEEE Robotics and Automation Letters},
  volume={7},
  number={2},
  pages={1896--1903},
  year={2022},
  publisher={IEEE}
}

@book{DW:1991,
  title={Probability with Martingales},
  author={Williams, David},
  year={1991},
  publisher={Cambridge University Press}
}

@inproceedings{MT:1993,
author = {Tan, Ming},
title = {Multi-agent reinforcement learning: {Independent} versus cooperative agents},
year = {1993},
publisher = {Morgan Kaufmann Publishers Inc.},
address = {San Francisco, CA, USA},
booktitle = {International Conference on Machine Learning},
pages = {330–337},
numpages = {8},
}

@book{RC-IN:2006,
  title={Topological Properties of Spaces of Continuous Functions},
  author={McCoy, Robert A and Ntantu, Ibula},
  year={2006},
  publisher={Springer}
}

@INPROCEEDINGS{YM-JL:2022,
  author={Ma, Yanhao and Luo, Jie},
  booktitle={ China Automation Congress }, 
  title={Value-Decomposition Multi-Agent Proximal Policy Optimization}, 
  year={2022},
  volume={},
  number={},
  pages={3460-3464}}

@article{DM:2021,
  title={What is actually a metric graph?},
  author={Delio Mugnolo},
  year={2021},
  journal={arXiv preprint arXiv:1912.07549},

}

@article{CY-TZ:2016,
  author  = {Yan, Chuanbo and Zhang, Tao},
  title   = {Multi-robot patrol: A distributed algorithm based on expected idleness},
  journal = {International Journal of Advanced Robotic Systems},
  volume  = {13},
  number  = {6},
  pages   = {1--12},
  year    = {2016},
}

@article{HC-TC-SW:2017,
title = {Developing an online cooperative police patrol routing strategy},
journal = {Computers, Environment and Urban Systems},
volume = {62},
pages = {19-29},
year = {2017},
author = {Huanfa Chen and Tao Cheng and Sarah Wise},
}

@article{RS-DP-SS:1999,
  title={Between {MDPs} and semi-{MDPs}: A framework for temporal abstraction in reinforcement learning},
  author={Sutton, Richard S and Precup, Doina and Singh, Satinder},
  journal={Artificial Intelligence},
  volume={112},
  number={1-2},
  pages={181--211},
  year={1999},
  publisher={Elsevier}
}

@article{YX-WT-JH:2025,
  title={Asynchronous multi-agent deep reinforcement learning under partial observability},
  author={Xiao, Yuchen and Tan, Weihao and Hoffman, Joshua and Xia, Tian and Amato, Christopher},
  journal={The International Journal of Robotics Research},
  volume={44},
  number={8},
  pages={1257--1286},
  year={2025},
  publisher={Sage Publications Sage UK: London, England}
}

@inproceedings{Pytorch:2019,
  title={{PyTorch}: {An} imperative style, high-performance deep learning library},
  author={Paszke, Adam and Gross, Sam and Massa, Francisco and Lerer, Adam and Bradbury, James and Chanan, Gregory and Killeen, Trevor and Lin, Zeming and Gimelshein, Natalia and Antiga, Luca and Desmaison, Alban and K{\"o}pf, Andreas and Yang, Edward and DeVito, Zachary and Raison, Martin and Tejani, Alykhan and Chilamkurthy, Sasank and Steiner, Benoit and Fang, Lu and Bai, Junjie and Chintala, Soumith},
  booktitle={Advances in Neural Information Processing Systems},
  volume={32},
  year={2019},
  pages= {8026--8037},
}

@inproceedings{Pettingzoo:2021,
  title={{PettingZoo}: {A} standard {API} for multi-agent reinforcement learning},
  author={Terry, J. K. and Black, Benjamin and Grammel, Nathaniel and Jayakumar, Mario and Hari, Ananth and Sullivan, Ryan and Santos, Luis and Perez, Rodrigo and Horsch, Caroline and Dieffendahl, Clemens and Williams, Niall L. and Lokesh, Yashas and Ravi, Praveen},
  booktitle={Advances in Neural Information Processing Systems},
  volume={34},
  pages={15032-15043},
  year={2021}
}

@inproceedings{MH-PS:2015,
  title={Deep recurrent {Q}-learning for partially observable {MDPs}},
  author={Hausknecht, Matthew and Stone, Peter},
  booktitle={AAAI Fall Symposium Series},
  year={2015}
}

@inproceedings{SK-GO-WD:2019,
  title={Recurrent experience replay in distributed reinforcement learning},
  author={Kapturowski, Steven and Ostrovski, Georg and Quan, John and Munos, Remi and Dabney, Will},
  booktitle={International Conference on Learning Representations},
  year={2019}
}

@article{JP-RW:1994,
  title={Incremental multi-step {Q}-learning},
  author={Jing Peng and Ronald J. Williams},
  journal={Machine Learning},
  year={1994},
  volume={22},
  pages={283-290},
}

@article{Tianshou:2021,
  title={Tianshou: {A} Highly Modularized Deep Reinforcement Learning Library},
  author={Jiayi Weng and Huayu Chen and Dong Yan and Kaichao You and Alexis Duburcq and Minghao Zhang and Yi Su and Hang Su and Jun Zhu},
  journal={Journal of Machine Learning Research},
  year={2022},
  volume={23},
  number = {267},
  pages={1-6},
}

@inproceedings{HH-AG-MH-DS:2016,
  title={Learning values across many orders of magnitude},
  author={H. V. Hasselt and Arthur Guez and Matteo Hessel and Volodymyr Mnih and David Silver},
  booktitle={Neural Information Processing Systems},
  year={2016},
}

@article{MB-PN:2003,
  title={Laplacian eigenmaps for dimensionality reduction and data representation},
  author={Belkin, Mikhail and Niyogi, Partha},
  journal={Neural Computation},
  volume={15},
  number={6},
  pages={1373--1396},
  year={2003},
  publisher={MIT Press}
}

@misc{OpenStreetMap:2017,
   author = {{OpenStreetMap contributors}},
   title = {{Planet dump retrieved from https://planet.osm.org }},
   howpublished = "\url{ https://www.openstreetmap.org }",
   year = {2017},
 }

@InProceedings{AC-RP-FB:2016,
author = {A. Carron and R. Patel and F. Bullo},
title = {Hitting time for doubly-weighted graphs with application to robotic surveillance},
booktitle = {European Control Conference},
year = {2016},
pages = {661-665},
}

@book{MS:2007,
  title={Metric Spaces},
  author={Searc{\'o}id, M{\'\i}che{\'a}l {\'O}},
  year={2007},
  publisher={Springer}
}

@misc{wandb,
  title = {Experiment Tracking with Weights and Biases},
  year = {2020},
  note = {Software available from wandb.com},
  url = {https://www.wandb.com/},
  author = {Biewald, Lukas}
}

@inproceedings{snoek2012practical,
  title = {Practical Bayesian Optimization of Machine Learning Algorithms},
  author = {Snoek, Jasper and Larochelle, Hugo and Adams, Ryan P.},
  booktitle = {Advances in Neural Information Processing Systems},
  volume = {25},
  year = {2012}
}

@article{portugal2013multi,
  title={Multi-robot patrolling algorithms: {E}xamining performance and scalability},
  author={Portugal, David and Rocha, Rui P},
  journal={Advanced Robotics},
  volume={27},
  number={5},
  pages={325--336},
  year={2013},
  publisher={Taylor \& Francis}
}

	\appendix
	\section*{Appendix}
	
	\setcounter{subsection}{0}
	\setcounter{subsubsection}{0}
	
	\renewcommand{\thesubsection}{\Alph{subsection}}
	\renewcommand{\thesubsubsection}{\thesubsection\arabic{subsubsection}}

	\subsection{Auxiliary definitions and lemmas} \label{appendix:auxiliary}
	We first introduce an auxiliary concept and two preliminary results that will be  used repeatedly in the subsequent proofs.

	\textbf{Canonical motion.}
	We fix a canonical motion rule to uniquely determine how the system evolves 
	between two position configurations. Specifically, for every pair of points in $\Ggeo$, we fix one shortest path between them, using an arbitrary but fixed tie-breaking rule whenever multiple shortest paths exist. Given two position configurations $\boldsymbol{p}$ and  $\boldsymbol{p}'$, the canonical motion from $\boldsymbol{p}$ to $\boldsymbol{p}'$ is defined as follows: each robot travels at unit speed along the fixed shortest path, and any robot that arrives earlier waits at its destination until all robots have arrived. This rule induces a unique canonical transition between any two joint position configuration, whose duration is at most  
	the graph diameter $\graphdiameter$.

	\textbf{Auxiliary lemmas.} We next present two elementary lemmas that capture intrinsic properties of the monitoring problem.
	
	\begin{lemma}[(Coverage in sufficiently long intervals)]\label{lemma:coveragelength}
		Let $[\alpha,\beta]$ be a time interval with duration $\tau = \beta-\alpha$. Suppose that the worst-case weighted latency of a strategy $\strategy$ over $[\alpha,\beta]$ is bounded by $L$, i.e., $\sup_{t\in [\alpha,\beta] }\maxlatency{\strategy}{t}\leq L$.
		If $\tau> \frac{L}{\weightmin }$, then every node \(v\in\nodes\) is visited at least once during $[\alpha,\beta]$.
	\end{lemma}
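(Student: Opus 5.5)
We argue by contradiction. Suppose some node $v\in\nodes$ is not visited at any time in $[\alpha,\beta]$. The key is to relate the latency of $v$ at time $\beta$ to the length of the interval, using only the definitions \eqref{eq:tauvr}--\eqref{definition:latency}. Since no robot is at $v$ at any $t'\in[\alpha,\beta]$, every visit time $t'\le\beta$ with $\schedule{r}(t')=v$ must satisfy $t'<\alpha$. Hence $\rvisitvatt{r}{\strategy}(v,\beta)\le\alpha$ for every robot $r$. In the case where no visit has occurred yet, the convention gives the value $0\le\alpha$. Taking the maximum over robots yields $\allvisitvatt{\strategy}(v,\beta)\le\alpha$, and therefore $\latency{v}{\strategy}(\beta)\ge\beta-\alpha=\tau$.

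Next we weight this bound and compare it with the hypothesis. Because $\weight(v)\ge\weightmin>0$, we obtain
\[
\maxlatency{\strategy}{\beta}\;\ge\;\weight(v)\,\latency{v}{\strategy}(\beta)\;\ge\;\weightmin\,\tau\;>\;L,
\]
where the last inequality uses $\tau>L/\weightmin$. This contradicts $\sup_{t\in[\alpha,\beta]}\maxlatency{\strategy}{t}\le L$, since $\beta\in[\alpha,\beta]$. So every node is visited during $[\alpha,\beta]$.

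The only delicate point is the supremum in \eqref{eq:tauvr}. A sequence of visit times below $\alpha$ could accumulate at $\alpha$, but the supremum is then at most $\alpha$, which is all the argument needs. Continuity of $\schedule{r}$ makes the set of visit times closed, so the supremum would in fact equal $\alpha$ and be a genuine visit at $\alpha$, which is excluded by assumption anyway. Either way, the bound $\allvisitvatt{\strategy}(v,\beta)\le\alpha$ holds. Beyond checking this, I expect no real obstacle.
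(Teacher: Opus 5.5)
Your proof is correct and follows the same contradiction argument as the paper: an unvisited node has $\latency{v}{\strategy}(\beta)\ge\tau$, so $\maxlatency{\strategy}{\beta}\ge\weightmin\tau>L$. The only difference is that you derive $\latency{v}{\strategy}(\beta)\ge\tau$ from $\allvisitvatt{\strategy}(v,\beta)\le\alpha$, whereas the paper appeals to linear growth of the latency over the interval. Your version is slightly more careful about the supremum and the no-visit convention, but the argument is the same.
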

	
	\begin{proof}
		We prove by contradiction.   Suppose that some node $v$ is not visited during  $[\alpha,\beta]$. Then its latency increases continuously throughout the interval, and hence
		$\latency{v}{\strategy}(\beta)= \latency{v}{\strategy}(\alpha) + \tau \ge \tau$.
		Therefore, 
		\[
		\maxlatency{\strategy}{\beta}\ge \weight(v) \latency{v}{\strategy}(\beta) \ge \weightmin \tau > \weightmin \times \frac{L}{\weightmin } = L,
		\]
		which contradicts with the assumption that $\sup_{t\in [\alpha,\beta] }\maxlatency{\strategy}{t}\leq L$. Hence every node must be visited at least once during $[\alpha,\beta]$.
	\end{proof}
	\cref{lemma:coveragelength} shows that if the weighted latency remains below a given  threshold on a sufficiently long interval, then all nodes must be covered during that interval. The next lemma shows that once a node is visited, its terminal latency at the end of the interval no longer depends on its latency value at the beginning of the interval. 
	
	\begin{lemma}[(Independence from initial latencies after coverage)]\label{lemma:independence} If a node \(v\in\nodes\) is visited at least once during an interval $[\alpha,\beta]$ under a strategy $\strategy$, then its terminal latency $\latency{v}{\strategy}(\beta)$  is determined solely by the last visit time of $v$ within $[\alpha,\beta]$, and is independent of the initial latency value $\latency{v}{\strategy}(\alpha)$.
		\begin{proof} Since node $v$ is visited at least once in $[\alpha,\beta]$, its most recent visit time up to $\beta$, denoted by $\allvisitvatt{\strategy}(v,\beta)$, lies in $[\alpha,\beta]$. 
			By the definition of  instantaneous latency in \eqref{definition:latency}, $\latency{v}{\strategy}= \beta - \allvisitvatt{\strategy}(v,\beta)$. Thus, the terminal latency $\latency{v}{\strategy}$ depends only on $\allvisitvatt{\strategy}(v,\beta)$ and is independent of the initial latency $ \latency{v}{\strategy}(\alpha)$. 
		\end{proof}
		
	\end{lemma}
	
	\cref{lemma:independence} shows that once a  node is visited during an interval, its historical latency before that interval becomes irrelevant to its terminal latency.

	\subsection{Proof of \cref{thm:existence}}\label{appendix:existence}
    We first consider the case $T< +\infty$. The proof proceeds in three steps. First, by using the Arzel\`{a}-Ascoli  theorem, we show that the feasible strategy set \(\strategyset\) defined in~\eqref{eq:jointset} is compact under the compact-open topology. Second, we show that the functional \(J_T:\strategyset\to \overRp \) is lower semicontinuous with respect to this topology. Third, the generalized extreme value theorem guarantees that a lower semicontinuous functional on a compact set attains its minimum, thereby ensuring the existence of an optimal strategy for every finite $T$.
    
    The case $T=+\infty$ is handled separately. We show that the optimal value satisfies $\optobjT{\infty}=\optobjT{0}$ and that any optimal solution to $\objectivefm{0}{\cdot}$ 
		is also optimal for $J_{\infty}$.
        
	\subsubsection{Preliminary results.}
	
	We first collect several standard results that will be used in the proof. 
		

    \begin{theorem}[(The Arzel\`{a}-Ascoli theorem~{\citep[Lemma 3.10]{MRB-AH:99}})]\label{thm:AA}
		If $X$ is a separable metric space and $Y$ is a compact metric space, then every sequence of equicontinuous maps $f_n:X\to Y$ has a 
subsequence that converges (uniformly on compact subsets) to a continuous map $f:X\to Y$.
	\end{theorem}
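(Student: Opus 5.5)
The plan is the classical diagonal-plus-equicontinuity argument. First, use separability of $X$ to fix a countable dense set $D=\{d_1,d_2,\dots\}\subseteq X$. Since $Y$ is a compact metric space, it is sequentially compact. So from $(f_n(d_1))_n$ I extract a convergent subsequence, then from that subsequence one along which $f_n(d_2)$ converges, and so on. Taking the diagonal subsequence (relabelled $(f_n)$) gives convergence of $f_n(d)$ for every $d\in D$.

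Second, I upgrade this to uniform Cauchyness on an arbitrary compact $K\subseteq X$; this is the main step. Fix $\varepsilon>0$.

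\begin{itemize}
\item For each $x\in K$, equicontinuity gives $\delta_x>0$ such that $d_Y(f_n(x),f_n(x'))<\varepsilon$ for all $n$ whenever $d_X(x,x')<\delta_x$.
\item By compactness, finitely many balls $B(x_i,\delta_{x_i}/2)$, $i=1,\dots,m$, cover $K$. In each ball I pick some $e_i\in D$, using density of $D$.
\item For $y\in B(x_i,\delta_{x_i}/2)$, both $y$ and $e_i$ lie within $\delta_{x_i}$ of $x_i$. Hence $d_Y(f_n(y),f_n(e_i))<2\varepsilon$ for all $n$.
\item Choose $N$ so that $d_Y(f_n(e_i),f_k(e_i))<\varepsilon$ for all $n,k\ge N$ and all $i\le m$.
\end{itemize}

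The triangle inequality then gives $\sup_{y\in K} d_Y(f_n(y),f_k(y))<5\varepsilon$ for all $n,k\ge N$. The only delicate point is that equicontinuity is assumed only pointwise, not uniformly. That is why the cover uses half-radii around the centres $x_i$ rather than balls around the points of $D$ directly.

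Third, I define the limit and check its properties. Since singletons are compact, $(f_n(x))$ is Cauchy for every $x\in X$. A compact metric space is complete, so $f(x):=\lim_n f_n(x)$ exists in $Y$, and the estimate above shows $f_n\to f$ uniformly on each compact $K$. For continuity of $f$, I let $n\to\infty$ in $d_Y(f_n(x),f_n(x'))<\varepsilon$ for $d_X(x,x')<\delta_x$, which gives $d_Y(f(x),f(x'))\le\varepsilon$. Thus $f:X\to Y$ is continuous, which completes the proof.
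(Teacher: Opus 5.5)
Your proof is correct and complete; the paper gives no proof of its own but imports the result from Lemma 3.10 of Bridson--Haefliger, and your diagonal-subsequence-plus-pointwise-equicontinuity argument is the standard proof of that lemma. One small aside: halving the radii is harmless but not actually needed, because you compare both $f_n(y)$ and $f_n(e_i)$ to $f_n(x_i)$, so the balls $B(x_i,\delta_{x_i})$ would work equally well.
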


	\begin{lemma}[(Sequential characterization of semicontinuity in first-countable spaces~{\citep[Lemma~2.42]{CA-KB:2006}})]\label{lemma:sequenceofsemi} Let \(X\) be a first-countable topological space and let
\(f:X\to[-\infty,\infty]\). Then $f$ is upper semicontinuous if and only if, for every sequence $x_n \to x$ in $X$,
		$\limsup _{n \rightarrow \infty} f(x_n) \leq f\left(x\right)$.
	\end{lemma}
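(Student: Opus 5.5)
We argue directly from the definition. A function $f:X\to[-\infty,\infty]$ is upper semicontinuous if, for every $c\in\mathbb{R}$, the strict sublevel set $U_c:=\{x\in X: f(x)<c\}$ is open. Equivalently, the superlevel set $\{f\ge c\}$ is closed. We prove the two implications separately. First countability is used only in the converse direction.

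\emph{Necessity.} Let $f$ be upper semicontinuous and let $x_n\to x$. If $f(x)=+\infty$, the inequality $\limsup_{n\to\infty} f(x_n)\le f(x)$ is trivial. Otherwise, fix any real $c>f(x)$. Then $x\in U_c$, and $U_c$ is open. Since $x_n\to x$, there is $N$ such that $x_n\in U_c$ for all $n\ge N$. Hence $f(x_n)<c$ for all $n\ge N$, and so $\limsup_{n\to\infty} f(x_n)\le c$. Letting $c\downarrow f(x)$ gives the claim. This also covers $f(x)=-\infty$, where we let $c\to-\infty$. This direction needs no countability assumption.

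\emph{Sufficiency.} We argue by contradiction. Suppose the sequential condition holds but $f$ is not upper semicontinuous. Then there is $c\in\mathbb{R}$ such that $U_c$ is not open. So there is a point $x$ with $f(x)<c$ such that every neighborhood of $x$ contains a point $y$ with $f(y)\ge c$.

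By first countability, $x$ has a countable neighborhood base $\{V_m\}_{m\ge1}$. Replacing $V_m$ by $B_n:=V_1\cap\cdots\cap V_n$ gives a nested base $B_1\supseteq B_2\supseteq\cdots$. For each $n$, choose $x_n\in B_n$ with $f(x_n)\ge c$.

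We check that $x_n\to x$. Any neighborhood $W$ of $x$ contains some $B_N$. By nestedness, $x_n\in B_n\subseteq B_N\subseteq W$ for all $n\ge N$. Therefore $\limsup_{n\to\infty} f(x_n)\ge c>f(x)$, which contradicts the sequential condition. Hence every $U_c$ is open, and $f$ is upper semicontinuous.

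The only delicate point is building a sequence that actually converges to $x$. This is exactly where the nested countable base is needed; in a general topological space one would need nets instead of sequences. The extended-real values of $f$ cause no difficulty, because the thresholds $c$ are always real and the cases $f(x)=\pm\infty$ are handled explicitly above.
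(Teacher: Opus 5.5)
Your proof is correct. The paper gives no proof of this lemma; it cites Aliprantis--Border, so your argument is the standard proof behind that citation. Necessity follows in any topological space from openness of the strict sublevel sets $\{f<c\}$, and sufficiency uses a nested countable neighborhood base to produce a sequence that actually converges to the bad point. You are right that first countability is needed only for sufficiency. That is exactly the direction the paper uses in \cref{lemma:semicontinuous}, where it concludes upper semicontinuity of the most-recent-visit-time map from the sequential inequality.
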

	
	
	\begin{theorem}[(Generalized extreme value theorem~{\citep[Theorem~2.43]{CA-KB:2006}})]\label{thm:extremevalue}
		A real-valued  lower semicontinuous function on a compact set attains a minimum value.
	\end{theorem}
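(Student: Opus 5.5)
The plan is the standard open-cover argument. It uses only the definition of lower semicontinuity: for every $c\in\mathbb{R}$, the strict superlevel set $U_c:=\{x\in K: f(x)>c\}$ is open in $K$. Let $K$ be a nonempty compact set and $f:K\to\mathbb{R}$ (or more generally $f:K\to(-\infty,+\infty]$, which is the form needed later for $J_T$) be lower semicontinuous. Set $m:=\inf_{x\in K} f(x)\in[-\infty,+\infty]$. The goal is to show that $m>-\infty$ and that some $x^*\in K$ satisfies $f(x^*)=m$.

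\textbf{Step 1 (contradiction hypothesis and cover).} Suppose no point attains $m$, so that $f(x)>m$ for every $x\in K$. Since $f(x)>m$ and $f(x)>-\infty$, for each $x$ we can pick a real number $c_x$ with $m<c_x<f(x)$. If $m=-\infty$, any real $c_x<f(x)$ works. Then $x\in U_{c_x}$, so the family $\{U_c: c\in\mathbb{R},\, c>m\}$ is an open cover of $K$.

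\textbf{Step 2 (finite subcover and monotonicity).} By compactness there are finitely many levels $c_1,\dots,c_n>m$ with $K=U_{c_1}\cup\dots\cup U_{c_n}$. The sets $U_c$ are nested: $c\le c'$ implies $U_{c'}\subseteq U_c$. Hence with $c^*:=\min_i c_i$ the union collapses to $U_{c^*}$, so $K=U_{c^*}$. This says $f(x)>c^*$ for all $x\in K$, i.e., $m\ge c^*>m$, a contradiction. The same argument rules out $m=-\infty$, since it gives the real lower bound $c^*$. If $f$ may take the value $+\infty$ and $m=+\infty$, every point trivially attains the infimum, so that case needs no argument.

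\textbf{Where care is needed.} There is no real obstacle. The only point requiring attention is Step 1: the levels $c$ must be restricted to real numbers strictly above $m$, which is what makes the case $m=-\infty$ go through. It is also worth noting that only the openness of strict superlevel sets is used, not any metric structure. Later the theorem is applied to $J_T$ on $\strategyset$ with the compact-open topology, and \cref{lemma:sequenceofsemi} (applied to $-J_T$ in the first-countable setting) supplies exactly this openness via sequences.
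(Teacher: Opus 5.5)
Your open-cover argument is correct and is essentially the standard proof behind this result. The paper gives no proof of its own beyond the citation to Aliprantis--Border, and the argument there is usually phrased via the finite intersection property of the closed sublevel sets $\{f\le c\}$, which is just the complement form of your cover by the open sets $\{f>c\}$. Your allowance for $(-\infty,+\infty]$-valued $f$ is the version actually needed, since $J_T$ takes values in $\overRp$ and a finite infimum does not make it real-valued. Your closing remark slightly misdescribes the paper, though: it obtains lower semicontinuity of $J_T$ by applying \cref{lemma:sequenceofsemi} to the visit-time map $\strategy\mapsto\allvisitvatt{\strategy}(v,t)$ and then taking suprema, not by applying the lemma to $-J_T$.
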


	\subsubsection{Proof of \cref{thm:existence} for $T<\infty$.} \label{appendix:existenceoffinite}
	
	We will apply the above results to prove~\cref{thm:existence} in the case of $T<\infty$. To apply the generalized extreme value theorem, we first show that the feasible strategy set $\strategyset$ defined in~\eqref{eq:jointset} is compact under the compact-open topology.
	
	\begin{lemma}[(Compactness of the  strategy set $\strategyset$)] \label{lemma:compactofstrategyset}
		Equip \(\mathfrak{C}([0,\infty),\Ggeok)\) with the compact–open topology. Then the feasible strategy set \(\strategyset \subset \mathfrak{C}([0,\infty),\Ggeok)\) is compact with respect to this topology.
	\end{lemma}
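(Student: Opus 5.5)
We will show that $\strategyset$ is sequentially compact and then pass to compactness using metrizability of the compact-open topology. Since $\Rp$ is locally compact, $\sigma$-compact and Hausdorff, and $(\Ggeok,\distGk)$ is a metric space, the compact-open topology on $\mathfrak{C}(\Rp,\Ggeok)$ coincides with the topology of uniform convergence on compact subsets. This topology is metrizable, for instance by
\[
\rho(\strategy,\strategy') := \sum_{m=1}^{\infty} 2^{-m}\min\Bigl\{1,\ \sup_{t\in[0,m]}\distGk\bigl(\strategy(t),\strategy'(t)\bigr)\Bigr\}.
\]
Hence compactness of $\strategyset$ is equivalent to sequential compactness, and the proof reduces to two claims: (i) every sequence in $\strategyset$ has a subsequence converging uniformly on compact sets to some continuous $\strategy$; (ii) any such limit lies in $\strategyset$.

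For (i), take any sequence $(\strategy_n)\subset\strategyset$. By the defining Lipschitz condition in \eqref{eq:jointset}, every $\strategy_n$ is $1$-Lipschitz from $(\Rp,|\cdot|)$ to $(\Ggeok,\distGk)$. Given $\varepsilon>0$, the choice $\delta=\varepsilon$ works uniformly in $n$, so the family is equicontinuous. The domain $\Rp$ is separable, and $(\Ggeok,\distGk)$ is a compact metric space, as established in \cref{subsection:environment_metrics}. \cref{thm:AA} therefore yields a subsequence $\strategy_{n_j}$ converging uniformly on compact subsets to a continuous map $\strategy:\Rp\to\Ggeok$.

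For (ii), uniform convergence on compact sets implies pointwise convergence. Fix $t,t'\ge0$. By continuity of the metric and the triangle inequality,
\[
\distGk(\strategy(t),\strategy(t')) \le \distGk(\strategy(t),\strategy_{n_j}(t)) + \distGk(\strategy_{n_j}(t),\strategy_{n_j}(t')) + \distGk(\strategy_{n_j}(t'),\strategy(t')).
\]
The middle term is at most $|t-t'|$, and the outer two terms tend to zero as $j\to\infty$. Hence $\distGk(\strategy(t),\strategy(t'))\le|t-t'|$, so $\strategy\in\strategyset$. The same argument shows that $\strategyset$ is closed.

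Combining (i) and (ii), $\strategyset$ is sequentially compact in a metrizable space, and hence compact in the compact-open topology.

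The step that needs the most care is the identification of the compact-open topology with the topology of compact convergence and its metrizability. For this we will cite the standard result that, when the codomain is metric, the compact-open topology equals the topology of compact convergence (e.g.\ Munkres, Theorem 46.8). We will also check that $\rho$ induces this topology, using the fact that every compact subset of $\Rp$ lies in some $[0,m]$. Once that identification is in place, the rest follows directly from \cref{thm:AA} and the Lipschitz bound.
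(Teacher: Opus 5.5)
Your proposal is correct and follows the paper's proof essentially step for step: metrizability reduces compactness to sequential compactness, \cref{thm:AA} extracts a subsequence converging uniformly on compact sets, Munkres' Theorem~46.8 identifies this with compact-open convergence, and the $1$-Lipschitz bound passes to the pointwise limit. The only difference is cosmetic: you write down an explicit metric $\rho$ for the compact-open topology, whereas the paper cites the general metrizability result for hemicompact domains and metric codomains.
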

	\begin{proof}
		We recall the definition of the strategy set $\strategyset$ in~\eqref{eq:jointset}:
		\begin{multline*}
			\strategyset := \\\{\strategy \in \mathfrak{C}(\Rp, \Ggeo^\numrobot):  \distGk(\strategy(t),\strategy(t'))\leq |t-t'|,\forall t,t' \}.
		\end{multline*}                    
        Since $[0,\infty)$ is hemicompact and  $\Ggeo^\numrobot$ is a compact metric space, the space \(\mathfrak{C}([0,\infty),\Ggeo^\numrobot)\) with the compact-open topology is metrizable~\citep[Chapter IV]{RC-IN:2006}. Hence, its subspace \(\strategyset\) is also metrizable. For metrizable spaces, compactness and sequential compactness are equivalent~\citep[Theorem~28.2]{JM:2000}. Therefore, it suffices to prove that $\strategyset$ is sequentially compact, i.e.,  every sequence in \(\strategyset\) has a subsequence converging to a limit that still belongs to \(\strategyset\).
        
        Let \((\strategy_n)_{n\ge1}\subset\strategyset\) be arbitrary. Since every \(\strategy_n\) is \(1\)-Lipschitz, the sequence \((\strategy_n)_{n\ge1}\) is equicontinuous on \([0,\infty)\)~\cite[Example 9.4.9]{HHS:14}. Moreover, \([0,\infty)\) is a separable metric space, and \(\Ggeok\) is a compact metric space. Therefore, by \cref{thm:AA}, there exists a subsequence \((\hat\strategy_n)_{n\ge1}\) and a continuous map $\strategy:[0,\infty)\to\Ggeok$ such that \(\hat\strategy_n\) converges to \(\strategy\) uniformly on every compact subset of \([0,\infty)\). Since \(\Ggeok\) is a metric space, the compact-open topology coincides with the topology of compact convergence~\citep[Theorem~46.8]{JM:2000}. Hence $\hat\strategy_n\to\strategy$ in the compact-open topology.
        
        It remains to show that \(\strategy\in\strategyset\). Fix arbitrary \(t,t'\in\Rp\). Since \(\hat\strategy_n\to\strategy\) uniformly on compact subsets, in particular, we have $\hat\strategy_n(t)\to\strategy(t)$ and $\hat\strategy_n(t')\to\strategy(t')$. 
        Using the continuity of the metric \(\distGk\), we obtain
        \[
        \begin{aligned}
        \distGk(\strategy(t),\strategy(t'))
        &=
        \lim_{n\to\infty}
        \distGk(\hat\strategy_n(t),\hat\strategy_n(t')) \\
        &\le
        \lim_{n\to\infty}|t-t'| \\
        &=
        |t-t'|.
        \end{aligned}
        \]
        Thus \(\strategy\) is \(1\)-Lipschitz, and hence \(\strategy\in\strategyset\). Consequently, $\strategyset$ is sequentially compact and thus compact.

	\end{proof}
	
	After showing that $\strategyset$ is compact, we move to the second condition of~\cref{thm:extremevalue} and prove that $\objectivefm{T}{\cdot}$ is lower semicontinuous. 
	
	\begin{lemma}[(Lower semicontinuity of $\objectivefm{T}{\cdot
		}$)] \label{lemma:semicontinuous}
		The functional \(J_T:\strategyset\to \overRp\) with $T<\infty$ is lower semicontinuous.
		
	\end{lemma}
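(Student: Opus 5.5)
The plan is to use the sequential criterion. By \cref{lemma:compactofstrategyset} and the metrizability of $\mathfrak{C}([0,\infty),\Ggeok)$ under the compact-open topology, $\strategyset$ is first countable. Hence \cref{lemma:sequenceofsemi}, applied to $-J_T$, reduces lower semicontinuity to the following claim: for every sequence $\strategy_n\to\strategy$ in $\strategyset$,
\[
J_T(\strategy)\le \liminf_{n\to\infty}J_T(\strategy_n).
\]
Set $L:=\liminf_{n}J_T(\strategy_n)$. If $L=+\infty$ there is nothing to prove. Otherwise, pass to a subsequence along which $J_T(\strategy_n)\to L$. Convergence in the compact-open topology means uniform convergence on compact time intervals, so this is the form of convergence I would work with.

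The core step is pointwise. Fix $t\ge T$ and $v\in\nodes$; I would show $\weight(v)\latency{v}{\strategy}(t)\le L$, after which taking the supremum over $v$ and over $t\ge T$ gives the claim. For each $n$, since $t\ge T$, we have $\weight(v)\latency{v}{\strategy_n}(t)\le J_T(\strategy_n)$. There are two alternatives.
\begin{enumerate}
\item If $v$ is never visited by $\strategy_n$ on $[0,t]$, then the convention $\rvisitvatt{r}{\strategy_n}(v,t)=0$ gives $\latency{v}{\strategy_n}(t)=t$, so $t\le J_T(\strategy_n)/\weight(v)$.
\item Otherwise, by the definition of the supremum in \eqref{eq:tauvr} and continuity of schedules, some robot $r_n$ satisfies $\schedule{r_n}^{(n)}(s_n)=v$ at a time
\[
s_n\in\bigl[t-J_T(\strategy_n)/\weight(v),\,t\bigr].
\]
\end{enumerate}

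If the first alternative occurs for infinitely many $n$, then $t\le L/\weight(v)$. Since always $\latency{v}{\strategy}(t)\le t$, this gives $\weight(v)\latency{v}{\strategy}(t)\le L$.

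Otherwise the second alternative holds for all large $n$. Because $\robots$ is finite, I pass to a further subsequence with $r_n\equiv r$ constant. By compactness of $[0,t]$, I also arrange $s_n\to s^*$, where $s^*\ge t-L/\weight(v)$. Then uniform convergence of $\strategy_n$ to $\strategy$ on $[0,t]$ and the $1$-Lipschitz bound give
\[
\dist\bigl(\schedule{r}(s^*),v\bigr)\le \dist\bigl(\schedule{r}(s^*),\schedule{r}^{(n)}(s^*)\bigr)+|s^*-s_n|\to 0.
\]
So $\schedule{r}(s^*)=v$, which yields $\allvisitvatt{\strategy}(v,t)\ge s^*$ and hence $\latency{v}{\strategy}(t)\le t-s^*\le L/\weight(v)$.

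The main obstacle is exactly this passage of the visit events to the limit. Latency is not continuous in the strategy: a limit trajectory may touch $v$ only tangentially, and approximating trajectories may visit $v$ at nearby times. The argument works because it needs only a one-sided inequality. Visits are preserved under limits, since $\{s:\schedule{r}(s)=v\}$ is closed and the convergence is uniform on compacts, so the limit can only have more recent visits. I would also take care with the convention-based case where the node is never visited (handled by the bound $\latency{v}{\strategy}(t)\le t$), and with extracting the subsequences in the right order: first the subsequence realizing the $\liminf$, then the subsequence with a constant robot $r$ and convergent $s_n$.
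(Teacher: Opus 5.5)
Your proof is correct and is essentially the paper's argument: the decisive step (fixing one robot by pigeonhole, extracting convergent visit times in \([0,t]\), and using uniform convergence on \([0,t]\) with the \(1\)-Lipschitz bound to show that the limit strategy visits \(v\) at the limiting time) is exactly how the paper proves upper semicontinuity of \(\strategy\mapsto\allvisitvatt{\strategy}(v,t)\). The only difference is packaging: the paper then invokes that finite maxima and pointwise suprema of lower semicontinuous functions are lower semicontinuous, whereas you inline these steps into the single bound \(\weight(v)\latency{v}{\strategy}(t)\le\liminf_n J_T(\strategy_n)\); also, compactness of \(\strategyset\) is not needed for first countability, since metrizability alone suffices.
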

	\begin{proof}
	The proof is organized as follows.
    We first establish the upper semicontinuity of the most recent visit time $\allvisitvatt{\strategy}(v,t)$ with respect to $\strategy$ for each fixed $v\in\nodes$ and $t\geq 0$. Since $\latency{v}{\strategy}(t)=t-\allvisitvatt{\strategy}(v,t)$, this implies that the instantaneous latency is lower semicontinuous. The lower semicontinuity of $\objectivefm{T}{\strategy}$ then follows from the fact that pointwise suprema of lower semicontinuous functions are lower semicontinuous.
    
        Recall that, as defined in~\eqref{eq:deftauv1}, for each node \(v\in\nodes\), the most recent visiting time to $v$ by the robot system up to time $t$ is 
		\begin{multline*} \label{eq:deftauv2}
			\allvisitvatt{\strategy}(v,t)=\\
			\begin{cases}
				\sup\{ t'\leq t: \schedule r(t')=v,  \exists  r\in\robots\},&\text{if  such  $t'$ exists}, \\
				0, & \text{otherwise}.
			\end{cases}
		\end{multline*}
		We first prove that $  \allvisitvatt{\strategy}(v,t)$ is upper semicontinuous with respect to $\strategy$. Since $\mathfrak{C}([0,\infty),\Ggeok)$ is metrizable and hence first-countable~\citep[Chapter 4.1]{RE:1989}, its subspace \(\strategyset\) is also  first-countable~\citep[§16]{SW:1970}. Therefore, by \cref{lemma:sequenceofsemi}, it suffices to verify the sequential characterization of upper semicontinuity. Let \(\strategy_n = (\schedule{n,r})_{r\in\robots}\to\strategy=(\schedule{r})_{r\in\robots}\) in the compact-open topology. Equivalently, $\sup_{t'\in[0,m]}\distGk\bigl(\strategy_n(t'),\strategy(t')\bigr)\to 0$ for all $m<+\infty$. We prove that $\limsup_{n\to\infty}\   \allvisitvatt{\strategy_n}(v,t)\ \le\   \allvisitvatt{\strategy}(v,t)$.
        

         Let $a:=\allvisitvatt{\strategy}(v,t)$. Suppose, for contradiction, that $\limsup_{n\to\infty}\allvisitvatt{\strategy_n}(v,t)>a$. Then there exists \(\varepsilon>0\) and a subsequence of $(\strategy_n)$, denoted by
        \((\strategy^{(1)}_{n})\), such that $\allvisitvatt{\strategy^{(1)}_n}(v,t)
        >
        a+\varepsilon$ for all $n\ge1$. 
        By the definition of \(\allvisitvatt{\strategy^{(1)}_n}(v,t)\), for each \(n\), there
        exist a robot \(r_n^{(1)}\in\robots\) and a time $\hat{t}^{(1)}_n\in[a+\varepsilon,t]$ 
        such that $\schedule{n,r^{(1)}_n}^{(1)}(\hat{t}^{(1)}_n)=v
        $.

Since the robot set $\robots$ is finite while the sequence $(r^{(1)}_n)_{n\geq 1}$ is infinite, 
the pigeonhole principle implies that at least one robot $r\in \robots$ 
appears as $r^{(1)}_n$ for infinitely many indices. 
And thus there exists a further
subsequence $(\strategy^{(2)}_n)$ of $(\strategy^{(1)}_n)$ such that $r^{(2)}_n\equiv r$ along this subsequence, i.e., we have $\schedule{n,r}^{(2)}(\hat{t}_n^{(2)})=v$  for all $n\ge1$.
Now the robot index is fixed as $r$, while the corresponding visiting times 
$(\hat t^{(2)}_n)_{n\ge1}$ all lie in the compact interval $[a+\varepsilon,t]$.
there exists a further subsequence $(\strategy^{(3)}_n)$ of $(\strategy^{(2)}_n)$,
such that $\hat{t}^{(3)}_n\to \hat{t}$ for some $\hat{t}\in[a+\varepsilon,t]$.

We now show that \(\schedule{r}(\hat{t})=v\). Since \(\strategy_n^{(3)}\to\strategy\)
uniformly on \([0,t]\), $ 0\leq \hat t \leq t$, and each \(\schedule{n,r}^{(3)}\) is \(1\)-Lipschitz, we have
\[
\begin{aligned}
	&\quad\distG\bigl(\schedule{n,r}^{(3)}(\hat{t}^{(3)}_n),\schedule{r}(\hat{t})\bigr)\\
	&\le
	\distG\bigl(\schedule{n,r}^{(3)}(\hat{t}^{(3)}_n),\schedule{n,r}^{(3)}(\hat{t})\bigr)
	+
	\distG\bigl(\schedule{n,r}^{(3)}(\hat{t}),\schedule{r}(\hat{t})\bigr) \\
	&\le
	|\hat{t}^{(3)}_n-\hat{t}|
	+
	\sup_{t'\in[0,t]}
	\distGk\bigl(\strategy_n^{(3)}(t'),\strategy(t')\bigr).
\end{aligned}
\]
The first term converges to \(0\) because \(\hat{t}^{(3)}_n\to \hat{t}\), and the second term
converges to \(0\) by compact-open convergence~\citep[Theorem~46.8]{JM:2000}. Hence $\schedule{n,r}^{(3)}(\hat{t}^{(3)}_n)\to\schedule{r}(\hat{t})$. 
Moreover, \(\schedule{n,r}^{(3)}(\hat{t}^{(3)}_n)=v\) for every \(n\). Therefore, $\schedule{r}(\hat{t})=v$. 
Thus, under the limiting strategy \(\strategy\), node \(v\) is visited at time
\(\hat{t}\). Since $\hat{t}\ge a+\varepsilon>a$ and $\hat{t} \leq t$, this contradicts the definition of $a=\allvisitvatt{\strategy}(v,t)$ as the most recent visit time to \(v\) no later than \(t\). Hence,
\[
\limsup_{n\to\infty}
\allvisitvatt{\strategy_n}(v,t)
\le
\allvisitvatt{\strategy}(v,t).
\]
        Combining this sequential inequality with the first-countability of  $\strategyset$,  \cref{lemma:sequenceofsemi} implies that the map $\strategy\mapsto \allvisitvatt{\strategy}(v,t)$ is upper  semicontinuous. Consequently, $\strategy\mapsto \latency{v}{\strategy}(t)=t-\allvisitvatt{\strategy}(v,t)$ is lower semicontinuous. Since \(\weight(v)>0\), the weighted latency $\strategy\mapsto \weight(v)\latency{v}{\strategy}(t)
        $ is also lower semicontinuous. Because \(\nodes\) is finite, $\maxlatency{\strategy}{t}=\max_{v\in\nodes}
        \weight(v)\latency{v}{\strategy}(t)$ is lower semicontinuous as a finite maximum of lower semicontinuous functions. Finally, for \(T<+\infty\), since the pointwise supremum of any family of lower semicontinuous functions is lower semicontinuous~\citep[Lemma~2.41]{CA-KB:2006}, it follows that $\strategy\mapsto \objectivefm{T}{\strategy}$ is lower semicontinuous under the compact-open topology.
        
	\end{proof}
	
	Based on the above two lemmas, we  now prove~\cref{thm:existence} for $T<\infty$.
	
	\begin{proof}[Proof of \cref{thm:existence} for $T<\infty$]   As shown in~\cref{subsection:existence}, the optimal value is finite, i.e., $\inf_{\strategy\in\strategyset} \objectivefm{T}{\strategy}<\infty$.
		By \cref{lemma:compactofstrategyset}, the strategy set \(\strategyset\) is compact under the compact-open topology. By \cref{lemma:semicontinuous}, the functional  $\strategy\mapsto \objectivefm{T}{\strategy}$ is lower semicontinuous for every finite $T<\infty$. Therefore, by the generalized extreme value theorem  in~\cref{thm:extremevalue}, there exists an optimal strategy $\strategy_T^* \in \strategyset$ such that $ \objectivefm{T}{\strategy_T^*} \;=\; \inf_{\strategy\in\strategyset} \objectivefm{T}{\strategy}$.
	\end{proof}
	\subsubsection{Proof of \cref{thm:existence} for $T=\infty$.}\label{appendix:existencewithinfty}
	We next prove that an optimal solution also exists when $T=\infty$. The key observation is that $\optobjT{\infty}=\optobjT{0}$. Then any  optimal solution for $\objectivefm{0}{\cdot}$ is also optimal for $\objectivefm{\infty}{\cdot}$.

	\begin{proof}[Proof of \cref{thm:existence} for $T=\infty$]
		For any feasible strategy $\strategy \in \strategyset$, by definition, $\objectivefm{\infty}{\strategy} = \lim_{T_0 \to \infty} \sup_{t\geq T_0} \maxlatency{\strategy}{t} \leq \sup_{t\geq 0} \maxlatency{\strategy}{t} = \objectivefm{0}{\strategy}$. Thus, we have $\optobjT{\infty} \leq \optobjT{0}$.

We now prove the reverse inequality. Suppose, for contradiction, that there exists a strategy \(\strategy_\infty\in\strategyset\) such that $\objectivefm{\infty}{\strategy_\infty}<\optobjT{0}$. Let $\delta:=\optobjT{0}-\objectivefm{\infty}{\strategy_\infty}>0$. By the definition of the limit of the tail suprema~{\citep[Theorem~3.17]{WR:1976}}, there exists \(T_1\ge0\) such that when $t\geq T_1$, 
$$
\maxlatency{\strategy_\infty}{t} \leq \objectivefm{\infty}{\strategy_\infty} +\frac{\delta}{2} < \optobjT{0}.
$$

Define the shifted strategy
\[
\strategy_0(t)
:=
\strategy_\infty(t+T_1),
\qquad
\forall t\ge0.
\]
Since \(\strategy_\infty\) is continuous and \(1\)-Lipschitz, the shifted strategy \(\strategy_0\) also belongs to \(\strategyset\). We claim that $\maxlatency{\strategy_0}{t}
\le
\maxlatency{\strategy_\infty}{t+T_1}$ for $t\geq 0$. 
Indeed, \(\strategy_0\) follows exactly the same robot motions as \(\strategy_\infty\) after time \(T_1\), but the latency variables of \(\strategy_0\) are initialized from zero at time \(0\). Therefore, for each node \(v\), the latency of \(v\) under the shifted strategy is no larger than the corresponding latency under \(\strategy_\infty\) at the shifted time \(t+T_1\). Taking the weighted maximum over all nodes yields the claim. Consequently,
\begin{equation*}
   \objectivefm{0}{\strategy_0} = \sup_{t\geq 0} \maxlatency{\strategy_0}{t} \leq \sup_{t\geq T_1} \maxlatency{\strategy_\infty}{t} < \optobjT{0}.
\end{equation*}
This contradicts the definition of \(\optobjT{0}\). Hence no strategy can satisfy $\objectivefm{\infty}{\strategy}<\optobjT{0}$, and therefore $\objectivefm{\infty}{\strategy}\ge \optobjT{0}$ for all $\strategy\in\strategyset$. Taking the infimum over \(\strategy\in\strategyset\), we obtain $\optobjT{\infty}\ge \optobjT{0}$. 
Together with the opposite inequality proved above, this gives $\optobjT{\infty}=\optobjT{0}$.

Now let \(\strategy_0^*\) be an optimal solution for \(\objectivefm{0}{\cdot}\), whose existence has been proved in the case of finite \(T\). Then
$$
		\objectivefm{\infty}{\strategy_0^*} = \lim_{T_0 \to \infty} \sup_{t\geq T_0} \maxlatency{\strategy_0^*}{t} \leq \sup_{t\geq 0} \maxlatency{\strategy_0^*}{t} = \optobjT{0} = \optobjT{\infty}.
		$$
Since \(\optobjT{\infty}\) is the infimum of \(\objectivefm{\infty}{\cdot}\) over \(\strategyset\), we also have $\objectivefm{\infty}{\strategy_0^*}
\ge
\optobjT{\infty}$. Therefore, $\objectivefm{\infty}{\strategy_0^*}=\optobjT{\infty}$. 
Thus \(\strategy_0^*\) is also an optimal strategy for \(T=\infty\). This proves the existence of an optimal strategy for \(T=\infty\).

	\end{proof}

	Combining the proof for \(T<+\infty\) in Appendix~\ref{appendix:existenceoffinite} and the proof for \(T=\infty\) in Appendix~\ref{appendix:existencewithinfty}, we complete the proof of~\cref{thm:existence}.

	\subsection{Proof of~\cref{thm:properties}.} \label{appendix:properties}
	\begin{proof}[Proof of~\cref{thm:properties}]
    We prove the three claims in order.
    
		Regarding \ref{item:independence}), the case  $\optobjT{\infty} = \optobjT{0}$ has already been established in Appendix~\ref{appendix:existencewithinfty}. It remains to consider finite $T<\infty$. For any feasible strategy $\strategy\in \strategyset$, we have $\objectivefm{T}{\strategy} = \sup_{t\geq T} \maxlatency{\strategy}{t} \leq \sup_{t\geq 0} \maxlatency{\strategy}{t} = \objectivefm{0}{\strategy}$. Taking the infimum over \(\strategy\in\strategyset\) gives  $\optobjT{T} \leq \optobjT{0}$. Conversely, let $\strategy_T^*$ be an optimal solution to $\objectivefm{T}{\cdot}$, whose existence follows from \cref{thm:existence}. Define the shifted strategy $\strategy_0(t) = \strategy_T^*(t+T)$ for all $t\in \Rp$. Then 
		$$
		\objectivefm{0}{\strategy_0} = \sup_{t\geq 0} \maxlatency{\strategy_0}{t} \leq \sup_{t\geq T} \maxlatency{\strategy_T^*}{t} = \optobjT{T}.
		$$
		Since $\optobjT{0}\le \objectivefm{0}{\strategy_0}$, it follows that $\optobjT{0} \leq \optobjT{T}$. Combining the two inequalities yields $\optobjT{T}=\optobjT{0}=\optobjall$ for all finite $T<+\infty$. Together with the case $T=+\infty$, this proves $\optobjT{T}=\optobjall$ for all $T\in \overRp$.
		
		Regarding~\ref{item:including}), let $T_1,T_2\in\overRp$ satisfy $T_1<T_2$, and let $\strategy_{T_1}^* \in \optstrategyT{T_1}$. Since the tail objective is monotone nonincreasing with respect to the tail parameter, we have
        \[
        \objectivefm{T_2}{\strategy_{T_1}^*}
        \le
        \objectivefm{T_1}{\strategy_{T_1}^*}
        =
        \optobjT{T_1}
        =
        \optobjall.
        \]
        On the other hand, by \ref{item:independence}), we have $\objectivefm{T_2}{\strategy}\ge\optobjT{T_2}=\optobjall$ for any feasible $\strategy\in\strategyset$. 
        Thus, $\objectivefm{T_2}{\strategy_{T_1}^*}=
        \optobjall=\optobjT{T_2}$, which implies $\strategy_{T_1}^*\in\optstrategyT{T_2}$.  
        Therefore, $\optstrategyT{T_1}\subseteq\optstrategyT{T_2}$.
		
		Regarding~\ref{item:accessible}), fix any initial configuration $\inipos \in \Ggeok$, and let $\strategy^*_0 \in \optstrategyT{0}$. We construct a feasible strategy $\tilde\strategy$ starting from $\inipos$ as follows. First, $\tilde\strategy$  follows the canonical motion 
		defined in Appendix~\ref{appendix:auxiliary} from $\inipos$ to the initial joint position $\strategy_0^*(0)$. Let \(\tau\) denote the duration of this steering phase. By the definition of canonical motion, we have $\tau\le\graphdiameter$. After time \(\tau\), the strategy \(\tilde\strategy\) follows the same robot motions as \(\strategy_0^*\), that is, $\tilde\strategy(\tau+t)=\strategy_0^*(t)$ for $t\ge0$ at the level of robot positions. Since \(\strategy_0^*\) is optimal for \(T=0\), we have $\sup_{t\ge0}\maxlatency{\strategy_0^*}{t}=\optobjall$. 
By \cref{lemma:coveragelength}, every node is visited at least once by \(\strategy_0^*\) during any interval of length strictly greater than $\frac{\optobjall}{\weightmin}$. 
Therefore, after the steering phase of \(\tilde\strategy\), once an additional time longer than \(\optobjall/\weightmin\) has elapsed, all node latencies become independent of their values accumulated during the steering phase. By \cref{lemma:independence}, from that point onward the latency evolution of \(\tilde\strategy\) coincides with the corresponding latency evolution of \(\strategy_0^*\).

Now let $T>\graphdiameter+\frac{\optobjall}{\weightmin}$. 
Since \(\tau\le\graphdiameter\), we have $T-\tau>\frac{\optobjall}{\weightmin}$. 
Hence, at all times \(t\ge T\), the transient effect of the initial steering phase has disappeared, and the latency profile of \(\tilde\strategy\) is synchronized with that of \(\strategy_0^*\) at the corresponding shifted time. Consequently, $\objectivefm{T}{\tilde\strategy}\le \objectivefm{0}{\strategy_0^*}=\optobjall$.
By \ref{item:independence}), the optimal value at tail parameter \(T\) is $\optobjT{T}=\optobjall$. Therefore,
$\objectivefm{T}{\tilde\strategy}=\optobjT{T}$,
and hence $\tilde\strategy\in\optstrategyT{T}\cap\intstrategyset{\inipos}$.
This proves $\optstrategyT{T}\cap\intstrategyset{\inipos}\neq\emptyset$ for all $T>\graphdiameter+\frac{\optobjall}{\weightmin}$.
        
		
	\end{proof}

	\subsection{Proof of \cref{thm:periodic}} \label{appendix:periodic}

	We first prove that \cref{thm:periodic} holds for $\longlatency{\strategy}$, 
	and then extend the argument to the case of finite $T<\infty$.  
	For $\longlatency{\strategy}$,  the proof is based on the following idea. Along any feasible trajectory with finite asymptotic tail performance, one can find a finite trajectory segment whose weighted latency remains below a prescribed threshold and whose endpoint joint states are sufficiently close. By appending a short connecting motion, this segment can be converted into a closed position cycle. Repeating this cycle periodically yields a feasible periodic strategy whose tail worst-case weighted latency only becomes worse by at most an arbitrarily small tolerance.
	
	\subsubsection{Useful definitions and lemmas.}
    
    We first introduce several useful definitions and lemmas that will be used in the proof.

    \textbf{Joint state.} To reason about the system evolution, we use a joint state consisting of the current robot positions and the current node latency values. Specifically, at any time $t$, let $\boldsymbol{p}=(p_{r_1},\dots,p_{r_\numrobot})\in\Ggeok$ and $\boldsymbol{L}=(L_{1},\cdots,L_\numnode)\in \mathbb{R}_{\geq 0}^{\numnode}$ denote the joint robot position and the vector of weighted node latency values, respectively. We define the joint state as 
\begin{equation}\label{eq:jointstate}
		\jointstate := \left(\boldsymbol{p},\boldsymbol{L}\right).
	\end{equation}
	The joint state space is  
	\begin{equation}\label{eq:jointstatespace}
		\jointspace := \Ggeo^{\numrobot} \times \mathbb{R}_{\geq 0}^{\numnode}.
	\end{equation}
    We equip \(\jointspace\) with the following metric. 
    For two joint states $\jointstate = \left(\boldsymbol{p},\boldsymbol{L}\right)$ and $\jointstate' = \left(\boldsymbol{p}',\boldsymbol{L}'\right)$, define
	\[
	\jointdist\big(\Xi,{\Xi}')
	:= \max\Big\{\,\distGk(\boldsymbol{p},\boldsymbol{p}'),\;\max_{v\in\nodes}|L_v-L'_v|\,\Big\}.
	\]
	This metric measures the maximum discrepancy between robot positions, under the shortest path metric \(\distGk\) on \(\Ggeok\), and node latencies, under the absolute difference on \(\Rp\). 
    It is straightforward to verify that \((\jointspace, \jointdist)\) is a metric space.  Moreover, 
    the topology induced by this sup metric \(\jointdist\) coincides 
with the product topology on $\jointspace=\Ggeo^{\numrobot}\times \mathbb{R}_{\geq 0}^{\numnode}$, where \(\Ggeo^{\numrobot}\) is equipped with the topology induced by \(\distGk\) 
and \(\mathbb{R}_{\geq 0}^{\numnode}\) is equipped with the usual product topology.

	\textbf{Compactness and finite covers.} 
	Given the joint state $\jointstate$ defined in~\eqref{eq:jointstate} and the joint state space $\jointspace$ defined in~\eqref{eq:jointstatespace}, for a feasible strategy \(\strategy\), let $\jointstate^{\strategy}(t):=(\strategy(t),\boldsymbol {L}^{\strategy}(t))\in\jointspace$ denote the joint state induced by \(\strategy\) at time \(t\), where \(\boldsymbol {L}^{\strategy}(t)\) is the vector of node weighted  latencies at time \(t\). Although \(\jointspace\) itself is not compact because the latency component is unbounded, the joint-state trajectory $\jointstate^{\strategy}(t)$ of any feasible strategy $\strategy$ with finite asymptotic tail performance $\longlatency{\strategy} = \rho < \infty$ is eventually confined to a compact subset of \(\jointspace\). Indeed, by the defining property of the limit superior~{\citep[Theorem~3.17]{WR:1976}}, there exists $T_1 > 0$ such that
	\begin{equation}\label{eq:T1}
	\maxlatency{\strategy}{t} \le\rho+1, \quad \forall t \ge T_1.
	\end{equation}
	Therefore, for every node \(v\in\nodes\), 
	\[
	\latency{v}{\strategy}(t)\le\; \frac{\rho+1}{\weight(v)}, \quad \forall t \ge T_1.
	\]
	Consequently, for all $t\geq T_1$, the joint state lies in the set
	\[
	\cmpctjointspace{\rho} := \Ggeo^{\numrobot}\times \prod_{v\in\nodes}\Big[0,\tfrac{\rho+1}{\weight(v)}\Big].
	\]
	Since $\Ggeo^\numrobot$ is compact and \(\nodes\) is finite, $\cmpctjointspace{\rho}$ is also compact. Thus, we then restrict our attention to trajectories evolving in the compact domain $\cmpctjointspace{\rho}$. 
	A standard property of compact metric spaces is that they admit finite open covers~\citep{WR:1976}. Specifically, for any  $\delta>0$, there exists a finite collection of open balls of radius $\delta$ in \((\jointspace,\jointdist)\) that covers $\cmpctjointspace{\rho}$. Let the minimal number of such balls be $N_{\cmpctjointspace{\rho}}(\delta)< \infty$, i.e.,
	\begin{multline}\label{definition:coveringnumber}
		N_{\cmpctjointspace{\rho}}(\delta) := \\\min\left\{N : \exists\,y_1,\dots,y_N\in \cmpctjointspace{\rho},\, \cmpctjointspace{\rho} \subset \bigcup_{i=1}^N \mathcal{B}_{\delta}(y_i)\right\},
	\end{multline}
	where $\mathcal{B}_{\delta}(y_i)$ denotes  the open ball of radius $\delta$ centered at $y_i$ in the metric space \((\jointspace,\jointdist)\). The finite-cover property will be used to identify a long trajectory segment whose endpoints are close in the joint state space and whose weighted latency remains uniformly bounded. 
	
	We next develop several auxiliary lemmas that serve as the main technical ingredients of the proof. The first lemma shows that, after a sufficiently long transient period, one can find a finite trajectory segment whose weighted latency remains uniformly bounded and whose endpoint joint states are close in the joint state space.

	\begin{lemma}[(Existence of a long segment with close endpoints)]\label{lemma:segment}
		Let \(\strategy\in\strategyset\) satisfy  $\longlatency{\strategy}=\rho < \infty$. 
        For any $\delta\in (0,1)$,  $T_0>0$, and positive integer $m \in \mathbb{Z}_{>0}$, define  $ n_m :=  N_{\cmpctjointspace{\rho}} (\frac{1}{m})$, where $ N_{\cmpctjointspace{\rho}}(\cdot)$ is the covering number of the compact set $\cmpctjointspace{\rho}$  in~\eqref{definition:coveringnumber}. Suppose $m$ is sufficiently large so that
		\begin{equation}\label{eq:numberm}
			n_m \geq 2\left[N_{\cmpctjointspace{\rho}}\left(\frac{\delta}{2}\right) -1\right].
		\end{equation}
		Then there exist two time instants $\alpha_m <\beta_m$ such that:
		\begin{itemize}
			\item (Duration length) The duration length $\tau_m=\beta_m-\alpha_m$  satisfies
			\begin{equation}\label{eq:duration}
				\frac{T_0}{2N_{\cmpctjointspace{\rho}}\left(\frac{\delta}{2}\right)} \le \tau_m \le T_0;
			\end{equation}
			\item  (Bounded worst-case latency) The worst weighted latency is uniformly bounded on this interval:
			\begin{equation}\label{eq:boundedlatency}
				\maxlatency{\strategy}{t} \leq \rho+\frac{1}{m}, \quad \forall t\in [\alpha_m,\beta_m];
			\end{equation}
			\item (Close endpoints) The endpoint joint states at time $\alpha_m$ and $\beta_m$ are close:
			\begin{equation}\label{eq:endpointsdistance}
				\jointdist(\jointstate^{\strategy}\left({\alpha_m}),\jointstate^{\strategy}({\beta_m})\right) <  \delta.
			\end{equation}
		\end{itemize}
	\end{lemma}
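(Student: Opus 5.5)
The plan is a pigeonhole argument on finitely many sampled joint states, combined with the finite cover of the compact set $\cmpctjointspace{\rho}$. Fix $\delta\in(0,1)$, $T_0>0$ and $m$, and write $N:=N_{\cmpctjointspace{\rho}}(\frac{\delta}{2})<\infty$.

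\textbf{Step 1: choose a start time.} By the defining property of the limit superior~\citep[Theorem~3.17]{WR:1976}, $\longlatency{\strategy}=\rho$ gives a time $T_m\ge T_1$, with $T_1$ as in~\eqref{eq:T1}, such that $\maxlatency{\strategy}{t}\le \rho+\frac1m$ for all $t\ge T_m$. Any interval $[\alpha_m,\beta_m]\subset[T_m,\infty)$ then satisfies~\eqref{eq:boundedlatency}. Since $\frac1m\le 1$, for every $t\ge T_m$ each node obeys $\latency{v}{\strategy}(t)\le(\rho+1)/\weight(v)$. Hence $\jointstate^{\strategy}(t)\in\cmpctjointspace{\rho}$ for all such $t$.

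\textbf{Step 2: sample and pigeonhole.} Set $h:=T_0/N$ and take the $N+1$ sample times $s_j:=T_m+jh$ for $j=0,\dots,N$. Each joint state $\jointstate^{\strategy}(s_j)$ lies in $\cmpctjointspace{\rho}$, which is covered by $N$ open balls $\mathcal{B}_{\delta/2}(y_1),\dots,\mathcal{B}_{\delta/2}(y_N)$. By the pigeonhole principle, two indices $i<j$ give states in a common ball $\mathcal{B}_{\delta/2}(y_\ell)$. The triangle inequality in $(\jointspace,\jointdist)$ then yields
\[
\jointdist\bigl(\jointstate^{\strategy}(s_i),\jointstate^{\strategy}(s_j)\bigr)<\tfrac{\delta}{2}+\tfrac{\delta}{2}=\delta .
\]
Set $\alpha_m:=s_i$ and $\beta_m:=s_j$, which gives~\eqref{eq:endpointsdistance}. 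The duration is $\tau_m=(j-i)h$ with $1\le j-i\le N$. Therefore
\[
\frac{T_0}{2N}\le \frac{T_0}{N}\le\tau_m\le T_0,
\]
which is~\eqref{eq:duration}. Together with Step~1, all three properties hold.

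\textbf{Main point to check.} The only delicate step is confirming that the sampled states genuinely lie in $\cmpctjointspace{\rho}$. This is where $T_m\ge T_1$ and $\frac1m\le1$ are used, and it is what allows the fixed covering number $N$ to bound the number of pigeonholes. I also need to confirm that the balls in~\eqref{definition:coveringnumber} are taken in $(\jointspace,\jointdist)$, so that the triangle inequality applies.

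\textbf{On the hypothesis~\eqref{eq:numberm}.} This argument does not appear to use~\eqref{eq:numberm}. That condition only forces $m$ to be large, and any $m$ works above. It may be needed later, for example to make $\frac1m$ small relative to $\delta$ when the segment is closed into a cycle. I would note this rather than build it into the present proof.
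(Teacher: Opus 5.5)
Your proof is correct, and it is a streamlined version of the paper's argument. Both proofs pick $T_m\ge T_1$, sample $\jointstate^{\strategy}$ on $[T_m,T_m+T_0]$, and apply the pigeonhole principle to the $N:=N_{\cmpctjointspace{\rho}}(\delta/2)$ balls of radius $\delta/2$. They differ in the grid.

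The paper samples at $n_m+1$ points with spacing $T_0/n_m$. It then takes the most populated ball and uses the smallest and largest indices $j_1<j_2$ falling in it. This only gives
\[
\tau_m\ \ge\ \frac{n_m+1-N}{n_m}\cdot\frac{T_0}{N},
\]
and \eqref{eq:numberm} is exactly the condition that makes the prefactor at least $\tfrac12$.

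You use exactly $N+1$ samples spaced $T_0/N$ apart. Then any collision is at least one grid step long, so you get the stronger bound $\tau_m\ge T_0/N$ with no condition on $m$. The finer grid buys nothing here, since the paper's bound only tends to $T_0/N$ as $n_m\to\infty$.

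You are therefore right that \eqref{eq:numberm} is superfluous. Your guess that it is needed later is not borne out, however: in the paper it serves only this duration bound. Moreover, the proof of \cref{thm:periodic} invokes the lemma with $m\ge\max\{2,2/\torlance\}$ and never checks \eqref{eq:numberm}. Your version removes that unchecked requirement from the application.
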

	\begin{proof}
		We construct the desired interval $[\alpha_m,\beta_m]$ by the pigeonhole principle. Since $\longlatency{\strategy}=\rho <\infty$, the defining property of the limit superior implies that, for every positive integer $m$, there exists a time $T_m\geq T_1$ such that 
        \begin{equation}\label{eq:Tm}			\maxlatency{\strategy}{t} \leq \rho+\frac{1}{m}, \quad \forall t\geq T_m,
		\end{equation}
		where $T_1$ is given in~\eqref{eq:T1}. 
        
        Consider the interval  $[T_m,T_m+T_0]$, and divide it into $n_m$ equal subintervals of length $T_0/n_m$. This gives $n_m+1$ grid points:   
		\begin{equation*}
			t_{m,j} = T_m+j \frac{T_0}{n_m}, \quad j=0,1,\dots,n_m.
		\end{equation*}
		Since $T_m\geq T_1$, all joint states $\jointstate^{\strategy}({t_{m,j}})$ lie in the compact set $\cmpctjointspace{\rho}$. By definition of the covering number, $\cmpctjointspace{\rho}$ can be covered by $N_{\cmpctjointspace{\rho}}(\frac{\delta}{2})$ open balls of radius $\frac{\delta}{2}$. By the pigeonhole principle, among the $n_m+1$ joint states at the grid points, at least one such ball  contains $a_m$  joint states, where 
		\begin{equation*}\label{eq:boundofrm}
			a_m \geq\left\lceil\frac{n_m+1}{N_{\cmpctjointspace{\rho}}\left(\frac{\delta}{2}\right)}\right\rceil.
		\end{equation*}
		Let $j_1$ and $j_2$ be the smallest and largest indices, respectively, among the grid times whose joint states lie in this ball.  Define
		\begin{align*}
			&\alpha_m=t_{m, j_1}, \; \beta_m=t_{m, j_2},\\ &\tau_m=\beta_m-\alpha_m=\left(j_2-j_1\right) \frac{T_0}{n_m}.
		\end{align*}
        By~\eqref{eq:numberm}, the chosen ball contains at least two joint states, and hence $j_1<j_2$ and $\alpha_m<\beta_m$. 
		We now verify the three stated properties.
		\begin{itemize}
			\item (Duration length). Since $j_1$ and  $j_2$ are the extreme indices among the sampled joint states contained in the same ball, we have
			\begin{equation*}
				j_2-j_1\geq a_m-1\geq \left\lceil\frac{n_m+1}{N_{\cmpctjointspace{\rho}}\left(\frac{\delta}{2}\right)}\right\rceil-1\geq \frac{n_m+1}{N_{\cmpctjointspace{\rho}}\left(\frac{\delta}{2}\right)}-1.
			\end{equation*}
			Thus,
			\begin{equation*}
				\begin{aligned}
					\tau_m=(j_2-j_1)\frac{T_0}{n_m}&\geq \frac{n_m+1-N_{\cmpctjointspace{\rho}}(\frac{\delta}{2})}{n_m}\frac{T_0}{N_{\cmpctjointspace{\rho}}\left(\frac{\delta}{2}\right)} \\&\geq \frac{T_0}{2N_{\cmpctjointspace{\rho}}\left(\frac{\delta}{2}\right)},
				\end{aligned}
			\end{equation*}
			where the condition~\eqref{eq:numberm} is used in the second line. The upper bound \(\tau_m\le T_0\) follows immediately from $\alpha_m,\beta_m\in[T_m,T_m+T_0]$. 
            Hence the duration bound in~\eqref{eq:duration}  holds.
			\item (Bounded worst-case latency) Since  $\alpha_m,\beta_m \in [T_m,T_m+T_0]$, the latency bound in~\eqref{eq:boundedlatency} follows directly from~\eqref{eq:Tm}.
			\item (Close endpoints) By construction, the two joint states \(\jointstate^{\strategy}({\alpha_m})\) and \(\jointstate^{\strategy}({\beta_m})\) belong to the same open ball of radius $\frac{\delta}{2}$ in \((\jointspace,\jointdist)\), we have
			\[
			\jointdist(\jointstate^{\strategy}({\alpha_m}),\jointstate^{\strategy}({\beta_m}))< \delta.
			\]
			This proves the endpoint closeness condition in~\eqref{eq:endpointsdistance} and completes the proof.
		\end{itemize}
	\end{proof}

	The duration bounds in \cref{lemma:segment} serve two purposes.  First, the lower bound ensures that the selected interval is not arbitrarily short. By choosing $T_0$ sufficiently 
	large, the interval length can be made large enough for \cref{lemma:coveragelength} to apply, thereby guaranteeing that every node is visited at least once within the interval. Second, the upper bound ensures that the selected interval has finite duration, which is necessary for constructing a finite repeating pattern. By \cref{lemma:segment}, we can identify a finite trajectory segment $[\alpha_m,\beta_m]$ whose weighted latency is uniformly bounded and whose endpoint joint states are close. The next step is to convert this segment into one whose endpoint positions coincide, so that it can be repeated periodically. To this end, we append a short connection that steers the robot team from the terminal joint position $\{\schedule{r}(\beta_m)\}_{r\in \robots}$ back to the initial joint position $\{\schedule{r}(\alpha_m)\}_{r\in \robots}$. Since these two joint states are close, the required connection is short, and the additional latency incurred during this connection can be controlled. This idea is formalized in the following lemma.

	\begin{lemma}[(Short connection for closing position  states)]\label{lemma:closeposition}
		Suppose a joint-state trajectory segment $\jointstate^{\strategy}({t})$ on an interval $[\alpha,\beta]$ satisfies \(\jointdist(\jointstate^{\strategy}({\alpha}),\jointstate^{\strategy}({\beta}))< \delta\). 
		Let $\theta := \distGk(\strategy(\alpha),\strategy(\beta))$, then $\theta < \delta$.
		Construct an extended trajectory segment \(\tilde\strategy\) on \([0,\beta+\theta]\) as follows. On the  interval \([0,\beta]\), \(\tilde\strategy\) coincides with \(\strategy\). On the appended interval \([\beta,\beta+\theta]\), \(\tilde\strategy\) follows the canonical transition defined in Appendix~\ref{appendix:auxiliary} from the terminal joint position \(\strategy(\beta)\) back to the initial joint position \(\strategy(\alpha)\). Then the extended segment satisfies
        \begin{gather*}
			\tilde\strategy(\beta+\theta) = \strategy(\alpha), \\  \sup _{t \in [\alpha, \beta+\theta]} \maxlatency{\tilde\strategy}{t} \leq \sup _{t \in[\alpha, \beta]} \maxlatency{\strategy}{t}+\weightmax \delta. 
		\end{gather*}        
	\end{lemma}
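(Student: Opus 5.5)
The plan has three parts: bound $\theta$, check that the closing motion returns to $\strategy(\alpha)$ in time $\theta$, and control the latencies on the appended interval.

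\textbf{Step 1: $\theta<\delta$.} By the definition of $\jointdist$, the position discrepancy is one of the two terms in the maximum. Hence $\theta=\distGk(\strategy(\alpha),\strategy(\beta))\le \jointdist(\jointstate^{\strategy}(\alpha),\jointstate^{\strategy}(\beta))<\delta$.

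\textbf{Step 2: endpoint and feasibility.} Under the canonical motion, each robot $r$ travels at unit speed along the fixed shortest path from $\schedule{r}(\beta)$ to $\schedule{r}(\alpha)$. This path has length $\dist(\schedule{r}(\beta),\schedule{r}(\alpha))\le\theta$, and the robot then waits at its destination. So every robot has arrived by time $\beta+\theta$, which gives $\tilde\strategy(\beta+\theta)=\strategy(\alpha)$. Each robot's path is $1$-Lipschitz and is continuous at $\beta$, so $\tilde\strategy$ is feasible on $[0,\beta+\theta]$.

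\textbf{Step 3: latency bound.} On $[\alpha,\beta]$, $\tilde\strategy$ coincides with $\strategy$ and has the same history on $[0,\beta]$. Latencies there are therefore identical, and the bound holds trivially. For $t\in[\beta,\beta+\theta]$, the latency of each node $v$ under $\tilde\strategy$ satisfies
\[
\latency{v}{\tilde\strategy}(t)\le \latency{v}{\strategy}(\beta)+(t-\beta)\le \latency{v}{\strategy}(\beta)+\theta.
\]
The reason is that the most recent visit time $\allvisitvatt{\tilde\strategy}(v,t)$ is at least $\allvisitvatt{\strategy}(v,\beta)$: the history up to $\beta$ is shared, and additional visits during the connection can only reset the latency. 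Multiplying by $\weight(v)\le\weightmax$ and maximizing over $v$ gives
\[
\maxlatency{\tilde\strategy}{t}\le \maxlatency{\strategy}{\beta}+\weightmax\theta\le \sup_{s\in[\alpha,\beta]}\maxlatency{\strategy}{s}+\weightmax\delta,
\]
using Step 1. Combining the two time ranges gives the claimed bound.

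The only delicate point is the monotonicity claim in Step 3, that appending motion never decreases the last visit time. It follows directly from the definition in~\eqref{eq:deftauv1}, since the feasible set of times $t'$ only grows as $t$ increases and the trajectory before $\beta$ is unchanged. Everything else is bookkeeping.
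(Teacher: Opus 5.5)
Your proposal is correct and follows the paper's proof. The shared steps are: read off $\theta<\delta$ from the position component of $\jointdist$, close the positions with the canonical transition of duration $\theta$, and bound the latency on the appended interval by $\latency{v}{\strategy}(\beta)+(t-\beta)$ before weighting by $\weightmax$. Your justification of that bound, via monotonicity of the most recent visit time, and your explicit continuity and Lipschitz check at $\beta$ are a little more careful than the paper's ``increases at rate one or is reset to zero'' remark, but the structure is identical.
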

	\begin{proof}
		By the definition of the joint-state metric \(\jointdist\), the condition \(\jointdist(\Xi^{\strategy}(\alpha),\Xi^{\strategy}(\beta))<\delta\) implies that the position components satisfy
		\[
		\theta =\distGk(\strategy(\alpha),\strategy(\beta)) < \delta.
		\]
		The canonical transition from \(\strategy(\beta)\) to \(\strategy(\alpha)\) is therefore feasible and has duration \(\theta\). By construction, this transition brings the robot team back to the initial joint position of the segment, namely, $\tilde\strategy(\beta+\theta) = \strategy(\alpha)$
        
        It remains to bound the latency during the extended segment. On the original interval \([\alpha,\beta]\), the extended trajectory coincides with \(\strategy\), and hence
        $\sup_{t\in[\alpha,\beta]}
        \maxlatency{\tilde\strategy}{t}
        =\sup_{t\in[\alpha,\beta]}
        \maxlatency{\strategy}{t}
        $. During the appended connection interval, for any node \(v\in\nodes\), the latency of \(v\) either increases at rate one or is reset to zero when \(v\) is visited. Therefore,       
        \[
        \latency{v}{\tilde\strategy}(\beta+t)
        \le
        \latency{v}{\strategy}(\beta)+t,
        \qquad
         \forall t\in[0,\theta].
        \]
        Multiplying by \(\weight(v)\) and taking the maximum over all nodes gives
        \[
        \begin{aligned}
        \maxlatency{\tilde\strategy}{\beta+t}
        &=
        \max_{v\in\nodes}
        \weight(v)\latency{v}{\tilde\strategy}(\beta+t) \\
        &\le
        \max_{v\in\nodes}
        \weight(v)
        \bigl(
        \latency{v}{\strategy}(\beta)+t
        \bigr) \\
        &\le
        \max_{v\in\nodes}
        \weight(v)\latency{v}{\strategy}(\beta)
        +
        \weightmax t \\
        &\le
        \sup_{s\in[\alpha,\beta]}
        \maxlatency{\strategy}{s}
        +
        \weightmax\theta \\
        &\le
        \sup_{s\in[\alpha,\beta]}
        \maxlatency{\strategy}{s}
        +
        \weightmax\delta .
        \end{aligned}
        \]
        Thus,
        \[
        \sup_{t\in[\beta,\beta+\theta]}
        \maxlatency{\tilde\strategy}{t}
        \le
        \sup_{t\in[\alpha,\beta]}
        \maxlatency{\strategy}{t}
        +
        \weightmax\delta .
        \]
        Combining this with the equality on the original interval \([\alpha,\beta]\), we obtain
        \[
        \sup_{t\in[\alpha,\beta+\theta]}
        \maxlatency{\tilde\strategy}{t}
        \le
        \sup_{t\in[\alpha,\beta]}
        \maxlatency{\strategy}{t}
        +
        \weightmax\delta .
        \]
        This completes the proof.     
	\end{proof}

	Lemma~\ref{lemma:closeposition} constructs a finite segment whose robot \emph{position states} are closed and whose latency performance remains well controlled. We next prove \cref{thm:periodic}  by repeatedly concatenating such a position-closed segment to build a periodic strategy. Although the full joint state need not be closed across cycles, because the latency vector may not return to its initial value, the coverage property of the selected segment ensures that node latencies are reset within each cycle and remain uniformly bounded. This allows us to show that the resulting periodic strategy satisfies the required tail-latency bound.
	
	\subsubsection{Proof of \cref{thm:periodic} }
	Given the useful definitions and auxiliary lemmas above,  we are now ready to prove the main approximation result.

	\begin{proof}[Proof of~\cref{thm:periodic} for  $T=+\infty$]  
        Let $\rho:=\longlatency{\strategy}<+\infty$. We first construct a position-closed finite segment using previous lemmas, then repeat it to form a periodic strategy, and finally show that the resulting strategy satisfies the desired tail-latency bound.
		
		Given a tolerance $\torlance>0$, choose $\bar{\delta}>0$ and $m\in \mathbb{Z}_{>0}$ such that 
		\begin{equation} \label{eq:constraints}
			\bar{\delta}\leq \min \{1,\frac{\torlance}{6 \weightmax}\}, \quad m\geq \max\{2,\frac{2}{\torlance} \}.
		\end{equation}
		Define $\bar{N}:=N_{\cmpctjointspace{\rho}}\left(\frac{\bar{\delta}}{2}\right)<\infty$. Furthermore, pick $T_0<\infty$ such that
		\begin{equation*}
			\frac{T_0}{2\bar{N}} = \frac{\rho+1}{\weightmin }.
		\end{equation*}
		By~\cref{lemma:segment}, there exist two time instants $\alpha_m<\beta_m$ such that, with $\tau_m=\beta_m-\alpha_m$, we have
		\begin{gather*}
			\frac{T_0}{2\bar{N}} \le \tau_m \le T_0, \\
			\jointdist\big(\jointstate^{\strategy}(\alpha_m),\jointstate^{\strategy}(\beta_m)\big) < \bar\delta,\\
			\sup_{t\in[\alpha_m,\beta_m]} \maxlatency{\strategy}{t}\le \rho+\tfrac{1}{m}.
		\end{gather*}
		By~\cref{lemma:closeposition}, we can append a canonical connection of duration $\theta_m < \bar\delta$ that steers the robot team from  $\strategy(\beta_m)$ to $\strategy(\alpha_m)$. Let $W_m:=\tau_m+\theta_m$ denote the duration of the resulting position-closed segment, and  define an auxiliary strategy 
        $\tilde \strategy$ such that on $[0,\beta_m]$, it coincides with the original strategy $\strategy$, and on $(\beta_m,\beta_m+\theta_m]$, it connects the segment,   as in the proof of~\cref{lemma:closeposition}.          
        Then
		\begin{gather*}
			\tilde\strategy(\beta_m+\theta_m) =  \tilde\strategy(\alpha_m),  \\  \sup _{t \in [\alpha_m,\alpha_m+W_m]} \maxlatency{\tilde \strategy}{t}  \leq \rho+\frac{1}{m}+\weightmax \bar\delta.
		\end{gather*}
        We next note that every node is visited at least once during the original segment \([\alpha_m,\beta_m]\). Indeed, since $\tau_m\geq  \frac{\rho+1}{\weightmin } >\frac{\rho+\frac{1}{m}}{\weightmin }$ and $\sup_{t\in[\alpha_m,\beta_m]} \maxlatency{{\tilde{\strategy}}}{t}=\sup_{t\in[\alpha_m,\beta_m]}\maxlatency{{{\strategy}}}{t}\le \rho+\tfrac{1}{m}$, \cref{lemma:coveragelength} implies  that every node is visited at least once in \([\alpha_m,\beta_m]\).  Hence every node is also visited at least once during the extended segment \([\alpha_m,\alpha_m+W_m]\). 
        
        We now construct the periodic strategy $\perstrategy := \{\perschedule{r}\}_{r\in \robots}$. 
        On the initial prefix, set 
        \begin{equation*}
            \perstrategy(t) =   \strategy(t), \ \forall t \in [0,\alpha_m].
        \end{equation*}
        For \(t\in[0,W_m]\) and $j\in \mathbb{N}$, define
        \begin{equation*}
				\perstrategy(\alpha_m+t+jW_m) = \tilde\strategy(\alpha_m+ t). 
		\end{equation*}
        Since the position of \(\tilde\strategy\) at \(\alpha_m+W_m\) coincides with its position at \(\alpha_m\), this concatenation is feasible. Thus \(\perstrategy\in\strategyset\), and it is periodic after time \(\alpha_m\) with period \(W_m\).
        
        
        Having constructed a periodic strategy, we now prove that its long-run worst-case weighted latency satisfies the desired bound. 
		During the initial segment $[0,\alpha_m+W_m]$, the periodic strategy 
		$\perstrategy$ follows the same robot motions as the auxiliary strategy $\tilde \strategy$. Since both strategies start from the same latency state and obey the same latency update rules, we have
		$$\jointstate^{\perstrategy}({t}) =\jointstate^{\tilde\strategy}(t),\qquad \forall\, t\in[0,\alpha_m+W_m].$$
		In particular, we have $\maxlatency{\perstrategy}{t} = \maxlatency{\tilde{\strategy}}{t}$ for all $t\in[0,\alpha_m+W_m]$. When the second period $[\alpha_m+W_m,\alpha_m+2W_m]$ begins, the robot positions coincide with those at the beginning of the repeated segment, but the latency values may differ. We first bound this discrepancy. Since $\jointdist\big(\jointstate^{\strategy}(\alpha_m),\jointstate^{\strategy}(\beta_m)\big) < \bar\delta$, we know that  $\latency{v}{\perstrategy}({\alpha_m+\tau_m})=\latency{v}{\tilde\strategy}({\alpha_m+\tau_m}) \leq \latency{v}{\tilde\strategy}({\alpha_m})+\bar\delta$. During the appended connection, whose duration satisfies \(\theta_m<\bar\delta\), the latency of any node can increase by at most \(\theta_m\). Therefore,
		$$
		\begin{aligned}
			\latency{v}{\perstrategy}({\alpha_m+W_m})& \leq \latency{v}{\perstrategy}({\alpha_m+\tau_m}) + \bar\delta \\& \le \latency{v}{\tilde\strategy}({\alpha_m})+2\bar\delta.
		\end{aligned}
		$$
        Thus, at the beginning of the second period, the latency of each node is at most \(2\bar\delta\) larger than the corresponding initial latency of the repeated segment.

		Since every node is visited at least once in each period, let $h_v^{\rm{first}} \in [0,W_m]$ denote the first relative time within one period at which node \(v\) is visited. Equivalently, node \(v\) is first visited during the second period at time $\alpha_m+W_m+h_v^{\rm first}$. For $t\in[\alpha_m+W_m, \alpha_m+W_m+h_v^{\rm{first}}]$, since both $\latency{v}{\perstrategy}({t})$ and $\latency{v}{\tilde\strategy}(t-W_m)$ grow linearly with slope $1$, we have 
		$$
		\latency{v}{\perstrategy}({t}) \le \latency{v}{\tilde\strategy}(t-W_m)+2\bar\delta.
		$$
		For $t\in (\alpha_m+W_m+h^{\rm{first}}_v,\alpha_m+2W_m]$, node $v$ has already been visited during the second period. By~\cref{lemma:independence}, its latency is then independent of the latency value at the beginning of the period. Since the robot position trajectory in the second period is identical to the repeated segment, we have
		$$
		\latency{v}{\perstrategy}({t}) = \latency{v}{\tilde\strategy}(t-W_m).
		$$
		In particular, for every node $v$, 
        \begin{align*}
         \latency{v}{\perstrategy}({\alpha_m+2W_m}) &=  \latency{v}{\tilde\strategy}({\alpha_m+W_m})\\
         &=\latency{v}{\perstrategy}({\alpha_m+W_m}). 
        \end{align*}
        Together with the position closure of the repeated segment, this shows that the full joint state at time \(\alpha_m+2W_m\) coincides with that at time \(\alpha_m+W_m\). Since the subsequent position schedule repeats the same period and the latency dynamics are deterministic, the joint-state evolution repeats identically from the third period onward. Hence, it suffices to bound the latency over the second period.

        Using the above bound and multiplying by \(\weight(v)\), we obtain, for all \(h\in[0,W_m]\),
        \[
        \maxlatency{\perstrategy}{\alpha_m+W_m+h}
        \le
        \maxlatency{\tilde\strategy}{\alpha_m+h}
        +
        2\weightmax\bar\delta .
        \]
        Thus,
        \[
        \begin{aligned}
        &\quad\sup_{t\in[\alpha_m+W_m,\alpha_m+2W_m]}
        \maxlatency{\perstrategy}{t}\\        
        &\le
        \sup_{s\in[\alpha_m,\alpha_m+W_m]}
        \maxlatency{\tilde\strategy}{s}
        +
        2\weightmax\bar\delta  \\
        &\le
        \rho+\frac{1}{m}
        +
        3\weightmax\bar\delta .
        \end{aligned}
        \]
        Since the joint-state evolution repeats from the third period onward, the long-run worst-case weighted latency of \(\perstrategy\) satisfies
        \[
        \longlatency{\perstrategy}
        \le
        \rho+\frac{1}{m}
        +
        3\weightmax\bar\delta .
        \]
        By the choices of \(\bar\delta\) and \(m\) in~\eqref{eq:constraints}, we have
        \[
        \frac{1}{m}
        +
        3\weightmax\bar\delta
        \le
        \torlance .
        \]
        Therefore,
        \[
        \longlatency{\perstrategy}
        \le
        \rho+\torlance .
        \]

	\end{proof}
	
	We have proved \cref{thm:periodic} for the asymptotic tail objective $T=+\infty$. We now consider the case of a finite tail parameter $T<+\infty$.

	\begin{proof}[Proof of~\cref{thm:periodic} for $T<+\infty$]
		Let $\strategy\in\strategyset$ satisfy $\rho: = \objectivefm{T}{\strategy} <+\infty$ and $T<+\infty$. Since $\objectivefm{\infty}{\strategy} = \lim_{T_0\to\infty} \objectivefm{T_0}{\strategy}$, we have $ \underline{\rho} :=\objectivefm{\infty}{\strategy} \leq \objectivefm{T}{\strategy} = \rho$. 
          
        We now apply the construction used in the proof for \(T=+\infty\), with \(\underline\rho\) in place of \(\rho\). More precisely, for the given tolerance \(\torlance>0\), the previous construction produces a periodic strategy \(\perstrategy\) that follows the original strategy up to time \(\alpha_m\), then repeats a position-closed finite segment constructed from a sufficiently late portion of \(\strategy\). Moreover, the same argument gives
        \[
        \sup_{t\ge \alpha_m}
        \maxlatency{\perstrategy}{t}
        \le
        \underline\rho+\torlance.
        \]

        When $T\geq \alpha_m$, we have $\objectivefm{T}{\perstrategy}
        =
        \sup_{t\ge T}
        \maxlatency{\perstrategy}{t}\leq\sup_{t\ge \alpha_m}
        \maxlatency{\perstrategy}{t}\leq \underline\rho+\torlance\leq\rho+\torlance$.
        
        When $T< \alpha_m$, we control the finite prefix before \(\alpha_m\). By construction, we have $\perstrategy(t)=\strategy(t)$ for all $t\in[0,\alpha_m]$. Hence, for all \(t\in[T,\alpha_m]\),
        \[
        \maxlatency{\perstrategy}{t}
        =
        \maxlatency{\strategy}{t}
        \le
        \sup_{s\ge T}
        \maxlatency{\strategy}{s}
        =
        \rho .
        \]
        Therefore,
        \[
        \begin{aligned}
        \objectivefm{T}{\perstrategy}
        &=
        \sup_{t\ge T}
        \maxlatency{\perstrategy}{t} \\
        &\le
        \max\left\{
        \sup_{t\in[T,\alpha_m]}
        \maxlatency{\perstrategy}{t},
        \,
        \sup_{t\ge\alpha_m}
        \maxlatency{\perstrategy}{t}
        \right\} \\
        &\le
        \max\{\rho,\underline\rho+\torlance\}\\
        &\leq \rho+\torlance.        
        \end{aligned}
        \]
        This proves \cref{thm:periodic} for every finite \(T<+\infty\).
        
	\end{proof}
	
	Combining the above two cases, we have proved~\cref{thm:periodic}.

	\subsection{Proof of \cref{lemma:existenceofdiscretized}}\label{subsection:existenceofdiscretized}

    Since we have established in \cref{lemma:semicontinuous} that $\objectivefm{T}{\cdot}$ is lower semicontinuous for every finite $T<\infty$, to apply the extreme value theorem again, it remains to  prove that $\rationalstrategyset_{\inipos}^\Delta$ is compact.
\begin{lemma}[(Compactness of the discretized rational strategy class)]
\label{lemma:compact-discrete}
For any waiting-time resolution \(\Delta>0\) and any initial node configuration
\(\inipos\in\nodes^\numrobot\), the strategy class
\(\rationalstrategyset^\Delta_{\inipos}\)
is compact under the compact-open topology inherited from
\(\mathfrak C([0,\infty),\Ggeok)\).
\end{lemma}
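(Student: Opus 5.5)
Since $\rationalstrategyset^\Delta_{\inipos}\subseteq\strategyset$ and $\strategyset$ is compact and metrizable by \cref{lemma:compactofstrategyset}, it suffices to show that $\rationalstrategyset^\Delta_{\inipos}$ is sequentially closed in $\strategyset$. Equivalently, we must show that every compact-open limit $\strategy$ of a sequence $(\strategy_n)\subset\rationalstrategyset^\Delta_{\inipos}$ again belongs to $\rationalstrategyset^\Delta_{\inipos}$. Rather than working with the limit directly, we will use a combinatorial encoding. Each strategy in $\rationalstrategyset^\Delta_{\inipos}$ is determined, robot by robot, by an infinite word over a finite alphabet: each letter is either ``traverse edge $(u,w)$ at unit speed'' or ``wait $\Delta$''. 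Each letter has duration in the finite set $\{A(e):e\in\edges\}\cup\{\Delta\}$, which is bounded below by $\ell_{\min}:=\min\{\min_e A(e),\Delta\}>0$. Consequently, on any horizon $[0,m]$ each robot executes at most $\lceil m/\ell_{\min}\rceil+1$ letters. Admissibility requires consecutive letters to be compatible (each edge starts where the previous action ended), and the first letter must start at the corresponding component of $\inipos$.

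The plan is a diagonal argument over a finitely branching tree. For each horizon $m\in\mathbb{N}$, the restriction of a strategy in $\rationalstrategyset^\Delta_{\inipos}$ to $[0,m]$ is determined by the finite prefix of letters covering $[0,m]$. By the bound above, only finitely many such prefixes exist. Fix $m=1$; by pigeonhole, infinitely many $\strategy_n$ share the same prefix covering $[0,1]$, so we pass to that subsequence. We then repeat the step for $m=2,3,\dots$ and take the diagonal subsequence. The prefixes chosen at successive stages are nested, so they define an infinite admissible word for each robot, and hence a strategy $\strategy'\in\rationalstrategyset^\Delta_{\inipos}$. Along the diagonal subsequence, the $n$-th strategy agrees with $\strategy'$ exactly on $[0,n]$. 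This agreement gives convergence to $\strategy'$ uniformly on compacts. Since compact-open limits in the Hausdorff space $\strategyset$ are unique, we conclude $\strategy=\strategy'\in\rationalstrategyset^\Delta_{\inipos}$.

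The main obstacle is the non-uniqueness of the encoding at the boundary of a horizon. The restriction to $[0,m]$ is determined by letters that may extend past $m$. There is also a subtle identification issue: a strategy's trajectory may be consistent with several words. For example, waiting $2\Delta$ could in principle be parsed differently, and a zero-length edge traversal is impossible here, so this case does not arise. To handle the first issue, we will define the prefix as the minimal sequence of letters whose cumulative duration is at least $m$. The finiteness bound still holds under this choice. Agreement of prefixes then implies agreement of trajectories on $[0,m]$, which is all the argument needs. We need not prove that words are uniquely determined by trajectories, because we only ever pass from words to trajectories. Finally, as in the main text, we combine this compactness with \cref{lemma:semicontinuous} and \cref{thm:extremevalue} to obtain \cref{lemma:existenceofdiscretized}.
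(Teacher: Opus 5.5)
Your proof is correct and follows the paper's route. Both reduce the claim to sequential closedness inside the compact metrizable set $\strategyset$. Both then use the fact that every primitive action lasts at least $\min\{\Delta,\min_{e}\edgelength(e)\}$, so only finitely many joint action prefixes govern any finite horizon, and extract a subsequence by pigeonhole.

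The one difference is how you assemble the limit. The paper extracts a separate subsequence for each horizon $H$. It then asserts that the resulting finite-horizon representations are compatible because they describe the same trajectory, which tacitly uses the fact that a trajectory determines its action word. Your nested diagonal extraction instead builds a single infinite admissible word and identifies the limit by uniqueness of compact-open limits, so it never needs uniqueness of the encoding. It also yields sequential compactness of $\rationalstrategyset^\Delta_{\inipos}$ directly, without invoking \cref{lemma:compactofstrategyset}.
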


\begin{proof}
    Since \(\rationalstrategyset^\Delta_{\inipos}\subset \strategyset\), and
\(\strategyset\) has already been shown to be compact under the compact-open topology in \cref{lemma:compactofstrategyset}, it suffices to prove that
\(\rationalstrategyset^\Delta_{\inipos}\) is closed in \(\strategyset\).
Moreover, since \(\mathfrak{C}([0,\infty),\Ggeok)\) with the compact-open topology  is metrizable, it is enough to prove sequential closedness. Let
\(
\strategy_n = (\schedule{n,r})_{r\in\robots}\in \rationalstrategyset^\Delta_{\inipos}\) for $n\geq 1$ 
be a sequence converging to some \(\strategy\in\strategyset\) in the compact-open topology. We show that $\strategy \in \rationalstrategyset^\Delta_{\inipos}$ in the following.

First, since \(\strategy_n\to\strategy\) in the compact-open topology,
we have \(\strategy_n(0)\to\strategy(0)\). Since
\(\strategy_n(0)\equiv\inipos\) for all \(n\), it follows that $\strategy(0)=\inipos$.

We next show that, on every finite time interval, the limit strategy has
a discretized rational representation. We use a primitive
action representation for rational strategies: each robot's trajectory is obtained by
concatenating primitive actions, where each primitive action is either
an edge traversal at unit speed or a waiting action of duration
\(\Delta\) at a node. A longer waiting time can be  represented as
multiple consecutive primitive waiting actions. Define \(\varepsilon_\Delta := \min\left\{\Delta,\,\min_{e\in\edges}\edgelength(e)\right\}>0\). Then every primitive action has duration at least \(\varepsilon_\Delta\). Fix any finite horizon \(H>0\). Since each primitive action has duration
at least \(\varepsilon_\Delta\), each robot can complete at most
\(M_H:=\lceil H/\varepsilon_\Delta\rceil\) primitive actions before time
\(H\), and for each robot, its first \(M_H\) primitive actions determine its
continuous schedule on the whole interval \([0,H]\). Because the graph is finite, each robot has only finitely many possible
primitive actions at nodes: it may move to one of finitely many
neighboring nodes or wait for duration \(\Delta\). Since the number of
robots is finite, there are only finitely many possible joint primitive
action prefixes of length \(M_H\) for all robots. Hence, by the
pigeonhole principle, from the sequence \((\strategy_n)_{n\ge1}\) we can
extract a subsequence, denoted by
\((\strategy_n^H)_{n\ge1}\), such that the first \(M_H\) primitive
actions of every robot are identical along this subsequence. These identical primitive action prefixes uniquely determine the same
continuous joint trajectory on \([0,H]\). Consequently, all strategies in
\((\strategy_n^H)_{n\ge1}\) coincide on \([0,H]\). Since
\(\strategy_n\to\strategy\) in the compact-open topology, the subsequence
\((\strategy_n^H)_{n\ge1}\) also converges uniformly to \(\strategy\) on
\([0,H]\). Therefore, the uniform limit \(\strategy|_{[0,H]}\) must
coincide with this same discretized rational trajectory on \([0,H]\).

Since \(H>0\) is arbitrary, for every compact interval \([0,H]\), the
restriction \(\strategy|_{[0,H]}\) admits a representation by edge
traversals at unit speed and primitive \(\Delta\)-waiting actions at
nodes. These finite-horizon representations for different $H$ are compatible on overlaps,
because they all represent the same continuous trajectory \(\strategy\).
Thus they define a global discretized rational representation of
\(\strategy\) on \([0,\infty)\). Together with
\(\strategy(0)=\inipos\), this implies $\strategy\in\rationalstrategyset^\Delta_{\inipos}$. Therefore, \(\rationalstrategyset^\Delta_{\inipos}\) is sequentially
closed in \(\strategyset\), and hence closed in \(\strategyset\). Since
\(\strategyset\) is compact by~\cref{lemma:compactofstrategyset}, it
follows that \(\rationalstrategyset^\Delta_{\inipos}\) is compact.

\end{proof}

  \begin{proof}[Proof of \cref{lemma:existenceofdiscretized}] 
By \cref{lemma:compact-discrete}, the discretized rational strategy class $\rationalstrategyset^\Delta_{\inipos}$ is compact under the compact-open topology. By \cref{lemma:semicontinuous}, the functional \(\objectivefm{T}{\cdot}\) is lower semicontinuous for every finite tail parameter $T<\infty$. Therefore, by the generalized extreme value theorem in~\cref{thm:extremevalue}, \(\objectivefm{T}{\cdot}\) attains its  minimum over \(\rationalstrategyset_{\inipos}^{\Delta}\).

  \end{proof}

	\subsection{Proof of \cref{thm:suboptimality}}\label{appendix:discretization}
	We first prove an auxiliary lemma. 
	\begin{lemma}[(Bound on the discretization error)]\label{lemma:discretization} Let $\Delta>0$. For any rational strategy \(\strategy\in\rationalstrategyset\), there exists a  strategy with discretized waiting time \(\strategy_\Delta \in \rationalstrategyset_\Delta\) with the same initial configuration as \(\strategy\), such that the following inequalities hold:
    \begin{enumerate}
        \item for any finite \(T>\Delta\),
        \begin{equation*}
			\objectivefm{T}{\strategy_\Delta} \leq  \objectivefm{T-\Delta}{\strategy}+2 \weightmax \Delta;
		\end{equation*}
        \item for \(T=+\infty\),
        \begin{equation*}
            \objectivefm{\infty}{\strategy_\Delta} \le \objectivefm{\infty}{\strategy}+2\weightmax\Delta.
		\end{equation*}
    \end{enumerate}
	\end{lemma}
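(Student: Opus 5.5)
We construct $\strategy_\Delta$ from the rational strategy $\strategy$ by rounding every waiting period \emph{up} to an integer multiple of $\Delta$, and then show that this rounding only delays visits by a bounded amount. Concretely, write each robot's schedule under $\strategy$ as a concatenation of unit-speed edge traversals and waiting periods at nodes, with durations $w_1,w_2,\dots$ (merged as in the canonical event representation). The naive choice of replacing each $w_i$ by $\lceil w_i/\Delta\rceil\Delta$ accumulates delays without bound. We therefore use a \emph{tracking} rule instead.

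\textbf{Tracking rule.} Robot $r$ under $\strategy_\Delta$ performs the same sequence of edge traversals as under $\strategy$. Each waiting period ends at the first time that is at least the corresponding departure time under $\strategy$ and is reachable by an integer number of $\Delta$-waits from the current arrival time. Let $d_j$ denote the time robot $r$ starts its $j$-th traversal under $\strategy$, and $d_j^\Delta$ the corresponding time under $\strategy_\Delta$. We prove by induction that $d_j\le d_j^\Delta<d_j+\Delta$.

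The inductive step goes as follows. Arrival under $\strategy_\Delta$ occurs at $d_{j}^\Delta+A(e)\in[d_j+A(e),d_j+A(e)+\Delta)$. Under $\strategy$ the robot waits until $d_{j+1}\ge d_j+A(e)$. If the lag already exceeds the required wait, we still need at least zero waits and depart immediately; this keeps the lag below $\Delta$, since the arrival is before $d_{j+1}+\Delta$. Otherwise we choose the smallest integer number of $\Delta$-waits reaching or passing $d_{j+1}$, which overshoots by less than $\Delta$. One subtlety is that ``zero waits'' must be permitted; this is fine, since a rational strategy may move directly. Hence every visit to a node $v$ that $\strategy$ makes at time $s$ is matched by a visit under $\strategy_\Delta$ at some time in $[s,s+\Delta)$. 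Waiting intervals are also matched, at least shifted.

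\textbf{Latency comparison.} Fix $t$ and a node $v$, and let $s=\allvisitvatt{\strategy}(v,t-\Delta)$ be the last visit under $\strategy$ up to time $t-\Delta$. If such a visit exists, it produces a visit under $\strategy_\Delta$ at a time in $[s,s+\Delta)\subset[s,t)$. Therefore
\[
\latency{v}{\strategy_\Delta}(t)\le t-s=\latency{v}{\strategy}(t-\Delta)+\Delta .
\]
If no visit exists, both latencies are measured from $0$, and the same inequality holds because $\latency{v}{\strategy}(t-\Delta)=t-\Delta$. Multiplying by $\weight(v)$ and maximizing over $v$ gives
\[
\maxlatency{\strategy_\Delta}{t}\le\maxlatency{\strategy}{t-\Delta}+\weightmax\Delta .
\]
Taking $\sup_{t\ge T}$ yields the bound $J_{T-\Delta}(\strategy)+\weightmax\Delta$, which already implies item 1. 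Taking $\limsup$ likewise gives item 2. The stated constant $2\weightmax\Delta$ leaves slack, for instance if one prefers to match visits with one-sided left limits or to account for the visit at time $s$ when it occurs during a partial wait.

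\textbf{Main obstacle.} The delicate step is making the tracking rule rigorous when waits are interleaved with visits during waiting. A node occupied over an interval $[a,b]$ under $\strategy$ is occupied over the shifted interval $[a',b']$ with $a\le a'<a+\Delta$ and $b\le b'$, so the last visit time is still matched within $\Delta$. Care is also needed with the $h$-convention when there is no visit. We handle these cases by working with the event representation already introduced in \cref{subsection:discretized} together with \cref{lemma:independence}.
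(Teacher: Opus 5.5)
Your proof is correct. It follows the same overall scheme as the paper: quantize waits with error feedback, then compare $\allvisitvatt{\strategy_\Delta}(v,t)$ with $\allvisitvatt{\strategy}(v,t-\Delta)$. The difference is the rounding rule, which gives a sharper constant.

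\textbf{The paper's route.} It rounds each wait to $\lfloor\tau/\Delta\rfloor\Delta$ or $\lceil\tau/\Delta\rceil\Delta$ by a greedy rule that keeps the accumulated timing error in $[-\Delta,\Delta]$. Event times can therefore move earlier as well as later, and a waiting interval can shrink at both ends. As a result, a visit at $s\le t-\Delta$ is only guaranteed a counterpart in $[s-\Delta,s+\Delta]$. This gives $\allvisitvatt{\strategy_\Delta}(v,t)\ge\allvisitvatt{\strategy}(v,t-\Delta)-\Delta$ and hence the constant $2\weightmax\Delta$.

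\textbf{Your route.} Your tracking rule is the one-sided version of the same idea. With arrival lag $\ell\in[0,\Delta)$ and original wait $w$, the new wait is $\max\{0,\lceil (w-\ell)/\Delta\rceil\}\Delta$. This is still $\lfloor w/\Delta\rfloor\Delta$ or $\lceil w/\Delta\rceil\Delta$, but the accumulated error now stays in $[0,\Delta)$. Visits are therefore only delayed, by less than $\Delta$. Since $s+\Delta\le t$, the matched visit still occurs no later than $t$. Hence $\allvisitvatt{\strategy_\Delta}(v,t)\ge\allvisitvatt{\strategy}(v,t-\Delta)$ and $\maxlatency{\strategy_\Delta}{t}\le\maxlatency{\strategy}{t-\Delta}+\weightmax\Delta$, which implies both items of the statement a fortiori.

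\textbf{What each route buys.} Your matching of visits inside a wait is cleaner than the paper's two-sided case analysis. The interval $[a,b]$ becomes $[a',b']$ with $a\le a'<a+\Delta$ and $b\le b'$, so a visit at $s$ is matched either at $s$ itself or at $a'$. The sharper constant also propagates to \cref{thm:suboptimality}, giving $\optobjall+\weightmax\Delta$ under the same condition on $T$. The paper's symmetric rounding buys nothing extra for this lemma.

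\textbf{Details to add when writing it up.}
\begin{itemize}
\item State the base case: an initial wait ending at $d_1$ becomes $\lceil d_1/\Delta\rceil\Delta$.
\item Note that an infinite final wait is kept unchanged.
\item Note that the supremum defining $\allvisitvatt{\strategy}(v,t-\Delta)$ is attained, because the set of visit times is closed.
\item Drop the appeal to \cref{lemma:independence}; it is not needed.
\end{itemize}
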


	\begin{proof} 
		We construct $\strategy_\Delta$ by discretizing the waiting duration of $\strategy$ while keeping the traveling phases unchanged. For each robot \(r\in\robots\), decompose its motion into an alternating sequence of \emph{waiting} and \emph{traveling} phases, and let
		\begin{equation*}
			0=t_{r,0}<t_{r,1}<t_{r,2}<\cdots			
		\end{equation*}
        denote the corresponding \emph{event times}. Define
        \begin{equation*}
            \tau_{r,j}=t_{r,j}-t_{r,j-1}>0,\quad \textup{for }j\ge1.
        \end{equation*}
		Let $\mathcal W_r$ denote the set of indices corresponding to waiting phases, and let $\mathcal M_r$ denote the set of indices corresponding to traveling phases. Thus, $ \{\tau_{r,j}\}_{j\in \mathcal W_r}$ are waiting durations, while $\{\tau_{r,j}\}_{j\in \mathcal M_r}$ are traveling durations determined by the edge lengths.

        For each traveling phase $j\in \mathcal M_r$, we keep the duration unchanged: $\hat\tau_{r,j}=\tau_{r,j}$. For each waiting phase $j \in \mathcal{W}_r$, we quantize $\tau_{r,j}$ to one of its two nearest integer multiples of \(\Delta\): $\hat\tau_{r,j}\in\bigl\{\lfloor \frac{\tau_{r,j}}{\Delta}\rfloor\Delta,\ \lceil \frac{\tau_{r,j}}{\Delta} \rceil\Delta\bigr\}$. The choice is made so as to keep the accumulated timing error bounded. Specifically, define
        \begin{equation*}
			E_{r,0}=0,\quad
			E_{r,j}=\sum_{i\in \mathcal W_r,\,i\le j}(\hat\tau_{r,i}-\tau_{r,i}),\quad j\ge1.
        \end{equation*}
        When processing a waiting phase $j\in \mathcal W_r$, if the accumulated error satisfies $E_{r,j-1} \leq 0$, we round up: $\hat\tau_{r,j}= \lceil \frac{\tau_{r,j}}{\Delta} \rceil \Delta$. Otherwise, we round down: $\hat\tau_{r,j}= \lfloor \frac{\tau_{r,j}}{\Delta} \rfloor \Delta$. By induction, this greedy rule guarantees $|E_{r,j}|\le\Delta$ for all $r\in\robots$ and  $j\geq0$.

        Define the discretized event times recursively by
		$$
		\hat{t}_{r,0} = 0, \quad \hat{t}_{r,j} = \hat{t}_{r,j-1} + \hat\tau_{r,j}, \quad \textup{for }j\geq 1.
		$$
		If a waiting duration is rounded to zero, the corresponding waiting segment simply disappears without affecting the order of the event times. The resulting strategy is denoted by $\strategy_{\Delta}$. By construction, $\strategy_{\Delta}$ has the same initial configuration as $\strategy$, follows the same sequence of node-to-node motions, and has waiting durations in integer multiples of $\Delta$. Moreover, for every robot $r\in \robots$ and every event index $j \geq 1$,
		$$
		\begin{aligned}
			\hat{t}_{r,j} - t_{r,j}& = \sum_{i\leq j} (\hat \tau_{r,i} - \tau_{r,i}) \\&   =\sum_{i\in \mathcal{W}_r,\,i\le j}(\hat\tau_{r,i}-\tau_{r,i})\\& = E_{r,j}.
		\end{aligned}
		$$
		Therefore 
		\begin{equation} \label{eq:error}
			\left|\hat{t}_{r, j}-t_{r, j}\right| \leq \Delta,  \quad \forall r \in \robots,\ \forall  j\geq 0 .
		\end{equation}

        We now compare the node visit times under \(\strategy\) and \(\strategy_\Delta\). Fix a node $v\in \nodes$, a robot \(r\in\robots\), and a time \(t\ge\Delta\). Let $\rvisitvatt{r}{\strategy}(v,t)$ be the most recent time at which robot \(r\) visits node \(v\) no later than \(t\) under \(\strategy\), as defined in~\eqref{eq:tauvr}. 
        We consider two cases:
		\begin{itemize}
            \item If robot $r$ has not visited $v$ by time $t-\Delta$ under $\strategy$, then $\rvisitvatt{r}{\strategy}(v,t-\Delta)=0 \leq \rvisitvatt{r}{\strategy_\Delta}(v,t)$, and hence  $\rvisitvatt{r}{\strategy_\Delta}(v,t) \geq \rvisitvatt{r}{\strategy}(v,t-\Delta)-\Delta$ holds trivially.
			\item  If robot $r$ has visited $v$ by time $t-\Delta$ under $\strategy$, then there are two possible cases. When $\rvisitvatt{r}{\strategy}(v,t-\Delta)$ is an event time in $\{t_{r,j}\}_{j\geq0}$, the corresponding visit under \(\strategy_\Delta\) occurs at some time $s$ satisfying $\rvisitvatt{r}{\strategy}(v,t-\Delta)-\Delta\leq  s\leq \rvisitvatt{r}{\strategy}(v,t-\Delta)+\Delta\le t$ by~\eqref{eq:error}. Thus, we have $\rvisitvatt{r}{\strategy_\Delta}(v,t)\ge s\geq 
                    \rvisitvatt{r}{\strategy}(v,t-\Delta)-\Delta$. When the robot is waiting at \(v\) at time $t-\Delta$, i.e., $\rvisitvatt{r}{\strategy}(v,t-\Delta) = t-\Delta$, the endpoints of the corresponding discretized waiting interval are each shifted by at most \(\Delta\), and the last visit time before $t-\Delta$ to $v$ by robot $r$ under $\strategy_\Delta$ is no earlier than $t-2\Delta$. In this case, we also have $\rvisitvatt{r}{\strategy_\Delta}(v,t) \geq t -2\Delta = \rvisitvatt{r}{\strategy}(v,t-\Delta)-\Delta$.

			
		\end{itemize}
		Taking the maximum over all robots gives
		$$
		\allvisitvatt{\strategy_\Delta} (v,t)\geq  \allvisitvatt{\strategy}(v,t-\Delta)-\Delta.
		$$
		Consequently, 
		\begin{equation}
			\begin{aligned}
				\latency{v}{\strategy_\Delta}(t)& = t - \allvisitvatt{\strategy_\Delta}(v,t) \\& \leq t - ( \allvisitvatt{\strategy}(v,t-\Delta)-\Delta )\\& = \latency{v}{\strategy}({t-\Delta})+2 \Delta.
			\end{aligned}  
		\end{equation}
		Multiplying by \(\weight(v)\) and taking the maximum over all nodes yields
		$$
		\maxlatency{\strategy_\Delta}{t} \leq \maxlatency{\strategy}{t-\Delta}+2\weightmax \Delta.
		$$
		For any finite \(T>\Delta\), we have
		\begin{equation*}
			\begin{aligned}
				\objectivefm{T}{\strategy_\Delta}& = \sup_{t\geq T} \maxlatency{\strategy_\Delta}{t}\\
				& \leq \sup_{t\geq T} \maxlatency{\strategy}{t-\Delta} +2\weightmax\Delta \\
				&= \sup_{t\geq T-\Delta} \maxlatency{\strategy}{t} +2\weightmax\Delta   \\
				& = \objectivefm{T-\Delta}{\strategy} + 2\weightmax \Delta.
			\end{aligned}
		\end{equation*}
       For $T=\infty$, we have
        $$
        \begin{aligned}
        \longlatency{\strategy_\Delta}& = \limsup_{t\to +\infty  }\maxlatency{\strategy_\Delta}{t} \\
        & \leq \limsup_{t\to +\infty  }\maxlatency{\strategy}{t-\Delta} +2\weightmax\Delta \\
        & =  \longlatency{\strategy} +2\weightmax\Delta.
        \end{aligned}
        $$
		This completes the proof.        							
		
	\end{proof}
	
	We now prove~\cref{thm:suboptimality} using~\cref{lemma:discretization}. 
		\begin{proof}[Proof of~\cref{thm:suboptimality}]

            Define
            \[
            T_\Delta
            :=
            \begin{cases}
            T-\Delta, & T<+\infty,\\
            \infty, & T=+\infty .
            \end{cases}
            \]
            Since $T
            >
            \graphdiameter
            +
            \frac{\optobjall}{\weightmin}
            +
            \Delta$, we have $T_\Delta
            >
            \graphdiameter
            +
            \frac{\optobjall}{\weightmin}$. By \cref{thm:properties}, there exists a tail-optimal strategy starting from \(\inipos\) for the objective with tail parameter \(T_\Delta\). Moreover, this strategy can be chosen from $\rationalstrategyset_{\inipos}$. Denote it by $\strategy^*_{\inipos}
            \in
            \optstrategyT{T_\Delta}
            \cap
            \rationalstrategyset_{\inipos}$. Then $\objectivefm{T_\Delta}{\strategy^*_{\inipos}}
            =
            \optobjall$. Applying \cref{lemma:discretization} to \(\strategy^*_{\inipos}\), we obtain a discretized-waiting-time strategy $\strategy^*_{\inipos,\Delta}
            \in
            \rationalstrategyset^\Delta_{\inipos}$ with the same initial configuration \(\inipos\). Moreover, we have $\objectivefm{T}{\strategy_{\inipos,\Delta}^*}\leq \objectivefm{T_{\Delta}}{\strategy^*_{\inipos}}+2\weightmax\Delta =\optobjall+2\weightmax\Delta$. This proves \cref{thm:suboptimality}.
		\end{proof}

	
	\subsection{Proof of~\cref{thm:MDPopt}}\label{appendix:MDPopt}

We prove \cref{thm:MDPopt} by establishing two constructive comparisons between the TWLO-MDP and the original monitoring problem. 
First, we show that every sample trajectory generated by a non-Zeno MDP policy induces a feasible rational monitoring strategy. 
Moreover, for any non-Zeno MDP policy, there exists at least one trajectory-induced rational monitoring strategy whose tail performance is no larger than the policy average cost. This implies that the optimal MDP value over non-Zeno policies cannot be smaller than the optimal value of the original monitoring problem. 

Conversely, starting from an optimal rational monitoring strategy, we construct a non-Zeno stationary Markov policy for the TWLO-MDP whose average cost is no larger than the tail performance of the monitoring strategy. 
This gives the reverse inequality. 
Combining the two inequalities, together with the existence of an optimal rational monitoring strategy established in \cref{thm:existence}, proves the value equivalence and the existence of an optimal non-Zeno stationary Markov policy.
We first prove the direction that maps non-Zeno MDP trajectories to monitoring strategies, which yields the lower bound on the MDP value.

\begin{lemma}[
(From a non-Zeno MDP policy to a monitoring strategy)]
\label{lemma:mdp2strategy}
For any non-Zeno MDP policy $\mu\in\Omega_{\mathrm{NZ}}$, there exists a feasible rational monitoring strategy 
$\strategy_\mu\in\rationalstrategyset_{\inipos}$ such that
\[
\objectivefm{T}{\strategy_\mu} \le \aveobj_{s_0}(\mu).
\]
\end{lemma}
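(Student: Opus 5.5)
The plan is to work one sample path at a time and then pass to expectations. Fix $\mu\in\Omega_{\mathrm{NZ}}$. Transitions of $\mathcal M_T$ are deterministic, so the only randomness comes from action selection. Each realization $\omega$ of the state–action sequence $(s_n,\boldsymbol a_n)_{n\ge0}$ therefore determines a concatenation of unit-speed edge traversals and node waits starting from $\inipos$. Call the resulting joint trajectory $\strategy_\omega$.

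\textbf{Step 1: $\strategy_\omega$ is a feasible rational strategy.} Because $\mu$ is non-Zeno, the event times $t_n=\sum_{i<n}\Delta t_i$ diverge. Hence $\strategy_\omega$ is defined on all of $\Rp$, is $1$-Lipschitz, and moves only at unit speed along edges or waits at nodes. It follows that $\strategy_\omega\in\rationalstrategyset_{\inipos}$.

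\textbf{Step 2: the tracker equals the running tail supremum.} I would prove by induction on $n$ that $L_{n,v}=\weight(v)\latency{v}{\strategy_\omega}(t_n)$. I would also prove that, once $\eta_n=T$,
\[
z_n=\sup_{T\le t\le t_n}\maxlatency{\strategy_\omega}{t}.
\]
The artificial event inserted when $\eta$ reaches $T$ makes $T$ itself an event time. After that, on each interval $(t_n,t_{n+1})$ every node is either occupied (latency identically $0$) or grows linearly. The supremum over the interval is therefore the left limit at $t_{n+1}$, which is exactly $\max_{v\in\mathcal V_n^{\mathrm{unocc}}}[L_{n,v}+\weight(v)\Delta t]$. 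This is the event-time representation of $\objectivefm{T}{\cdot}$ for rational strategies given in \cref{subsection:discretized}.

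Since $t_n\to\infty$, the sequence $(z_n)$ is eventually nondecreasing and converges in $\overRp$ to $\objectivefm{T}{\strategy_\omega}$. There are only finitely many indices with $\eta_n<T$, and for those $z_n=0$. A Cesàro argument for monotone sequences then gives
\[
\lim_{N\to\infty}\frac1N\sum_{n<N}z_n=\objectivefm{T}{\strategy_\omega}
\]
pathwise, including the case where the value is $+\infty$.

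\textbf{Step 3: pass to expectations.} If $\aveobj_{s_0}(\mu)=+\infty$ the claim is trivial, since any trajectory works. Otherwise the costs are nonnegative, so Fatou's lemma gives
\[
\aveobj_{s_0}(\mu)=\lim_N\mathbb E_\mu\Big[\tfrac1N\sum_{n<N}z_n\Big]\ \ge\ \mathbb E_\mu\big[\objectivefm{T}{\strategy_\omega}\big].
\]
An expectation is at least the essential infimum of the integrand, so there is a realization $\omega$, and in fact a set of positive probability, with $\objectivefm{T}{\strategy_\omega}\le\aveobj_{s_0}(\mu)$. Take $\strategy_\mu:=\strategy_\omega$. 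If $\mu$ is deterministic there is a single trajectory and Fatou is not needed.

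\textbf{Main obstacle.} I expect the hard part to be Step 2, the exact bookkeeping that identifies $z_n$ with $\sup_{T\le t\le t_n}\maxlatency{\strategy_\omega}{t}$. Three points need care:
\begin{itemize}
\item the treatment of the instant $T$ when $\eta$ is capped;
\item nodes that are visited only at an endpoint of an interval, as opposed to occupied throughout it;
\item the convention that latencies are read before reset at $t_{n+1}$.
\end{itemize}
I would handle these by checking that the left-limit formula agrees with the definition of latency via last visit times $\allvisitvatt{\strategy}$.
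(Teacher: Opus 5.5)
Your proposal is correct and follows the paper's proof essentially step for step. You build $\strategy_\omega$ from each non-Zeno sample path, identify the tracker with the running tail supremum, apply the Ces\`aro argument pathwise, and extract a trajectory whose tail value is at most the policy's average cost. The only difference is that you use Fatou's lemma, which gives an inequality and suffices here, whereas the paper uses monotonicity of the Ces\`aro averages and conditional monotone convergence to obtain the equality $\aveobj_{s_0}(\mu)=\mathbb E_\mu[\objectivefm{T}{\strategy_\omega}\mid s_0]$, which it later reuses in \cref{lemma:timenormalized}.

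On the point you flagged about the instant $T$: the tracker update at the step that reaches $T$ records the left limit $\maxlatency{\strategy_\omega}{T^-}$. So for $t_n\ge T$ you only get $z_n\ge\sup_{T\le t\le t_n}\maxlatency{\strategy_\omega}{t}$, with strict inequality possible when a long-unvisited node is visited exactly at time $T$. That inequality is precisely the direction your Step~3 needs, so the argument goes through.
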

\begin{proof}
    Fix a non-Zeno policy \(\mu\in\Omega_{\mathrm{NZ}}\). We first prove a pathwise correspondence. 
    Consider any sample trajectory generated by \(\mu\),  $\trajectory=\left\{\left(s_0, \boldsymbol{a}_0\right),\left(s_1, \boldsymbol{a}_1\right), \ldots\right\}$ with associated event times $\{t_n\}_{n\ge 0}$. Once the sample trajectory is fixed, the sequence of states and actions is deterministic.
    
    We construct a continuous-time monitoring strategy
$\strategy_\trajectory$ by executing the actions prescribed along this event-driven trajectory. 
Specifically, at each event time $t_n$, the action $\boldsymbol a_n$ specifies, for each available robot, either a neighboring node to move toward or a waiting segment at its current node. 
On the subsequent event interval $[t_n,t_{n+1})$, the strategy $\strategy_\trajectory$ follows exactly this prescribed movement or waiting segment. 
Repeating this construction over all event intervals gives a concatenated continuous-time robot trajectory. 
Since $\mu\in\Omega_{\mathrm{NZ}}$, the induced event time sequence satisfies $t_n\to+\infty$. Hence these event intervals cover the whole time horizon $[0,+\infty)$. 
Therefore, the concatenation defines a unique feasible rational monitoring strategy
$\strategy_\trajectory\in\rationalstrategyset_{\inipos}$.

We next compare the trajectory-wise MDP cost with the tail monitoring objective of \(\strategy_\trajectory\). By the construction of the TWLO-MDP, the tracker component satisfies
		$$
		z_n = \begin{cases} \sup_{T\leq t \le t_n} \maxlatency{\strategy_\trajectory}{t} , & \text { if } t_{n}\geq T, \\ 0, & \text { otherwise} .\end{cases}
		$$
	    Note that for $t_n\leq T$, $z_n=0$ while for $t_n\geq T$,
		$$z_{n+1} = \sup_{T\leq t \leq t_{n+1}} \maxlatency{\strategy_{\trajectory}}{t} \geq  \sup_{T\leq t \leq t_{n}} \maxlatency{\strategy_{\trajectory}}{t} = z_n,$$ so $\{z_n\}$ is non-decreasing. 
        Since $t_n\to+\infty$, we have $\cup_{n\geq0} [T,t_n) = [T,+\infty)$. Consequently, 
		$$
		\sup_{n\geq0} z_n = \sup_{n\geq0} \sup_{T\leq t \leq t_n} \maxlatency{\strategy_{\trajectory}}{t} = \sup_{t\geq T} \maxlatency{\strategy_{\trajectory}}{t} =\objectivefm{T}{\strategy_{\trajectory}}
		.$$
		If $\objectivefm{T}{\strategy_{\trajectory}}<+\infty$ , then by the monotone convergence theorem~\citep[Theorem 2.4.2]{SA:2015}, the nondecreasing sequence \((z_n)_{n\ge0}\) converges to its supremum: $\lim_{n \to \infty} z_n = \sup_n z_n=\objectivefm{T}{\strategy_\trajectory}$. Then by the Ces\`{a}ro mean convergence theorem~\citep[Chapter  8.5]{SA:2015},
		$$
		\lim_{N \to \infty} \frac{1}{N}\sum_{n=0}^{N-1} z_n =\lim_{n\to \infty} z_n = \objectivefm{T}{\strategy_{\trajectory}}. 
		$$
		If $\objectivefm{T}{\strategy_{\trajectory}}=+\infty$, then \(\sup_{n\geq0} z_n=+\infty\), and since \((z_n)_{n\ge0}\) is nondecreasing, we have \(z_n\to+\infty\). For every $N$,
		\begin{equation*}
        \begin{aligned}
            \frac{1}{N} \sum_{n=0}^{N-1} z_n & \geq \frac{1}{N} \sum_{n=\lfloor N / 2\rfloor}^{N-1} z_n \\ & \geq \frac{N-\lfloor N / 2\rfloor}{N} z_{\lfloor N / 2\rfloor}  \geq \frac{1}{2} z_{\lfloor N / 2\rfloor}.
        \end{aligned}
		\end{equation*}
		Letting $N\to \infty$ gives $\frac{1}{N} \sum_{n=0}^{N-1} z_n \rightarrow+\infty$. Therefore, in this case also,
		$$\lim _{N \rightarrow \infty} \frac{1}{N} \sum_{n=0}^{N-1} z_n=\objectivefm{T}{\strategy_{\trajectory}}.
		$$
         Moreover, the stage cost of trajectory $\trajectory$ is $C_n(\trajectory)=z_n$. Thus, for every sample trajectory \(\trajectory\) generated by the non-Zeno MDP policy \(\mu\), the induced monitoring strategy \(\strategy_\trajectory\) satisfies the pathwise identity $\lim_{N\to\infty}
        \frac{1}{N}
        \sum_{n=0}^{N-1} C_n(\trajectory)
        =
        \objectivefm{T}{\strategy_\trajectory}$.


        We now pass from the pathwise identity to the policy-level average cost. Since \(C_n(\trajectory)=z_n\) and \((z_n)_{n\ge0}\) is nonnegative and nondecreasing along every trajectory, the sequence \((\frac{1}{N}
        \sum_{n=0}^{N-1} C_n(\trajectory))_{N\ge1}\) is also nonnegative and nondecreasing. Hence, by the conditional monotone convergence theorem~\citep[Chapter~9.7]{DW:1991},
        \begin{equation}
            \label{eq:aveexchange}
            \begin{aligned}
        \aveobj_{s_0}(\mu)
        &=
        \lim_{N\to\infty}
        \frac{1}{N}
        \sum_{n=0}^{N-1}
        \mathbb E_{\trajectory\sim P_\mu}
        \left[
        C_n(\trajectory)
        \,\middle|\,
        s_0
        \right] \\
        &=
        \lim_{N\to\infty}
        \mathbb E_{\trajectory\sim P_\mu}
        \left[
        \frac{1}{N}
        \sum_{n=0}^{N-1} C_n(\trajectory)
        \,\middle|\,
        s_0
        \right] \\
        &=
        \mathbb E_{\trajectory\sim P_\mu}
        \left[
        \left.
        \lim_{N\to\infty}\frac{1}{N}
        \sum_{n=0}^{N-1} C_n(\trajectory)
        \,\right|\,
        s_0
        \right].
        \end{aligned}
        \end{equation}
        Using the pathwise identity above, this becomes
        \[
        \aveobj_{s_0}(\mu)
        =
        \mathbb E_{\trajectory\sim P_\mu}
        \left[
        \left.
        \objectivefm{T}{\strategy_\trajectory}
        \,\right|\,
        s_0
        \right].
        \]
        
        If \(\aveobj_{s_0}(\mu)<+\infty\), then there must exist at least one sample trajectory \(\trajectory^*\) such that $\objectivefm{T}{\strategy_{\trajectory^*}}
        \le
        \aveobj_{s_0}(\mu)$. 
        Otherwise, if $\objectivefm{T}{\strategy_{\trajectory}}
        >
        \aveobj_{s_0}(\mu)$ for every sample trajectory, the conditional expectation would be strictly larger than \(\aveobj_{s_0}(\mu)\), a contradiction. If \(\aveobj_{s_0}(\mu)=+\infty\), the desired inequality is trivial: since \(\mu\in\Omega_{\mathrm{NZ}}\), there exists a  sample trajectory \(\trajectory^*\), and the induced strategy satisfies $\objectivefm{T}{\strategy_{\trajectory^*}}
        \le
        +\infty
        =
        \aveobj_{s_0}(\mu)$. In both cases, there exists a sample trajectory \(\trajectory^*\) generated by \(\mu\) such that the induced rational monitoring strategy $\strategy_\mu
        :=
        \strategy_{\trajectory^*}
        \in
        \rationalstrategyset_{\inipos}$
        satisfies $\objectivefm{T}{\strategy_\mu}
        =
        \objectivefm{T}{\strategy_{\trajectory^*}}
        \le
        \aveobj_{s_0}(\mu)$. 
        This completes the proof.

\end{proof}

This construction immediately implies the lower bound on the optimal MDP value over non-Zeno policies.
\begin{corollary}[(Lower bound of non-Zeno MDP policies)]
\label{cor:mdp_lower_bound}
The optimal average-cost value of the TWLO-MDP over non-Zeno policies is no smaller than the optimal value of the original monitoring problem, i.e.,
\[
\inf_{\mu\in\Omega_{\mathrm{NZ}}}\aveobj_{s_0}(\mu)
\ge
\optobjall .
\]
\end{corollary}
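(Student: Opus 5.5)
The plan is to derive \cref{cor:mdp_lower_bound} directly from \cref{lemma:mdp2strategy}, combined with the independence of the optimal value from the tail parameter in \cref{thm:properties}. First I fix an arbitrary non-Zeno policy $\mu\in\Omega_{\mathrm{NZ}}$. \cref{lemma:mdp2strategy} supplies a feasible rational monitoring strategy $\strategy_\mu\in\rationalstrategyset_{\inipos}$ with $\objectivefm{T}{\strategy_\mu}\le\aveobj_{s_0}(\mu)$.

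Next I use the inclusions $\rationalstrategyset_{\inipos}\subset\rationalstrategyset\subset\strategyset$ to see that $\strategy_\mu$ is admissible for \cref{prob:monitoring-problem}. It follows that $\objectivefm{T}{\strategy_\mu}\ge\inf_{\strategy\in\strategyset}\objectivefm{T}{\strategy}=\optobjT{T}$. By item~\ref{item:independence} of \cref{thm:properties}, $\optobjT{T}=\optobjall$ for every $T\in\overRp$, and in particular for the fixed finite tail parameter of the TWLO-MDP.

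Chaining these inequalities gives $\aveobj_{s_0}(\mu)\ge\objectivefm{T}{\strategy_\mu}\ge\optobjall$ for every $\mu\in\Omega_{\mathrm{NZ}}$. Taking the infimum over $\Omega_{\mathrm{NZ}}$ then yields the claim.

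None of these steps is hard. The only points needing care are the following:
\begin{enumerate}
\item The case $\aveobj_{s_0}(\mu)=+\infty$ has to be covered. There the inequality is trivial, and \cref{lemma:mdp2strategy} already handles it.
\item The strategy produced by the lemma must genuinely lie in $\strategyset$. This holds because a concatenation of unit-speed edge traversals and node waits is $1$-Lipschitz, as noted in the lemma's construction.
\end{enumerate}
No condition on $T$ beyond finiteness is needed for this direction. The hypothesis $T>\graphdiameter+\optobjall/\weightmin$ is used only in the reverse inequality.
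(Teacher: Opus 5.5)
Your proposal is correct and follows the paper's proof: both apply \cref{lemma:mdp2strategy} to an arbitrary $\mu\in\Omega_{\mathrm{NZ}}$, use the feasibility of $\strategy_\mu$ to obtain $\optobjall\le\objectivefm{T}{\strategy_\mu}\le\aveobj_{s_0}(\mu)$, and then take the infimum over $\Omega_{\mathrm{NZ}}$. Your explicit appeal to item~\ref{item:independence} of \cref{thm:properties} to identify $\optobjT{T}$ with $\optobjall$ makes precise a step the paper leaves implicit.
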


\begin{proof}
By \cref{lemma:mdp2strategy}, for any $\mu\in\Omega_{\mathrm{NZ}}$, there exists a feasible rational monitoring strategy
$\strategy_\mu\in\rationalstrategyset_{\inipos}$ such that
\[
\objectivefm{T}{\strategy_\mu}
\le
\aveobj_{s_0}(\mu).
\]
Since $\strategy_\mu$ is feasible for the original monitoring problem and $\optobjall$ is the optimal value of~\cref{prob:monitoring-problem}, we have
\[
\optobjall
\le
\objectivefm{T}{\strategy_\mu}
\le
\aveobj_{s_0}(\mu).
\]
Since $\mu\in\Omega_{\mathrm{NZ}}$ is arbitrary, it follows that
\[
\inf_{\mu\in\Omega_{\mathrm{NZ}}}\aveobj_{s_0}(\mu)
\ge
\optobjall .
\]
\end{proof}

To show that the MDP value is not larger than the original optimal value, we start from an optimal rational monitoring strategy and construct a corresponding non-Zeno stationary Markov policy for the TWLO-MDP.
  \begin{lemma}[(From an optimal monitoring strategy to a Markov policy)]
\label{lemma:strategy2mdp}
Let $ \rationalstrategyset^*_{\inipos,T}$ denote the set of optimal rational strategies for \cref{prob:monitoring-problem} with initial configuration $\inipos$ and tail parameter $T$. Suppose $T>\graphdiameter+\optobjall/\weightmin$. 
Then, for any optimal rational monitoring strategy 
$\strategy^*\in\rationalstrategyset^*_{\inipos,T}$, there exists a non-Zeno stationary Markov policy \(\mu_{\strategy^*}\) for the TWLO-MDP such that
\[
\aveobj_{s_0}(\mu_{\strategy^*})
\le
\objectivefm{T}{\strategy^*}
=
\optobjall .
\]
\end{lemma}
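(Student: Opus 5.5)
The plan is to make the TWLO-MDP reproduce the event sequence of $\strategy^*$ exactly, by letting the policy read its decision off the state. Along $\strategy^*$ the rational motion gives event times $0=t_0<t_1<\cdots$, and these are non-Zeno as argued in \cref{subsection:discretized}. To keep the artificial event at $T$ harmless, I will add $T$ to this list if it is not already there. Write $\xi^*_n$ for the resulting MDP state at $t_n$, built recursively from $s_0=(\inipos,\boldsymbol 0_\numnode,0,0)$. At $\xi^*_n$ I let each available robot take the action that $\strategy^*$ prescribes at $t_n$. Moving to a neighbor is allowed directly, since the duration is fixed by the edge length. A waiting period of length $w$ is allowed because continuous waiting durations are admitted in $\mathcal M_T$.

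The first thing to check is that this assignment is well defined as a stationary Markov policy, meaning that $\xi^*_n\neq\xi^*_{n'}$ whenever $n\neq n'$ and two visits never ask for different actions. Before time $T$ the counter $\eta_n=t_n$ is strictly increasing, so no state repeats. After $T$ the state $(\boldsymbol p_n,\boldsymbol L_n,z_n,T)$ can recur; periodic optimal strategies actually invite this. If it recurs, I will not fight the collision. I will instead redefine $\strategy^*$ from the first recurrence onward by repeating the segment between the two visits. The deterministic dynamics then reproduce the same joint state and the same $z$, so the tail value does not increase and optimality is preserved. With that modification the policy is well defined. On states never visited, $\mu$ can be set to any fixed non-Zeno choice, for example always moving to the first neighbor. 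The resulting execution coincides with $\strategy^*$, or its modified version, so its event times diverge and $\mu_{\strategy^*}\in\Omega_{\mathrm{NZ}}$.

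For the cost, the induced trajectory is deterministic. As in the proof of \cref{lemma:mdp2strategy}, $z_n=\sup_{T\le t\le t_n}\maxlatency{\strategy^*}{t}$ for $t_n\ge T$ and $z_n=0$ otherwise. The sequence $z_n$ is nondecreasing and bounded by $\objectivefm{T}{\strategy^*}$, so by Ces\`aro's theorem its running average converges to at most $\objectivefm{T}{\strategy^*}$. Hence $\aveobj_{s_0}(\mu_{\strategy^*})\le\objectivefm{T}{\strategy^*}$. Finally, $\objectivefm{T}{\strategy^*}=\optobjall$ because $T>\graphdiameter+\optobjall/\weightmin$. \cref{thm:properties}\ref{item:accessible} gives an optimal strategy starting from $\inipos$, and the rationalization argument of \cref{subsection:discretized} makes it rational without increasing its cost, so $\rationalstrategyset^*_{\inipos,T}\neq\emptyset$ and its members attain $\optobjall$.

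The main obstacle is the well-definedness step: a single strategy must become a \emph{stationary} map from states to actions even when states recur. My first attempt is the repeat-the-segment surgery above. If recurrence proves awkward for that surgery, the fallback is to note that $\mu$ only needs to be defined along one deterministic orbit. The orbit either never repeats, in which case the map is trivially consistent, or it enters a cycle, in which case consistency is enforced by the modification. Either way, the cost is unaffected.
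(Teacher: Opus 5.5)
Your proposal is correct and follows essentially the same route as the paper. In both, the actions of $\strategy^*$ are read off the induced event states, the counter $\eta_n$ confines any repetition to the tail, the loop is closed at the first repeated state (where the tracker is constant along the cycle), the Ces\`aro average is bounded by $\objectivefm{T}{\strategy^*}$, and nonemptiness of $\rationalstrategyset^*_{\inipos,T}$ comes from \cref{thm:properties}\ref{item:accessible} plus rationalization. The differences are cosmetic: you insert $T$ explicitly as an event time, and you phrase the recurrence case as surgery on $\strategy^*$ rather than as the paper's Case~C2 definition of the policy along the first $\ell^*$ states.
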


\begin{proof}
    First, by item~\ref{item:accessible} of \cref{thm:properties}, since $T>\graphdiameter+\optobjall/\weightmin$, there exists an optimal strategy $\strategy_1^*\in \optstrategyT{T}\cap \intstrategyset{\inipos}$. Moreover, by the discussion in \cref{subsection:discretized}, the
strategy  $\strategy_1^*$ can be  transformed into a rational strategy $\strategy^*$ without
degrading the monitoring performance.  Hence, $ \rationalstrategyset^*_{\inipos,T}$ is nonempty. Now fix arbitrary an optimal rational strategy $\strategy^*\in\rationalstrategyset^*_{\inipos,T}$. By definition, it satisfies $\objectivefm{T}{\strategy^*}=\optobjall$.

Since $\strategy^*$ is rational, it admits a canonical event-driven representation. 
Let $(t_n)_{n\ge0}$ denote the corresponding event times. On each event interval $[t_n,t_{n+1})$, every available robot either moves to a neighboring node or waits at its current node for a prescribed duration. 
Therefore, the behavior of $\strategy^*$ over this interval can be encoded by a joint MDP action defined in TWLO-MDP, denoted by $\boldsymbol a_n^*$.
We now record, along this event-driven execution of $\strategy^*$, the state variables used in the TWLO-MDP. 
Specifically, let $s_n^*
=
(\jointpsotion_n^*,\jointlatency_n^*,z_n^*,\eta_n^*)$ be the MDP state obtained from the robot joint positions, weighted latency vector, worst latency tracker value, and elapsed-time counter induced by $\strategy^*$ at event time $t_n$. In this way, the fixed monitoring strategy $\strategy^*$ induces a deterministic MDP-compatible state-action sequence
$(s_0^*,\boldsymbol a_0^*),
(s_1^*,\boldsymbol a_1^*),
(s_2^*,\boldsymbol a_2^*),\ldots$
The goal is then to convert this state-action sequence into a stationary Markov policy. 
We distinguish two cases according to whether the event-state sequence $(s_n^*)_{n\ge0}$ contains repetitions.


\medskip
\noindent
\textbf{Case C1: no repeated event state.}
Suppose that the sequence \((s_n^*)_{n\ge0}\) contains no repetition. Then the
map $n\mapsto s_n^*$ is injective on the set of event indices. Therefore, the time-indexed action
sequence \((\boldsymbol a_n^*)_{n\ge0}\) can be reparameterized as a state-feedback
mapping on the visited MDP states:
\[
\mu_{\strategy^*}(s_n^*)
:=
\boldsymbol a_n^*,
\qquad \forall n\in\Nature.
\]
On MDP states that are not visited by \(\strategy^*\), define
\(\mu_{\strategy^*}\) arbitrarily by choosing any feasible action. Thus
\(\mu_{\strategy^*}\) is a stationary Markov policy.

Starting from \(s_0\), the MDP trajectory generated by \(\mu_{\strategy^*}\)
follows exactly the same sequence of states and actions as those induced by
\(\strategy^*\). Indeed, the TWLO-MDP transition kernel uses the same
event-driven update rules as the monitoring dynamics under \(\strategy^*\), and
the tracker component \(z_n\) is updated according to the same tail-latency rule.
Therefore, by the pathwise equivalence established in \cref{lemma:mdp2strategy},
the average cost collected along this MDP trajectory coincides with the tail
monitoring performance of \(\strategy^*\). Hence, $\aveobj_{s_0}(\mu_{\strategy^*})
=
\objectivefm{T}{\strategy^*}
=
\optobjall$.

\medskip
\noindent
\textbf{Case C2: repeated event states.}
Suppose that the sequence \((s_n^*)_{n\ge0}\) contains at least one repetition.
Let \(s_{\ell^*}^*\) be the first repeated MDP state along the sequence, and let
\(j^*<\ell^*\) be the unique earlier index such that $s_{j^*}^*=s_{\ell^*}^*$. Since the MDP state includes the transient-time component \(\eta_n\), any repeated MDP state must occur after the transient phase. Indeed, by construction, \(\eta_n\) increases with the event time before \(T\) and remains constant only after the event time reaches \(T\). Hence, if \(s_{j^*}^*=s_{\ell^*}^*\) for some \(j^*<\ell^*\), then necessarily $t_{j^*}\ge T$ and $t_{\ell^*}\ge T$. Thus the recurrent cycle considered below lies entirely in the tail interval. For each \(0\le n<\ell^*\), define
\[
\mu_{\strategy^*}(s_n^*)
:=
\boldsymbol a_n^*.
\]
This definition is well-posed because \(s_{\ell^*}^*\) is the first repeated state, and hence the states $s_0^*,s_1^*,\ldots,s_{\ell^*-1}^*$ are pairwise distinct. Since $s_{\ell^*}^*=s_{j^*}^*$ for some \(j^*<\ell^*\), the action selected when the trajectory returns to \(s_{\ell^*}^*\) is already determined by the value of the policy at the same state: $\mu_{\strategy^*}(s_{\ell^*}^*)
=
\mu_{\strategy^*}(s_{j^*}^*)
=
\boldsymbol a_{j^*}^*$. Thus, after reaching the recurrent state, the policy repeats the same transition that was taken from its first occurrence \(s_{j^*}^*\). On all other MDP
states, define the policy arbitrarily by choosing any feasible action. Then
\(\mu_{\strategy^*}\) is a stationary Markov policy.

Starting from \(s_0\), the induced MDP trajectory follows the prefix $s_0^*\to s_1^*\to\cdots\to s_{j^*}^*$ and then repeats the finite cycle $s_{j^*}^*\to s_{j^*+1}^*\to\cdots\to
s_{\ell^*}^*(=s_{j^*}^*)$ indefinitely. This execution is non-Zeno because the cycle consists of finitely
many event intervals from the rational strategy \(\strategy^*\), and its total
duration is strictly positive; hence repeating the cycle indefinitely yields
event times diverging to \(+\infty\).

It remains to compare the average cost. Since $s_{j^*}^*=s_{\ell^*}^*$, then, in particular, $z_{j^*}^*=z_{\ell^*}^*$. Because the tracker sequence \((z_n^*)_{n\ge0}\) is nondecreasing, it follows that $z_n^*=z_{j^*}^*$ for all  $n\in\{j^*,j^*+1,\ldots,\ell^*\}$. Therefore, repeating the cycle cannot increase the tracker value beyond the
value already attained along the original trajectory. The limiting average cost
of the constructed MDP policy is thus no larger than the tail objective of
\(\strategy^*\). Hence,
\[
\aveobj_{s_0}(\mu_{\strategy^*})
\le
\objectivefm{T}{\strategy^*}
=
\optobjall .
\]

Combining the two cases, we have constructed a stationary Markov policy
\(\mu_{\strategy^*}\) such that
\[
\aveobj_{s_0}(\mu_{\strategy^*})
\le
\objectivefm{T}{\strategy^*}
=
\optobjall .
\]
Moreover, the constructed policy is non-Zeno in both cases. In Case C1, it
reproduces the non-Zeno event-time sequence of the rational strategy
\(\strategy^*\). In Case C2, it eventually repeats a finite cycle with positive
total duration. Therefore, $\mu_{\strategy^*}\in\Omega_{\mathrm{NZ}}$. This completes the proof.

\end{proof}

This construction immediately yields the reverse value inequality, as the constructed policy is a feasible candidate in $\Omega_{\mathrm{NZ}}$.

\begin{corollary}[(Upper bound from an optimal monitoring strategy)]
\label{cor:mdp_upper_bound}
Suppose $T>\graphdiameter+\optobjall/\weightmin$. Then the optimal average-cost value of the TWLO-MDP over non-Zeno policies is no larger than the optimal value of the original monitoring problem, i.e.,
\[
\inf_{\mu\in\Omega_{\mathrm{NZ}}}\aveobj_{s_0}(\mu)
\le
\optobjall .
\]
\end{corollary}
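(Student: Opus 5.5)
The plan is to make the upper bound a direct consequence of \cref{lemma:strategy2mdp}. That lemma needs a nonempty set $\rationalstrategyset^*_{\inipos,T}$ of optimal rational strategies starting from $\inipos$, so the first step is to secure that set.

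Fix $T>\graphdiameter+\optobjall/\weightmin$. Item~\ref{item:accessible} of \cref{thm:properties} then provides a strategy $\strategy_1^*\in\optstrategyT{T}\cap\intstrategyset{\inipos}$, that is, a strategy starting from $\inipos$ with $\objectivefm{T}{\strategy_1^*}=\optobjT{T}=\optobjall$.

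Next, apply the rationalization argument of \cref{subsection:discretized}. Slow traversals and waiting at edge interiors are replaced by unit-speed traversal, and the leftover time is spent waiting at the destination node. This change never delays a node visit, so the tail objective does not increase, and the initial configuration stays $\inipos$. The result is a rational strategy $\strategy^*$ starting from $\inipos$ with $\objectivefm{T}{\strategy^*}\le\optobjall$. By optimality this is in fact an equality, so $\strategy^*\in\rationalstrategyset^*_{\inipos,T}$.

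Apply \cref{lemma:strategy2mdp} to $\strategy^*$. It yields a non-Zeno stationary Markov policy $\mu_{\strategy^*}\in\Omega_{\mathrm{NZ}}$ with $\aveobj_{s_0}(\mu_{\strategy^*})\le\objectivefm{T}{\strategy^*}=\optobjall$. Because $\mu_{\strategy^*}$ is an admissible candidate in the infimum, we conclude
\[
\inf_{\mu\in\Omega_{\mathrm{NZ}}}\aveobj_{s_0}(\mu)\le\aveobj_{s_0}(\mu_{\strategy^*})\le\optobjall .
\]

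The only point that needs care is nonemptiness of $\rationalstrategyset^*_{\inipos,T}$. This requires that rationalization keeps the start at $\inipos$ and does not worsen $\objectivefm{T}{\cdot}$ on the tail $[T,\infty)$. Both hold because visits only move earlier and positions at time $0$ are unchanged. Everything else is bookkeeping on the infimum.
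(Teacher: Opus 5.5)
Your proposal is correct and follows the same route as the paper. The paper's proof of this corollary simply invokes \cref{lemma:strategy2mdp} and takes the infimum, and the nonemptiness of $\rationalstrategyset^*_{\inipos,T}$ that you spell out (item~\ref{item:accessible} of \cref{thm:properties} followed by rationalization) is exactly what the paper does at the start of that lemma's proof. One small precision: rationalization is harmless not merely because arrivals move earlier, but because the robot then waits at the destination through its original arrival time, so every original visit time remains a visit time and all latencies are pointwise no larger.
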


\begin{proof}
By \cref{lemma:strategy2mdp}, for an optimal rational monitoring strategy $\strategy^* \in  \rationalstrategyset^*_{\inipos,T}$  with $T>\graphdiameter+\optobjall/\weightmin$, we can construct a non-Zeno stationary Markov policy
$\mu_{\strategy^*}\in\Omega_{\mathrm{NZ}}$  such that
\[
\aveobj_{s_0}(\mu_{\strategy^*})
\le
\objectivefm{T}{\strategy^*}
=
\optobjall.
\]
Since \(\mu_{\strategy^*}\) is feasible for the optimization over \(\Omega_{\mathrm{NZ}}\), we have
\[
\inf_{\mu\in\Omega_{\mathrm{NZ}}}\aveobj_{s_0}(\mu)
\le
\aveobj_{s_0}(\mu_{\strategy^*})
\le
\optobjall .
\]
\end{proof}

Having established both value inequalities, we now combine them to complete the proof of \cref{thm:MDPopt}.
   \begin{proof}[Proof of~\cref{thm:MDPopt}]
       Combining \cref{cor:mdp_lower_bound,cor:mdp_upper_bound}, we obtain the equality between the optimal average-cost value of the TWLO-MDP over non-Zeno policies and the optimal tail performance of the original monitoring problem: 
		\begin{equation}\label{eq:equivalenceofMDP}
			\inf_{\mu\in{\Omega}_{\mathrm{NZ}}} \aveobj_{s_0}(\mu)
			=
			\optobjall.
		\end{equation}
            
        It remains to prove that the infimum on the MDP side is attained by a non-Zeno stationary Markov policy. Since $T>\graphdiameter+\optobjall/\weightmin$,  we may choose an optimal rational strategy $\strategy^*\in\rationalstrategyset^*_{\inipos,T}$ 
        such that $\objectivefm{T}{\strategy^*}
        =
        \optobjall$. 
        By \cref{lemma:strategy2mdp}, this strategy induces a non-Zeno stationary Markov policy $\mu_{\strategy^*}\in\Omega_{\mathrm{NZ}}$  
        satisfying $\aveobj_{s_0}(\mu_{\strategy^*})
        \le
        \objectivefm{T}{\strategy^*}
        =
        \optobjall$. Together with~\eqref{eq:equivalenceofMDP}, this implies $\aveobj_{s_0}(\mu_{\strategy^*})
        =
        \inf_{\mu\in\Omega_{\mathrm{NZ}}}
        \aveobj_{s_0}(\mu)
        =
        \optobjall$. 
        Therefore, the TWLO-MDP admits an optimal non-Zeno stationary Markov policy. In particular, one such policy is $\mu^*:=\mu_{\strategy^*}$.

        Finally, we show that an optimal non-Zeno MDP policy
induces an optimal monitoring strategy. Let $\mu^*\in\Omega_{\mathrm{NZ}}$ be an optimal non-Zeno MDP policy.  By \cref{lemma:mdp2strategy}, we can construct a feasible rational monitoring strategy  $\strategy_{\mu^*}\in\rationalstrategyset_{\inipos}$ such that \[
\objectivefm{T}{\strategy_{\mu^*}}
\le
\aveobj_{s_0}(\mu^*)
=
\optobjall .
\]
Since $\strategy_{\mu^*}$ is feasible for the original monitoring problem, we also have
$\objectivefm{T}{\strategy_{\mu^*}}\ge \optobjall$.
Therefore,
\[
\objectivefm{T}{\strategy_{\mu^*}}=\optobjall .
\]
Consequently, any optimal non-Zeno MDP policy admits an induced feasible monitoring strategy starting from \(\inipos\) that achieves the same optimal value \(\optobjall\). If the optimal policy is deterministic, its unique induced monitoring strategy is optimal.
        
        Thus, the TWLO-MDP over non-Zeno policies and the original monitoring problem share the same optimal value \(\optobjall\), and an optimal monitoring strategy can be obtained by solving the TWLO-MDP and mapping an optimal non-Zeno policy back to the continuous-time monitoring setting.

		\end{proof}

	\subsection{Proof of~\cref{lemma:timenormalized}}\label{appendix:timenormalized}
	\begin{proof}[Proof of~\cref{lemma:timenormalized}]
		Fix a non-Zeno stationary policy \(\mu\in\Omega_{\mathrm{NZ}}\) and an initial state \(s_0\). Consider any trajectory \(\trajectory=\{(s_0,\boldsymbol a_0),(s_1,\boldsymbol a_1),\dots\}\) generated by $\mu$ from $s_0$. As shown in Appendix~\ref{appendix:MDPopt}, along every such trajectory the tracker sequence \(\{z_n\}_{n\ge0}\) is nonnegative and nondecreasing, and satisfies
		$$
		\lim _{N \rightarrow \infty} \frac{1}{N} \sum_{n=0}^{N-1} z_n =\lim _{n \rightarrow \infty} z_n \triangleq z_{\infty}\in \overRp.
		$$
		We show that the time-normalized average converges to the same limit. Define 
		\[
		B_N:=\frac{\sum_{n=0}^{N-1} z_n \Delta t_n}{\sum_{n=0}^{N-1} \Delta t_n}.
		\]
		Since $\Delta t_n>0$, $B_N$ is a positive weighted average of $z_0,\dots,z_{N-1}$, and we have $B_N\le z_{N-1}\le z_\infty$. This implies $\limsup_{N\to\infty}B_N\le z_\infty$.
		
        For the reverse inequality, fix any $m\ge 0$. 
        For all $N>m$, we have
		\[
		B_N
		\ge
		\frac{\sum_{n=m}^{N-1} z_n\Delta t_n}{\sum_{n=0}^{N-1} \Delta t_n}
		\ge
		z_m\,
		\frac{\sum_{n=m}^{N-1}\Delta t_n}{\sum_{n=0}^{N-1}\Delta t_n}.
		\]
		Since \(\mu\in\Omega_{\mathrm{NZ}}\), we have
		\begin{equation*} \label{eq:tNinfinite}
			\sum_{n=0}^{N-1} \Delta t_n=t_N \xrightarrow{\text { as } N\to\infty} \infty,
		\end{equation*}
		where \(t_N\) is the physical time at which the \(N\)-th event occurs. 
		Moreover, since $m$ is fixed, we have
		\[
		\frac{\sum_{n=m}^{N-1}\Delta t_n}{\sum_{n=0}^{N-1}\Delta t_n}
		=
		1-
		\frac{\sum_{n=0}^{m-1}\Delta t_n}{\sum_{n=0}^{N-1}\Delta t_n}
		\xrightarrow{\text { as } N\to\infty} 1.
		\]
		Therefore, $\liminf_{N\to\infty} B_N \ge z_m$. 
		Letting $m\to\infty$ gives $\liminf_{N\to\infty} B_N \ge z_\infty$. Together with $\limsup_{N\to\infty} B_N \le z_\infty$, this yields
		\[
		\lim_{N \to \infty}B_N = z_\infty =\lim _{N \rightarrow \infty} \frac{1}{N} \sum_{n=0}^{N-1} z_n.
		\]
		It remains to pass from the pathwise equality to the policy objective. Since $B_N\le z_{N-1}\le z_N$, we have 
		$$
		B_{N+1}-B_N = \frac{\Delta t_N\,(z_N-B_N)}{\sum_{n=0}^{N}\Delta t_n}\ge 0.
		$$
        Thus \(\{B_N\}_{N\ge1}\) is nonnegative and nondecreasing. By the conditional monotone convergence theorem, we have
		$$
		\begin{aligned}
			\lim_{N\to\infty}\mathbb E_\mu[B_N\mid s_0] & =
			\mathbb{E}_{\mu}\left[  \lim_{N\to\infty} B_N \, \big| \, s_0 \right] \\&  =\mathbb{E}_{\mu}\left[  z_\infty \, \big| \, s_0 \right] \\&
			=\mathbb{E}_{\mu}\left[  \lim _{N \rightarrow \infty} \frac{1}{N} \sum_{n=0}^{N-1} z_n \, \big| \, s_0 \right] \\ & = \aveobj_{s_0}(\mu),
		\end{aligned}
		$$
		where the last equality holds as we have shown in~\eqref{eq:aveexchange}.
		This completes the proof of \cref{lemma:timenormalized}.

	\end{proof}

\subsection{Training details}\label{appendix:training}

\subsubsection{Training process}
All experiments were conducted on a workstation equipped with a single NVIDIA GeForce RTX 4090 GPU and an Intel Core i9-14900K CPU. We tuned hyperparameters using the Bayesian sweep strategy implemented in Weights \& Biases Sweeps~\citep{wandb}, following the general
Bayesian optimization framework for machine-learning hyperparameter
selection~\citep{snoek2012practical}. For each map, we ran $48$ sweep trials for the proposed TWLO-MDP using the search configuration specified in \cref{tab:sweep_hyperparams}. Each sweep trial corresponds
to a fixed task, method, random seed, and hyperparameter configuration. The hyperparameter setting that achieves the lowest metric value is selected for the final evaluation.

\begin{table*}[!t]
\centering
\caption{Search space and sampling rules for the principal hyperparameters in the TWLO-MDP sweep. Continuous-valued hyperparameters are sampled either log-uniformly or uniformly,  discrete-valued hyperparameters are sampled from categorical candidate sets, and fixed hyperparameters are kept unchanged across sweep trials.}
\label{tab:sweep_hyperparams}

\renewcommand{\arraystretch}{1.25}
\begin{tabular}{llcc}
\toprule
{Category} & {Hyperparameter} & {Search space / value} & {Sampling rule} \\
\midrule
\multirow{4}{*}{Optimizer}
  & actor\_lr           & $\left[1{\times}10^{-6},\ 1{\times}10^{-4}\right]$ & log-uniform \\
  & critic\_lr          & $\left[5{\times}10^{-6},\ 5{\times}10^{-4}\right]$ & log-uniform \\
  & update\_epochs      & $\{3,\ 5,\ 10\}$                                  & categorical \\
  & minibatch\_size     & $\{2048,\ 4096\}$                                 & categorical \\
\midrule
\multirow{4}{*}{PPO/GAE}
  & clip\_range         & $\left[0.1,\ 0.3\right]$        & uniform \\
  & vf\_coef            & $\left[0.5,\ 1.0\right]$        & uniform \\
  & ent\_coef           & $\left[0.0,\ 0.1\right]$        & uniform \\
  & gae\_lambda         & $\{0.99,\ 0.999\}$              & categorical \\
\midrule
\multirow{2}{*}{Rollout}
  & $N_{\mathrm{steps}}$          & $\{1024,\ 2048\}$               & categorical \\
  & total\_steps        & $5{\times}10^{7}$               & fixed \\
\midrule
\multirow{2}{*}{Auxiliary credit}
  & $\lambda_L$         & $\{0.0,\ 0.05,\ 0.1\}$          & categorical \\
  & $\lambda_z$ & $\{0.0,\ 0.05,\ 0.1\}$          & categorical \\
  \midrule
Discounting  & $\gamma$            & $0.999$                         & fixed \\
\bottomrule
\end{tabular}
\end{table*}

In addition to the swept quantities in \cref{tab:sweep_hyperparams}, several data-generation and environment-level hyperparameters are fixed separately  for each map and are excluded from the Bayesian search space. First, the heuristic algorithm used for MAPPO imitation pretraining is selected on a per-map basis. Specifically, we evaluate all candidate heuristic baselines on each map and use the best-performing one to collect the corresponding pretraining trajectories. Second, the episode length $H_{\text{ep}}$ defines the physical-time horizon of each episode. It is chosen according to the spatial scale of the graph, especially the edge-length distribution, so that each episode covers a sufficiently long monitoring horizon. The same $H_{\text{ep}}$ is also used to truncate episodes during evaluation. Third, the tail parameter $T$ is determined jointly by the graph size $\numnode$, the number of monitoring robots $\numrobot$, and the episode length $H_{\mathrm{ep}}$, so that the transient period is adapted to both the map scale and the multi-robot monitoring timescale. Fourth, the initial robot positions are fixed rather than randomized. For each map, the $\numrobot$ robots are assigned to starting nodes that are approximately uniformly spaced over the graph, which removes initialization randomness as a confounding factor in cross-map comparisons. Finally,  the number of parallel environments \(N_{\mathrm{env}}\) is fixed to  either $32$ or $64$, depending on the map size and the number of monitoring robots in each task. The fixed values used for all maps are summarized in \cref{tab:fixed_hparams}.

\begin{table*}[htbp]
\centering
\caption{Fixed TWLO-MDP hyperparameters used in the main benchmark experiments. The initial positions are given as node indices.}
\label{tab:fixed_hparams}
\renewcommand{\arraystretch}{1.2}
\setlength{\tabcolsep}{8pt}
\small
\begin{tabular}{lccccc}
\toprule
Map & Imitation algorithm & Episode length \(H_{\mathrm{ep}}\) & Tail parameter $T$ & Initial positions & $N_{\mathrm{env}}$\\
\midrule
Long-edge  & AHPA & 100  & 10.0  & \{1, 2, 3\} & 64\\
Chain      & AHPA & 100  & 10.0  & \{1, 3, 6, 9\} & 64\\
MapA       & BAPS & 500  & 50.0  & \{1, 11, 21, 31, 41, 50\} & 32\\
MapB       & BAPS & 500  & 50.0  & \{1, 11, 21, 31, 41, 50\} & 32\\
Grid       & DTAP & 500  & 50.0  & \{1, 9, 17, 25, 33, 41\} & 32\\
Island     & AHPA & 500  & 50.0  & \{1, 11, 21, 31, 41, 50\} & 32\\
Cumberland & AHPA & 500  & 50.0  & \{1, 9, 17, 25, 33, 40\} & 32\\
Milwaukee  & BAPS & 800  & 80.0  & \{1, 9, 17, 25, 33, 40\} & 32\\
Marostica  & DTAP & 500  & 50.0  & \{1, 9, 17, 25, 33, 41\} & 32\\
SF map     & BAPS & 3000 & 200.0 & \{1, 6, 12\} & 64\\
\bottomrule
\end{tabular}
\end{table*}

Given a fixed number of parallel environments \(N_{\mathrm{env}}\) and a selected rollout length \(N_{\mathrm{steps}}\), each training trial proceeds through repeated rollout-and-update iterations. In each
iteration, \(N_{\mathrm{env}}\) environments are executed in parallel for \(N_{\mathrm{steps}}\) rollout event steps. Since each environment contains \(\numrobot\) monitoring robots and contributes one transition tuple per robot at each rollout event step, the resulting on-policy batch contained
\begin{equation*}
    N_{\mathrm{batch}}
    = N_{\mathrm{env}} \times N_{\mathrm{steps}} \times \numrobot .
\end{equation*}
training samples. The collected batch is divided into minibatches and used to update the neural network parameters for the specified number of epochs. Accordingly, the cumulative global-step count is increased by \(N_{\mathrm{batch}}\) after each rollout-and-update iteration. During training, the corresponding performance metric is periodically logged against this cumulative global-step count. Each learning curve reported in \cref{section:experiments} is obtained by plotting these recorded values and connecting consecutive logging points.

\subsubsection{Surveillance graphs}\label{appendix:graphs}
To evaluate the performance of different algorithms, we consider a diverse suite of monitoring environments. The benchmark set includes special graph topologies with analytically characterized optima, which allow  validation against theoretical benchmarks,   large-scale synthetic environments, and realistic map-based scenarios derived from real-world monitoring settings. For graph instances with analytically characterized optimal solutions, in addition to the long-edge graph shown in \cref{fig:example}, we also consider the chain graph studied in~\citep{FP-AF-FB:2012}, shown in \cref{fig:chain}. For the four-robot setting considered here, the optimal strategy assigns one robot to each subchain, with each robot repeatedly traversing back and forth within its assigned segment. Under this strategy, the optimal worst-case latency is $4$.
\begin{figure*}[thbp]
    \centering
    \includegraphics[width=0.8\linewidth]{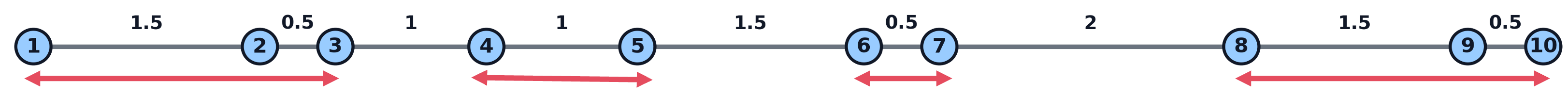}
    \caption{Chain graph benchmark with uniform node priorities ($4$ robots) adapted from~\citep{FP-AF-FB:2012}. The optimal strategy partitions the graph into four segments, each patrolled cyclically by one robot.}
    \label{fig:chain}
\end{figure*}
The  large-scale synthetic environments  considered in our experiments are shown in \cref{fig:graph}.
     \begin{figure*}[p]
		\centering
		\subfloat[MapA: Regular map with heterogeneous node priorities ($6$ robots)]{
		\includegraphics[width=0.48\textwidth]{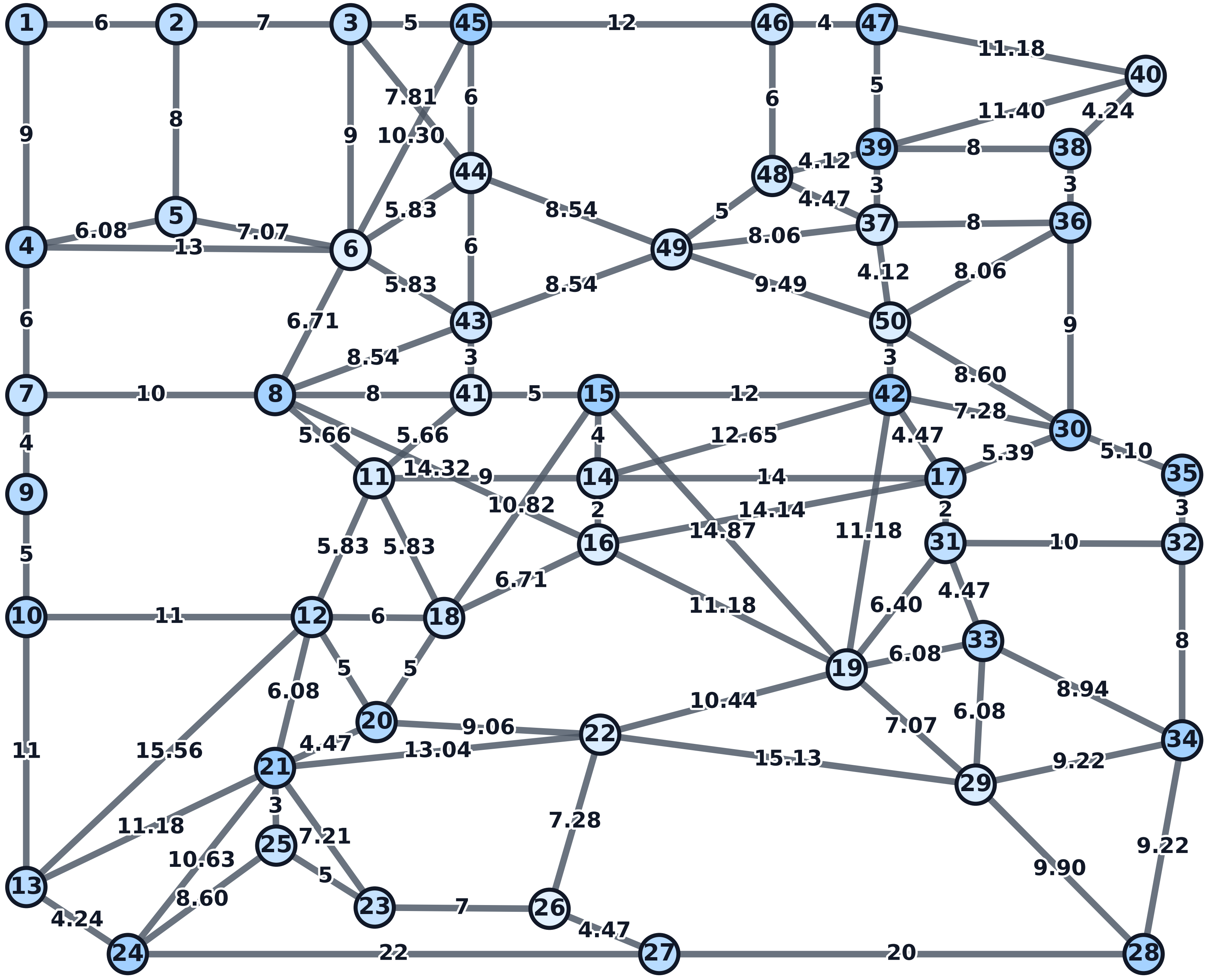}
			\label{fig:mapA}
		}
		\subfloat[MapB: Regular map with bottlenecks and heterogeneous node priorities ($6$ robots)]{
		\includegraphics[width=0.48\textwidth]{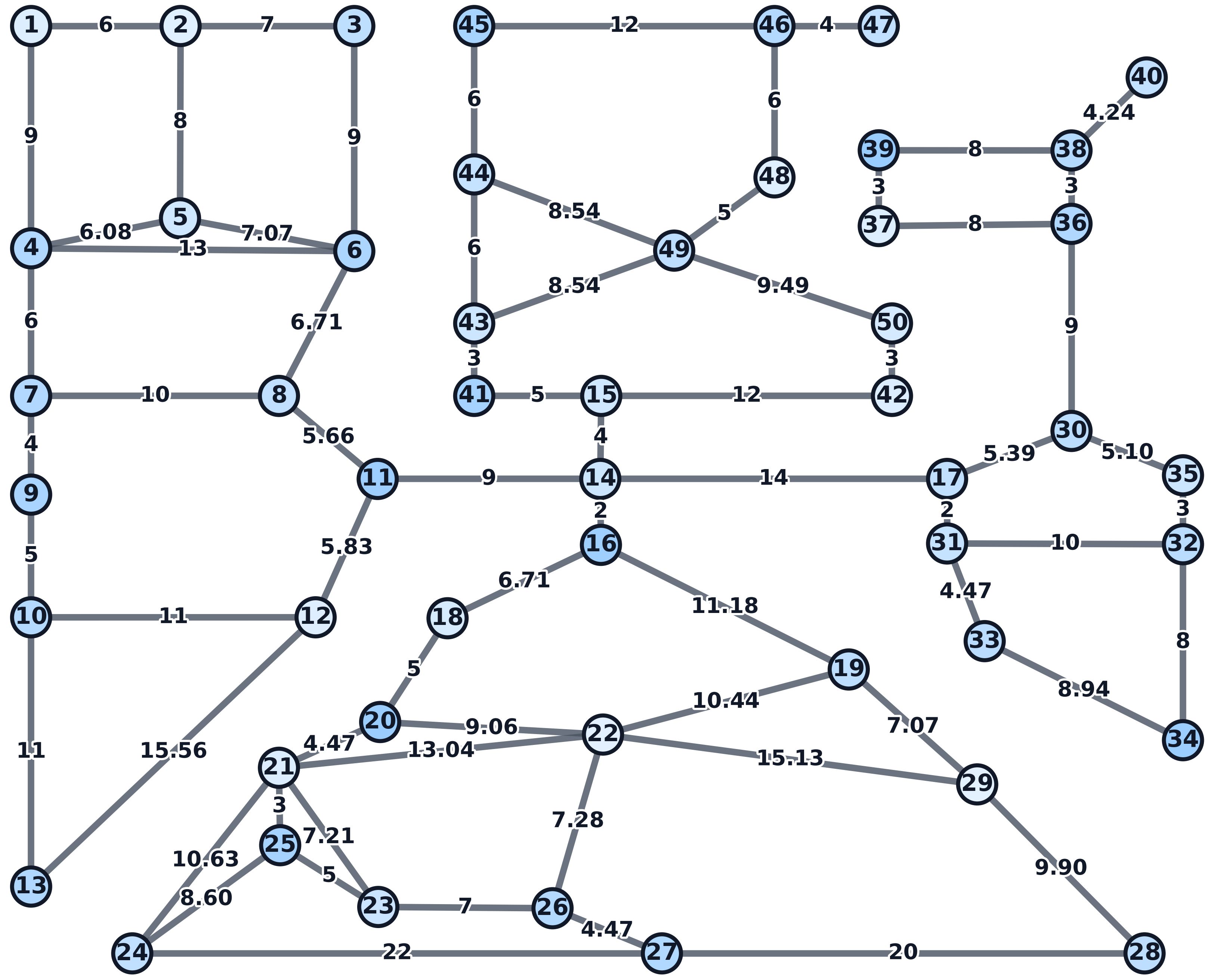}
			\label{fig:mapB}
		}
        
        \hspace*{2pt}
        \subfloat[Grid graph with uniform node priorities ($6$ robots)]{
		\includegraphics[width=0.45\textwidth]{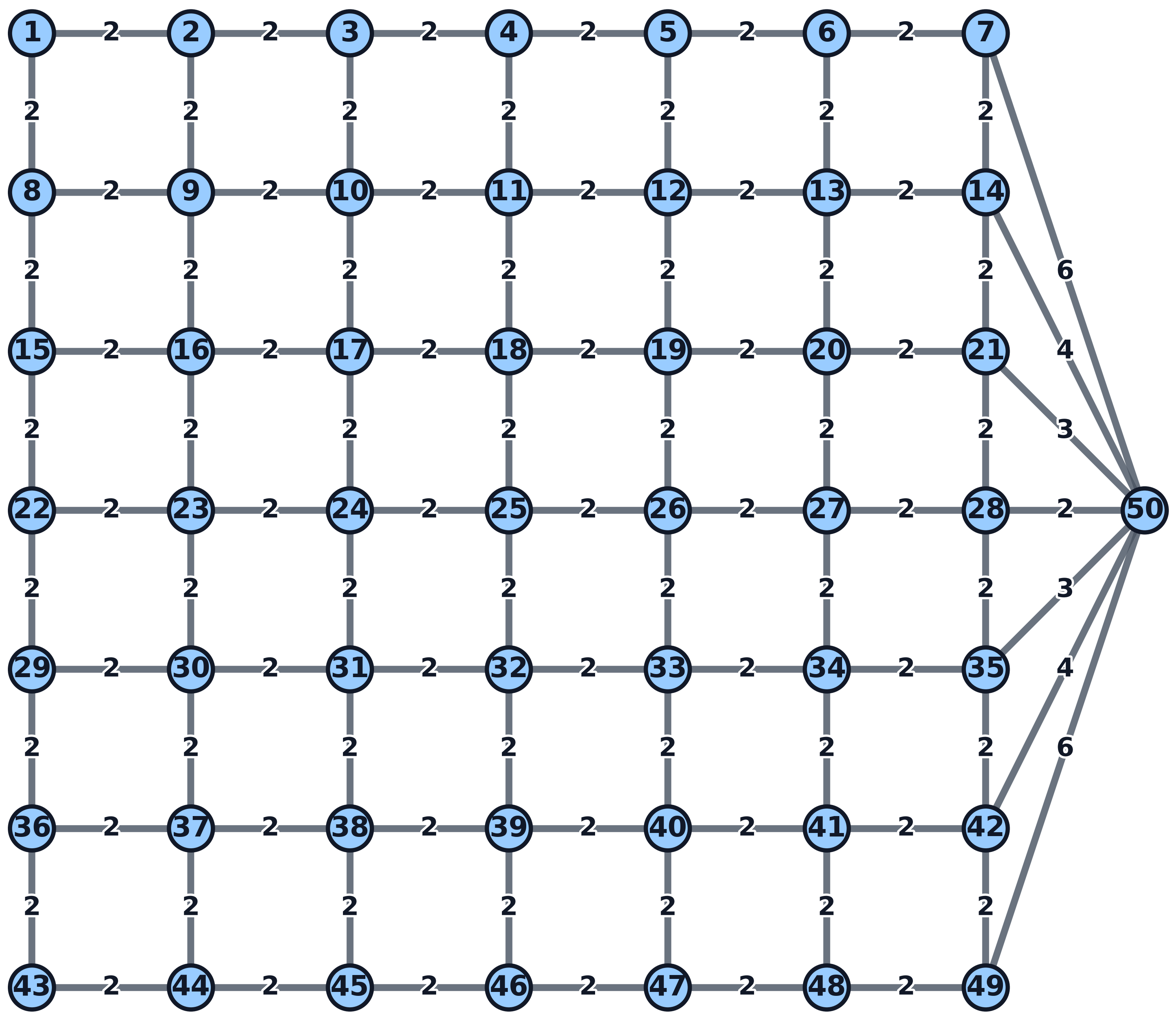}
			\label{fig:grid}
		}
             \hfill
         \subfloat[Islands with uniform node priorities ($6$ robots)]{
		\includegraphics[width=0.50\textwidth]{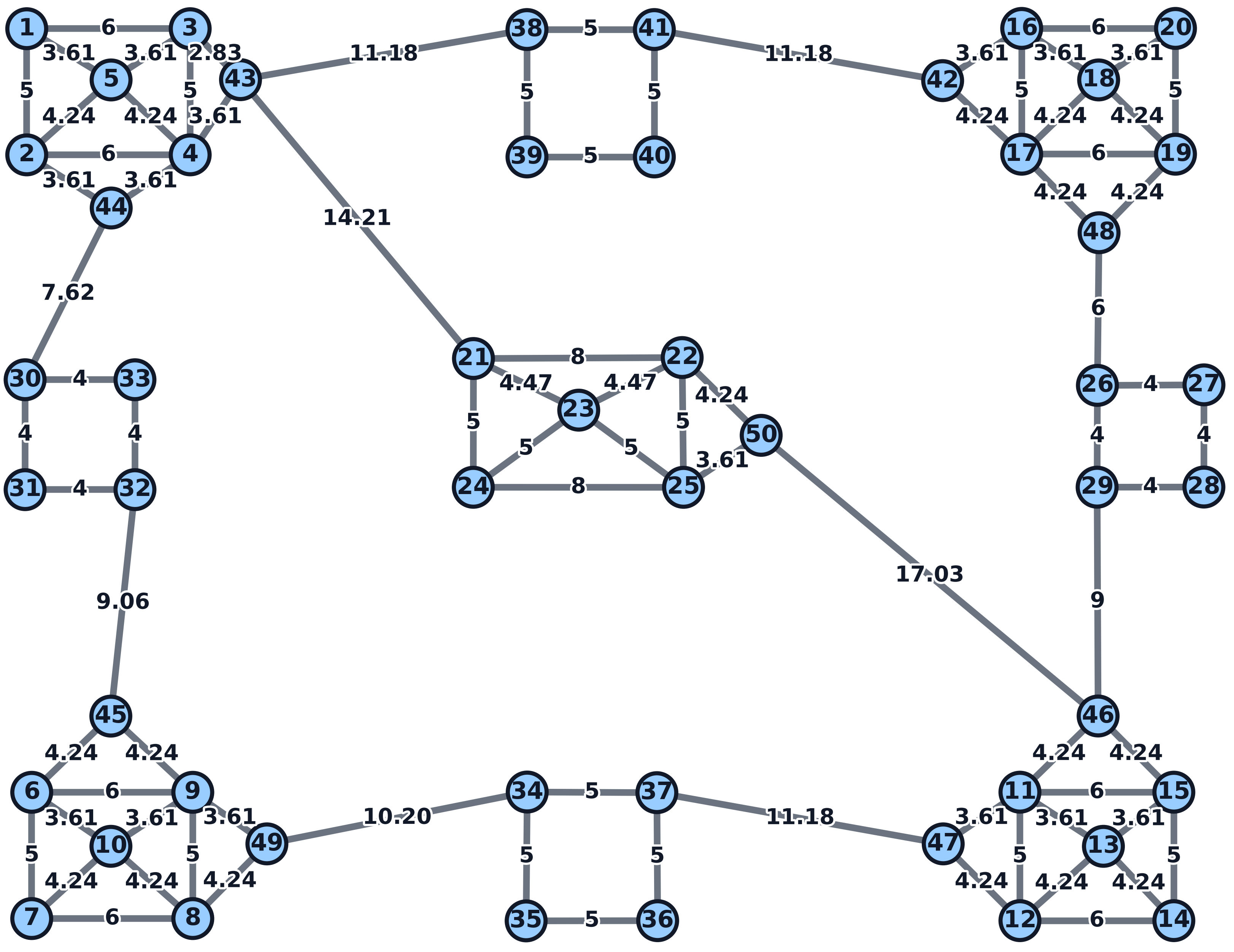}
			\label{fig:island}
		}

         \subfloat[Cumberland map  with uniform node priorities ($6$ robots)]{
		\includegraphics[width=0.52\textwidth]{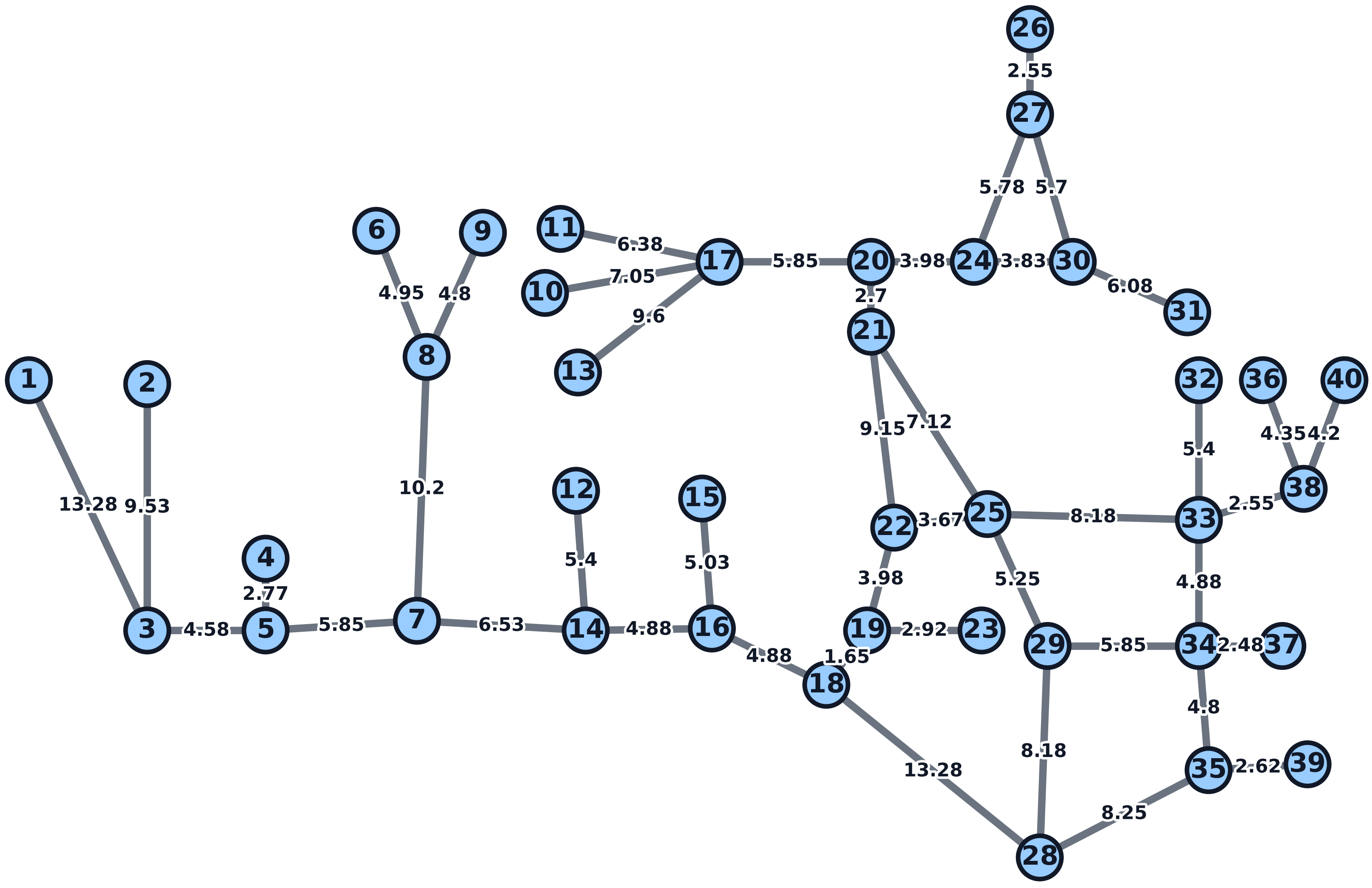}
			\label{fig:cumberland}
		}
        \subfloat[Milwaukee map  with uniform node priorities ($6$ robots)]{
		\includegraphics[width=0.44\textwidth]{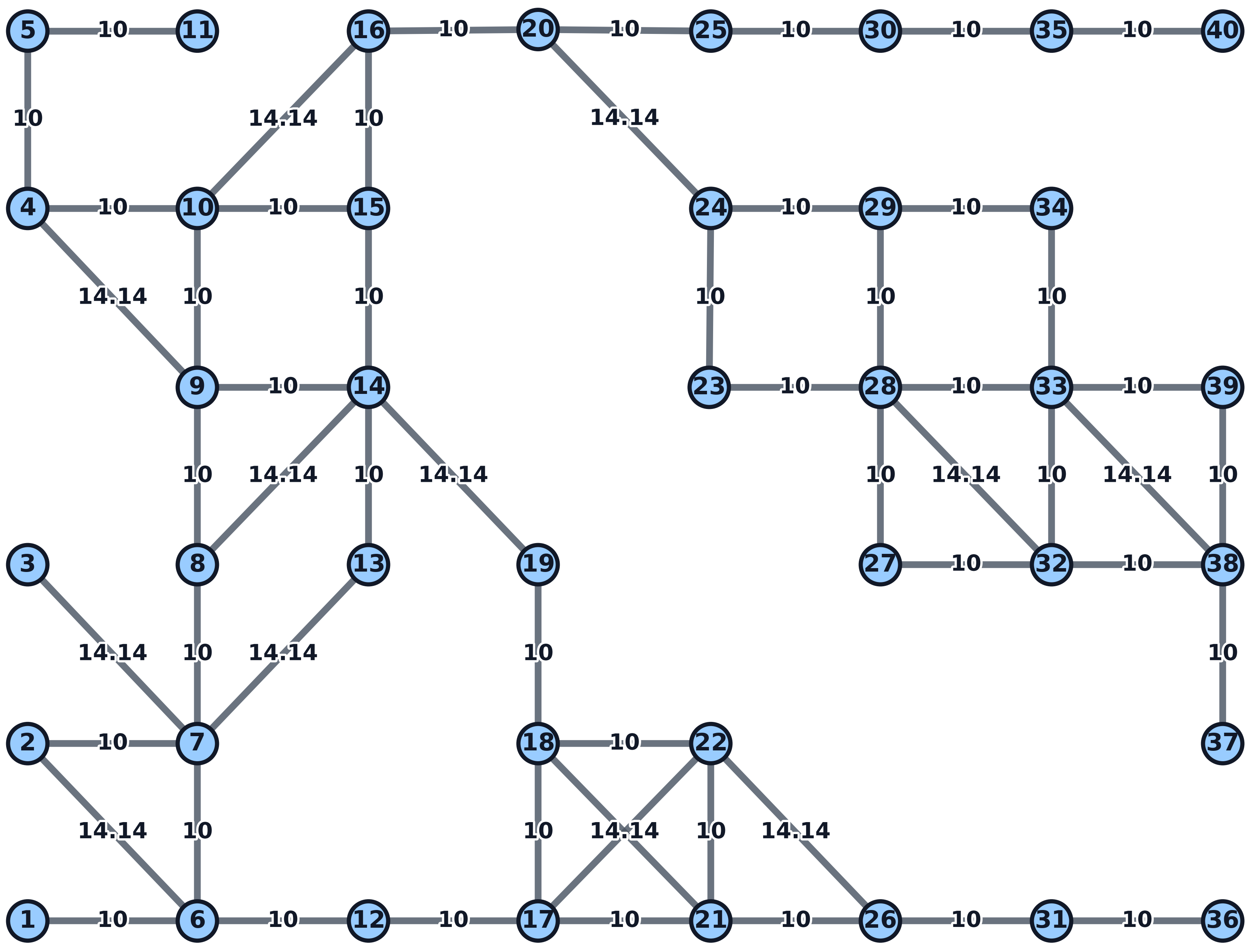}
			\label{fig:milwaukee}
		}
		\caption{Representative large-scale synthetic environments used in the experiments in \cref{section:experiments}. Node colors in \cref{fig:mapA,fig:mapB} indicate node weights, with darker colors corresponding to higher-priority nodes; these node weights are randomly generated for our experiments.  The topologies of \cref{fig:mapA,fig:mapB,fig:grid,fig:island} are based on~\citep{YC:2004}, that of \cref{fig:cumberland} is based on~\citep{DP-RR:2011}, and that of \cref{fig:milwaukee} is based on~\citep{AG-YS-QZ:2024}. }
		\label{fig:graph}
	\end{figure*}
    For realistic map-based surveillance environments, we adopt two representative benchmark instances from prior literature, as shown in \cref{fig:real}. The first is the Marostica road network from~\citep{AC-RP-FB:2016}, which represents a real urban roadmap with uniform node priorities. The second consists of $12$ locations in the central district of the San Francisco crime map from~\citep{SA-EF-SS:2012}, where  heterogeneous monitoring priorities are derived from local crime statistics. Following the original setting, the San Francisco environment is represented as a fully connected graph, with edge lengths given by the pairwise car travel times reported in the original work, as listed in \cref{table:SFweight}. To ensure numerical consistency in training, the node priorities are normalized by dividing each value by $\weightmax$.
\begin{table*}[htbp]
\centering
\caption{Pairwise travel times for the San Francisco crime map in~\cref{fig:SF}, adapted from~\citep{SA-EF-SS:2012}. The original travel times are uniformly scaled by a factor of $0.1$ for numerical convenience; this proportional scaling preserves the relative travel-time structure of the original map.}
\label{table:SFweight}
\begin{tabular}{c|rrrrrrrrrrrr}
\toprule
 & A & B & C & D & E & F & G & H & I & J & K & L \\
\midrule
A & 0 & 14.1 & 12.1 & 29.3 & 20.9 & 32.9 & 13.4 & 25.0 & 40.6 & 19.9 & 35.8 & 34.4 \\
B & 14.1 & 0 & 27.1 & 20.0 & 10.5 & 22.6 & 20.1 & 29.9 & 29.7 & 16.9 & 25.4 & 27.4 \\
C & 12.7 & 29.1 & 0 & 36.8 & 31.1 & 43.3 & 15.3 & 19.8 & 49.1 & 21.9 & 46.1 & 36.2 \\
D & 30.4 & 20.7 & 41.7 & 0 & 25.3 & 30.9 & 22.6 & 38.7 & 24.9 & 35.8 & 33.7 & 38.4 \\
E & 21.0 & 14.7 & 34.0 & 24.4 & 0 & 18.0 & 24.4 & 26.8 & 34.2 & 16.4 & 20.9 & 23.0 \\
F & 33.0 & 21.6 & 46.0 & 24.4 & 17.5 & 0 & 31.3 & 37.0 & 12.6 & 31.1 & 6.1 & 16.3 \\
G & 9.0 & 24.6 & 16.2 & 24.4 & 31.0 & 36.9 & 0 & 27.1 & 40.0 & 29.2 & 39.7 & 42.7 \\
H & 14.7 & 29.3 & 10.5 & 37.0 & 33.8 & 41.2 & 15.4 & 0 & 49.2 & 15.3 & 40.6 & 28.7 \\
I & 42.6 & 32.4 & 53.9 & 20.3 & 34.3 & 22.6 & 34.8 & 50.9 & 0 & 44.8 & 29.9 & 38.9 \\
J & 20.1 & 17.0 & 23.1 & 32.2 & 16.4 & 29.0 & 27.9 & 15.9 & 41.5 & 0 & 28.3 & 16.4 \\
K & 35.4 & 24.0 & 47.4 & 33.7 & 19.9 & 10.5 & 33.7 & 33.2 & 22.6 & 27.3 & 0 & 12.5 \\
L & 33.4 & 22.0 & 35.4 & 31.6 & 17.9 & 12.1 & 31.7 & 21.2 & 24.6 & 15.3 & 11.4 & 0 \\
\bottomrule
\end{tabular}
\end{table*}
    \begin{figure*}[htbp]
    \centering
    \subfloat[Marostica roadmap with uniform node priorities ($6$ robots)]{
    \includegraphics[width=0.33\linewidth]{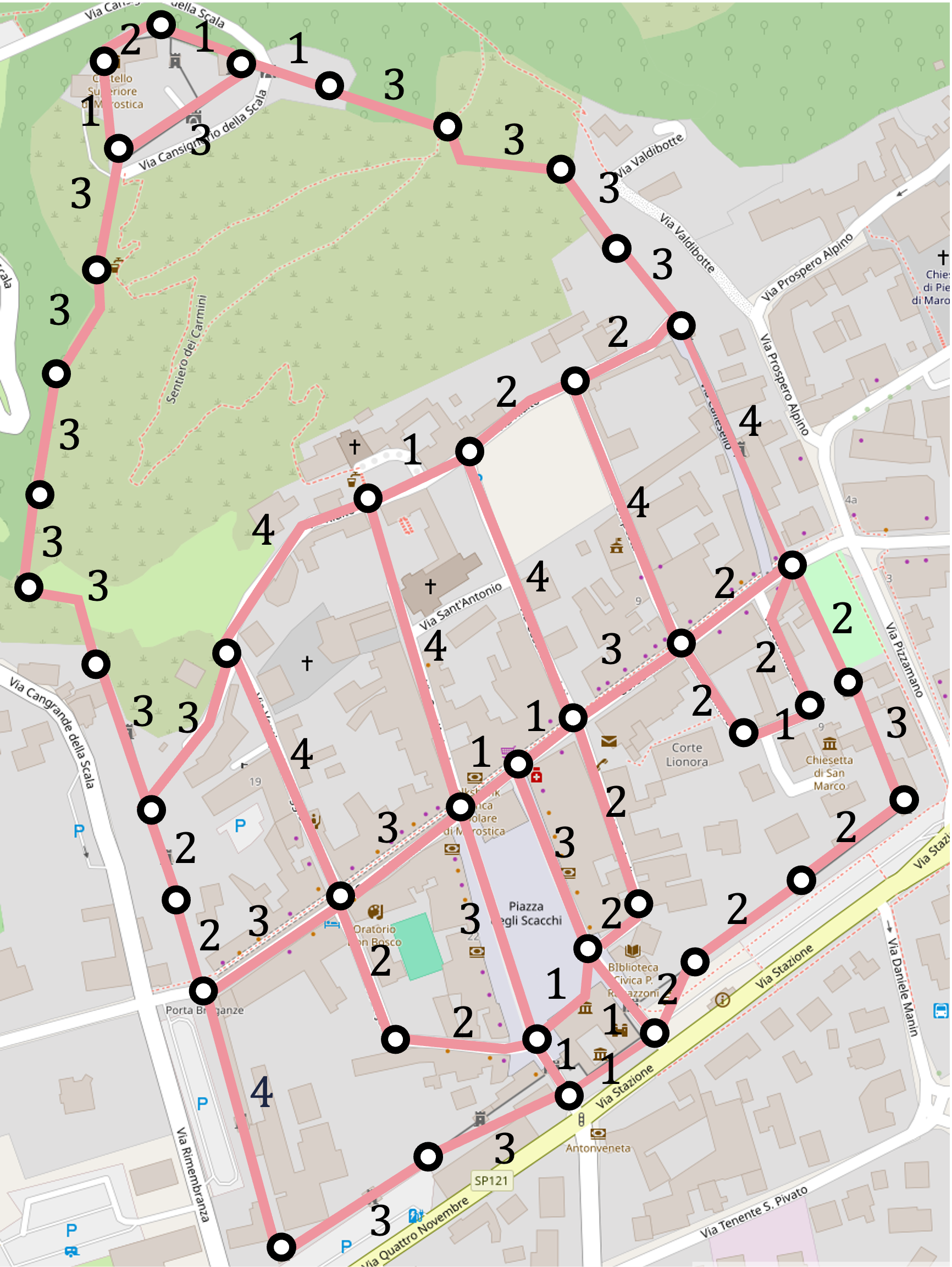}
    \label{fig:ity}}
    \subfloat[San Francisco crime map with heterogeneous node priorities ($1$ and $3$ robots). The number associated with each node denotes its priority, which is proportional to the corresponding local crime rate. ]{
    \includegraphics[width=0.47\linewidth]{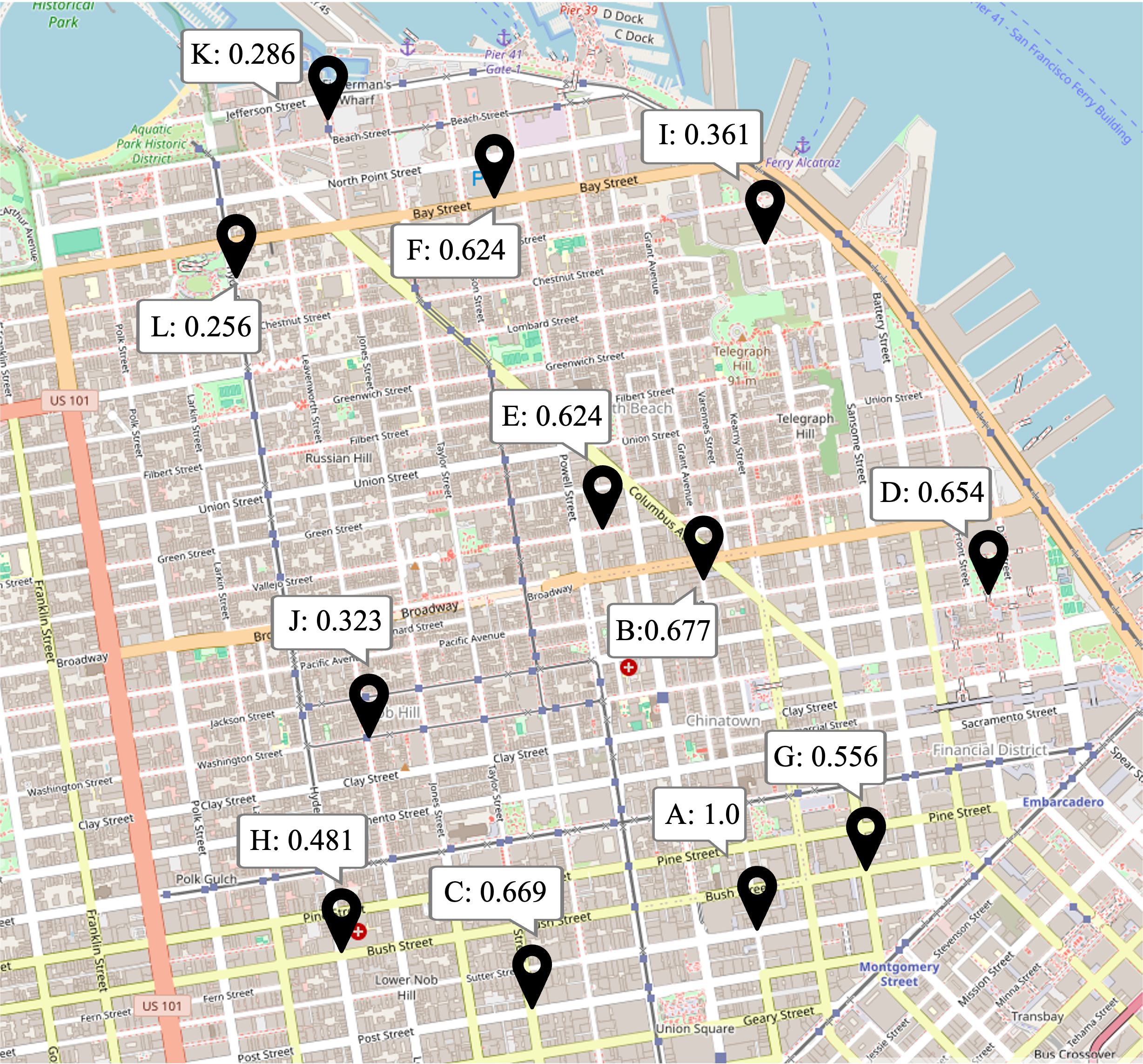}
    \label{fig:SF}}
    \caption{Realistic map-based
scenarios created with OpenStreetMap~\citep{OpenStreetMap:2017}. The Marostica roadmap in \cref{fig:ity} is adapted from~\citep{AC-RP-FB:2016}, and the San Francisco crime map in \cref{fig:SF} is adapted from \citep{SA-EF-SS:2012}.}
    \label{fig:real}
\end{figure*}

\subsubsection{Action mask} \label{subsubsection:action_mask}
In graph-based monitoring environments, the set of feasible actions depends on both the robot's current status and the local graph structure. In particular, the number of feasible movement actions at a node is determined by the degree of that node, whereas the actor network has a fixed output dimension. To handle this mismatch, we  define a fixed-size discrete action space based on the maximum node degree of the graph, together with waiting and \emph{no-op} actions. At a node with degree smaller than the maximum degree, only the action indices corresponding to its incident edges are physically valid, while the remaining movement-action slots are invalid. 

At each event step $n$, each robot $r$ is assigned an action mask $m_{n,r}$ with the same dimension as the discrete action space. Each entry of $m_{n,r}$ is binary: a value of $1$ indicates that the corresponding action is admissible for robot $r$ at step $n$, while a value of $0$ indicates that the action is masked out. If robot $r$ has arrived at a node or completed a waiting action, then it is available for decision-making. In this case, the mask permits the movement actions corresponding to the incident edges of its current node, together with the valid waiting action. In contrast, if robot $r$ is still traversing an edge or executing a waiting action, then it is unavailable for decision-making, and its mask permits only the \emph{no-op} action. Therefore, unavailable robots remain in the synchronized joint-action representation but are not allowed to choose a new physical action.

This mechanism converts asynchronous robot-level events into synchronized joint transitions. It preserves a tensorized rollout structure for efficient batching and parallel optimization, while avoiding the engineering complexity of ragged, agent-specific event logs. The resulting rollout contains both active decision steps and inactive intermediate steps for each robot. During policy and value optimization, inactive steps are handled by Trans-GAE, which folds them into the interval between consecutive active decision points and prevents them from being treated as ordinary policy-update steps.

     \subsubsection{Auxiliary credit-shaping signals}\label{subsubsection:credit_shaping}
In addition to the main TWLO-MDP cost, we optionally include auxiliary credit-shaping signals during MARL training to provide more informative feedback to individual robots. These signals are implementation-level training aids: they are not part of the theoretical TWLO-MDP formulation, and all reported results are evaluated using the monitoring metrics defined in~\cref{section:platform}. Their coefficients are treated as hyperparameters and selected together with other training hyperparameters. In some environments, the selected coefficients are zero, in which case training uses only the main TWLO-MDP signal. 
Specifically, for each robot $r\in\robots$ at event step $n$, we compute one-step counterfactual signals by comparing the realized next state with a counterfactual next state in which only robot $r$'s action is replaced by a baseline action, such as a random admissible action or a waiting action, while all other robots' actions are kept unchanged. We consider two monitoring-related quantities: the next-step latency profile and the historical worst-case weighted latency tracker. 

Let $L_{n+1,v}$ denote the realized next-step latency of node $v$, and let $L_{n+1,v}^{(-r)}$ denote its counterfactual counterpart. The latency-based auxiliary credit signal is defined as $C_{n,r}^{L}
=
\sum_{v\in\nodes}
\left(L_{n+1,v}^{(-r)}-L_{n+1,v}\right)$. Similarly, let $z_{n+1}$ and $z_{n+1}^{(-r)}$ denote the realized and counterfactual values of the historical worst-latency tracker. We define $C_{n,r}^{z}
=
z_{n+1}^{(-r)}-z_{n+1}$. The auxiliary shaping signal used during training is
$$
C_{n, r}^{\text {aux }}=\lambda_L C_{n, r}^L+\lambda_z C_{n, r}^z,$$
where $\lambda_L$ and $\lambda_z$ are tunable hyperparameters. This auxiliary signal is combined with the main training signal during policy optimization. When $\lambda_L=\lambda_z=0$, the auxiliary shaping is disabled.

\end{document}